\def\eqref#1{equation~\ref{#1}}
\def\1{\bm{1}}
\DeclareMathAlphabet{\mathsfit}{\encodingdefault}{\sfdefault}{m}{sl}
\SetMathAlphabet{\mathsfit}{bold}{\encodingdefault}{\sfdefault}{bx}{n}
\def\gD{{\mathcal{D}}}
\def\gF{{\mathcal{F}}}
\def\gR{{\mathcal{R}}}
\def\sP{{\mathbb{P}}}
\def\sR{{\mathbb{R}}}
\newcommand{\E}{\mathbb{E}}
\newtheorem{claim}{Claim}
\title{The Trade-off between Universality and Label Efficiency of Representations from Contrastive Learning}
\author{
Zhenmei Shi$^{1*}$\,
Jiefeng Chen$^{1*}$\,
Kunyang Li$^{1}$\, 
Jayaram Raghuram$^{1}$\, 
Xi Wu$^{2}$\, 
Yingyu Liang$^{1}$\, 
Somesh Jha$^{1, 3}$\\
$^{1}$\,University of Wisconsin-Madison \hspace{5mm}
$^{2}$\,Google LLC \hspace{5mm}
$^{3}$\,XaiPient \hspace{5mm} 
$^{*}$\,Equal contribution
\\
\texttt{\{zhmeishi,jiefeng,kli253,jayaramr,yliang,jha\}@cs.wisc.edu},
\texttt{wuxi@google.com}  
}
\newtheorem{theorem}{Theorem}[section]
\newtheorem{proposition}[theorem]{Proposition}
\newtheorem{lemma}[theorem]{Lemma}
\newcommand{\mypara}[1]{\textbf{#1}}
\begin{document}

\maketitle

\begin{abstract}
Pre-training representations (a.k.a.\ foundation models) has recently become a prevalent learning paradigm, where one first pre-trains a representation using large-scale unlabeled data, and then learns simple predictors on top of the representation using small labeled data from the downstream tasks. There are two key desiderata for the representation: \textit{label efficiency} (the ability to learn an accurate classifier on top of the representation with a small amount of labeled data) and \textit{universality} (usefulness across a wide range of downstream tasks). In this paper, we focus on one of the most popular instantiations of this paradigm: contrastive learning with linear probing, i.e., learning a linear predictor on the representation pre-trained by contrastive learning. We show that there exists a trade-off between the two desiderata so that one may not be able to achieve both simultaneously. 
Specifically, we provide analysis using a theoretical data model and show that,  while more diverse pre-training data result in more diverse features for different tasks (improving universality), it puts less emphasis on task-specific features, giving rise to larger sample complexity for down-stream supervised tasks, and thus worse prediction performance. Guided by this analysis, we propose a contrastive regularization method to improve the trade-off. We validate our analysis and method empirically with systematic experiments using real-world datasets and foundation models.
\end{abstract}

\section{Introduction}
\label{sec:intro}

Representation pre-training is a recent successful approach that utilizes large-scale unlabeled data to address the challenges of scarcity of labeled data  and distribution shift. Different from the traditional supervised learning approach using a large labeled dataset, representation learning first pre-trains a representation function using large-scale diverse unlabeled datasets by self-supervised learning (e.g., contrastive learning), and then learns predictors on the representation using small labeled datasets for downstream target tasks. 
The pre-trained model is commonly referred to as a \textit{foundation model}~\citep{Rishi21}, and has achieved remarkable performance in many applications, e.g., BERT~\citep{DevlinCLT19},  GPT-3~\citep{BrownMRSKDNSSAA20}, CLIP~\citep{RadfordKHRGASAM21}, and Flamingo~\citep{alayrac2022flamingo}. 
To this end, we note that there are two properties that are key to their success: (1) \textit{label efficiency}: with the pre-trained representation, only a small amount of labeled data is needed to learn accurate predictors for downstream target tasks; (2) \textit{universality}: the pre-trained representation can be used across various downstream tasks.  

In this work, we focus on \emph{contrastive learning with linear probing} that learns a linear predictor on the representation pre-trained by contrastive learning, which is an exemplary pre-training approach (e.g.,~\citep{arora2019theoretical, ChenK0H20}). We highlight and study a fundamental trade-off between label efficiency and universality, 
though ideally, one would like to have these two key properties simultaneously. Since pre-training with large-scale diverse unlabeled data is widely used in practice, such a trade-off merits deeper investigation.   


Theoretically, we provide an analysis of the features learned by contrastive learning, and how the learned features determine the downstream prediction performance and lead to the trade-off. 
We propose a \emph{hidden representation data model}, which first generates a hidden representation containing various features, and then uses it to generate the label and the input.
We first show that contrastive learning is essentially generalized nonlinear PCA that can learn \emph{hidden features invariant to the transformations} used to generate positive pairs. We also point out that additional assumptions on the data and representations  are needed to obtain non-vacuous guarantees for prediction performance. We thus consider a setting where the data are generated by linear functions of the hidden representation, and formally prove that the difference in the learned features leads to the trade-off. In particular, pre-training on more diverse data learns more diverse features and is thus useful for prediction on more tasks. But it also down-weights task-specific features, implying larger sample complexity for predictors and thus worse prediction performance on a specific task. This analysis inspires us to propose a general method -- \emph{contrastive regularization} -- that adds a contrastive loss to the training of predictors to improve the accuracy on downstream tasks.

\begin{wrapfigure}{r}{0.45\textwidth}
\begin{center}
\includegraphics[width=0.9\linewidth]{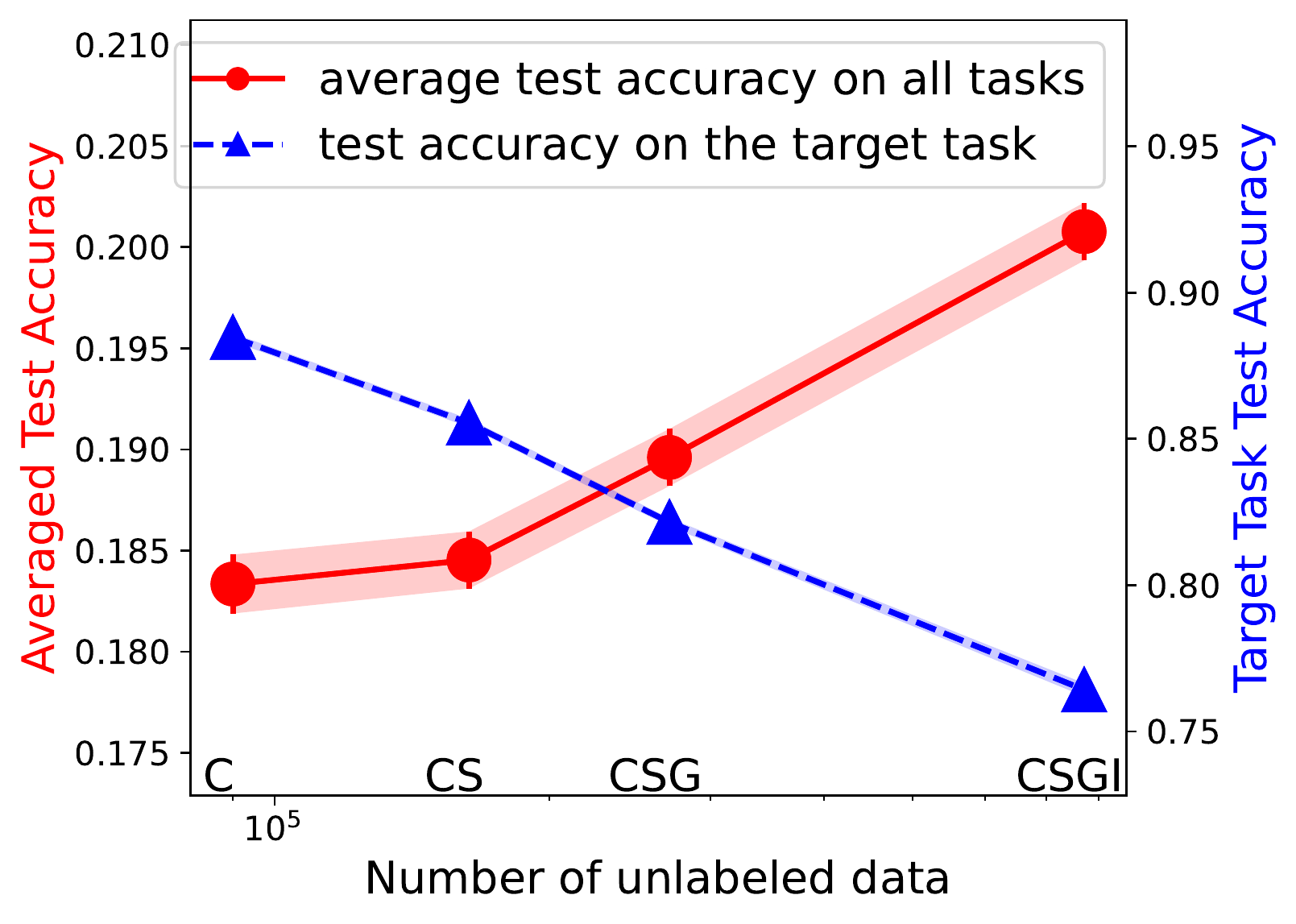}
\end{center}
\caption{\small Illustration of the trade-off between universality and label efficiency. $x$-axis: from left to right, incrementally add CINIC-10 (C), SVHN (S), GTSRB (G), and ImageNet32 (I) for pre-training MoCo v2. For example, ``CS'' means CINIC-10+SVHN. The average test accuracy of prediction on all 4 datasets (red line) increases with more diverse pre-training data, while that on the target task CIFAR-10 (blue line) decreases. (The variance of the blue line is too small to be seen.) Please refer to Section~\ref{sec:exist} for details.
}
\vspace{-0.1in}
\label{fig:intro}
\end{wrapfigure}

Empirically, we first perform controlled experiments to reveal the trade-off. Specifically, we first pre-train on a specific dataset similar to that of the target task, and then incrementally add more datasets into pre-training. In the end, the pre-training data includes both datasets similar to the target task and those not so similar, which mimics the practical scenario that foundation models are pre-trained on diverse data to be widely applicable for various downstream tasks. Fig.~\ref{fig:intro} gives an example of this experiment: As we increase task diversity for contrastive learning, it increases the average accuracy on \emph{all tasks} from 18.3\% to 20.1\%, while  it harms the label efficiency of an individual task, on CIFAR-10 the accuracy drops from 88.5\% to 76.4\%.  We also perform experiments on contrastive regularization, and demonstrate that it can \emph{consistently improve} over the typical fine-tuning method across multiple datasets. In several cases, the improvement is significant: 1.3\% test accuracy improvement for CLIP on ImageNet, 4.8\% for MoCo v3 on GTSRB (see Table~\ref{tab:moco_v2_main} and~\ref{tab:foundation} for details). With these results, we believe that it is of importance to bring the community's attention to this trade-off and the forward path of foundation models.

Our main contributions are summarized as follows: 
\begin{itemize}
    \item We propose a hidden representation data model and prove that contrastive learning is essentially generalized nonlinear PCA, and can encode hidden features invariant to the transformations used in positive pairs (Section~\ref{sec:analysis-general}).
    \item We formally prove the trade-off in a simplified setting with linear data (Section~\ref{sec:analysis-simple}).
    \item We empirically demonstrate the trade-off across different methods and datasets for contrastive learning with linear probing (Section~\ref{sec:exist} and~\ref{sec:property}).
    \item We propose a contrastive regularization method for training the predictor on a target task (Section~\ref{sec:analysis-simple}), which achieves consistent improvement in our experiments (Section~\ref{sec:method}). 
\end{itemize}

\mypara{Related Work on Representation Pre-training. } This paradigm pre-trains a representation function on a large dataset and then uses it for prediction on various downstream tasks~\citep{DevlinCLT19,KolesnikovBZPYG20,BrownMRSKDNSSAA20,NewellD20}. The representations are also called foundation models~\citep{Rishi21}. There are mainly two kinds of approaches: (1) supervised approaches (e.g.,~\citep{KolesnikovBZPYG20}) that pre-train on large labeled datasets; (2) self-supervised approaches (e.g.,~\citep{NewellD20}) that pre-train on large and diverse unlabeled datasets. Recent self-supervised pre-training can compete with or outperform supervised pre-training on the downstream prediction performance~\citep{EricssonGH21}. Practical examples like BERT~\citep{DevlinCLT19},  GPT-3~\citep{BrownMRSKDNSSAA20}, CLIP~\citep{RadfordKHRGASAM21}, DALL·E~\citep{ramesh2022hierarchical}, PaLM~\citep{chowdhery2022palm} and Flamingo~\citep{alayrac2022flamingo} have obtained effective representations universally useful for a wide range of downstream tasks.  

A popular method is contrastive learning, i.e., to distinguish matching and non-matching pairs of augmented inputs (e.g.,~\citep{Oord18,ChenK0H20,He0WXG20,GrillSATRBDPGAP20,ChenH21,ZbontarJMLD21,gao2021simcse}). 
Some others solve ``pretext tasks'' like predicting masked parts of the inputs (e.g.,\citep{DoerschGE15,DevlinCLT19}).

\mypara{Related Work on Analysis of Self-supervised Pre-training.} There exist abundant studies analyzing self-supervised pre-training~\citep{arora2019theoretical, tsai2020self, Yang20,  wang2020understanding, garg2020functional, zimmermann2021contrastive, tosh2021contrastive, haochen2021provable,  wen2021toward,liu2021self,   KotarISEM21,van2021revisiting,lee2021predicting, saunshi2022understanding, shen2022connect, kalibhat2022towards}.
They typically focus on pre-training or assume the same data distribution in pre-training and prediction. Since different distributions are the critical reason for the trade-off we focus on, we provide a new analysis.
Some studies have connected contrastive learning to component analysis~\citep{balestriero2022contrastive,tian2022deep,ko2022revisiting}. Our analysis focuses on the trade-off, while also showing a connection to PCA based on our notion of invariant features and is thus fundamentally different. 
Recently, \citeauthor{cole2022does} have attempted to identify successful conditions for contrastive learning and pointed out that diverse pre-training data can decrease prediction performance compared to pre-training on the specific task data. However, they do not consider universality and provide no systematic study.
Similarly, \citeauthor{Rishi21} call for more research on specialization vs.\ diversity in pre-training data but provide no study. We aim to provide a better understanding of the trade-off between universality and label efficiency.




\newcommand{\inputs}{\mathcal{X}} 
\newcommand{\outputs}{\mathcal{Y}} 
\newcommand{\reps}{\mathcal{Z}} 
\newcommand{\ConLoss}{\mathrm{CL}} 
\newcommand{\loss}{l} 

\section{Theoretical Analysis}
\label{sec:analysis}

Our experiments in Section~\ref{sec:exist} demonstrate a trade-off between the universality and label efficiency of contrastively pre-trained representations when used for prediction on a distribution different from the pre-training data distribution. See Fig.~\ref{fig:intro} for an example.
Intuitively, from the unlabeled data, pre-training can learn semantic features useful for prediction on even different data distributions.
To analyze this, we need to formalize the notion of useful semantic features. 
So we introduce a \emph{hidden representation data model} where a hidden representation (i.e., a set of semantic features) is sampled and then used for generating the data. 
Similar models have been used in some studies~\citep{haochen2021provable,zimmermann2021contrastive}, while we introduce the notion of spurious and invariant features and obtain a novel analysis for contrastive learning.

Using this theoretical model of data, Section~\ref{sec:analysis-general} investigates what features are learned by contrastive learning. We show that \emph{contrastive learning can be viewed as a generalization of Principal Components Analysis, and it encodes the invariant features not affected by the transformations but removes the others}. We also show that further assumptions on the data and the representations are needed necessary for any non-vacuous bounds for downstream prediction. So Section~\ref{sec:analysis-simple} considers a simplified setting with linear data.
We show that when pre-trained on diverse datasets (modeled as a mixture of unlabeled data from different tasks), it encodes all invariant features from the different tasks and thus is useful for all tasks. On the other hand, it essentially emphasizes those that are shared among the tasks, but down-weights those that are specific to a single task. Compared to pre-training only on unlabeled data from the target task, this then leads to a larger sample complexity and thus worse generalization for prediction on the target task.
Therefore, we show that the trade-off between universality and label efficiency occurs due to the fact that \emph{when many useful features from diverse data are packed into the representation, those for a specific target task can be down-weighted and thus worsen the prediction performance on it}. 
Based on this insight, we propose a contrastive regularization method for using representations in downstream prediction tasks, which achieves consistent improvement over the typical fine-tuning method in our experiments in Section~\ref{sec:method}. 


\mypara{Contrastive Learning.}
Let $\inputs \subseteq \mathbb{R}^d$ denote the input space, $\outputs$ the label space, and $\overline\reps \subseteq \mathbb{R}^k$ the output vector space of the learned representation function. Let $\Phi$ denote the hypothesis class of representations $\phi: \inputs \rightarrow \overline{\reps}$, and $\gF_\phi$ the hypothesis class of predictors on $\phi$. A task is simply a data distribution over $\inputs \times \outputs$. In pre-training, using transformations on unlabeled data from the tasks, we have some pre-train distribution $\mathcal{D}_{pre}$ over positive pairs $(x, x^+)$ and negative examples $x^-$, where $x, x^+$ are obtained by applying random transformations on the same input (e.g., cropping or color jitter for images), and $ x^-$ is an independent example. The contrastive loss is 
$
 \ell \left( \phi(x)^\top (\phi(x^+) - \phi(x^-)) \right)
$
where $\ell(t)$ is a suitable loss function. 
Typically, the logistic loss $\ell(t)  = \log(1 + \exp(-t))$ is used, while our analysis also holds for other loss functions.
A representation $\phi$ is learned by:
\begin{align} \label{eq:CLloss}
    \min_{\phi \in \Phi} \ \mathbb{E}_{(x, x^+, x^-) \sim \mathcal{D}_{pre}} \left[ \ell \left( \phi(x)^\top (\phi(x^+) - \phi(x^-)) \right) \right].
\end{align} 
(We simply consider the population loss since pre-training data are large-scale.)
Then a predictor $f$ is learned on top of $\phi$ using $m$ labeled points $\{(x_i,y_i)\}_{i=1}^m$ from a specific target task $\mathcal{D}$:
\begin{align} \label{eq:classification}
    \min_{f \in \gF_\phi} \ \frac{1}{m} \sum_{i=1}^m \ell_c(f(\phi(x_i)), y_i)
\end{align}
where $\ell_c$ is a prediction loss (e.g. cross-entropy). Usually, $f$ is a linear classifier (Linear Probing) with a bounded norm:
$
    \gF_\phi = \{  f(z) = u^\top z: u \in \mathbb{R}^k, \|u\| \le B \},
$  
where $\|\cdot\|$ denotes the $\ell_2$ norm.

\mypara{Hidden Representation Data Model.}
We now consider the pre-train distribution $\mathcal{D}_{pre}$ over $(x, x^+, x^-)$. To capture that pre-training can learn useful features, we assume a hidden representation for generating the data:
first sample a hidden representation $z \in \reps$ from a distribution $\mathcal{D}_z$ over some hidden representation space  $\reps \subseteq \mathbb{R}^d$, and then generate the input $x$ and the label $y$ from $z$. (The space $\reps$ models semantic features, and can be different from the learned representation space $\overline{\reps}$.)
The dimensions of $z$ are partitioned into two disjoint subsets of $[d]:= \{1, \cdots, d\}$: \emph{spurious features} $U$ that are affected by the transformations, and \emph{invariant features} $R$ that are not. 
Specifically, let $\mathcal{D}_U, \mathcal{D}_R$ denote the distributions of $z_U$ and $z_R$, respectively, and let $x=g(z)$ denote the generative function for $x$.  
Then the positive pairs $(x, x^+)$ are generated as follows: 
\begin{align} 
    z = [z_R; z_U] \sim \mathcal{D}_z, 
    z_U^+ \sim \mathcal{D}_U, ~z^+ = [z_R; z_U^+], 
    \quad x = g(z),  
    x^+ = g(z^+).  
\end{align}
That is, $x, x^+$ are from the same $z_R$ but two random copies of $z_U$ that model the random transformations. 
Finally, $x^-$ is an i.i.d.\ sample from the same distribution as $x$: $z^- \sim \mathcal{D}_z, x^- = g(z^-)$.  

\subsection{What Features are Learned by Contrastive Learning?} \label{sec:analysis-general}

To analyze prediction performance, we first need to analyze what features are learned in pre-training. 

\mypara{Contrastive Learning is Generalized Nonlinear PCA.}
Recall that given data $x$ from a distribution $\mathcal{D}$, Principal Components Analysis (PCA)~\citep{pearson1901pca, hotelling1933analysis} aims to find a linear projection function $\phi$ on some subspace such that the variance of the projected data $\phi(x)$ is maximized, i.e., it is minimizing the following PCA objective:
\begin{align}
    - \mathbb{E}_{x \sim \mathcal{D}} [\| \phi(x) - \mathbb{E}_{x' \sim \mathcal{D}}[\phi(x')] \|^2 ] = - \mathbb{E}_{x \sim \mathcal{D}} [\| \phi(x) -  \phi_0 \|^2 ]
\end{align}
where $\phi_0 := \mathbb{E}[\phi(x')]$ is the mean of the projected data. Nonlinear PCA replaces linear representation functions $\phi$ with nonlinear ones. 
We next show that contrastive learning is a generalization of nonlinear PCA on the smoothed representation after smoothing out the transformations.

\begin{theorem} \label{thm:general-pca}
If $\ell(t)= - t$, then the contrastive loss is equivalent to the PCA objective on $\phi_{z_R}$:
\begin{align}
    & \mathbb{E}  \left[ \ell\left(\phi(x)^\top[ \phi(x^+) -  \phi(x^-) ] \right) \right]   = - \mathbb{E} \left[ \|\phi_{z_R} - \phi_0\|^2 \right]
\end{align}
where $   \phi_{z_R} := \mathbb{E} [\phi(x) ~|~ z_R] = \mathbb{E}  [\phi(g(z)) ~|~ z_R].$
If additionally $\phi(x)$ is linear in $x$, then it is equivalent to the linear PCA objective $- \mathbb{E}  \left[    \|\phi(\bar{x}) - \phi_0\|^2 \right]$ on data $\bar{x} := \mathbb{E} [x | z_R] = \mathbb{E} [g(z) | z_R]$. 
\end{theorem}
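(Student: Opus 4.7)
The plan is to expand the contrastive loss with $\ell(t) = -t$ into two inner product terms and evaluate each by conditioning appropriately on $z_R$, exploiting the conditional independence structure of the hidden representation data model.

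First I would write
\begin{align*}
\mathbb{E}\!\left[\ell\!\left(\phi(x)^\top[\phi(x^+)-\phi(x^-)]\right)\right]
= -\,\mathbb{E}[\phi(x)^\top \phi(x^+)] + \mathbb{E}[\phi(x)^\top \phi(x^-)].
\end{align*}
For the positive-pair term, the generative process gives $x = g(z_R, z_U)$ and $x^+ = g(z_R, z_U^+)$ with $z_U, z_U^+$ drawn independently from $\gD_U$ given $z_R$. Therefore $\phi(x)$ and $\phi(x^+)$ are conditionally independent given $z_R$, and
\begin{align*}
\mathbb{E}[\phi(x)^\top \phi(x^+)]
= \mathbb{E}_{z_R}\!\left[\mathbb{E}[\phi(x)\mid z_R]^\top \mathbb{E}[\phi(x^+)\mid z_R]\right]
= \mathbb{E}_{z_R}\!\left[\|\phi_{z_R}\|^2\right],
\end{align*}
using that $x$ and $x^+$ have the same conditional law given $z_R$. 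For the negative-pair term, $x^-$ is drawn independently of $x$ with the same marginal, so
\begin{align*}
\mathbb{E}[\phi(x)^\top \phi(x^-)] = \mathbb{E}[\phi(x)]^\top \mathbb{E}[\phi(x^-)] = \|\phi_0\|^2.
\end{align*}

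Next I would show these two pieces combine to the PCA objective. By the tower property $\mathbb{E}[\phi_{z_R}] = \mathbb{E}[\phi(x)] = \phi_0$, hence
\begin{align*}
\mathbb{E}\!\left[\|\phi_{z_R}-\phi_0\|^2\right]
= \mathbb{E}\!\left[\|\phi_{z_R}\|^2\right] - 2\,\mathbb{E}[\phi_{z_R}]^\top \phi_0 + \|\phi_0\|^2
= \mathbb{E}\!\left[\|\phi_{z_R}\|^2\right] - \|\phi_0\|^2.
\end{align*}
Combining with the two computations above yields exactly
$\mathbb{E}[\ell(\phi(x)^\top[\phi(x^+)-\phi(x^-)])] = -\mathbb{E}[\|\phi_{z_R}-\phi_0\|^2]$, proving the first claim.

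For the linear case, I would use that if $\phi$ is linear then $\phi_{z_R} = \mathbb{E}[\phi(x)\mid z_R] = \phi(\mathbb{E}[x\mid z_R]) = \phi(\bar{x})$, and likewise $\phi_0 = \phi(\mathbb{E}[\bar{x}])$, so substituting into the previous identity recovers the linear PCA objective $-\mathbb{E}[\|\phi(\bar{x})-\phi_0\|^2]$ on the smoothed data $\bar{x}$. There is no real obstacle here; the only care needed is to invoke conditional independence of $\phi(x),\phi(x^+)$ given $z_R$ (which relies on $z_U, z_U^+$ being drawn i.i.d.\ conditional on $z_R$ and on $\phi$ being a deterministic function of $x$), and to recognize that linearity is what lets the conditional expectation pass through $\phi$ in the second part.
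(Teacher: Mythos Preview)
Your proof is correct and follows essentially the same approach as the paper: both exploit the conditional independence of $\phi(x),\phi(x^+)$ given $z_R$ and the independence of $x,x^-$ to reduce the contrastive loss to quantities in $\phi_{z_R}$ and $\phi_0$, then verify the variance identity. The only cosmetic difference is that the paper first conditions jointly on $(z_R,z_R^-)$ to reach $\mathbb{E}[\phi_{z_R}^\top(\phi_{z_R}-\phi_{z_R^-})]$ and then centers by $\phi_0$, whereas you split the positive and negative inner products from the start; the underlying steps are identical.
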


So contrastive learning is essentially nonlinear PCA when $\ell(t) = -t$, and further specializes to linear PCA when the representation is linear. 
As PCA finds directions with large variances, the analogue is that contrastive learning encodes important invariant features but not spurious ones.  

\mypara{Contrastive Learning Encodes Invariant Features and Removes Spurious Features.}
For a formal statement we need some weak assumptions on the data, the representations, and the loss: 
\begin{itemize}
    \item[\textbf{(A1)}] $z_R$ can be recovered from $x$, i.e., the inputs $x = g(z)$ from different $z_R$'s are disjoint.
    \item[\textbf{(A2)}] The representation functions are the \emph{regular} functions with $\|\phi(x)\| = B_r~ (\forall x)$ for some $B_r > 0$. Being regular means there are a finite $L$ and a partition of $\mathcal{Z}$ into a finite number of subsets, such that in each subset all $\phi \circ g$ have Lipschitz constants bounded by $L$.
    \item[\textbf{(A3)}] The loss $\ell(t)$ is convex,  decreasing, and lower-bounded.
\end{itemize}
The first condition means the invariant features $z_R$ can be extracted from $x$ (note that $g$ need not be invertible). The regular condition on the representation is to exclude some pathological cases like the Dirichlet function; essentially reasonable functions relevant for practice satisfy this condition, e.g., when $g$ is Lipschitz and $\phi$ are neural networks with the ReLU activation. Also, note that the logistic loss typically used in practice satisfies the last condition. 

We say a function $f(z)$ is independent of a subset of input dimensions $z_S$, if there exists a function $f'$ such that $f(z) = f'(z_{-S})$ with probability 1, where $z_{-S}$ denotes the set of all $z_j$ with $j \not\in S$. 
We say the representation $\phi$ encodes a feature $z_i$, if $\,\phi \circ g : \mathcal{Z} \rightarrow \overline{\mathcal{Z}}\,$ is not independent of $z_i$ as long as the generative function $g(z)$ is not independent of $z_i$.

\begin{theorem} \label{thm:encode}
Under Assumptions \textbf{(A1)(A2)(A3)}, the optimal representation $\phi^*$ satisfies: 
\begin{itemize}
    \item[(1)] $\phi^*$ does not encode the spurious features $z_U$: $\phi^* \circ g(z)$ is independent of $z_U$.
    \item[(2)] For any invariant feature $i \in R$, there exists $B_i>0$ such that as long as the representations' norm $B_r \ge B_i$, then $\phi^*$ encodes $z_i$. Furthermore, if $\mathcal{Z}$ is finite, then $B_i$ is monotonically decreasing in $\Pr[z_{R\setminus \{i\} } = z^-_{R\setminus \{i\} }, z_i \neq z^-_i]$, the probability that in $z_R$ and $z^-_R$, the $i$-th feature varies while the others remain the same. 
\end{itemize}
\end{theorem}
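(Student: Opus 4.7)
The plan is to exploit the conditional-independence structure of $(x,x^+,x^-)$ together with the convexity of $\ell$, reducing both parts to statements about the smoothed representation $\tilde\phi(x):=\phi_{z_R(x)}=\mathbb{E}[\phi(g(z))\mid z_R=z_R(x)]$, which is well-defined on the image of $g$ by Assumption (A1). Because $x,x^+$ are conditionally independent given $z_R$ and $x^-$ is independent of $(x,x^+)$, one has
\begin{equation*}
\mathbb{E}\!\left[\phi(x)^\top(\phi(x^+)-\phi(x^-))\,\big|\,z_R,z_R^-\right] \;=\; \phi_{z_R}^\top(\phi_{z_R}-\phi_{z_R^-}),
\end{equation*}
so Jensen's inequality applied to convex $\ell$ yields $L(\phi)\ge \mathbb{E}[\ell(\phi_{z_R}^\top(\phi_{z_R}-\phi_{z_R^-}))]=L(\tilde\phi)$, with strict inequality whenever $\phi\circ g$ depends on $z_U$ on a positive-measure set of $z_R$-fibres. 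This is the engine for Part~(1); I would complete it by a fibrewise argument on the sphere $\{\|\cdot\|=B_r\}$, showing that for each $(z_R,z_R^-)$ the conditional contrastive loss attains its sphere-constrained minimum at arguments that are $z_R$-measurable, with the regularity (A2) used to stitch the fibrewise optima into a bona fide element of $\Phi$.

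For Part~(2), fix $i\in R$ on which $g$ genuinely depends and assume for contradiction that an optimizer $\phi^*$ is independent of $z_i$. Construct a $z_i$-sensitive perturbation $\phi_\epsilon$ of $\phi^*$ that preserves the norm constraint $\|\phi_\epsilon(x)\|=B_r$ (e.g., by rotating a small $z_i$-dependent component into $\phi^*$ pointwise), and expand $L(\phi_\epsilon)-L(\phi^*)$ to first order in $\epsilon$. The decreasing property of $\ell$ makes the leading contribution a negative term scaling like $|\ell'(0)|\cdot B_r^{\,2}\cdot\Pr[z_{R\setminus\{i\}}=z_{R\setminus\{i\}}^-,\,z_i\ne z_i^-]$, coming from negative pairs whose $z_R^-$ differs from $z_R$ only in coordinate $i$ (precisely those pairs that the perturbation separates but $\phi^*$ does not), while the $O(\epsilon^2)$ norm-preservation correction does not depend on this probability. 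Balancing the first-order gain against the quadratic cost produces the threshold $B_i$; in the finite-$\mathcal{Z}$ case, the perturbation can be chosen explicitly as a rescaled $z_i$-indicator, so the leading gain is linear in the probability above while the quadratic cost is independent of it, giving the claimed monotonicity of $B_i$.

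The main obstacle is the fibrewise sphere argument in Part~(1): the naive renormalization $B_r\tilde\phi/\|\tilde\phi\|$ is not uniformly loss-decreasing, because the sign of the negative-pair inner product $\phi_{z_R}^\top\phi_{z_R^-}$ varies across fibres and simple rescaling can move it in the loss-increasing direction on some of them. My plan is therefore not to construct a pointwise replacement but to solve the sphere-constrained conditional minimization directly, using that $\ell$ is convex and decreasing separately in each of $\phi(x),\phi(x^+),\phi(x^-)$ and that the conditional loss is symmetric in $\phi(x)$ and $\phi(x^+)$ under swapping, to identify a $z_R$-measurable minimizer; carrying this out carefully under (A2) is the key technical step, after which both parts of the theorem follow.
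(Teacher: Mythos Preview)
Your Jensen-inequality reduction for Part~(1) is exactly the paper's argument; the paper simply stops after observing that equality holds iff $\phi\circ g$ is $z_U$-invariant and does not carry out the sphere-constraint repair you worry about, so your fibrewise program is more scrupulous than what the paper actually does (and, as you note yourself, hard to execute cleanly).

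The genuine gap is in Part~(2). For any norm-preserving perturbation of the form you sketch---$\phi_\epsilon\circ g(z)=\sqrt{1-\epsilon^2}\,f(z_{-i})+\epsilon\,v(z)$ with $v(z)\perp f(z_{-i})$---the pointwise orthogonality forces the first-order cross terms $f(z_{-i})^\top v(z^-)+v(z)^\top f(z_{-i}^-)$ to vanish on the ``good'' pairs ($z_{-i}=z_{-i}^-$, $z_i\neq z_i^-$), so on those pairs the argument of $\ell$ moves from $0$ only to $\Theta(\epsilon^2 B_r^2)$. The gain is therefore second order in $\epsilon$, not first. Meanwhile the ``bad'' pairs ($z_{-i}\neq z_{-i}^-$) also incur a $\Theta(\epsilon^2 B_r^2)$ shift in the argument, a cost of the same order as the gain. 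Your bookkeeping (a first-order gain balanced against an $O(\epsilon^2)$ norm-correction) does not close, and no small-$\epsilon$ expansion by itself resolves it.

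The paper avoids perturbation theory entirely. It takes $h(z)=\bigl[\sqrt{1-\alpha^2}\,f(z_{-i}),\ \alpha B_r\,\mathbf{I}_z\bigr]$ with a one-hot $\mathbf{I}_z$ in fresh coordinates and the specific choice $\alpha=\sqrt{1/2}/B_r$, so that on the good pairs the argument jumps from $0$ to exactly $1/2$, yielding the fixed decrease $\ell(0)-\ell(1/2)$ independent of $B_r$. On the bad pairs the argument can move in the loss-increasing direction only when $f(z_{-i})^\top f(z_{-i}^-)<0$, but then the original argument already exceeds $B_r^2$; and here the paper uses that a convex, decreasing, \emph{lower-bounded} $\ell$ satisfies $|\ell'(t)|\to 0$ as $t\to\infty$, so this cost vanishes as $B_r\to\infty$. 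This flatness-at-infinity coming from lower-boundedness is the key mechanism your outline is missing, and it is what produces both the threshold $B_i$ and its monotonicity in $\Pr[z_{R\setminus\{i\}}=z_{R\setminus\{i\}}^-,\,z_i\neq z_i^-]$.
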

 
So contrastive learning aims to remove the spurious features and preserve the invariant features. Then the transformations should be chosen such that they will not affect the useful semantic features, but change those irrelevant to the label. 
Interestingly, the theorem further suggests that contrastive learning tends to favor the more ``spread-out'' invariant features $z_i$, as measured by $\Pr[z_{R\setminus \{i\} } = z^-_{R\setminus \{i\} }, z_i \neq z^-_i]$.
As we increase the representation capacity $B_r$, $B_r$ passes the threshold $B_i$ for more features $z_i$, so $\phi^*$ first encodes the more spread-out invariant features and then the others.

This further suggests the following intuition for the trade-off. When pre-trained on diverse data modeled as a mixture from multiple tasks with different invariant features, the representation encodes all the invariant features and thus is useful for prediction on all the tasks. When pre-trained on only a specific task, features specific to this task are favored over those that only show up in other tasks, which leads to smaller sample complexity for learning the predictor and thus better prediction. 
However, to formalize this, some inductive bias assumptions about the data and the representation  are necessary to get any non-vacuous guarantee for the prediction (see discussion in Appendix~\ref{app:inductive-bias}). 
Therefore, Section~\ref{sec:analysis-simple} introduces additional assumptions and formalizes the trade-off.

\begin{wrapfigure}{r}{0.33\textwidth} 
\vspace{-6mm}
    \centering  
    {\includegraphics[width=\linewidth]{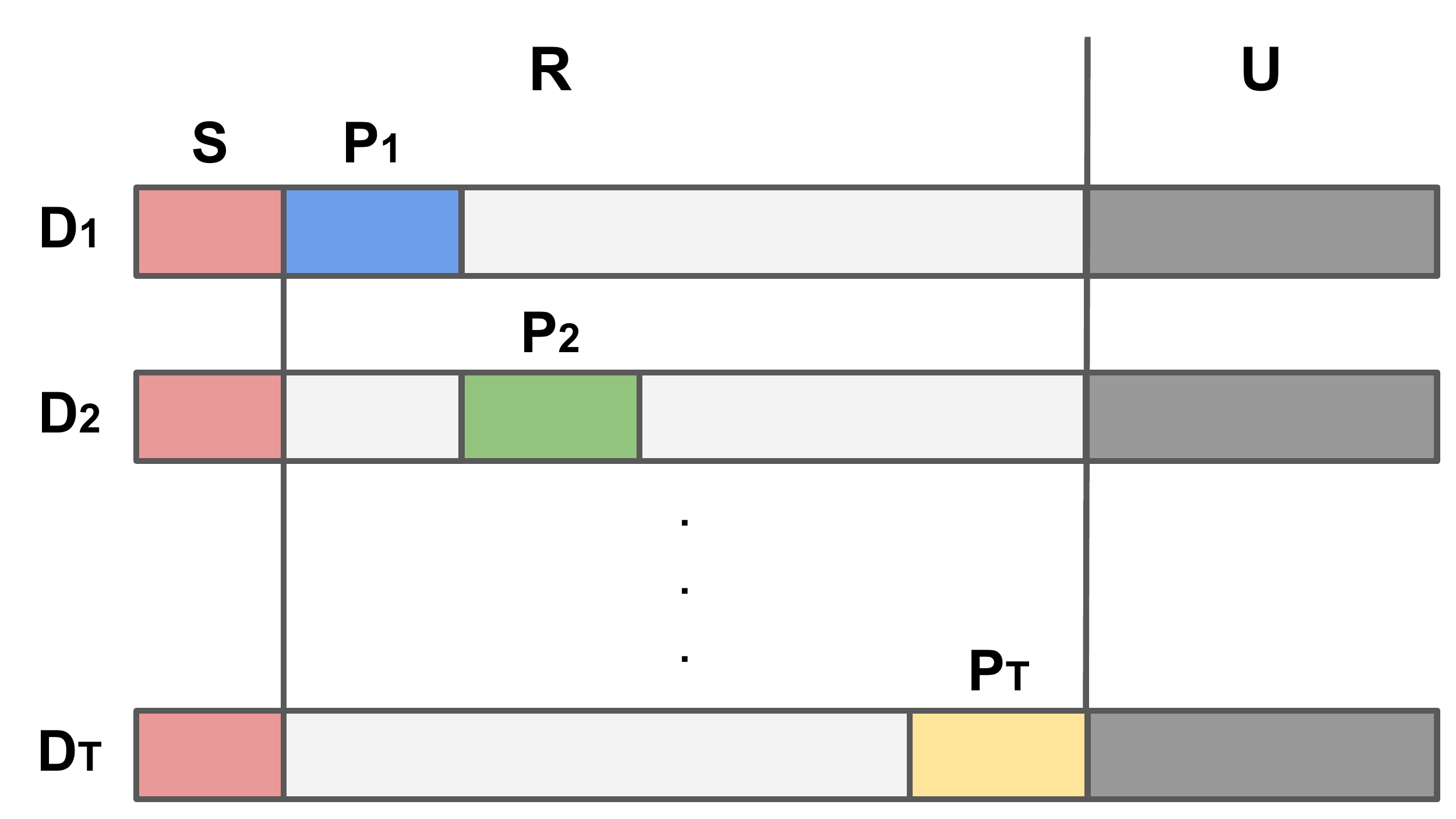}}
    \caption{\small Illustration of the features in our data distributions.  
    }
    \label{fig:data-features}
\end{wrapfigure}

\subsection{Analyzing the Trade-Off: Linear Data} \label{sec:analysis-simple} 

To analyze the prediction performance, we first need to model the relation between the pre-training data and the target task. We model the diverse pre-training data as a mixture of data from $T$ different tasks $\mathcal{D}_t$'s, while the target task is one of the tasks.
All tasks share a public feature set $S$ of size $s$, and each task $\mathcal{D}_t$ additionally owns a private disjoint feature set $P_t$ of size $r-s$, i.e., $P_t \cap S = \emptyset $ and $P_{t_1} \cap P_{t_2} = \emptyset$ for $t_1 \neq t_2$ (Fig.~\ref{fig:data-features}). The invariant features for $\mathcal{D}_t$ are then $R_t = S \cup P_t$. All invariant features are $ R = \cup_{t=1}^T R_t$, and spurious features are $U=[d] \setminus R$. In task $\mathcal{D}_t$, the $(x, x^+)$ are generated as follows:
\begin{align} 
    z_{R_t} \sim \mathcal{N}(0, I), \ z_{R \setminus R_t} = 0, \ z_U \sim \mathcal{N}(0, I), z = [z_R; z_U], & \quad  x = g(z), 
    \\
    z_U^+ \sim \mathcal{N}(0, I), z^+ = [z_R; z_U^+], & \quad x^+ = g(z^+),
\end{align} 
and $x^-$ is simply an i.i.d.\ copy from the same distribution as $x$. In practice, multiple independent negative examples are used, and thus
we consider the following contrastive loss
$ 
    \min_{\phi \in \Phi} \ \mathbb{E}_{(x, x^+)} \left[ \ell \left( \phi(x)^\top (\phi(x^+) - \mathbb{E}_{x^-}\phi(x^-)) \right) \right]
$ 
for a convex and decreasing $\ell(t)$ to pre-train a representation $\phi$.
Then, when using $\phi$ for prediction in the target task $\mathcal{D}_t$, the predictor class should contain a predictor matching the ground-truth label: 
\begin{align} \label{eqn:predictor}
    \gF_{\phi,t} = \{ f(z) = u^\top z : u \in \mathbb{R}^k, \|u\| \le B_{\phi,t} \}
\end{align}
where $B_{\phi,t}$ is the minimum value such that there exists $u_t \in \gF_{\phi,t}$ with $y = u_t^\top \phi(x)$ on $\mathcal{D}_t$. 

Now, given the necessity of inductive biases for non-vacuous guarantees (see Appendix~\ref{app:inductive-bias}), and inspired by classic dictionary learning and recent analysis on such data (e.g.,~\cite{Olshausen1997SparseCW,wen2021toward, shi2022theoretical}), we assume linear data and linear representations: 
\begin{itemize}
    \item $x$ is linear in $z$: 
    $x = g(z) = Mz$ where $M \in \mathbb{R}^{d \times d}$ is an orthonormal dictionary. Since linear probing has strong performance on pre-trained representations, we thus assume that the label in each task $t$ is linear in its invariant features $\,y = (u^*_t)^\top z_{R_t}$ for some $u^*_t \in \mathbb{R}^r$.
    \item The representations are linear functions with weights of bounded spectral/Frobenius norms: 
\begin{align}
    \Phi = \{  \phi(x) = Wx: W {\in} \mathbb{R}^{k\times d}, \|W\| {\le} 1, \|W\|_F {\le} \sqrt{r} \}. \nonumber
\end{align}
Here the norm bounds are chosen to be the minimum values to allow recovering the invariant features in the target task, i.e., there exists $\phi \in \Phi$ such that $\phi(x) = [z_{R_t}; \mathbf{0}]$. 
\end{itemize}




We compare two representations: a specific one pre-trained on unlabeled data from the target task $\mathcal{D}_t$, and a universal one pre-trained on an even mixture of data from $T$ tasks. (Appendix~\ref{app:proof-simple} provides analysis for more general cases like uneven mixtures.)
This captures the situation that the pre-training data contains some data similar to the target task and also other less similar data.  
Let $v_{t,1} = \sum_{j \in S} ({u^*_t})_j^2$ and $v_{t,2}= \sum_{j \in P_t} ({u^*_t})_j^2$ be the weights on the shared and task-specific invariant features, respectively. Also, assume the prediction loss $\ell_c$ is $L$-Lipschitz.

\begin{proposition}\label{prop:error-general} 
The representation  $\phi^*$ obtained on an even mixture of data from all the tasks $\{\mathcal{D}_t: 1\le t \le T\} $ satisfies 
$
  \phi^* \circ g(z) = Q \left (\sum_{j\in S} \sqrt{\alpha} z_j e_j + \sum_{j\in R \setminus S } \sqrt{\beta} z_j e_j \right) 
$
for some $ \alpha \in [0, 1]$,   
$\,\beta = \min\left( 1, \frac{r-\alpha s}{T (r-s)} \right) $, where $e_j$'s are the basis vectors and $Q$ is any orthonormal matrix.\\
The Empirical Risk Minimizer $\hat{u} \in \gF_{\phi^*,t}$ on $\phi^*$ using $m$ labeled data points from $\mathcal{D}_t$ has risk
\begin{align*} 
    & \quad \mathbb{E}_{(x,y)\sim \mathcal{D}_t} [\ell_c(\hat{u}^\top \phi^*(x), y) ]
    \\ 
    & \le 4 L \sqrt{{1\over m }\left({v_{t,1} \over \alpha} + {v_{t,2} \over \beta}\right) } \left( \sqrt{s \alpha  + (r-s) \beta} + O\left(\sqrt{{r\over {s\alpha+(r-s)\beta}}}\right)\right) + 8 \sqrt{ \frac{2 \ln(4/\delta) }{m}}.
\end{align*}  
\end{proposition}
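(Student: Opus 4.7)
The plan is to split the argument into two parts: first, characterize the optimal pre-trained representation $\phi^*$ obtained from the even mixture of $T$ tasks; second, use this form to bound the excess risk of the ERM predictor on the downstream task $\mathcal{D}_t$.

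\textbf{Characterizing $\phi^*$.} Since $\phi(x) = Wx = WMz$ and $M$ is orthonormal, I would reparameterize via $\tilde W := WM$, which preserves both spectral and Frobenius norms, reducing the problem to optimizing $\tilde W$ subject to $\|\tilde W\| \le 1$ and $\|\tilde W\|_F \le \sqrt{r}$. Substituting into the contrastive loss and conditioning on $z_R$, the facts that $z_U, z_U^+$ are i.i.d.\ zero-mean Gaussians independent of $z_R$ and that $z^-$ has mean zero give $\mathbb{E}[\phi(x^-)] = 0$ and reduce the inner argument of $\ell$ (in expectation) to a quadratic form in $\tilde W$ supported on the invariant coordinates --- the PCA-like structure of Theorem~\ref{thm:general-pca}, which a Jensen/convexity argument extends to general convex decreasing $\ell$. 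Theorem~\ref{thm:encode} then forces the optimal $\tilde W$ to vanish on the spurious coordinates $U$. The symmetry of the even mixture across the $T$ tasks forces the squared singular values of $\tilde W$ restricted to $S$ to be equal (call it $\alpha$) and likewise on $\cup_t P_t$ (call it $\beta$). The spectral bound gives $\alpha,\beta \le 1$; saturating the Frobenius constraint $s\alpha + T(r-s)\beta = r$ (any slack would let one strictly decrease the loss by scaling $\tilde W$ up along an invariant direction) yields $\beta = \min\!\left(1, \tfrac{r - \alpha s}{T(r-s)}\right)$. The orthonormal $Q$ absorbs the residual rotation freedom on the output side.

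\textbf{Risk bound.} Using this explicit form, I would compute $B_{\phi^*,t}$ by noting that reproducing $y = (u^*_t)^\top z_{R_t}$ as a linear function of $\phi^*(x)$ requires multiplying $(u^*_t)_j$ by $1/\sqrt{\alpha}$ on the $S$-coordinates and by $1/\sqrt{\beta}$ on the $P_t$-coordinates, so $B_{\phi^*,t}^2 = v_{t,1}/\alpha + v_{t,2}/\beta$. Under $\mathcal{D}_t$, only $z_{R_t}$ is nonzero, so $\mathbb{E}_{x\sim\mathcal{D}_t}\|\phi^*(x)\|^2 = s\alpha + (r-s)\beta$. Plugging these into the standard Rademacher complexity bound for linear classes (complexity $\le B_{\phi^*,t}\sqrt{\mathbb{E}\|\phi^*(x)\|^2/m}$), composing with the $L$-Lipschitz loss $\ell_c$ via Ledoux--Talagrand contraction, and adding the usual $8\sqrt{2\ln(4/\delta)/m}$ uniform-deviation term yields the leading factor $4L\sqrt{(v_{t,1}/\alpha + v_{t,2}/\beta)(s\alpha + (r-s)\beta)/m}$. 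The additional $O(\sqrt{r/(s\alpha+(r-s)\beta)})$ factor inside the parentheses comes from a chi-square/sub-Gaussian concentration refinement that controls $\max_i\|\phi^*(x_i)\|$ rather than its expectation.

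\textbf{Main obstacle.} The delicate step is the first half. For $\ell(t)=-t$, Theorem~\ref{thm:general-pca} collapses the problem to a classical PCA eigenvalue argument. For general convex decreasing $\ell$, one must show that only the second-moment structure of $\tilde W z$ matters at the optimum, which requires combining convexity with the orthogonal invariance of the Gaussian law on $z_R$ via a careful symmetrization argument. A secondary subtlety is that the statement leaves $\alpha\in[0,1]$ free rather than pinning it down: different $\ell$'s may select different points on the face $\beta = (r-\alpha s)/(T(r-s))$, so the risk bound is stated parametrically in $\alpha$ in order to remain valid for any such choice.
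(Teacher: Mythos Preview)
Your proposal is correct and tracks the paper's argument closely: reparameterize $W\mapsto WM$, use convexity/Jensen to eliminate $z_U$, exploit Gaussian rotational symmetry plus the permutation symmetry of the even mixture to force a diagonal structure with common weights $\alpha$ on $S$ and $\beta$ on $\cup_t P_t$, read off $\beta=\min(1,(r-\alpha s)/T(r-s))$ from the norm constraints, and finish with a Rademacher bound using $B_{\phi^*,t}^2=v_{t,1}/\alpha+v_{t,2}/\beta$ and $\mathbb{E}\|\phi^*(x)\|^2=s\alpha+(r-s)\beta$. The ``main obstacle'' you flag is exactly the step the paper handles via its Claim~\ref{lem:uniform} symmetrization plus a second Jensen step across the $T$ tasks.

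One small correction on the risk bound: your account of the $O\big(\sqrt{r/(s\alpha+(r-s)\beta)}\big)$ term is off. It does \emph{not} come from controlling $\max_i\|\phi^*(x_i)\|$. In the paper, the Rademacher complexity is computed exactly as $\frac{B_{\phi^*,t}}{m}\,\mathbb{E}\|X_t\|$ with $X_t=A^*\sum_{i=1}^m z_{R_t}^{(i)}$ Gaussian; the $O(\cdot)$ term arises from Bernstein-type concentration bounding the gap $\bigl|\mathbb{E}\|X_t\|-\sqrt{\mathbb{E}\|X_t\|^2}\bigr|$. In particular, since Jensen already gives $\mathbb{E}\|X_t\|\le\sqrt{\mathbb{E}\|X_t\|^2}=\sqrt{m(s\alpha+(r-s)\beta)}$, your simpler bound $B_{\phi^*,t}\sqrt{\mathbb{E}\|\phi^*(x)\|^2/m}$ already yields the stated inequality without the $O(\cdot)$ correction; the paper's extra work buys a two-sided estimate on $\mathcal{R}_m$, not a needed refinement of the upper bound.
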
 

\begin{proposition}\label{prop:error-specific}
The representation $\phi^*_t$ obtained on data from $\mathcal{D}_t$ satisfies 
$
  \phi^*_t \circ g(z) =  Q\left (\sum_{j\in R_t} z_j e_j \right)
$
where $e_j$'s are the basis vectors and $Q$ is any orthonormal matrix. \\
The Empirical Risk Minimizer $\hat{u} \in \gF_{\phi^*_t,t}$ on $\phi^*_t$ using $m$ labeled data points from $\mathcal{D}_t$ has risk
\begin{align*} 
    \mathbb{E}_{(x,y)\sim \mathcal{D}_t} [\ell_c(\hat{u}^\top \phi_t^*(x), y) ] \le 4 L \sqrt{{r\over m } } \|u^*_t\| + 8 \sqrt{ \frac{2 \ln(4/\delta) }{m}}.
\end{align*}  
While on task $\mathcal{D}_i (i\neq t)$, any linear predictor on $\phi^*_t$ has error at least $\min_{u} \mathbb{E}_{\mathcal{D}_i} [\ell_c(u^\top z_S, y) ]$.
\end{proposition}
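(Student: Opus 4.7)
The plan is to treat Proposition~\ref{prop:error-specific} as essentially the $T=1$ specialization of Proposition~\ref{prop:error-general}, but to prove the structural claim on $\phi^*_t$ directly. First, I would apply Theorem~\ref{thm:encode} to the task $\mathcal{D}_t$: the invariant features are $R_t = S\cup P_t$ of cardinality $r$, while the features in $R\setminus R_t$ are identically zero under $\mathcal{D}_t$ and thus cannot appear in the generative map $g$ restricted to $\operatorname{supp}(\mathcal{D}_t)$; the spurious features are $z_U$. Part~(1) of Theorem~\ref{thm:encode} immediately removes the dependence on $z_U$. To pin down the exact form, I would use the linearity of $\phi$ and $g$ to rewrite the contrastive loss as a functional of $WM$, push the expectation through using the Gaussianity of $z$ and convexity of $\ell$ (reducing to a quadratic expression in the coordinate weights of $WM$), and then exploit the joint constraints $\|W\|\le 1$ and $\|W\|_F\le \sqrt r$. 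Since $M$ is orthonormal, these constraints bound the squared singular values by $1$ and their sum by $r$; because the loss is strictly decreasing in the variance captured on invariant coordinates, the optimum puts singular value exactly $1$ on each of the $r$ invariant directions indexed by $R_t$ and $0$ elsewhere, yielding $\phi^*_t\circ g(z) = Q\bigl(\sum_{j\in R_t} z_j e_j\bigr)$ for some orthonormal $Q$ (the residual freedom in the rotation of the output space).

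For the ERM risk bound I would proceed via Rademacher complexity. Since $y = (u^*_t)^\top z_{R_t}$ and $\phi^*_t(x)$ is an isometric embedding of $z_{R_t}$ up to rotation, the minimum-norm predictor matching the label satisfies $B_{\phi^*_t,t} = \|u^*_t\|$. The Rademacher complexity of $\{x\mapsto u^\top\phi^*_t(x):\|u\|\le\|u^*_t\|\}$ on $m$ samples is bounded by $\|u^*_t\|\,\mathbb{E}\|\phi^*_t(x)\|/\sqrt{m}$, and $\mathbb{E}\|\phi^*_t(x)\|\le\sqrt{\mathbb{E}\|z_{R_t}\|^2}=\sqrt r$. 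Composing with the $L$-Lipschitz loss via Talagrand's contraction lemma and adding a standard $\sqrt{\ln(1/\delta)/m}$ concentration term via McDiarmid yields the stated bound $4L\sqrt{r/m}\,\|u^*_t\| + 8\sqrt{2\ln(4/\delta)/m}$.

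For the lower bound on a different task $\mathcal{D}_i$ with $i\ne t$, I would use that under $\mathcal{D}_i$ the private features $z_{P_t}$ are identically zero (only $z_{R_i} = z_S\cup z_{P_i}$ is active), so $\phi^*_t(x) = Q\bigl(\sum_{j\in S} z_j e_j\bigr)$ almost surely. Thus every linear predictor $u^\top\phi^*_t(x)$ is, up to the rotation $Q$, a linear function of $z_S$ alone, and the achievable loss is at least $\min_v \mathbb{E}_{\mathcal{D}_i}[\ell_c(v^\top z_S, y)]$ as claimed.

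The main obstacle is the first step: rigorously identifying the minimizer of the contrastive loss so that the feature weights come out to be exactly $1$ on $R_t$ and $0$ elsewhere. The argument has to cope with the non-uniqueness captured by $Q$, the possibility $k>r$ of extra ``slack'' output dimensions, and the interplay of the spectral and Frobenius constraints, which together act as the budget forcing an exactly rank-$r$ isometric selection onto the invariant coordinates. Once this structural claim is in place, the remaining pieces reduce to off-the-shelf Rademacher bounds and a short argument about what information the representation retains on an unrelated task.
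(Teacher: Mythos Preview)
Your proposal is correct and follows essentially the same route as the paper. The paper packages your first step as a general lemma (Lemma~\ref{lemma:general-optimal-rep}) covering arbitrary mixtures, then specializes it to the single-task case; your direct argument (convexity/Jensen to kill $z_U$, rotational invariance of the Gaussian to diagonalize $WM$ on $R_t$, then the $\|W\|\le 1$, $\|W\|_F\le\sqrt r$ budget forcing weight $1$ on each of the $r$ invariant coordinates) is exactly the content of that lemma in this special case. Likewise, your Rademacher step matches the paper's: it computes $\mathcal R_m = \tfrac{\|u^*_t\|}{m}\,\mathbb E\bigl\|\sum_{i=1}^m z_{R_t}^{(i)}\bigr\|$ and bounds the norm via the chi-squared mean, which is the same as your $\sqrt{\mathbb E\|z_{R_t}\|^2}=\sqrt r$ bound. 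One small slip: the linear-class Rademacher bound is $\|u^*_t\|\sqrt{\mathbb E\|\phi^*_t(x)\|^2}/\sqrt m$, not $\|u^*_t\|\,\mathbb E\|\phi^*_t(x)\|/\sqrt m$; your subsequent Jensen step lands on the right quantity regardless. Finally, the paper's written proof actually omits the third claim (the lower bound on $\mathcal D_i$, $i\neq t$), so your short argument that $z_{P_t}\equiv 0$ under $\mathcal D_i$ forces $u^\top\phi^*_t(x)$ to be a linear function of $z_S$ alone is a welcome addition.
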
 

\mypara{Difference in Learned Features Leads to the Trade-off.}
The key of the analysis (in Appendix~\ref{app:proof-simple}) is about what features are learned in the representations.
Pre-trained on all $T$ tasks, $\phi^*$ is a rotation of the weighted features, where the shared features are weighted by $\sqrt{\alpha}$ and task-specific ones are weighted by $\sqrt{\beta}$.
Pre-trained on one task $\mathcal{D}_t$, $\phi^*_t$ is a rotation of the task-specific features $R_t$.
So compared to $\phi^*_t$, $\phi^*$ 
encodes all invariant features but down-weights the task-specific features $P_t$.

The difference in the learned features then determines the prediction performance and results in a trade-off between universality and label efficiency: compared to $\phi^*_t$, $\phi^*$ is useful for more tasks but has worse performance on the specific task $\mathcal{D}_t$. 
For illustration, suppose $r = 2s$, and the shared and task-specific features are equally important for the labels on the target task: $v_{t,1} = v_{t,2} = {\|{u^*_t}\|^2 / 2}$. In Appendix~\ref{app:imp-trade-off} we show that $\phi^*$ has $\alpha = 1, \beta = {1\over T}$ and the error is $O\left( L \sqrt{{T r\over m }}\|{u^*_t}\| \right)$, while the error using $\phi^*_t$ is $O\left(L\sqrt{{r\over m }}\|{u^*_t}\|\right)$. Therefore, the error when using representations pre-trained on data from $T$ tasks is $O(\sqrt{T})$ worse than that when just pre-training on data from the target task. On the other hand, the former can be used in all $T$ tasks and the prediction error diminishes with the labeled data number $m$. While the latter only encodes $R_t$ and the only useful features on the other tasks are $z_S$, then even with infinite labeled data the error can be large ($\ge \min_{u} \mathbb{E} [\ell_c(u^\top z_S, y) ]$, the approximation error using only the common features $z_S$ for prediction).  

\mypara{Improving the Trade-off via Contrastive Regularization.}
The above analysis provides some guidance on improving the trade-off, in particular, improving the target prediction accuracy when given a pre-trained representation $\phi^*$. 
It suggests that when $\phi^*$ is pre-trained on diverse data, one can update it by contrastive learning on some unlabeled data from the target task, which can get better features and better predictions. This is indeed the case for the illustrative example above. We can show that updating $\phi^*$ by contrastive learning on $\mathcal{D}_t$ can increase the weights $\beta$ on the task-specific features $z_{P_t}$, and thus improve the generalization error (formal analysis in Appendix~\ref{app:cr-analysis}). 

In practice, typically one will learn the classifier and also fine-tune the representation with a labeled dataset $\{(x_i,y_i)\}_{i=1}^m$ from the target task. We thus propose  \emph{contrastive regularization} for fine-tuning: for each data point $(x,y)$, generate contrastive pairs $\mathcal{R} =  \{(\tilde{x}, \tilde{x}^+, \tilde{x}^-)\}$ by applying transformations, and add the contrastive loss on these pairs as a regularization term to the classification loss:
\begin{align} 
    \ell_c(f(\phi(x)), y) ~+~ \frac{\lambda}{|\mathcal{R}|} \sum_{(\tilde{x}, \tilde{x}^+, \tilde{x}^-) \in \mathcal{R} } \ell \left( \phi(\tilde{x})^\top (\phi(\tilde{x}^+) - \phi(\tilde{x}^-)) \right).
\end{align}
This method is simple and generally applicable to different models and algorithms. Similar ideas have been used in graph learning~\citep{ma2021improving}, domain generalization~\citep{kim2021selfreg} and semi-supervised learning~\citep{lee2022contrastive}, while we use it in fine-tuning for learning predictors.
Our experiments in Section~\ref{sec:method} show that it can consistently improve the prediction performance compared to the typical fine-tuning approach. 

\section{Experiments}\label{sec:exp}
 
We conduct experiments to answer the following questions. (\textbf{Q1}) Does the trade-off between universality and label efficiency exist when training on real datasets? (\textbf{Q2}) What factors lead to the trade-off? (\textbf{Q3}) How can we alleviate the trade-off, particularly in large foundation models? 
Our experiments provide the following answers: (\textbf{A1}) The trade-off widely exists in different models and datasets when pre-training on large-scale unlabeled data and adapting with small labeled data (see Section~\ref{sec:exist}). 
This justifies our study and aligns with our analysis. (\textbf{A2}) Different datasets own many private invariant features leading to the trade-off, e.g., FaceScrub and CIFAR-10 do not share many invariant features (see Section~\ref{sec:property}). It supports our analysis in Section~\ref{sec:analysis-simple}.  (\textbf{A3}) Our proposed method, Finetune with Contrastive Regularization, can improve the trade-off  consistently (see Section~\ref{sec:method}). Please refer to our released code\footnote{ \url{https://github.com/zhmeishi/trade-off_contrastive_learning}} for more details.

\subsection{Verifying the Existence of the Trade-off }\label{sec:exist}

\begin{figure*}[ht!]
    \centering
    \newcommand{\imagewidth}{0.33\linewidth}
    \subfloat[ CIFAR-10]{\includegraphics[width=\imagewidth]{figure/down_vs_avg/moco_cifar10_100.pdf}}
    \subfloat[ MNIST]{\includegraphics[width=\imagewidth]{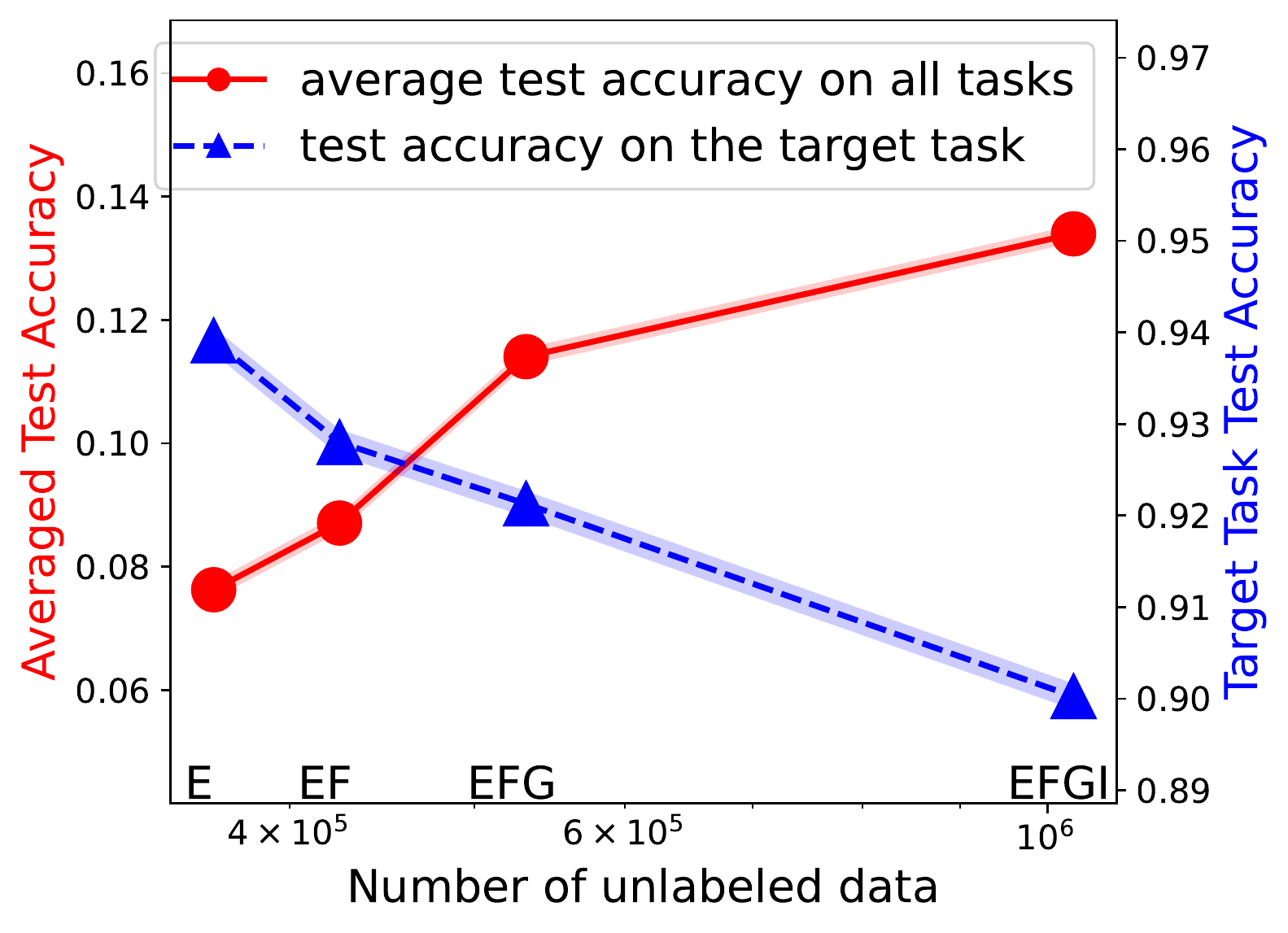}}
    \subfloat[ Fer2013]{\includegraphics[width=\imagewidth]{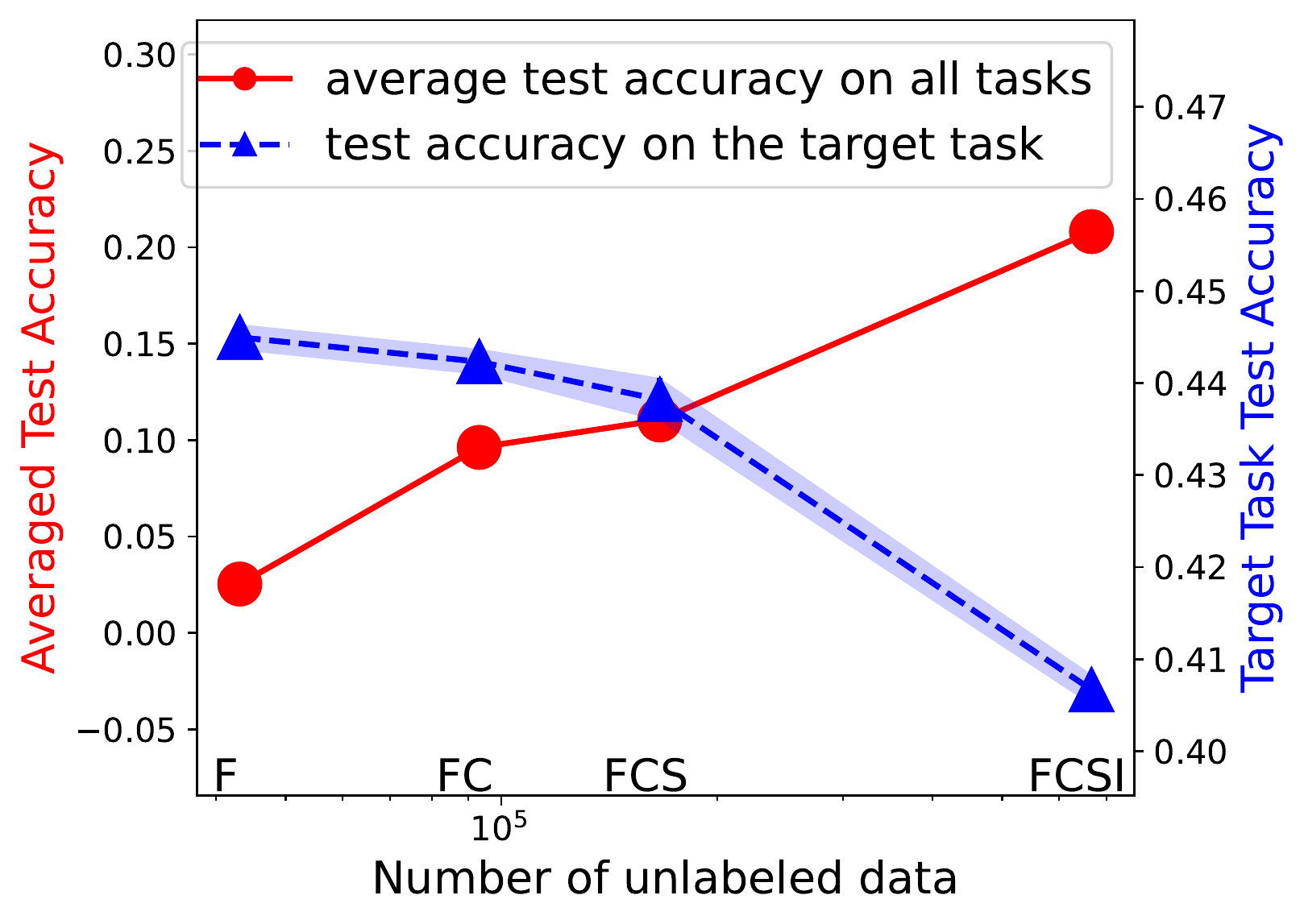}}
    \caption{\small 
    Trade-off between universality and label efficiency for MoCo v2. Appendix~\ref{app:trade_extend} shows similar results for more methods and datasets.
    $x$-axis: incrementally add datasets for pre-training MoCo v2.  
    \textbf{(a)} Pre-training data: CINIC-10 (C), SVHN (S), GTSRB (G), and ImageNet32 (I). E.g., ``CS'' on the $x$-axis means CINIC-10+SVHN. Target task: CIFAR-10. Red line: average test accuracy of Linear Probing on all 4 datasets. Blue line: test accuracy on the target task. \textbf{(b)} EMNIST-Digits\&Letters (E), Fashion-MNIST (F), GTSRB (G), ImageNet32 (I). Target: MNIST. \textbf{(c)} FaceScrub (F), CIFAR-10 (C), SVHN (S), ImageNet32 (I). Target: Fer2013.  
    Note that training does \textit{not} follow the online learning fashion, e.g., the model will pre-train from scratch (random initialization) on the CSG datasets, rather than using the model pre-trained on the CS datasets.
    }
    \label{fig:trade_off_main}
\end{figure*}

\mypara{Evaluation \& Methods.} 
We first pre-train a ResNet18 backbone~\citep{he2016deep} with different contrastive learning methods and then do Linear Probing (LP, i.e., train a linear classifier on the feature extractor) with the labeled data from the target task. We report the test accuracy on a specific target task and the average test accuracy on all pre-training datasets (i.e., using them as the downstream tasks). Appendix~\ref{app:trade}  presents full details  
and additional results, while Fig.~\ref{fig:trade_off_main} shows the results for the method MoCo v2.
The size and diversity of pre-training data are increased on the $x$-axis by incrementally adding unlabeled training data from: (a) CINIC-10, SVHN, GTSRB, ImageNet32 (using only a 500k subset); (b) EMNIST-Digits\&Letters, Fashion-MNIST, GTSRB, ImageNet32; (c) FaceScrub, CIFAR-10, SVHN, ImageNet32.
We further perform larger-scale experiments:
(1) on ImageNet (see Fig.~\ref{fig:large_dataset}); (2) on ImageNet22k and GCC-15M (see Appendix~\ref{app:larger_dataset}). 


\mypara{Results.}  
The results show that when the pre-training data becomes more diverse, the average test accuracy on all pre-training datasets increases (i.e., universality improves), while the test accuracy on the specific target task decreases (i.e., label efficiency drops). This shows a clear trade-off between universality and label efficiency.  
It supports our claim that diverse pre-training data allow learning diverse features for better universality, but can down-weight the features for a specific task resulting in worse prediction.
Additional results in the appendix show similar trends (e.g., for methods NNCLR and SimSiam). This validates our theoretical analysis of the trade-off.

\begin{figure}[t]
\vspace{-4mm}
    \begin{center}
    \subfloat[ MoCo v3 (backbone ViT-S)]{\includegraphics[width=0.33\linewidth]{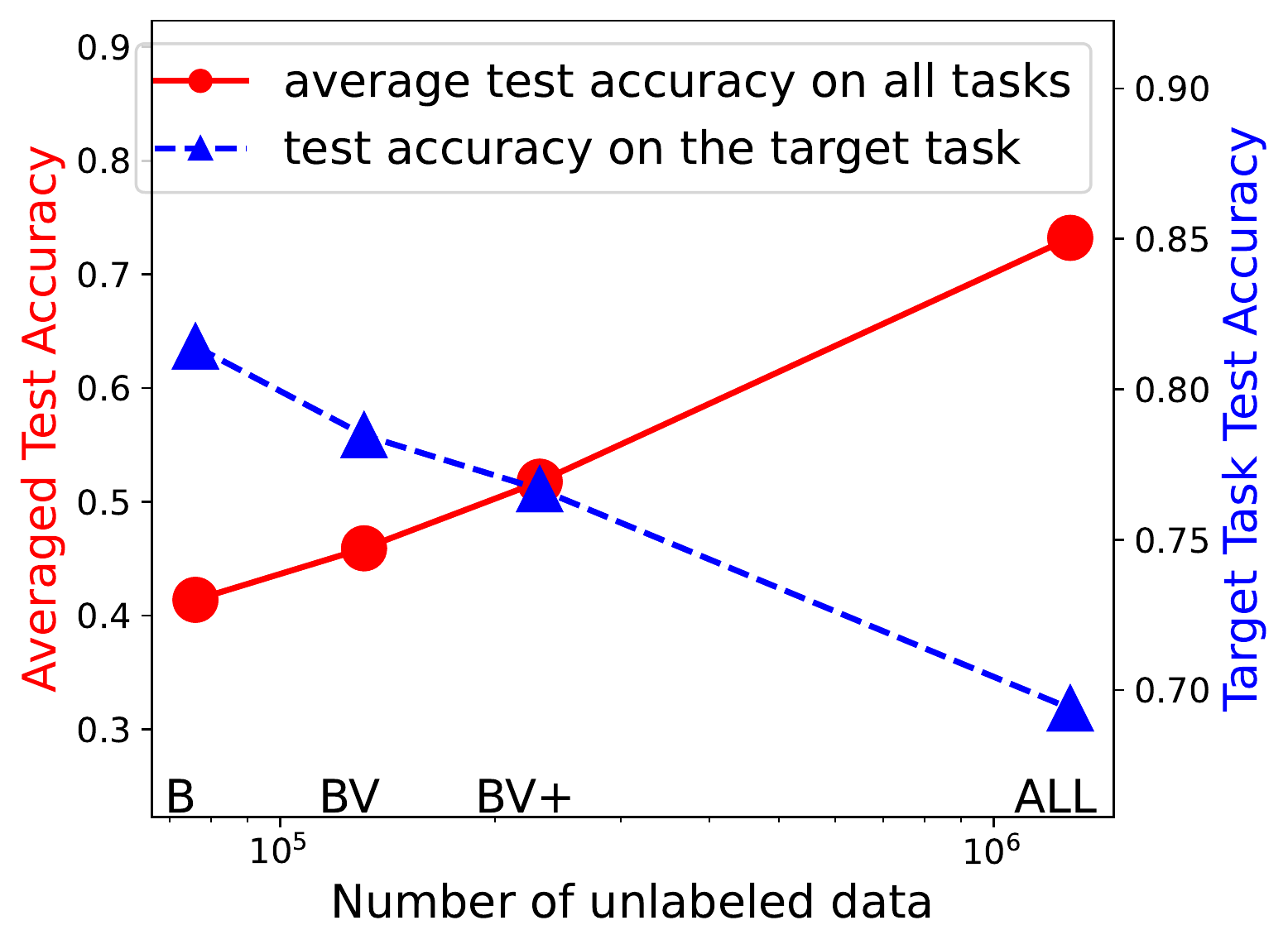}}
    \quad
    \subfloat[ SimSiam (backbone ResNet50)]{\includegraphics[width=0.33\linewidth]{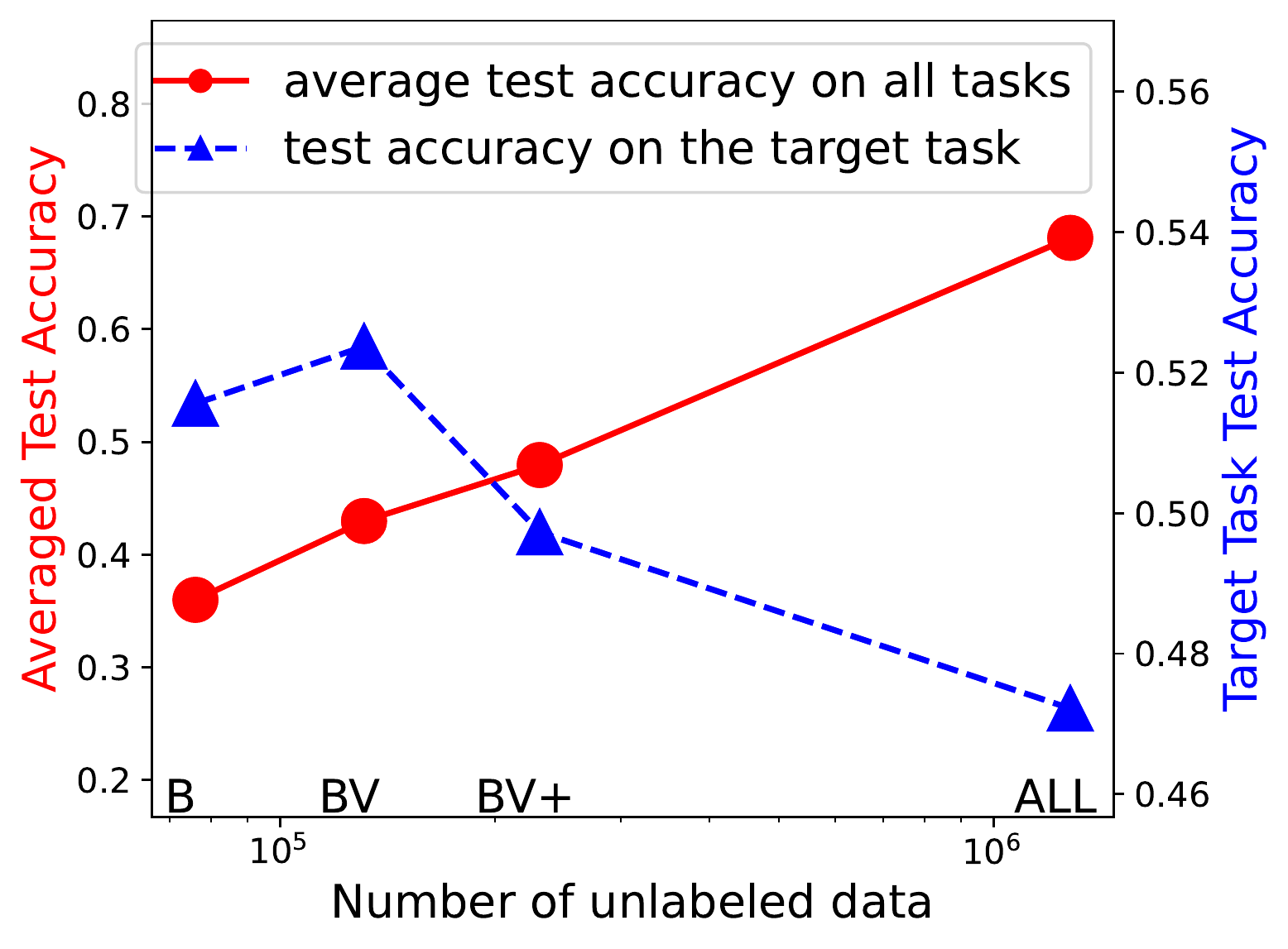}}
    \end{center}
    \caption{\small Trade-off between universality and label efficiency on ImageNet. $x$-axis: from left to right, incrementally add ImageNet-Bird (B), ImageNet-Vehicle (V), ImageNet-Cat/Ball/Shop/Clothing/Fruit (+), and ImageNet (ALL) for pre-training (a) MoCo v3 with backbone ViT-S (b) SimSiam with backbone ResNet50. For example, ``BV'' means ImageNet-Bird + ImageNet-Vehicle. Target: ImageNet-Bird.   
    }
    \label{fig:large_dataset}
    \vspace{-4mm}
\end{figure}

\subsection{Inspecting the Trade-off: Feature Similarity}\label{sec:property}

\begin{wrapfigure}{r}{0.62\textwidth}
\begin{center}
\vspace{-6mm}
\includegraphics[width=0.28\textwidth]{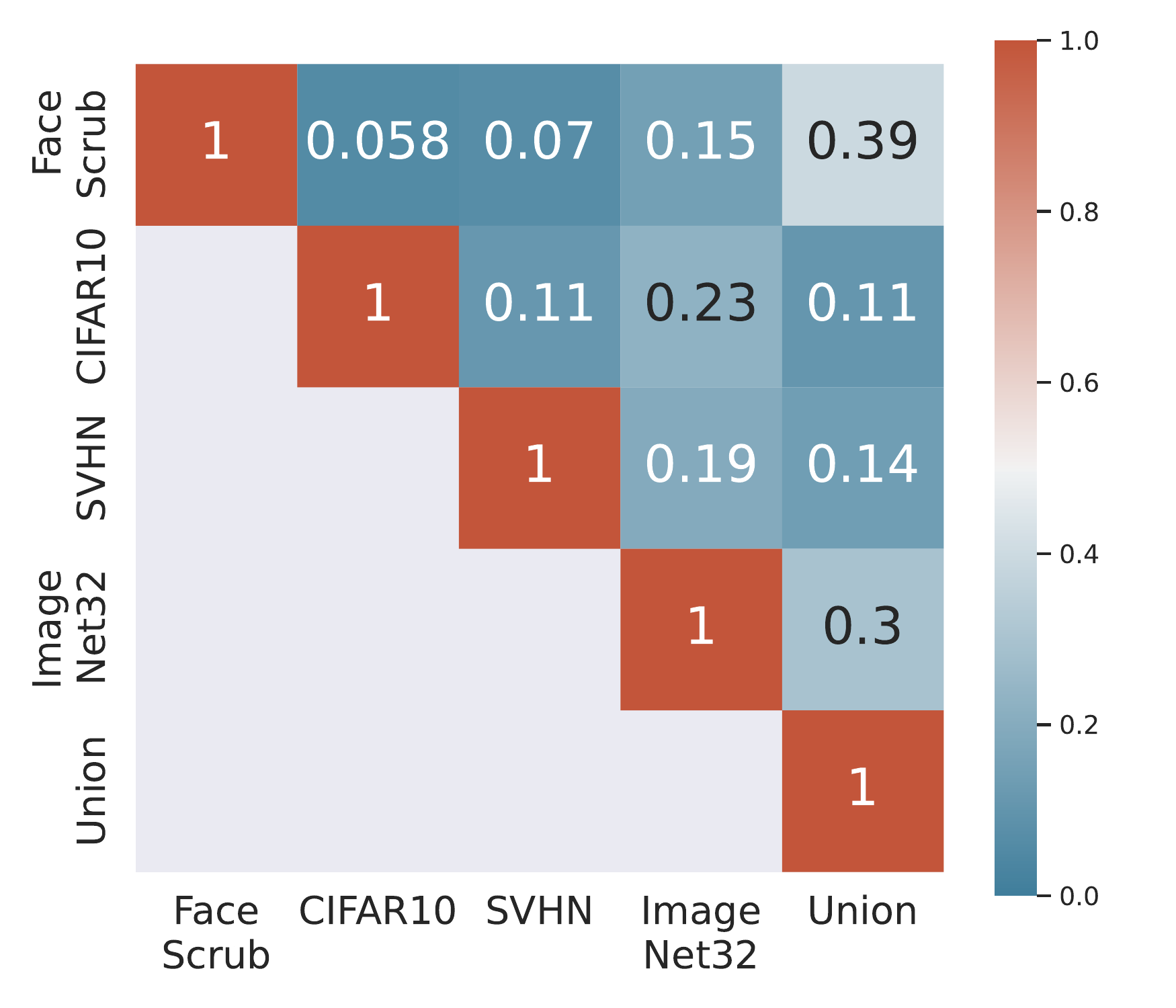}
\includegraphics[width=0.28\textwidth]{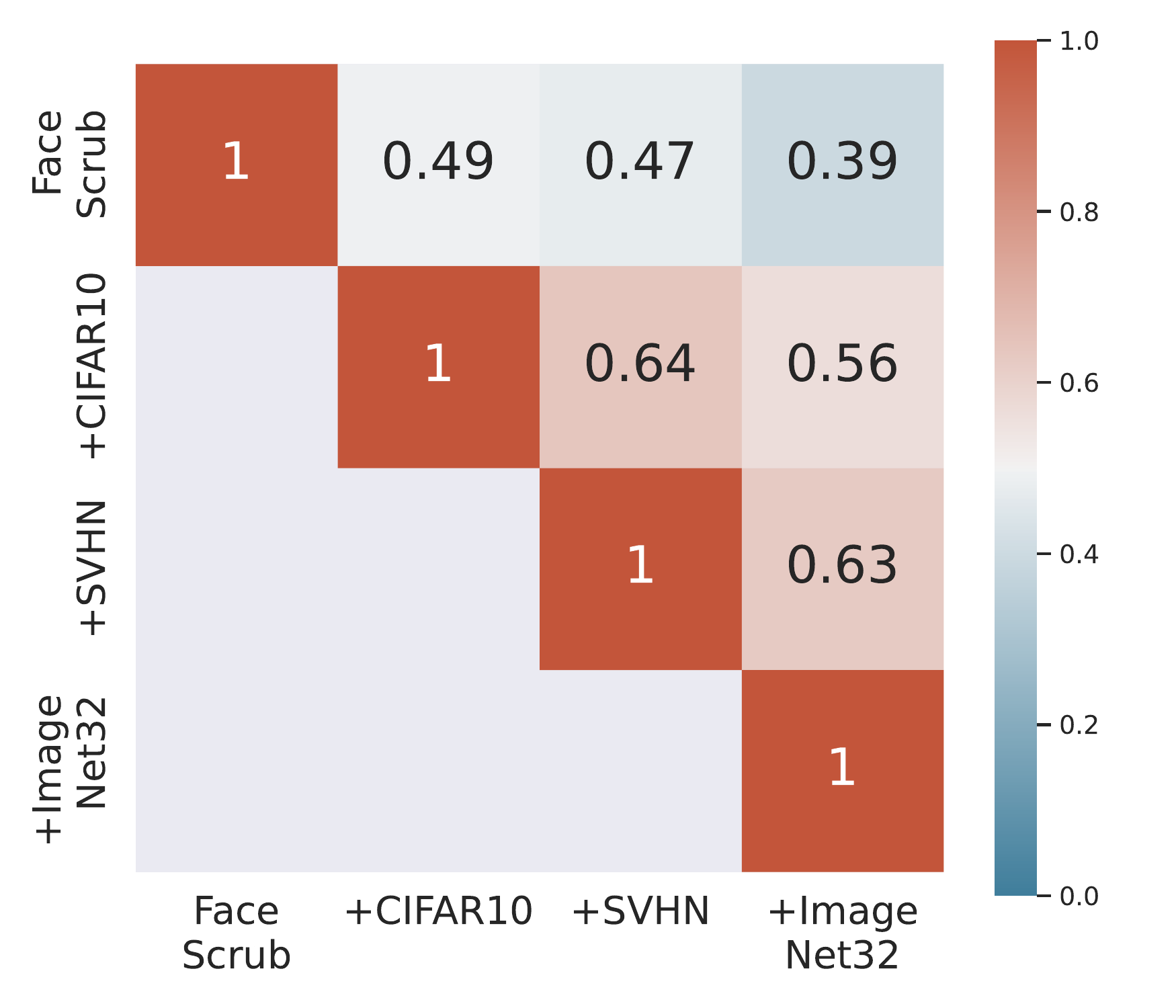}
\end{center}
\caption{\small Linear CKA similarity among Fer2013 features from MoCo v2 pre-trained on different datasets. Left: each representation in the first four columns/rows is pre-trained on a single dataset. ``Union" indicates the model pre-trained on the union of the four disjoint datasets. Right: from left column to right, from top row to bottom, we incrementally add datasets for pre-training. 
}
\label{fig:sim_main}
\end{wrapfigure}

Here we compute the similarity of the features learned from different pre-training datasets for a target task. 
For each pre-trained model, we extract a set of features for the target task Fer2013 using the pre-trained representation function. 
Then we compute the similarities between the extracted features based on different pre-training dataset pairs using linear Centered Kernel Alignment (CKA)~\citep{kornblith2019similarity}, a widely used tool for high-dimensional feature comparison.
Figure~\ref{fig:sim_main} reports the results (rows/columns are pre-training data; numbers/colors show the similarity). 
The left figure shows that the features from different pre-training datasets have low similarities. This is consistent with our setup in Section~\ref{sec:analysis-simple} that different tasks only share some features and each owns many private ones. 
The right figure shows a decreasing trend of similarity along each row. This indicates that when gradually adding more diverse pre-training data, the learned representation will encode more downstream-task-irrelevant features, and become less similar to that prior to adding more pre-training data. Additional results with similar observations, finer-grained investigation into the trade-off, and some ablation studies are provided in Appendix~\ref{app:property}.


\subsection{Improving the Trade-off: Finetune with Contrastive Regularization}\label{sec:method}
\begin{wraptable}{r}{8cm}
\vspace{-4mm}
\renewcommand{\arraystretch}{1.2}
\fontsize{7.2pt}{8.2pt}\selectfont
\centering
\begin{tabular}{|c|c c c c|}
\hline
\rowcolor{lightgray}  & \multicolumn{4}{c|}{\textbf{Pre-training} dataset } \\ 
\rowcolor{lightgray}  Method & {CINIC-10} & {+SVHN} &{+GTSRB} &{+ImageNet32} \\\hline 

LP & 88.41$\pm${\tiny 0.01}  & 85.18$\pm${\tiny 0.01} & 82.07$\pm${\tiny 0.01}   & 75.64$\pm${\tiny 0.03}  \\
FT & 93.58$\pm${\tiny 0.14}  & 93.35$\pm${\tiny 0.10}  & 93.42$\pm${\tiny 0.13}   & 92.92$\pm${\tiny 0.06} \\
Ours & \textbf{94.51}$\pm${\tiny 0.02} & \textbf{94.26}$\pm${\tiny 0.01}  & \textbf{94.32}$\pm${\tiny 0.13} & \textbf{93.66}$\pm${\tiny 0.12} \\ 
\hline  
\end{tabular}
\caption{\small  Test accuracy on CIFAR-10 with different evaluation methods on MoCo v2 by using all CIFAR-10 training data.  From left to right: incrementally add datasets for pre-training. 
}
\label{tab:moco_v2_main}
\end{wraptable}


\mypara{Evaluation \& Methods.}
We pre-train ResNet18 by MoCo v2 as in Section~\ref{sec:exist} and report the test accuracy on CIFAR-10 when the predictor is learned by: Linear Probing (LP), Finetune (FT), and Finetune with Contrastive Regularization (Ours). 
LP follows the training protocol in Section~\ref{sec:exist}. FT and Ours learn a linear predictor and update the representation, and use the same data augmentation for a fair comparison. 
FT follows MAE~\citep{he2022masked}, while
Ours uses MoCo v2 contrastive loss and 
regularization coefficient $\lambda = 0.1$. 
More details and results are given in Appendix~\ref{app:method}.

\mypara{Results.} Table~\ref{tab:moco_v2_main} shows that our method can consistently outperform the other baselines. In particular, it outperforms the typical fine-tuning method by about 0.7\% -- 1\%, even when the latter also uses the same amount of data augmentation. This confirms the benefit of contrastive regularization. 
To further support our claim, Fig.~\ref{fig:cluster} in Appendix~\ref{app:method} visualizes the features of different methods by t-SNE, showing that contrastive regularization can highlight the task-specific features and provide cleaner clustering, and thus improve the generalization, as discussed in our theoretical analysis.


\begin{table}[ht]
\renewcommand{\arraystretch}{1.2}
\fontsize{7.2pt}{8.2pt}\selectfont
\centering
\begin{tabular}{|c|c c c| c c c |c c|}
\hline
\rowcolor{lightgray}  & \multicolumn{3}{c|}{CLIP} & \multicolumn{3}{c|}{MoCo v3} & \multicolumn{2}{c|}{SimCSE}  \\ 

\rowcolor{lightgray}  \multirow{1}{*}{Method} & ImageNet & SVHN & GTSRB & CIFAR-10 & SVHN & GTSRB & IMDB & AGNews
\\\hline 

LP & 77.84$\pm${\tiny 0.02} & 63.44$\pm${\tiny 0.01} & 86.56$\pm${\tiny 0.01}  & 95.82$\pm${\tiny 0.01} & 61.92$\pm${\tiny 0.01} & 75.37$\pm${\tiny 0.01}& 86.49$\pm${\tiny 0.16} & 87.76$\pm${\tiny 0.66}\\
FT & 83.65$\pm${\tiny 0.01} &  78.22$\pm${\tiny 0.18} & 90.74$\pm${\tiny 0.06}  & 96.17$\pm${\tiny 0.12} & 65.36$\pm${\tiny 0.33} & 76.45$\pm${\tiny 0.29}  & 92.31$\pm${\tiny 0.26}& 93.57$\pm${\tiny 0.23}\\
Ours & \textbf{84.94}$\pm${\tiny 0.09} & \textbf{78.72}$\pm${\tiny 0.37}  & \textbf{92.01}$\pm${\tiny 0.28}  & \textbf{96.71}$\pm${\tiny 0.10}  & \textbf{66.29}$\pm${\tiny 0.20} & \textbf{81.28}$\pm${\tiny 0.10}  & \textbf{92.85}$\pm${\tiny 0.03} & \textbf{93.94}$\pm${\tiny 0.02} \\
\hline  
\end{tabular}
\caption{\small Test accuracy for different evaluation methods on different datasets using all training data and using foundation models from CLIP, MoCo v3, and SimCSE. Data augmentation is not used for LP (Linear Probing). For FT (Finetune) and Ours (our method), 10 augmentations to each training images are used for CLIP, MoCo v3, and unique augmentation in each training step is used for SimCSE. More results are in Appendix~\ref{app:foundation}.}
\label{tab:foundation}
\end{table}  

\mypara{Larger Foundation Models.} 
We further evaluate our method on several popular real-world large representation models (foundation models).
On some of these models, the user may be able to fine-tune the representation when learning predictors. On very large foundation models, the user typically extracts feature embeddings of their data from the models and then trains a small predictor, called adapter~\citep{hu2021lora, sung2022vl}, on these embeddings. 
We evaluate CLIP (ViT-L~\citep{dosovitskiy2020image} as the representation backbone), MoCo v3 (ViT-B backbone), and SimCSE~\citep{gao2021simcse} (BERT backbone). They are trained on (image, text), (image, image), and (text, text) pairs, respectively, so cover a good spectrum of methods.
For CLIP and MoCo v3, the backbone is fixed. LP uses a linear classifier, while FT and Ours insert a two-layer ReLU network as an adapter between the backbone and the linear classification layer. Ours uses the SimCLR contrastive loss on the output of the adapter.
For SimCSE, all methods use linear classifiers. LP fixes the backbone, while FT and Ours train the classifier and fine-tune the backbone simultaneously. Ours uses the SimCSE contrastive loss on the backbone feature. We set the regularization coefficient $\lambda = 1.0$.

Table~\ref{tab:foundation} again shows that our method can consistently improve the downstream prediction performance for all three models by about 0.4\% -- 4.8\%, and quite significantly in some cases (e.g., 1.3\% for CLIP on ImageNet, 4.8\% for MoCo v3 on GTSRB). 
This shows that our method is also useful for large foundation models, even when the foundation models cannot be fine-tuned and only the extracted embeddings can be adapted. 
Full details and more results are provided in Appendix~\ref{app:foundation}.




\section{Conclusion and Future Work}\label{sec:dis}
In this work, we have shown and analyzed the trade-off between universality and label efficiency of representations in contrastive learning. 
There are many interesting open questions for future work. 
(1) What features does the model learn from specific pre-training and diverse pre-training datasets beyond linear data? 
(2) Do the other self-supervised learning methods have a similar trade-off?
(3) Can we address the trade-off better to gain both properties at the same time?

\section*{Acknowledgments}
The work is partially supported by Air Force Grant FA9550-18-1-0166, the National Science Foundation (NSF) Grants CCF-FMitF-1836978, IIS-2008559, SaTC-Frontiers-1804648, CCF-2046710 and CCF-1652140, and ARO grant number W911NF-17-1-0405. Jiefeng Chen and Somesh Jha are partially supported by the DARPA-GARD problem under agreement number 885000.
Jayaram Raghuram was partially supported through the National Science Foundation's grants CNS-2112562 and CNS-2003129.

\bibliography{ref}
\bibliographystyle{iclr2023_conference}

\appendix
\newpage
\begin{center}
	\textbf{\LARGE Appendix }
\end{center}
\section{Proofs for Section~\ref{sec:analysis-general}} \label{app:proof-general}

\begin{theorem}[Restatement of Theorem~\ref{thm:general-pca}]
If $\ell(t)= - t$, then the contrastive loss is equivalent to the PCA objective on $\phi_{z_R}$:
\begin{align}
    & \mathbb{E} \left[ \ell\left(\phi(x)^\top[ \phi(x^+) -  \phi(x^-) ] \right) \right]   = - \mathbb{E}\left[    \|\phi_{z_R} - \phi_0\|^2 \right].
\end{align}
If additionally $\phi(x)$ is linear in $x$, then the contrastive loss is equivalent to the linear PCA objective on data from the distribution $p_{\bar{x}}$ of $\bar{x} = \mathbb{E}_{z_U} [x]$:
\begin{align}
    & \mathbb{E} \left[ \ell\left(\phi(x)^\top[ \phi(x^+) -  \phi(x^-) ] \right) \right]   = - \mathbb{E}  \left[    \|\phi(\bar{x}) - \phi_0\|^2 \right].
\end{align}
\end{theorem}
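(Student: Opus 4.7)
The plan is to substitute $\ell(t)=-t$ directly and then decouple the three expectations using the independence structure baked into the hidden-representation data model. After expansion the contrastive loss becomes
\[
-\mathbb{E}\bigl[\phi(x)^\top \phi(x^+)\bigr] + \mathbb{E}\bigl[\phi(x)^\top \phi(x^-)\bigr],
\]
and the whole argument reduces to evaluating these two inner products.

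For the negative term, I would invoke the fact that $x^-$ is drawn i.i.d.\ from the same marginal as $x$, so by independence $\mathbb{E}[\phi(x)^\top \phi(x^-)] = \|\phi_0\|^2$ with $\phi_0 = \mathbb{E}[\phi(x)]$. For the positive term I would condition on the shared invariant component $z_R$: since $x=g(z_R,z_U)$ and $x^+ = g(z_R,z_U^+)$ with $z_U,z_U^+$ drawn independently from $\mathcal{D}_U$, the pair $(x,x^+)$ is conditionally independent given $z_R$, and both conditional means equal $\phi_{z_R}=\mathbb{E}[\phi(g(z))\mid z_R]$. Tower property then gives $\mathbb{E}[\phi(x)^\top \phi(x^+)] = \mathbb{E}[\|\phi_{z_R}\|^2]$, and simultaneously $\phi_0 = \mathbb{E}[\phi_{z_R}]$ by the same conditioning.

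Combining these two computations, the contrastive loss becomes $-\bigl(\mathbb{E}[\|\phi_{z_R}\|^2] - \|\phi_0\|^2\bigr)$. I would then recognize this as a variance identity: expanding $\mathbb{E}[\|\phi_{z_R}-\phi_0\|^2]$ and using $\mathbb{E}[\phi_{z_R}]=\phi_0$ yields exactly $\mathbb{E}[\|\phi_{z_R}\|^2]-\|\phi_0\|^2$, which matches the stated PCA objective.

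For the linear specialization, the observation is that when $\phi(x)=Wx$, the conditional expectation commutes with $\phi$, so $\phi_{z_R} = W\,\mathbb{E}[x\mid z_R] = \phi(\bar{x})$ with $\bar{x}=\mathbb{E}[g(z)\mid z_R]$, and substituting into the PCA objective derived above finishes the claim. I do not anticipate a real obstacle here; the only subtle point is the symmetry argument that $\mathbb{E}[\phi(x^+)\mid z_R]$ equals $\mathbb{E}[\phi(x)\mid z_R]$, which follows because $z_U^+$ is drawn from the same distribution $\mathcal{D}_U$ as $z_U$, so one must be careful to write the conditioning on $z_R$ rather than on the full $z$.
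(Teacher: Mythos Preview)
Your proposal is correct and takes essentially the same approach as the paper: both substitute $\ell(t)=-t$, exploit the conditional independence of $(x,x^+)$ given $z_R$ and the full independence of $x^-$ to reduce the loss to $\mathbb{E}[\phi_{z_R}^\top(\phi_{z_R}-\phi_{z_R^-})]$ (equivalently $\mathbb{E}[\|\phi_{z_R}\|^2]-\|\phi_0\|^2$), and then recognize this as the variance $\mathbb{E}[\|\phi_{z_R}-\phi_0\|^2]$. The only difference is cosmetic---you split the two inner products and evaluate them separately before invoking the variance identity, whereas the paper keeps them together and centers via $\bar\phi_{z_R}=\phi_{z_R}-\phi_0$---but the underlying ideas and the linear specialization are identical.
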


\begin{proof}
We first present some preliminaries for the proof.
Recall that in our hidden representation data model $x = g(z)$. The learned representation is $\phi(x) = \phi(g(z)) = \phi \circ g(z)$. For brevity, let us define $\,\phi(x) = \phi \circ g(z) := h(z)$.
Also, the hidden representations corresponding to $(x, x^+, x^-)$ are given by $(z, z^+, z^-)$, where 
\begin{align*}
z = [z_R \,;\, z_U], ~~z^+ = [z_R \,;\, z^+_U], ~~z^- = [z^-_R \,;\, z^-_U],
\end{align*}
where $z_R$ and $z^-_R$ are sampled independently from the distribution $\mathcal{D}_R$; and $z_U, z^+_U$, and $z^-_U$ are sampled independently from the distribution $\mathcal{D}_U$.
The expectation of an arbitrary function $f(z, z^+, z^-)$ can be simplified as follows:
\begin{align*}
\mathbb{E}_{(z, z^+, z^-)} \left[ f(z, z^+, z^-) \right] ~&=~ \mathbb{E}_{(z_R, z_R^-, z_U, z_U^+, z_U^-)} \left[ f(z, z^+, z^-) \right] \\
&=~ \mathbb{E}_{(z_R, z_R^-)}\left[\, \mathbb{E}_{(z_U, z_U^+, z_U^-)}\left[ f(z, z^+, z^-) ~|~ z_R, z_R^- \right] \,\right].
\end{align*}
The second step follows the law of iterated expectations. 

The negative expected contrastive loss is
\begin{align}
    & \quad -\mathbb{E}_{(x, x^+, x^-)} \left[\, \ell\left(\phi(x)^\top[ \phi(x^+) -  \phi(x^-) ] \right) \,\right]
    \\
    & = -\mathbb{E}_{(z, z^+, z^-)} \left[\, \ell\left(\phi(g(z))^\top[ \phi(g(z^+)) - \phi(g(z^-)) ] \right) \,\right]  
    \\ 
    & = \mathbb{E}_{(z, z^+, z^-)} \left[\, h(z)^\top[ h(z^+) - h(z^-) ]   \,\right]  
    \\
    & = \mathbb{E}_{(z_R, z_R^-)} \left[\, \mathbb{E}  \left[\, h(z)^\top [ h(z^+) - h(z^-) ] ~|~ z_R, z_R^- \,\right] \,\right]
    \\
    & = \mathbb{E}_{(z_R, z_R^-)} \left[\, \mathbb{E} \left[ h(z) ~|~ z_R \right]^\top \left(  \mathbb{E} \left[ h(z^+) ~|~ z_R \right] - \mathbb{E} \left[ h(z^-) ~|~ z_R^- \right] \right)  \,\right] 
    \\
    & = \mathbb{E}_{(z_R, z_R^-)} \left[\, \mathbb{E} \left[ \phi(x) ~|~ z_R \right]^\top \left(  \mathbb{E} \left[ \phi(x^+) ~|~ z_R \right] - \mathbb{E} \left[ \phi(x^-) ~|~ z_R^- \right] \right)  \,\right]
    \\
    & = \mathbb{E}_{(z_R, z_R^-)} \left[ \phi_{z_R}^\top \left( \phi_{z_R} - \phi_{z_R^-}  \right) \right]. 
\end{align}
The second equality follows from the choice of loss $\ell(t) = -t$, and the fourth equality follows from the fact that $z_U, z^+_U$, and $z^-_U$ are sampled independently from the distribution $\mathcal{D}_U$.
Also, we have defined $\,\phi_{z_R} := \mathbb{E}\left[ \phi(x) ~|~ z_R \right]$.

Denote the centered representation as $\,\bar{\phi}_{z_R} = \phi_{z_R} - \phi_0$.
Then we have
\begin{align}
    & \quad -\mathbb{E}_{(x, x^+, x^-)} \left[ \ell\left(\phi(x)^\top[ \phi(x^+) -  \phi(x^-) ] \right) \right]
    \\
    & = \mathbb{E}_{(z_R, z_R^-)} \left[    \phi_{z_R}^\top \left( \phi_{z_R} -   \phi_{z_R^-}  \right) \right]
    \\
    & = \mathbb{E}_{(z_R, z_R^-)} \left[    (\bar\phi_{z_R} + \phi_0)^\top \left( \bar\phi_{z_R} + \phi_0 -   \bar\phi_{z_R^-}  - \phi_0 \right) \right]
    \\
    & = \mathbb{E}_{(z_R, z_R^-)} \left[    (\bar\phi_{z_R} + \phi_0)^\top \left( \bar\phi_{z_R}  -   \bar\phi_{z_R^-}  \right) \right]
    \\
    & = \mathbb{E}_{(z_R, z_R^-)} \left[    \bar\phi_{z_R}^\top \bar\phi_{z_R}  -   \bar\phi_{z_R}^\top \bar\phi_{z_R^-}    \right] + \mathbb{E}_{(z_R, z_R^-)} \left[ \phi_0^\top \left( \bar\phi_{z_R}  -   \bar\phi_{z_R^-}  \right) \right].
\end{align}
Since $\bar\phi_{z_R}\, \text{ and } \,\bar\phi_{z_R^-}$ are independent with mean 0, we have $ \mathbb{E}_{(z_R, z_R^-)} [ \bar\phi_{z_R}^\top \bar\phi_{z_R^-}  ] = 0, ~\mathbb{E}_{(z_R, z_R^-)} [ \phi_0^\top \bar\phi_{z_R} ] = 0$, and $  \mathbb{E}_{(z_R, z_R^-)} [ \phi_0^\top \bar\phi_{z_R^-} ] = 0$. 
Therefore,
\begin{align}
    & \quad -\mathbb{E}_{(x, x^+, x^-)} \left[ \ell\left(\phi(x)^\top[ \phi(x^+) -  \phi(x^-) ] \right) \right] 
    \\
    & = \mathbb{E}_{z_R} \left[    \bar\phi_{z_R}^\top \bar\phi_{z_R}      \right] 
    \\
    & = \mathbb{E}_{z_R} \left[    \|\bar\phi_{z_R} \|^2\right] 
    \\
    & = \mathbb{E}_{z_R} \left[    \|\phi_{z_R} - \phi_0\|^2 \right],  
\end{align}
which is the PCA objective on the mean representation $\phi_{z_R}$.

If additionally $\phi(x)$ is linear in $x$, then
\begin{align}
    & \quad -\mathbb{E}_{(x, x^+, x^-)} \left[ \ell\left(\phi(x)^\top[ \phi(x^+) -  \phi(x^-) ] \right) \right] 
    \\ 
    & = \mathbb{E}_{z_R} \left[    \|\phi_{z_R} - \phi_0\|^2 \right]  
    \\
    & = \mathbb{E}_{\bar{x} } \left[    \|\phi(\bar{x}) - \phi(x_0)\|^2 \right]
\end{align}
which is the linear PCA objective on the data from the distribution of $\bar{x} = \mathbb{E}[x | z_R]$.
\end{proof}

\begin{theorem}[Restatement of Theorem~\ref{thm:encode}]
Under Assumptions \textbf{(A1)(A2)(A3)}:
\begin{itemize}
    \item[(1)] The optimal representation $\phi^*$ does not encode $z_U$: $\phi^* \circ g(z)$ is independent of $z_U$.
    \item[(2)] For any invariant feature $i \in R$, there exists $B_i>0$ such that as long as the representations' norm $B_r \ge B_i$, the optimal representation encodes $z_i$. Furthermore, if $z_R$ is discrete, then $B_i$ is monotonically decreasing in $\Pr[z_{R\setminus \{i\} } = z^-_{R\setminus \{i\} }, z_i \neq z^-_i]$, the probability that in $z_R$ and $z^-_R$, the $i$-th feature varies while the others remain the same. 
\end{itemize}
\end{theorem}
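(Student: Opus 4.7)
My plan is to reduce both parts to a problem on the sphere of radius $B_r$ that depends only on $z_R$, using Jensen's inequality in much the same way as in the proof of Theorem~\ref{thm:general-pca}. Write $h := \phi\circ g$ and $\bar h(z_R) := \mathbb{E}_{z_U}[h(z_R, z_U)]$. Since $(z_U, z_U^+, z_U^-)$ are iid given $(z_R, z_R^-)$ and $\ell$ is convex, Jensen's inequality conditional on $(z_R, z_R^-)$ yields
\[\mathbb{E}\bigl[\ell\bigl(h(z_R,z_U)^\top(h(z_R,z_U^+) - h(z_R^-,z_U^-))\bigr) \,\big|\, z_R, z_R^-\bigr] \ \ge\ \ell\bigl(\|\bar h(z_R)\|^2 - \bar h(z_R)^\top \bar h(z_R^-)\bigr).\]
For Part~(1), I would track the equality case: when $\ell$ is strictly convex on the range of the argument, Jensen equality requires that argument to be a.s.\ constant in $(z_U, z_U^+, z_U^-)$. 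Under the norm constraint $\|h\|=B_r$, choosing $z_U = z_U^{+(1)}$ and varying $z_U^{+(2)}$ in the constancy condition yields $h(z_R, z_U)^\top h(z_R, z_U^{+(2)}) = B_r^2$, and Cauchy--Schwarz equality then forces $h(z_R, \cdot)$ to be constant. The residual case where $\ell$ is affine on the relevant range is covered by Theorem~\ref{thm:general-pca}, whose PCA-form optimum likewise strips any $z_U$-dependence. Assumption~(A1) then lets me transfer this conclusion from $h$ back to $\phi^*\circ g$.

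For Part~(2), after restricting via Part~(1) to $h = v(z_R)$ with $\|v(z_R)\|=B_r$, the objective becomes $\mathbb{E}_{z_R, z_R^-}[\ell(B_r^2 - v(z_R)^\top v(z_R^-))]$. Because $\ell$ is decreasing, minimizing this amounts to making the Gram kernel $K(r,r') := v(r)^\top v(r')$ as small as possible in expectation, subject to $K(r,r) = B_r^2$ and $K$ PSD of rank $\le k$. A KKT / Schur-majorization analysis then reveals that the minimizer allocates its spectral mass to principal directions of the centered distribution of $z_R$, ordered by explained variance. When $\mathcal{Z}$ is finite, a direct expansion identifies the principal-direction mass associated with ``varying only coordinate $i$ while holding $z_{R\setminus\{i\}}$ fixed'' as proportional to $p_i := \Pr[z_{R\setminus\{i\}} = z^-_{R\setminus\{i\}},\ z_i \ne z^-_i]$. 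Ordering directions greedily by spectral weight, feature $z_i$ is encoded once $B_r$ exceeds a threshold $B_i$, and $B_i$ decreases in $p_i$ because larger spectral weight means direction $i$ is absorbed earlier in the greedy allocation.

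The hardest step is reconciling the Jensen lower bound in Part~(1) (which naturally produces a representation of norm $\le B_r$) with the exact constraint $\|\phi\|=B_r$ required by $\Phi$. A naive radial rescaling $\bar h \mapsto B_r \bar h/\|\bar h\|$ does \emph{not} uniformly decrease the loss: a direct calculation on two $z_R$-values with $\|\bar h\|$ of different magnitudes shows that the sign of $(B_r^2 - \|\bar h(z_R)\|^2) + \cos\theta\,(\|\bar h(z_R)\|\|\bar h(z_R^-)\| - B_r^2)$ can flip. I would bypass this by posing the sphere-constrained problem of Part~(2) directly, using Assumptions~(A2)--(A3) for existence of a minimizer, and then comparing its loss to the Jensen lower bound to close the reduction. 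The Schur-convexity step in Part~(2) extending the argument beyond the quadratic PCA case of $\ell(t)=-t$ will similarly require care for general convex decreasing $\ell$.
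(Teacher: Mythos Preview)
Your Part~(1) is essentially the paper's argument: the paper applies Jensen conditional on $(z_R,z_R^-)$ to get the same lower bound $\ell(\|\phi_{z_R}\|^2 - \phi_{z_R}^\top\phi_{z_R^-})$ and asserts that equality holds precisely when $\phi\circ g$ is $z_U$-invariant. Your extra bookkeeping on the strict-convexity equality case and the norm-constraint reconciliation is more careful than the paper, which leaves the $\|\phi_{z_R}\|\le B_r$ versus $\|\phi\|=B_r$ tension implicit.

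For Part~(2), however, the paper takes a completely different and much more concrete route, and your spectral sketch has a real gap. The paper does not analyze kernels or principal directions at all. It argues by contradiction: suppose the optimum is independent of $z_i$, so after Part~(1) $\phi\circ g(z)=f(z_{-i})$ with $\|f\|=B_r$. It then builds an explicit norm-preserving competitor
\[
\bar\phi\circ g(z)=\bigl[\sqrt{1-\alpha^2}\,f(z_{-i}),\ \alpha\,\|f(z_{-i})\|\,\mathbf{I}_z\bigr],
\]
where $\mathbf{I}_z$ is the one-hot indicator of $z$. Splitting the contrastive loss by whether $z_{-i}=z_{-i}^-$ gives a term $T_2$, weighted by $p_i=\Pr[z_{-i}=z_{-i}^-,\,z_i\ne z_i^-]$, that drops from $\ell(0)$ to $\ell(\alpha^2 B_r^2)$, and a term $T_1$ whose possible increase is of size $O(\alpha^2 B_r^2)$ but evaluated at arguments $\gtrsim B_r^2$, where by (A3) the slope of $\ell$ vanishes. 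Taking $\alpha=\sqrt{1/2}/B_r$ makes the $T_2$ gain a fixed constant while the $T_1$ penalty goes to zero as $B_r\to\infty$; the threshold $B_i$ is where these balance, and since $T_2$ carries the factor $p_i$, larger $p_i$ yields smaller $B_i$.

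Your route does not reach this. The claim that the minimizer ``allocates spectral mass to principal directions of the centered distribution of $z_R$, ordered by explained variance'' is not well-posed here: the optimization is over arbitrary maps $v:\mathcal{Z}_R\to S^{k-1}(B_r)$ and the objective $\mathbb{E}[\ell(B_r^2-v(z_R)^\top v(z_R^-))]$ is a function of all off-diagonal entries of the Gram matrix, not a symmetric function of its eigenvalues, so Schur-majorization does not apply for general convex decreasing $\ell$. More importantly, ``encoding $z_i$'' is the structural statement that $v$ is nonconstant along $z_i$-fibers; nothing in your sketch connects a spectral weight to that property, and the greedy-ordering picture tacitly assumes a coordinate-wise product structure on the optimum that is not available in general. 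The paper's one-hot perturbation sidesteps all of this by directly exhibiting a competitor that depends on $z_i$ and strictly improves the loss once $B_r$ is large.
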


\begin{proof}
(1) Recall that
\begin{align}
    \phi_{z_R} = \mathbb{E}  [\phi \circ g(z) ~|~ z_R], \quad \phi_0 = \mathbb{E}_{z} [\phi \circ g(z)] = \mathbb{E}_{z_R} [ \phi_{z_R} ].
\end{align}
Then the contrastive loss at pre-training is: 
\begin{align}
    & \quad \mathbb{E}_{(x, x^+, x^-)} \left[\, \ell\left( \phi(x)^\top[ \phi(x^+) -  \phi(x^-) ] \right) \,\right]
    \\
    & = \mathbb{E}_{(z, z^+, z^-)} \left[\, \ell\left( (\phi \circ g(z))^\top ( \phi \circ g(z^+) - \phi \circ g(z^-) ) \right) \,\right] 
    \\
    & = \mathbb{E}_{(z_R, z_R^-)}\left[\, \mathbb{E} \left[\, \ell\left( (\phi \circ g(z))^\top ( \phi \circ g(z^+) - \phi \circ g(z^-) ) \right) ~|~ z_R, z_R^- \,\right] \,\right]
    \\
    & \ge \mathbb{E}_{(z_R, z_R^-)}\left[\, \ell\left(\, \mathbb{E} \left[\, (\phi \circ g(z))^\top ( \phi \circ g(z^+) - \phi \circ g(z^-) ) ~|~ z_R, z_R^- \,\right] \,\right) \,\right]
    \\
    & = \mathbb{E}_{(z_R, z_R^-)} \left[\, \ell\left(\, \mathbb{E} [ \phi \circ g(z) ~|~ z_R ]^\top \big(\, \mathbb{E} [ \phi \circ g(z^+) ~|~ z_R ] -  \mathbb{E} [ \phi \circ g(z^-) ~|~ z_R^- ] \,\big) \,\right) \,\right] 
    \\
    & = \mathbb{E}_{(z_R, z_R^-)} \left[ \ell\left(  \phi_{z_R}^\top \phi_{z_R} - \phi_{z_R}^\top \phi_{z_R^-} \right) \right], 
\end{align}
where the inequality comes from the convexity of $\ell(z)$ and Jensen's inequality applied to the inner expectation. 
The inequality becomes equality when the representation function $\phi$ is invariant to the spurious features $z_U$, i.e., with probability 1 over the distribution, $\phi \circ g(z) =  \phi_{z_R} $. 
Therefore, the spurious features $z_U$ are not encoded in the optimal representation, proving the first part. 

(2) First consider the case when $z$ has discrete values from a finite set.
When the generative function $g(z)$ is not independent of $z_i$, we assume for contradiction that the optimal representation $\phi$ is independent of $z_i$. From (1), we know that it is independent of $z_U$. So there exists an $f$ such that $\phi \circ g(z) = f(z_{R \setminus \{i\}})$. Without loss of generality, suppose $U=\emptyset$, then $\phi \circ g(z) = f(z_{-i})$. 

Since the generative function $g(z)$ is not independent of $z_i$, there exist $z$ and $z^{-}$, such that $z_{-i} = z^{-}_{-i}$, $z_{i} \neq z^{-}_{i}$, $g(z) \neq g(z^{-})$, and $z, z^{-}$ have non-zero probabilities. So $\Pr[z_{-i} = z^-_{-i}, z_i \neq z^-_i] > 0$. 

Now construct a new representation function $\bar{\phi}\in \mathbb{R}^{k+n}, n = |\reps|$ such that $\bar{\phi} \circ g(z) = h(z)$ as follows :
\begin{align}
    h(z) = \left[ \sqrt{1- \alpha^2} f(z_{-i}), \quad \alpha \|f(z_{-i})\| \mathbf{I}_z \right]
\end{align}
where $\mathbf{I}_z$ is the one-hot encoding of the value $z$. Note that $\bar\phi$ still satisfies that norm bound since $\|\bar\phi(x)\| = \|h(z)\| = \|f(z_{-i})\|$. We next show that the contrastive loss of $\bar{\phi}$ can be smaller than that of $\phi$, leading to a contradiction and finishing the proof. 

The contrastive loss of $\bar\phi$ (using the fact that $z^+ = z$ when $U = \emptyset$) is
\begin{align}
    & \quad \mathbb{E}_{(z, z^-)}\left[ \ell\left( h(z)^\top h(z) - h(z)^\top h(z^-) \right)\right] 
    \\
    & = \mathbb{E}_{(z, z^-)}\left[ \ell\left( h(z)^\top h(z) - h(z)^\top h(z^-) \right) \,|\, z \neq z^-\right] \Pr[z \neq z^-] ~+~ \mathbb{E}_{z, z^-}\left[ \ell(0)\right] \Pr[z = z^-].
\end{align}
We only need to consider the first term.
\begin{align}
    & \quad \mathbb{E}_{(z, z^-)}\left[\, \ell\left( h(z)^\top h(z) - h(z)^\top h(z^-) \right) \,|\, z \neq z^- \,\right] \Pr[z \neq z^-]
    \\
    & =~ \mathbb{E}_{(z, z^-)}\big[\, \underbrace{ \ell\left( \|f(z_{-i})\|^2 - (1-\alpha^2) f(z_{-i})^\top f(z^-_{-i}) \right) }_{T_1} \,|\, z_{-i} \neq z^-_{-i} \,\big] \Pr[z_{-i} \neq z^-_{-i}]
    \\
    & \quad +~ \mathbb{E}_{(z, z^-)}\big[\, \underbrace{\ell\left( \alpha^2\|f(z_{-i})\|^2 \right)}_{T_2} \,|\, z_{-i} = z^-_{-i}, \,z_i \neq z^-_i \,\big] \Pr[z_{-i} = z^-_{-i}, \,z_i \neq z^-_i]. \label{eqn:T2}
\end{align}
When $\alpha=0$, the above reduces to the corresponding terms for $\phi$, so we would like to show that there exists non-zero $\alpha$ that leads to smaller loss values.

Recall that $\ell(\cdot)$ is decreasing by property (A3).
Let $\alpha = \sqrt{1/2}/B_r$, where $B_r = \|f(z_{-i})\|$. Then when switching from $\phi$ to $\bar\phi$, 
$T_2$ goes from $\ell(0)$ to $\ell(1/2)$, a constant reduction. For $T_1$, if $f(z_{-i})^\top f(z^-_{-i})$ is positive, then $T_1$ decreases; if $f(z_{-i})^\top f(z^-_{-i})$ is negative, then $T_1$ increases from  $ \ell(B_r^2 - f(z_{-i})^\top f(z^-_{-i})) $ to $ \ell(B_r^2 - f(z_{-i})^\top f(z^-_{-i}) +  \alpha^2 f(z_{-i})^\top f(z^-_{-i}))$. Note that $|\alpha^2 f(z_{-i})^\top f(z^-_{-i})| \le 1$ (by the Cauchy-Schwarz inequality); so the increase in $T_1$ diminishes when $B_r$ grows, by the property (A3) of $\ell$. Then when $B_r$ is large enough, the increase in $T_1$ is smaller than the decrease in $T_2$. So from $\phi$ to $\bar\phi $, the contrastive loss decreases, contradicting that $\phi$ is optimal. 
Finally, since the reduction in (\ref{eqn:T2}) is smaller when $\Pr[z_{-i} = z^-_{-i}, z_i \neq z^-_i]$ is smaller, then $B_i$ needs to be larger. So $B_i$ is monotonically decreasing in $ \Pr[z_{-i} = z^-_{-i}, z_i \neq z^-_i]$. 

Now consider the general case when $z$ may not be from a finite set. For any $\epsilon_0>0$, there exists a $\ell_2$ ball $\mathcal{B}$ of bounded radius such that the probability of $z$ outside the ball is at most $\epsilon_0$. Since $\phi \circ g$'s are regular by assumption, there exists a partition $\mathcal{Z} \cap  \mathcal{B}$ into finitely many subsets such that in each subset and for each $\phi \circ g$, the function value varies by at most $\epsilon_0$. Construct a new distribution $\mathcal{D}'_z$ for $z$: select a representative point in each subset, and put a probability mass to it equal to that of the original distribution $\mathcal{D}_z$ in this subset, and normalize the probabilities over the subsets. The new distribution is over a finite set so the above argument holds. Furthermore, the difference in the $T_1$ term for $\mathcal{D}'_z$ and $\mathcal{D}_z$ can be made arbitrarily small by choosing sufficiently small $\epsilon_0$; similarly for $T_2$. Then the argument also holds for $\mathcal{D}_z$, which completes the proof for the general case.
\end{proof}

\subsection{Inductive Biases are Needed for Analyzing Prediction Success} \label{app:inductive-bias}

We have analyzed what features are encoded in the representation. However, encoding the information does not equate to good prediction performance, in particular, with linear predictors. 
Recently, \cite{saunshi22understanding} demonstrated that existing analyses 
that ignore the inductive biases of the model and algorithm cannot adequately explain the prediction success, and provided examples where such analysis can lead to vacuous bounds. One may wonder if our hidden representation data model can provide inductive biases that avoid such vacuous bounds. Unfortunately, similar issues as in \cite{saunshi22understanding} remain.

To illustrate that inductive biases are still needed in our data model, consider the following simple example.
Suppose $z_R \in \{-1,1\}^2$ and can be recovered from $x$; the label $y$ is simply the first coordinate in $z_R$. Suppose the representation satisfies $\phi(x) \in \mathbb{R}^2, \|\phi(x)\| = 1$, and contrastive learning uses the logistic loss $\ell(z)$.
Let $\phi(x)$ be such that $\phi\circ g(z) = h(z_R)$, and $h((-1,-1)) = (-1, 0), h((-1,1)) = (1, 0), h((1,-1)) = (0, -1), h((1,1)) = (0, 1)$. It can be verified that this $\phi$ is optimal for the contrastive loss. However, on the representation $\phi$, the classification is an XOR-problem (Fig.~\ref{fig:xor}), for which there is no non-trivial error bound for linear predictors. This contradicts the success of linear probing in practice. 

Furthermore, some restrictions on the data distributions are also needed. Suppose \emph{all} optimal representations are linearly separable with certain inductive biases on the representation function class. Suppose the label $y$ depends on $z_R$. Without restrictions on the labeling function, one can consider a random $y \in \{-1,+1\}$ over any $z_R$. Then for any linear predictor on any optimal representation, in expectation the error is $1/2$, so there is always a labeling function for which no non-trivial error can be achieved. Our analysis thus requires restrictions on the dependence of the label on $z_R$ (in particular, we will assume linear dependence).

\begin{figure}
    \centering 
    {\includegraphics[width=0.4\linewidth]{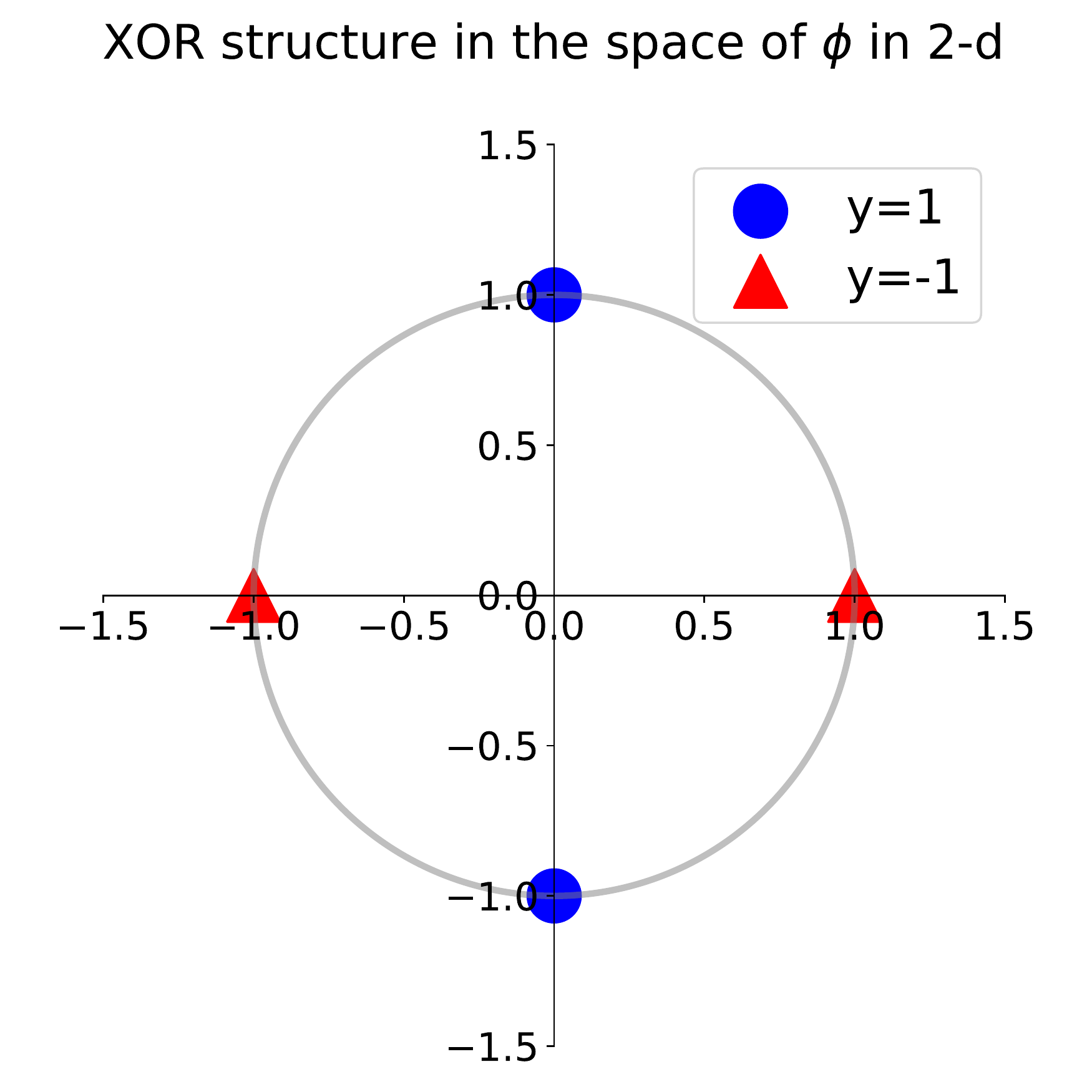}} 
    \caption{A two-dim example of XOR structure in the space of $\phi$. }
    \label{fig:xor}
\end{figure}

\section{Proofs and More Analysis for Section~\ref{sec:analysis-simple} } \label{app:proof} \label{app:proof-simple} 
 
\subsection{Lemmas for a more general setting}

We will prove the results in a more general setting, where the mixture can be uneven and the variances of different types of features can be different. 
The results in Section~\ref{sec:analysis-simple} then follow from these lemmas.

In the more general setting, the diverse pre-training data is a mixture of data from $T$ different tasks $\mathcal{D}_t$'s, while the target task is one of the tasks.  
In the mixture, the task $\mathcal{D}_t$ has weight $w_t > 0$ and $\sum_{t=1}^T w_t = 1$. 
All tasks share a public feature set $S$ of size $s$, and each task $\mathcal{D}_t$ additionally owns a private disjoint feature set $P_t$ of size $r-s$, i.e., $P_t \cap S = \emptyset $ for $t \in [T]$ and $P_{t_1} \cap P_{t_2} = \emptyset$ for $t_1 \neq t_2$. The invariant features for $\mathcal{D}_t$ are then $R_t = S \cup P_t$. All invariant features are $ \cup_{t=1}^T R_t \subseteq R$, $k:=|R|$, and spurious features are $U=[d] \setminus R$. In task $\mathcal{D}_t$, the positive pairs $(x, x^+)$ are generated as follows:
\begin{align} 
    z_{S} \sim \mathcal{N}(0, \sigma_{S,t}^2 I), z_{P_t} \sim \mathcal{N}(0, \sigma_{R,t}^2 I) , z_{R \setminus R_t} = 0, \\
    z_U \sim \mathcal{N}(0, \sigma_{U,t}^2 I), z = [z_R; z_U], & \quad  x = g(z), 
    \\
    z_U^+ \sim \mathcal{N}(0, \sigma_{U,t}^2 I), z^+ = [z_R; z_U^+], & \quad x^+ = g(z^+),
\end{align} 
and $x^-$ is simply an i.i.d.\ copy from the same distribution as $x$. In practice, multiple independent negative examples are used, and thus
we consider the following contrastive loss
\begin{align} \label{eq:CLloss2}
    \min_{\phi \in \Phi} \ \mathbb{E}_{(x, x^+)} \left[ \ell \left( \phi(x)^\top (\phi(x^+) - \mathbb{E}_{x^-}\phi(x^-)) \right) \right]
\end{align}
to pre-train a representation $\phi$.
Then, when using $\phi$ for prediction in the target task $\mathcal{D}_t$, the predictor class should contain a predictor matching the ground-truth label, so consider the class:
\begin{align}
    \gF_{\phi,t} = \{ f(z) = u_t^\top z : u_t \in \mathbb{R}^k, \|u_t\| \le B_{\phi,t} \}
\end{align}
where $B_{\phi,t}$ is the minimum value such that there exists $u_t \in \gF_{\phi,t}$ with $y = u_t^\top \phi(x)$ on $\mathcal{D}_t$. 

Recall that we assume a linear data model and linear representation functions $\phi$: 
\begin{itemize}
    \item $x$ is linear in $z$: 
    $x = g(z) = Mz$ where $M \in \mathbb{R}^{d \times d}$ is an orthonormal dictionary. The label in task $\mathcal{D}_t$ is linear in its invariant features $\,y = (u^*_t)^\top z_{R_t}$ for some $u^*_t \in \mathbb{R}^r$.
    \item The representations are linear functions with weights of bounded spectral/Frobenius norms: 
\begin{align}
    \Phi = \{  \phi(x) = Wx: W {\in} \mathbb{R}^{k\times d}, \|W\| {\le} 1, \|W\|_F {\le} \sqrt{r} \}. \nonumber
\end{align}
Here the norm bounds are chosen to be the minimum values to allow recovering the invariant features in the target task, i.e., there exists $\phi \in \Phi$ such that $\phi(x) = [z_{R_t}; \mathbf{0}]$. 
\end{itemize}
\begin{lemma}  \label{lemma:general-optimal-rep}
Consider the above setting. 
Let $\alpha, \alpha_t (t \in [T])$ be the optimizer for 
\begin{align}
    & \min_{\tilde\alpha, \tilde\alpha_1, \ldots, \tilde\alpha_T} \ \sum_{t=1}^T w_t \mathbb{E}  \left[ \ell\left(  \tilde\alpha \sigma_{S, t}^2 Z + \tilde\alpha_t \sigma_{R,t}^2 Z_t \right) \right], 
    \\
    \textrm{subject to} \quad &  \tilde\alpha s + \sum_{t=1}^T \tilde\alpha_t (r-s) \le r,
    \\
    & \tilde\alpha, \tilde\alpha_t \in [0, 1],
\end{align} 
where $Z \sim \chi^2_s$ and $Z_t \sim \chi^2_{r-s}$. 

Then the optimal representation $\phi^*(x) $ the loss (\ref{eq:CLloss2}) in contrastive learning satisfies $\phi^*(x) = W^* x $ with any $W^*$ of the form:
\begin{align} \label{eq:optWn}
    W^* = 
    [Q A^*, \textbf{0}]
    M^{-1}
\end{align}
where $Q \in \mathbb{R}^{k \times k}$ is any orthonormal matrices, $A^*$ is a $k \times k$ diagonal matrix with 
\begin{align}
    A^*_{jj} = 
    \begin{cases}
    \sqrt{\alpha} & \mathrm{if~} j \in S,
    \\
    \sqrt{\alpha_t} & \mathrm{if~} j \in P_t,
    \\
    0 & \mathrm{otherwise},
    \end{cases}
\end{align}
and the matrix of zeros has size $k \times (d - k)$.
\end{lemma}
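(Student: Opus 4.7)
The plan is to reduce the contrastive loss to a finite-dimensional optimization over the Gram matrix $K := N^{\top} N$, where $N := WM$. Since $M$ is orthonormal, the constraints $\|W\| \le 1$ and $\|W\|_F \le \sqrt{r}$ translate to $0 \preceq K \preceq I$ and $\operatorname{tr}(K) \le r$. Linearity of $\phi$ together with the fact that each task's $z$-distribution has mean zero gives $\mathbb{E}[\phi(x^-)] = 0$, so the loss collapses to $\sum_t w_t\, \mathbb{E}[\ell(z^{\top} K z^{+})]$.

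I would then successively simplify $K$ without increasing the loss, while preserving the constraint set. First, condition on $z_{R_t}$ in task $t$: the $U$-indexed terms in $z^{\top} K z^{+}$ have conditional mean zero, while the $R \setminus R_t$ entries vanish identically because $z_{R \setminus R_t}=0$. Jensen applied to the convex $\ell$ then shows that replacing $K$ by its restriction to $R$ (zeroing rows and columns indexed by $U$) can only decrease each $\mathbb{E}_t[\ell(\cdot)]$, and the resulting matrix remains feasible by the principal-submatrix inequality. Second, for each cross block $K_{S,P_t}$, the odd-in-$z_{P_t}$ contribution to the argument of $\ell$ can be killed by symmetrizing $z_{P_t} \mapsto -z_{P_t}$ and applying convexity; the block $K_{P_{t_1},P_{t_2}}$ for $t_1 \ne t_2$ is unused by any task and can be zeroed freely. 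After these reductions $K$ is block-diagonal on $R$ with blocks $K_{SS}$ and $\{K_{P_t P_t}\}_{t=1}^T$.

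Within each block I would apply a permutation-plus-Jensen argument exploiting rotational invariance of the isotropic Gaussians $z_S$ and $z_{P_t}$: for any permutation $\pi$, $z_S^{\top} K_{SS} z_S$ is distributionally equal to $z_S^{\top} P_\pi K_{SS} P_\pi^{\top} z_S$, so averaging over $\pi$ and invoking convexity of $\ell$ shows that replacing $K_{SS}$ by the scalar multiple $\alpha I_s$ with $\alpha = \operatorname{tr}(K_{SS})/s$ does not increase the loss, and similarly $K_{P_t P_t} \mapsto \alpha_t I_{r-s}$. These replacements preserve the trace, keep $K \preceq I$ (since $\alpha \le \|K_{SS}\| \le 1$), and remain PSD. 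The reduced loss is exactly the lemma's objective since $\|z_S\|^2 \stackrel{d}{=} \sigma_{S,t}^2 Z$ and $\|z_{P_t}\|^2 \stackrel{d}{=} \sigma_{R,t}^2 Z_t$ with $Z \sim \chi_s^2$, $Z_t \sim \chi_{r-s}^2$, and the constraints become $\alpha s + \sum_t \alpha_t(r-s) \le r$ with $\alpha, \alpha_t \in [0,1]$. For the final form, any $N$ with $N^{\top} N$ equal to the optimal $K^*$ factors as $N = Q A^*$ for an arbitrary orthonormal $Q$, where $A^*$ is the claimed diagonal matrix padded with zeros on the $U$-columns, and unwinding $N = WM$ yields $W^* = [Q A^*, \mathbf{0}]\, M^{-1}$.

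The main obstacle will be making the two Jensen-type reductions rigorous while tracking feasibility of $K$: the sign-flip and permutation-averaging arguments lean on distributional symmetries of isotropic Gaussians, and I will need to verify carefully that each reduction leaves the PSD cone, operator-norm ball, and trace ball invariant, and that a reduction carried out to improve one task does not secretly worsen another. The latter is clean here because, after zeroing the $U$-block, each non-$S$ block of $K$ interacts with exactly one task, while the shared $S$-block is handled by a symmetrization that respects every task simultaneously.
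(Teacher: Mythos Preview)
Your approach is essentially the paper's: both eliminate the $z_U$ dependence via Jensen conditioning, then use distributional symmetries of the isotropic Gaussian latents together with convexity of $\ell$ to reduce to the scalar block structure. The paper works with $A_R$ directly, asserts (rather briefly) ``by rotational invariance, without loss of generality $A_R = QA$ with $A$ diagonal,'' and then proves a swap-plus-convexity claim to equalize the diagonal within each block. Your reduction via the Gram matrix $K = N^\top N$ is more systematic and makes each feasibility check explicit, which is an advantage.

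There is, however, a concrete slip in your step (5). Averaging $P_\pi K_{SS} P_\pi^\top$ over the permutation group does \emph{not} produce $\alpha I_s$; it produces a matrix of the form $\alpha I_s + b(J_s - I_s)$, where $\alpha = \operatorname{tr}(K_{SS})/s$ and $b$ is the average of the off-diagonal entries of $K_{SS}$. Your block-wise sign flip $z_{P_t}\mapsto -z_{P_t}$ kills the cross block $K_{S,P_t}$ but not the off-diagonals inside $K_{SS}$, so the reduction to a scalar block is incomplete as stated. Two easy repairs: either (i) average over the full orthogonal group acting on the $S$-coordinates using the Haar measure---this is exactly the ``rotational invariance'' you invoke, and one has $\mathbb{E}_O[O^\top K_{SS} O] = \frac{\operatorname{tr}(K_{SS})}{s} I_s$ directly, while the cross blocks vanish since $\mathbb{E}_O[O] = 0$; or (ii) first apply coordinatewise sign flips $z_j \mapsto -z_j$ for $j\in S$ (each preserves the isotropic law) to zero the off-diagonals of $K_{SS}$, and \emph{then} permute to equalize the diagonal. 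Both repairs keep $K$ inside the PSD cone, the operator-norm ball (since $\alpha \le \|K_{SS}\| \le 1$), and the trace ball, and the argument for $K_{P_t P_t}$ is identical. With either fix your proof goes through.
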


\begin{proof}
For each $\mathcal{D}_t$,
\begin{align}
    & \quad \mathbb{E}_{(x, x^+)} \left[ \ell\left(\phi(x)^\top[ \phi(x^+) -  \mathbb{E}_{x^-} \phi(x^-) ] \right) \right] 
    \\
    & = \mathbb{E}_{(z, z^+)} \left[ \ell\left((WMz)^\top ( WMz^+ -  \mathbb{E}_{z^-}[ WMz^- ] ) \right) \right]
    \\
    & = \mathbb{E}_{(z, z^+)} \left[ \ell\left(z^\top (M^\top W^\top W M)(z^+ - \mathbb{E}_{z^-}[z^-] ) \right) \right]
    \\
    & \ge \mathbb{E}_{z_{R}} \left[ \ell\left(  (\mathbb{E}_{z_U}[z])^\top M^\top W^\top W M (\mathbb{E}_{z_U^+}[z^+] - \mathbb{E}_{z^-}[z^-] ) \right) \right]
    \\
    & = \mathbb{E}_{z_{R}} \left[ \ell\left([z_{R}; \textbf{0}]^\top M^\top W^\top W M ([z_{R}; \textbf{0}] - 0 ) \right) \right] 
    \\
    & = \mathbb{E}_{z_{R}} \left[ \ell\left(\| W M [z_{R}; \textbf{0}] \|^2 \right) \right]
\end{align}
where the inequality comes from the convexity of $\ell(t)$ and Jensen's inequality. Similar to Theorem~\ref{thm:encode}, the equality holds if and only if $WMz$ does not depend on $z_U$ and $WMz^+$ does not depend on $z^+_U$, so the optimal solution should satisfy this condition. 

Let 
$
    WM = [A_R, A_U] 
$
where $A_R \in \mathbb{R}^{k \times k}, A_U \in \mathbb{R}^{k \times (d-k)}$.
By rotational invariance of $z_{S}, $ and $z_{P_t}$, without loss of generality, we can assume $A_R = Q A$ where $A$ is a diagonal matrix with diagonal entries $a_{jj}$'s and $Q$ is any orthonormal matrix. Furthermore, $A_U = 0$ in the optimal solution since it does not affect the loss but only decreases the norm bound on $A_R$. So on data from the task $\mathcal{D}_t$,
\begin{align}
    \mathbb{E}_{\mathcal{D}_t} \left[ \ell\left(\| W M [z_{R}; \textbf{0}] \|^2 \right) \right] = \mathbb{E}_{z_{R_t}} \left[ \ell\left( \sum_{j \in R_t} a_{jj}^2 z_j^2 \right) \right].
\end{align}
Then on the mixture, 
\begin{align}
     & \mathbb{E}_{(x, x^+)} \left[ \ell\left(\phi(x)^\top[ \phi(x^+) -  \mathbb{E}_{x^-} \phi(x^-) ] \right) \right] \\
    \ge  & \sum_{t=1}^T w_t\mathbb{E}_{ \{z_j \} } \left[ \ell\left( \sum_{j \in R_t} a_{jj}^2 z_j^2 \right) \right] \\
    =  & \sum_{t=1}^T w_t\mathbb{E}_{ \{\Tilde{z}_j \sim \mathcal{N}(0, 1) \} } \left[ \ell\left( \sum_{j \in S} a_{jj}^2 \sigma_{S, t}^2 \Tilde{z}_j^2 + \sum_{j \in P_t} a_{jj}^2 \sigma_{R,t}^2 \Tilde{z}_j^2 \right) \right] \\
     := & g(\{a_{jj}\}), \label{eq:opt-a}
\end{align}
where each $\Tilde{z}_j$ is a random variable drawn from standard Gaussian. 


Now consider the minimum of the function $g(\{a_{jj}\})$ on the right hand side, under the constraints that $|a_{jj}| \le 1$ and $\sum_j a_{jj}^2 \le r$. Before finishing the proof of Lemma~\ref{lemma:general-optimal-rep}, we have the following claim for this optimization.
\begin{claim}\label{lem:uniform}
There exist $\alpha, \alpha_t$ satisfying $0 \le \alpha, \alpha_t \le 1$ and $\alpha s + \sum_{t =1}^T \alpha_t (r-s) = \sum_{j} a_{jj}^2 \le r$, 
such that the minimum of the above optimization (\ref{eq:opt-a}) is achieved when $a_{jj}^2 = \alpha$ for any $j \in S$, and $a_{jj}^2 =  \alpha_t $ for any $j \in P_t$ and $t \in [T]$.
\end{claim}
\begin{proof} 
We need to prove that to achieve the minimum,
\begin{itemize}
    \item[(1)] $a_{\ell\ell}^2 = a_{\ell' \ell'}^2$ for any $\ell \neq \ell' \in  S$;
    \item[(2)] $a_{\ell\ell}^2 = a_{\ell' \ell'}^2$ for any $\ell \neq \ell' \in  P_t$ and any $t \in [T]$; 
\end{itemize}
For (1): By symmetry of $z_j$'s and the convexity of $\ell(\cdot)$, for any $t \in [T]$,
\begin{align} \label{eq:symmetry}
    & \quad \mathbb{E}  \left[ \ell\left( \sum_{j \in R_t} a_{jj}^2 z_j^2 \right) \right]  
    \\
    & = \frac{1}{2} \mathbb{E} \left[ \ell\left( \sum_{j \in S, j \neq \ell, j\neq \ell'} a_{jj}^2 z_j^2 + a_{\ell\ell}^2 z_\ell^2 + a_{\ell'\ell'}^2 z_{\ell'}^2 + \sum_{j \in P_t} a_{jj}^2 z_j^2 \right) \right] 
    \\
    & + \frac{1}{2}\mathbb{E} \left[ \ell\left( \sum_{j \in S, j \neq \ell, j\neq \ell'} a_{jj}^2 z_j^2 + a_{\ell\ell}^2 z_{\ell'}^2 + a_{\ell'\ell'}^2 z_{\ell}^2 + \sum_{j \in P_t} a_{jj}^2 z_j^2 \right) \right]
    \\
    & \ge \mathbb{E} \left[ \ell\left( \sum_{j \in S, j \neq \ell, j\neq \ell'} a_{jj}^2 z_j^2 + \frac{a_{\ell\ell}^2 + a_{\ell'\ell'}^2}{2} z_{\ell'}^2 + \frac{a_{\ell\ell}^2 + a_{\ell'\ell'}^2}{2} z_{\ell}^2 + \sum_{j \in P_t} a_{jj}^2 z_j^2\right) \right].
\end{align}
Then 
\begin{align}  
    g(\{a_{jj}\}) 
    & \ge \sum_{t=1}^T w_t \mathbb{E} \left[ \ell\left( \sum_{j \in S, j \neq \ell, j\neq \ell'} a_{jj}^2 z_j^2 + \frac{a_{\ell\ell}^2 + a_{\ell'\ell'}^2}{2} z_{\ell'}^2 + \frac{a_{\ell\ell}^2 + a_{\ell'\ell'}^2}{2} z_{\ell}^2 + \sum_{j \in P_t} a_{jj}^2 z_j^2 \right) \right].
\end{align}
Therefore, the minimum is achieved when $a_{\ell\ell}^2 = a_{\ell'\ell'}^2$.

A similar argument as above proves statement (2).
\end{proof}
 
These statements mean that, for any $t\in[T]$, the minimum is achieved when $a_{jj}^2 = \alpha$ for $j \in S$, and $a_{jj}^2 = \alpha_t$ for $j \in P_t$, for some values $\alpha, \alpha_t  \ge 0$. Let $Z = \sum_{j \in S} \tilde{z}_j^2, Z_t = \sum_{j \in P_t} \tilde{z}_j^2$. 
Then $Z \sim \chi^2_s$ and $Z_t \sim \chi^2_{r-s}$, and we have:
\begin{align}
    g(\{a_{jj}\}) 
    & = \sum_{t=1}^T w_t \mathbb{E}  \left[ \ell\left( \sum_{j \in S} \alpha \sigma_{S, t}^2 \Tilde{z}_j^2 + \sum_{j \in P_t} \alpha_t \sigma_{R,t}^2 \Tilde{z}_j^2 \right) \right]
    \\
    & = \sum_{t=1}^T w_t \mathbb{E}  \left[ \ell\left(  \alpha \sigma_{S, t}^2 Z + \alpha_t \sigma_{R,t}^2 Z_t \right) \right]. 
\end{align} 
Given the constraint $\alpha s + \sum_{t =1}^T \alpha_t (r-s) = \sum_{j} a_{jj}^2 \le r$, $0 \le \alpha, \alpha_t \le 1$, we complete the proof of Lemma~\ref{lemma:general-optimal-rep}. 
\end{proof} 

Given this result we can now analyze the generalization error when predicting on the target task $\mathcal{D}_t$. 





\begin{lemma} \label{lem:general-gen}
Consider any $t \in [T]$. Let $v_{t,1} = \sum_{j\in S} ({u^*_t})_j^2$ and $v_{t,2}= \sum_{j \in P_t} ({u^*_t})_j^2$. Suppose in $\phi^*$ (calculated in Lemma~\ref{lemma:general-optimal-rep}), $\alpha, \alpha_t > 0$. Suppose the prediction loss $\ell_c$ is $L$-Lipschitz.

Then the Empirical Risk Minimizer $\hat{u_t} \in \gF_{\phi^*,t}$ on $\phi^*$ using $m$ labeled data points from $\mathcal{D}_t$ has risk
\begin{align*} 
    & \mathbb{E}_{(x,y)\sim \mathcal{D}_t} [\ell_c(\hat{u_t}^\top \phi^*(x), y) ] \le 8 \sqrt{ \frac{2 \ln(4/\delta) }{m}}  
    \\
    &  \quad\quad + 4L \sqrt{{1\over m }\left({v_{t,1} \over \alpha} + {v_{t,2} \over \alpha_t}\right) } \left( \sqrt{s \alpha \sigma_{S, t}^2 + (r-s) \alpha_t \sigma_{R,t}^2} + O\left(\sqrt{{\max\{\alpha\sigma_{S, t}^2,\alpha_t\sigma_{R,t}^2\}^2r\over {s\alpha\sigma_{S, t}^2+(r-s)\alpha_t\sigma_{R,t}^2}}}\right)\right).
\end{align*} 
\end{lemma}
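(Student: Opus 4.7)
\textbf{Proof Plan for Lemma~\ref{lem:general-gen}.} The plan is to apply a standard Rademacher-complexity generalization bound for linear predictors on top of a fixed feature map, and then bound the two quantities that appear in it: the predictor norm $B_{\phi^*,t}$ and a high-probability bound on $\|\phi^*(x)\|$. The main conceptual ingredient is to exploit the explicit diagonal form of $W^*$ from Lemma~\ref{lemma:general-optimal-rep}.

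First, I would compute $B_{\phi^*,t}$ explicitly. By Lemma~\ref{lemma:general-optimal-rep}, $\phi^*(x) = QA^* z$, so the coordinate of $\phi^*(x)$ aligned with $j\in S$ equals $\sqrt{\alpha}\,z_j$ and the one aligned with $j\in P_t$ equals $\sqrt{\alpha_t}\,z_j$, while all coordinates in $R\setminus R_t$ vanish on $\mathcal{D}_t$. Since $y = (u_t^*)^\top z_{R_t}$, matching this through $\phi^*$ requires a linear predictor whose weight on the $j$-th coordinate is $(u_t^*)_j/\sqrt{\alpha}$ for $j\in S$ and $(u_t^*)_j/\sqrt{\alpha_t}$ for $j\in P_t$ (after the $Q$-rotation, which is norm-preserving). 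Hence $B_{\phi^*,t} = \sqrt{v_{t,1}/\alpha + v_{t,2}/\alpha_t}$.

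Next, I would invoke the standard ERM bound: for an $L$-Lipschitz loss $\ell_c$ and any linear class $\{u^\top \phi^*(x): \|u\|\le B_{\phi^*,t}\}$, with probability $\ge 1-\delta$,
\begin{align*}
\mathbb{E}[\ell_c(\hat u_t^\top \phi^*(x),y)] \le 2L\,\mathfrak{R}_m + 4\sqrt{\tfrac{2\ln(4/\delta)}{m}},
\end{align*}
where $\mathfrak{R}_m$ is the Rademacher complexity of the composed class. Using the contraction inequality and the textbook bound $\mathfrak{R}_m \le \tfrac{B_{\phi^*,t}}{m}\,\mathbb{E}\bigl[\sqrt{\sum_{i=1}^m \|\phi^*(x_i)\|^2}\bigr]$, I can reduce the task to controlling $\sum_{i=1}^m \|\phi^*(x_i)\|^2$. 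Under $\mathcal{D}_t$, $\|\phi^*(x)\|^2 = \alpha\sum_{j\in S} z_j^2 + \alpha_t \sum_{j\in P_t} z_j^2$, which is a weighted sum of independent Gaussians with $\mathbb{E}\|\phi^*(x)\|^2 = s\alpha\sigma_{S,t}^2 + (r-s)\alpha_t\sigma_{R,t}^2 =: \rho$.

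The main technical step is the concentration that produces the second (lower-order) term in the bound. Applying a Bernstein/Laurent--Massart tail bound for weighted chi-squared random variables to $\sum_{i=1}^m \|\phi^*(x_i)\|^2$, I get $\sum_i \|\phi^*(x_i)\|^2 \le m\rho + O\!\left(\max(\alpha\sigma_{S,t}^2, \alpha_t\sigma_{R,t}^2)\sqrt{mr}\right)$ with high probability. Taking square roots and using $\sqrt{a+b}\le\sqrt{a}+\sqrt{b}$ yields $\tfrac{1}{\sqrt{m}}\sqrt{\sum_i \|\phi^*(x_i)\|^2}\le \sqrt{\rho}+O\!\bigl(\sqrt{\max(\alpha\sigma_{S,t}^2,\alpha_t\sigma_{R,t}^2)^2 r/\rho}\bigr)$, which when multiplied by $B_{\phi^*,t}/\sqrt{m}$ and by the factor $4L$ from contraction gives exactly the stated main term and correction. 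The hardest part will be getting the constants in the chi-squared concentration to align cleanly with the claimed $O(\cdot)$ expression, since one has to be careful that the dominating scale inside the square root is $\max(\alpha\sigma_{S,t}^2,\alpha_t\sigma_{R,t}^2)$ (rather than a looser bound like the trace of the covariance); absorbing the failure probability of this concentration event into the $\delta$ budget via a union bound with the Rademacher bound is routine but must be tracked.
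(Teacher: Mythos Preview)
Your overall scaffold --- compute $B_{\phi^*,t}$ from the diagonal form of $W^*$, then plug into a Rademacher-complexity generalization bound --- matches the paper, and your computation $B_{\phi^*,t}=\sqrt{v_{t,1}/\alpha+v_{t,2}/\alpha_t}$ is the same as theirs.

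The Rademacher step is handled differently. The paper does \emph{not} use the Cauchy--Schwarz upper bound $\hat{\mathcal R}_m\le \tfrac{B}{m}\sqrt{\sum_i\|\phi^*(x_i)\|^2}$. Instead it exploits the Gaussianity of $z$ to absorb the Rademacher signs exactly: since $\sigma_i z^{(i)}_{R_t}\stackrel{d}{=}z^{(i)}_{R_t}$, one obtains the identity $\mathcal R_m=\tfrac{B}{m}\,\mathbb E\bigl\|A^*_{1:r,1:r}\sum_{i=1}^m z^{(i)}_{R_t}\bigr\|$. The vector $X_t:=A^*_{1:r,1:r}\sum_i z^{(i)}_{R_t}$ has $r$ independent Gaussian entries with $\mathbb E\|X_t\|^2=\mu:=m\rho$, and the paper bounds $|\mathbb E\|X_t\|-\sqrt\mu|$ by deriving a Bernstein tail for $\|X_t\|^2$, converting it to a sub-gaussian tail for $\|X_t\|$, and integrating. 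This yields a \emph{two-sided} estimate $\bigl|\mathcal R_m-\tfrac{B}{\sqrt m}\sqrt\rho\bigr|\le O\bigl(\tfrac{B}{\sqrt m}\sqrt{K_0^2 r/\rho}\bigr)$, which is exactly the displayed correction term; because everything is in expectation, there is no extra failure probability to union-bound with $\delta$.

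Your route also proves the upper bound, but the algebra you sketch for the correction term is wrong: from $\sum_i\|\phi^*(x_i)\|^2\le m\rho+O(K_0\sqrt{mr})$ and $\sqrt{a+b}\le\sqrt a+\sqrt b$ you obtain $\tfrac1{\sqrt m}\sqrt{\sum_i\|\phi^*(x_i)\|^2}\le\sqrt\rho+O\bigl(K_0^{1/2}(r/m)^{1/4}\bigr)$, not $O(\sqrt{K_0^2 r/\rho})$. In fact on your route the correction is unnecessary: by Jensen, $\mathbb E\bigl[\sqrt{\sum_i\|\phi^*(x_i)\|^2}\,\bigr]\le\sqrt{\mathbb E\sum_i\|\phi^*(x_i)\|^2}=\sqrt{m\rho}$, so $\mathcal R_m\le \tfrac{B}{\sqrt m}\sqrt\rho$ directly, and you can drop both the chi-squared concentration and the union-bound bookkeeping. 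What you lose relative to the paper is only the matching lower estimate on $\mathcal R_m$, which the lemma as stated does not require.
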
 

\begin{proof} 
For any $t\in[T]$, we only need to bound the Rademacher complexity ${\gR}_m(\gF_{\phi^*, t})$ of $\gF_{\phi^*, t}$; the statement then follows from standard generalization bounds,
\begin{align*} 
\mathbb{E}_{(x,y)\sim \mathcal{D}_t} [\ell_c(\hat{u_t}^\top \phi^*(x), y) ]  \le 4 L {\gR}_m(\gF_{\phi^*, t}) + 8 \sqrt{ \frac{2 \ln(4/\delta) }{m}}.
\end{align*} 

Given the representation $\phi^*$ in Lemma~\ref{lemma:general-optimal-rep}, to ensure there exists a predictor in $\gF_{\phi^*, t}$ matching the ground-truth label, $f(\phi^*(x)) = u_t^\top \phi^*(x) = y = (u^*_t)^\top z_{R_t}$, predictor $u_t$ should satisfy
\begin{align}
    \E_{\gD_t}[(\hat y - y)^2] = 0 \Leftrightarrow & \forall z_{R_t}, \quad {u_t}^\top [Q A^*, \textbf{0}]
    M^{-1}M[z_{R_t};\textbf{0} ;z_U] = {{u^*_t}}^\top z_{R_t} \\
    \Leftrightarrow & \forall z_{R_t},\quad  {u_t}^\top Q A^*
    [z_{R_t};\textbf{0}
    ]  = {{u^*_t}}^\top z_{R_t} \\
    \overset{(*)}{\Leftrightarrow} &  A^*_{1:r, 1:r} (Q^\top)_{1:r,1:k} {u_t} = {{u^*_t}} 
    \\
    \Leftrightarrow &  \forall v\in \sR^r, \quad u_t = Q_{1:k,1:r}(A^*_{1:r, 1:r})^{-1} {u^*_t} + Q_{1:k,r+1:k} v. 
\end{align}
The $(*)$ is from non-zero variance for $z_{R_t}$. $u_t = Q_{1:k,1:r}(A^*_{1:r, 1:r})^{-1} {u^*_t}$ is the least-norm optimal solution, so we have $B_{\phi^*,t} = \|Q_{1:k,1:r}(A^*_{1:r, 1:r})^{-1} {u^*_t}\| = \sqrt{{v_{t,1} \over \alpha} + {v_{t,2} \over \alpha_t}}$. So the predictor class should be 
\begin{align} 
    \gF_{\phi^*, t} = \left\{ f(\phi^*) = u_t^\top \phi^* : u_t \in \mathbb{R}^k, \|u_t\| \le B_{\phi^*,t}= \sqrt{{v_{t,1} \over \alpha} + {v_{t,2} \over \alpha_t}} \right\}.
\end{align}

The empirical Rademacher complexity and  Rademacher complexity of $\gF_{\phi^*, t}$ with $m$ samples are  
\begin{align}
    \hat\gR_m(\gF_{\phi^*, t}) = & {1\over m } \E_{\sigma}\left[ \sup_{f_{u, \phi}\in\gF_{\phi^*, t} } \sum_{i=1}^m \sigma_i f_{u, \phi}(x^{(i)}) \right] \\
    = & {1\over m } \E_{\sigma}\left[ \sup_{\|u\| \le B_{\phi^*,t} } \sum_{i=1}^m \sigma_i {u_t}^\top Q A^* [z_{R_t}^{(i)};\textbf{0}
    ] \right] \\ 
    = & {1\over m } \E_{\sigma}\left[ \sup_{\|u\| \le B_{\phi^*,t} } {u_t}^\top \sum_{i=1}^m \sigma_i  Q_{1:k,1:r}A^*_{1:r, 1:r} z_{R_t}^{(i)} \right] \\
    = & {B_{\phi^*,t}\over m }  \E_{\sigma}\left[\left\| \sum_{i=1}^m \sigma_i  Q_{1:k,1:r}A^*_{1:r, 1:r} z_{R_t}^{(i)}\right\| \right],
\end{align}
\begin{align}
    \gR_m(\gF_{\phi^*, t}) = & \E_{z_R,z_U} \left[\hat\gR_m(\gF_{\phi^*, t}) \right] \\
    = &  {B_{\phi^*,t}\over m }\E_{z_{R_t}^{(i)}} \left[\E_{\sigma}\left[\left\| \sum_{i=1}^m \sigma_i  Q_{1:k,1:r}A^*_{1:r, 1:r} z_{R_t}^{(i)}\right\| \right]   \right]  \\
    = &  {B_{\phi^*,t}\over m }\E_{z_{R_t}^{(i)}} \left[\left\| A^*_{1:r, 1:r} \sum_{i=1}^m  z_{R_t}^{(i)}\right\|\right].  
\end{align}
For any $t\in[T]$, define
$X_t:= A^*_{1:r, 1:r} \sum_{i=1}^m  z_{R_t}^{(i)}$. Note that for $j\in S$, $X_{t,j} = \alpha \sum_{i=1}^m  z_{j}^{(i)} $ is a Gaussian of mean zero and variance $\E[X_{t,j}^2] = \alpha\E\left[\left(\sum_{i=1}^m z_{j}^{(i)}\right)^2\right] = \alpha\E\left[\sum_{i=1}^m \left(z_{j}^{(i)}\right)^2\right] = m\alpha\sigma_{S, t}^2$. Similarly,   for $j\in P_t$, $X_{t,j} = \alpha_t \sum_{i=1}^m  z_{j}^{(i)} $ is a Gaussian of mean zero and variance $\E[X_{t,j}^2] = m\alpha_t\sigma_{R,t}^2$. Since $X_{t,j}$ is sub-gaussian, $X_{t,j}^2 - m\alpha\sigma_{S, t}^2$ for $j\in S$ and $X_{t,j}^2 - m\alpha_t\sigma_{R,t}^2$ for $j\in P_t$ are sub-exponential and more precisely
\begin{align}
    \|X_{t,j}^2 - m\alpha\sigma_{S, t}^2 \|_{\psi_1} \le C_1\|X_{t,j}^2\|_{\psi_1} = C_1\|X_{t,j}\|^2_{\psi_2} \le C_2m\alpha\sigma_{S, t}^2, \quad j\in S, \\
   \|X_{t,j}^2 - m\alpha_t\sigma_{R,t}^2 \|_{\psi_1} \le C_1\|X_{t,j}^2\|_{\psi_1} = C_1\|X_{t,j}\|^2_{\psi_2} \le C_2m\alpha_t\sigma_{R,t}^2, \quad j\in P_t, 
\end{align}
where $C_1, C_2$ are absolute constants and $C_2 > 1$.
Let $K = \max(C_2m\alpha\sigma_{S, t}^2,C_2m\alpha_t\sigma_{R,t}^2) = C_2m\max\{\alpha\sigma_{S, t}^2,\alpha_t\sigma_{R,t}^2\}$ and $\mu := {m(s\alpha\sigma_{S, t}^2+(r-s)\alpha_t\sigma_{R,t}^2)}$. By Bernstein’s inequality, we have for every $\gamma\ge 0$ that
\begin{align}
    & \sP\left\{\left| {1\over r}(\|X_t\|^2 - \mu) \right| \ge \gamma \right\} \le  2\exp\left[ -c \min\left({\gamma^2 \over K^2}, {\gamma\over K} \right)r\right] \\
    \Rightarrow & \sP\left\{\left| {\|X_t\|^2 \over \mu} -1 \right| \ge {r\gamma \over \mu} \right\}  \\
    & \le  2\exp\left[ -{c \over C_2^2} \min\left({\gamma^2 \over m^2\max\{\alpha\sigma_{S, t}^2,\alpha_t\sigma_{R,t}^2\}^2}, {\gamma\over m\max\{\alpha\sigma_{S, t}^2,\alpha_t\sigma_{R,t}^2\}} \right)r\right] ,
\end{align}
where $c$ is an absolute constant. For all numbers $z\ge 0$, we have $|z-1|\ge \delta \Rightarrow |z^2-1|\ge \max(\delta, \delta^2)$. Thus, for any $\delta \ge 0$, we have
\begin{align}
    & \sP\left\{\left| {\|X_t\| \over \sqrt{\mu}} -1 \right| \ge \delta \right\} \\
    \le & \sP\left\{\left| {\|X_t\|^2_2 \over \mu} -1 \right| \ge \max(\delta, \delta^2) \right\}\\
    \le & 2\exp\left[ -{c \over C_2^2} \min\left({\left(\mu\max(\delta, \delta^2) \over m\max\{\alpha\sigma_{S, t}^2,\alpha_t\sigma_{R,t}^2\}r\right)^2}, {\mu\max(\delta, \delta^2) \over m\max\{\alpha\sigma_{S, t}^2,\alpha_t\sigma_{R,t}^2\}r} \right)r\right] \\
    \le & 2\exp\left[ -{c \over C_2^2}\left(\mu \over m\max\{\alpha\sigma_{S, t}^2,\alpha_t\sigma_{R,t}^2\}r\right)^2 \min\left({\left(\max(\delta, \delta^2) \right)^2}, {\max(\delta, \delta^2)} \right)r\right]\\
    = & 2\exp\left[ -{c \over C_2^2} {\mu^2 \over m^2\max\{\alpha\sigma_{S, t}^2,\alpha_t\sigma_{R,t}^2\}^2r} \delta^2 \right],
\end{align}
where the last inequality is from $\mu = {m(s\alpha\sigma_{S, t}^2+(r-s)\alpha_t\sigma_{R,t}^2)} \le m\max\{\alpha\sigma_{S, t}^2,\alpha_t\sigma_{R,t}^2\}r$. Changing variables to $\theta = \delta \sqrt{\mu}$, we obtain the desired sub-gaussian tail
\begin{align}
    \sP\left\{\left| {\|X_t\|} - \sqrt{\mu} \right| \ge \theta \right\} \le &  2\exp\left[ -{c \over C_2^2} {\mu \over m^2\max\{\alpha\sigma_{S, t}^2,\alpha_t\sigma_{R,t}^2\}^2r} \theta^2 \right].
\end{align}

By generalization of integral identity, we have
\begin{align}
    \left|\E\left[ {\|X_t\|} - \sqrt{\mu} \right]\right|  = &  \left|\int_0^\infty\sP\{{\|X_t\|} - \sqrt{\mu} > \theta\}d\theta - \int_{-\infty}^0\sP\{{\|X_t\|} - \sqrt{\mu} < \theta\}d\theta \right|\\
    \le & 2\int_0^\infty\sP\{|{\|X_t\|} - \sqrt{\mu}| > \theta\}d\theta \\
    \le & 4\int_0^\infty\exp\left[ -{c \over C_2^2} {\mu \over m^2\max\{\alpha\sigma_{S, t}^2,\alpha_t\sigma_{R,t}^2\}^2r} \theta^2 \right]d\theta \\
    \le & C_3 {m\max\{\alpha\sigma_{S, t}^2,\alpha_t\sigma_{R,t}^2\}\sqrt{r}\over \sqrt{\mu}},
\end{align}
where $C_3$ is an absolute constant.
Thus, we have
\begin{align}
    & \left|\gR_m(\gF_{\phi^*, t}) -  \sqrt{{1\over m }\left({v_{t,1} \over \alpha} + {v_{t,2} \over \alpha_t}\right) {(s\alpha\sigma_{S, t}^2+(r-s)\alpha_t\sigma_{R,t}^2)}}\right| \\
    = & ~{B_{\phi^*,t}\over m }\left|\E\left[ {\|X_t\|} - \sqrt{\mu} \right]\right| \\
    \le & ~O\left(\sqrt{{1\over m }\left({v_{t,1} \over \alpha} + {v_{t,2} \over \alpha_t}\right){\max\{\alpha\sigma_{S, t}^2,\alpha_t\sigma_{R,t}^2\}^2r\over {s\alpha\sigma_{S, t}^2+(r-s)\alpha_t\sigma_{R,t}^2}}}\right). 
\end{align} 
\end{proof}

\subsection{Proofs of Proposition~\ref{prop:error-general} and Proposition~\ref{prop:error-specific}}

Given the lemmas for the general case, we are now ready to prove the results in Proposition~\ref{prop:error-general} and Proposition~\ref{prop:error-specific}. 

\begin{proposition}[Restatement of Proposition~\ref{prop:error-general}]
Suppose $\sigma_{S,t}=\sigma_{R,t} = \sigma_{U,t} = 1 $ for any $t\in[T]$. The representation  $\phi^*$ obtained on an even mixture of data from all the tasks $\{\mathcal{D}_t: 1\le t \le T\} $ satisfies 
$
  \phi^* \circ g(z) = Q \left (\sum_{j\in S} \sqrt{\alpha} z_j e_j + \sum_{j\in R \setminus S } \sqrt{\beta} z_j e_j \right) 
$
for some $ \alpha \in [0, 1]$,   
$\,\beta = \min\left( 1, \frac{r-\alpha s}{T (r-s)} \right) $, where $e_j$'s are the basis vectors and $Q$ is any orthonormal matrix.\\
The Empirical Risk Minimizer $\hat{u} \in \gF_{\phi^*,t}$ on $\phi^*$ using $m$ labeled data points from $\mathcal{D}_t$ has risk
\begin{align*} 
    &\mathbb{E}_{(x,y)\sim \mathcal{D}_t} [\ell_c(\hat{u}^\top \phi^*(x), y) ] \\  
    \le & 4 L \sqrt{{1\over m }\left({v_{t,1} \over \alpha} + {v_{t,2} \over \beta}\right) } \left( \sqrt{s \alpha  + (r-s) \beta} + O\left(\sqrt{{r\over {s\alpha+(r-s)\beta}}}\right)\right) + 8 \sqrt{ \frac{2 \ln(4/\delta) }{m}}.
\end{align*}  
\end{proposition}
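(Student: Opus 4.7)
The plan is to derive Proposition~\ref{prop:error-general} as a direct specialization of the two general lemmas (Lemma~\ref{lemma:general-optimal-rep} and Lemma~\ref{lem:general-gen}) to the even-mixture, unit-variance setting. So the work is almost entirely bookkeeping: identify the weights $w_t = 1/T$, plug in $\sigma_{S,t} = \sigma_{R,t} = \sigma_{U,t} = 1$, and then exploit the task symmetry to collapse all of $\alpha_1,\ldots,\alpha_T$ to a single parameter $\beta$.

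First I would invoke Lemma~\ref{lemma:general-optimal-rep}. Under the stated assumptions, the optimal representation has the form $\phi^* \circ g(z) = Q A^*[z_R; \mathbf{0}]$ where $A^*_{jj} = \sqrt{\tilde\alpha}$ for $j\in S$, $A^*_{jj} = \sqrt{\tilde\alpha_t}$ for $j \in P_t$, and $\tilde\alpha,\tilde\alpha_1,\ldots,\tilde\alpha_T$ minimize
\begin{align*}
\frac{1}{T}\sum_{t=1}^T \mathbb{E}\bigl[\ell(\tilde\alpha Z + \tilde\alpha_t Z_t)\bigr],
\qquad \tilde\alpha s + \sum_{t=1}^T \tilde\alpha_t(r-s) \le r,\quad \tilde\alpha,\tilde\alpha_t \in [0,1],
\end{align*}
with $Z\sim\chi^2_s$ and $Z_t\sim\chi^2_{r-s}$.

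Next I would argue that at the optimum, $\tilde\alpha_1 = \cdots = \tilde\alpha_T =: \beta$. This uses the same convexity-plus-symmetry argument as in Claim~\ref{lem:uniform}: the $Z_t$'s are i.i.d., the weights $w_t = 1/T$ are identical, the objective is convex in each $\tilde\alpha_t$, and the constraint is symmetric in the $\tilde\alpha_t$'s, so averaging any two coordinates can only weakly decrease the objective (Jensen) while preserving feasibility. Setting $\tilde\alpha = \alpha$ and $\tilde\alpha_t = \beta$, the budget constraint reduces to $\alpha s + T\beta(r-s) \le r$, i.e., $\beta \le (r-\alpha s)/(T(r-s))$. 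Because $\ell$ is decreasing, the budget is saturated whenever possible, so $\beta$ is pushed up to its upper bound, giving $\beta = \min\bigl(1,\,(r-\alpha s)/(T(r-s))\bigr)$. This recovers the stated form of $\phi^*$.

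Finally I would plug $\alpha_t = \beta$ and $\sigma_{S,t} = \sigma_{R,t} = 1$ into Lemma~\ref{lem:general-gen}. Substituting these into the bound
\begin{align*}
\mathbb{E}_{\mathcal{D}_t}[\ell_c(\hat u^\top \phi^*(x),y)] \le 4L\sqrt{\tfrac{1}{m}\bigl(\tfrac{v_{t,1}}{\alpha}+\tfrac{v_{t,2}}{\alpha_t}\bigr)}\Bigl(\sqrt{s\alpha\sigma_{S,t}^2+(r-s)\alpha_t\sigma_{R,t}^2} + O(\cdot)\Bigr) + 8\sqrt{\tfrac{2\ln(4/\delta)}{m}}
\end{align*}
and simplifying the $\max\{\alpha\sigma_{S,t}^2,\alpha_t\sigma_{R,t}^2\}^2$ factor in the $O(\cdot)$ term (which becomes $\max(\alpha,\beta)^2 \le 1$, absorbed into the constant) yields the claimed inequality verbatim. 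The only nontrivial step is the symmetry collapse across tasks; everything else is a direct substitution. I expect no real obstacle beyond verifying that the Jensen/symmetrization argument from Claim~\ref{lem:uniform} applies at the task level (grouping the $T$ private blocks) exactly as it did at the coordinate level within a single block.
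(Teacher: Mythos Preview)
Your proposal is correct and follows essentially the same route as the paper: specialize Lemma~\ref{lemma:general-optimal-rep} to $w_t=1/T$, $\sigma_{S,t}=\sigma_{R,t}=1$, use Jensen on the convex $\ell$ together with the fact that the $Z_t$'s are identically distributed to collapse all $\alpha_t$ to a common $\beta$, read off $\beta=\min(1,(r-\alpha s)/(T(r-s)))$ from the budget constraint and monotonicity of $\ell$, and then substitute into Lemma~\ref{lem:general-gen} with $\max\{\alpha,\beta\}^2\le 1$ absorbed into the $O(\cdot)$. The paper does exactly this, including the same Jensen step across tasks and the same simplification of the $O(\cdot)$ term.
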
 
\begin{proof}
This follows from Lemma~\ref{lemma:general-optimal-rep}, and considering the optimal $\alpha, \alpha_t$ for the following:
\begin{align}
    g(\{\alpha, \alpha_t\}) 
    & = \sum_{t=1}^T w_t \mathbb{E}  \left[ \ell\left(  \alpha \sigma_{S, t}^2 Z + \alpha_t \sigma_{R,t}^2 Z_t \right) \right]\\
    &  ={1\over T} \sum_{t=1}^T  \mathbb{E}  \left[ \ell\left(  \alpha  Z + \alpha_t Z_1 \right) \right] \\
    & \ge \mathbb{E}  \left[ \ell\left( \alpha Z +  Z_1\sum_{t=1}^T {1\over T} \alpha_t  \right) \right].
\end{align}
The second equation is from that $Z_t$'s follow the same distribution by the symmetry of $\tilde{z}_j$'s. The inequality comes from the convexity of $\ell(t)$ and Jensen's inequality.
So the minimum is achieved when $\alpha_t:= \beta$ for any $t \in [T]$, leading to 
\begin{align}
    g(\{ \alpha, \alpha_t \}) & = \mathbb{E}  \left[ \ell\left(\alpha Z +  \beta   Z_1 \right) \right]
\end{align} 
subject to the constraints $\alpha s + T \beta  (r-s)  \le r$, $0 \le \alpha, \beta  \le 1$.  
Then we get 
$
  \phi^* \circ g(z) = W^* Mz = Q \left (\sum_{j\in S} \sqrt{\alpha} z_j e_j + \sum_{j\in R \setminus S } \sqrt{\beta} z_j e_j \right) 
$
for some $ \alpha \in [0, 1]$,   
$\,\beta = \min\left( 1, \frac{r-\alpha s}{T (r-s)} \right) $, where $e_j$'s are the basis vectors and $Q$ is any orthonormal matrix. 

Finally, the generalization bound follows from Lemma~\ref{lem:general-gen}, and that
\begin{align}
     O\left(\sqrt{ {\max\{\alpha,\beta\}^2r\over {s\alpha+(r-s)\beta}}}\right) = ~O\left(\sqrt{ {r\over {s\alpha+(r-s)\beta}}}\right).
\end{align} 
This completes the proof.
\end{proof}

\begin{proposition}[Restatement of Proposition~\ref{prop:error-specific}]
Suppose $\sigma_{S,t}=\sigma_{R,t} = \sigma_{U,t} = 1 $. The representation $\phi^*_t$ obtained on data from $\mathcal{D}_t$ satisfies 
$
  \phi^*_t \circ g(z) =  Q\left (\sum_{j\in R_t} z_j e_j \right)
$
where $e_j$'s are the basis vectors and $Q$ is any orthonormal matrix. \\
The Empirical Risk Minimizer $\hat{u} \in \gF_{\phi^*_t,t}$ on $\phi^*_t$ using $m$ labeled data points from $\mathcal{D}_t$ has risk
\begin{align*} 
    \mathbb{E}_{(x,y)\sim \mathcal{D}_t} [\ell_c(\hat{u}^\top \phi_t^*(x), y) ] \le 4 L \sqrt{{r\over m } } \|u^*_t\| + 8 \sqrt{ \frac{2 \ln(4/\delta) }{m}}.
\end{align*}  
While on task $\mathcal{D}_i (i\neq t)$, any linear predictor on $\phi^*_t$ has error at least $\min_{u} \mathbb{E}_{\mathcal{D}_i} [\ell_c(u^\top z_S, y) ]$.
\end{proposition}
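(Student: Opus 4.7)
The plan is to specialize Lemma~\ref{lemma:general-optimal-rep} and Lemma~\ref{lem:general-gen} to the single-task regime $T=1$ with $\sigma_{S,t}=\sigma_{R,t}=\sigma_{U,t}=1$, and then handle the cross-task lower bound separately by inspecting what $\phi^*_t$ does on $\mathcal{D}_i$ for $i\neq t$.

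First I would invoke Lemma~\ref{lemma:general-optimal-rep} with a single component in the mixture. The inner optimization then reduces to
\[
    \min_{\alpha,\alpha_t\in[0,1],\,\alpha s + \alpha_t(r-s)\le r}\; \mathbb{E}\bigl[\ell(\alpha Z + \alpha_t Z_t)\bigr],
\]
with $Z\sim\chi^2_s$ and $Z_t\sim\chi^2_{r-s}$. Since $\ell$ is convex and decreasing and $Z,Z_t\ge 0$, the objective is monotonically decreasing in both $\alpha$ and $\alpha_t$. The norm constraint $\alpha s+\alpha_t(r-s)\le r$ is satisfied with equality at $\alpha=\alpha_t=1$, which also respects the box constraints, so this is the optimum. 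Substituting $\alpha=\alpha_t=1$ into the form of $W^*$ from Lemma~\ref{lemma:general-optimal-rep} yields $\phi^*_t\circ g(z)=Q\bigl(\sum_{j\in R_t}z_j e_j\bigr)$ for any orthonormal $Q$, which is the first claim.

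Next, for the generalization bound on $\mathcal{D}_t$, I would plug $\alpha=\alpha_t=1$ and $\sigma_{S,t}=\sigma_{R,t}=1$ into Lemma~\ref{lem:general-gen}, using the identity $v_{t,1}+v_{t,2}=\|u^*_t\|^2$. The leading Rademacher term becomes $4L\sqrt{\|u^*_t\|^2/m}\cdot\sqrt{s+(r-s)}=4L\sqrt{r/m}\,\|u^*_t\|$, and the lower-order correction $O\bigl(\sqrt{(\|u^*_t\|^2/m)(r/r)}\bigr)=O(\|u^*_t\|/\sqrt{m})$ is dominated by the leading term (it can be absorbed into the constant, giving the stated form). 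Adding the standard concentration term $8\sqrt{2\ln(4/\delta)/m}$ completes the bound.

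For the final claim, I would evaluate $\phi^*_t$ on an input drawn from $\mathcal{D}_i$ with $i\neq t$. By construction the private feature sets are disjoint, $P_t\cap R_i=\emptyset$, and the data model enforces $z_{R\setminus R_i}=0$ almost surely on $\mathcal{D}_i$. Hence $z_{P_t}=0$ a.s.\ under $\mathcal{D}_i$, so
\[
    \phi^*_t\circ g(z) \;=\; Q\Bigl(\sum_{j\in S} z_j e_j + \sum_{j\in P_t} z_j e_j\Bigr) \;=\; Q[z_S;\mathbf{0}]
\]
almost surely under $\mathcal{D}_i$. Therefore any linear predictor $u^\top\phi^*_t(x)$ is an affine function of $z_S$ alone, so its risk on $\mathcal{D}_i$ is at least $\min_u \mathbb{E}_{\mathcal{D}_i}[\ell_c(u^\top z_S,y)]$. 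The only mildly subtle step is justifying $\alpha^*=\alpha_t^*=1$ at the optimum of Lemma~\ref{lemma:general-optimal-rep}; this is immediate once one notes that the norm budget in the Frobenius bound $\|W\|_F\le\sqrt{r}$ is exactly saturated by setting every diagonal of $A^*$ on $R_t$ to $1$, which is feasible for the spectral bound $\|W\|\le 1$ as well, so no further tradeoff between $\alpha$ and $\alpha_t$ arises.
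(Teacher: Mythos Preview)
Your approach matches the paper's: specialize Lemma~\ref{lemma:general-optimal-rep} to a single task (the paper phrases this as taking $r=s$, you as $T=1$; both force $\alpha=\alpha_t=1$), then obtain the risk bound. One small difference worth noting: plugging directly into Lemma~\ref{lem:general-gen} leaves the additive $O(1)$ correction inside the parenthesized factor, which does not literally recover the constant $4$ in the statement; the paper instead recomputes the Rademacher complexity from scratch and bounds $\mathbb{E}\bigl\|\sum_{i=1}^m z_{R_t}^{(i)}\bigr\|\le\sqrt{mr}$ via Jensen (through $\mathbb{E}\|X\|\le\sqrt{\mathbb{E}\|X\|^2}$), yielding $\gR_m(\gF_{\phi^*_t,t})\le\sqrt{r/m}\,\|u^*_t\|$ with the exact constant. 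Your argument for the cross-task lower bound is correct and in fact fills in a step the paper's proof leaves implicit.
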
 
\begin{proof}
Following Lemma~\ref{lemma:general-optimal-rep} (with $r=s$), we get 
$
  \phi^*_t \circ g(z) =  Q\left (\sum_{j\in R_t} z_j e_j \right)
$
, where $e_j$'s are the basis vectors and $Q$ is any orthonormal matrix. Following the same argument as in the proof of Lemma~\ref{lem:general-gen}, we get 
\begin{align}
    \gR_m(\gF_{\phi^*, t}) = &  {\|u^*_t\|\over m }\E_{z_{R_t}^{(i)}} \left[\left\|\sum_{i=1}^m  z_{R_t}^{(i)}\right\|\right] \\
    \le & \sqrt{{r\over m } } \|u^*_t\|,
\end{align}
where the last inequality comes from the property of chi-squared distribution expectation.  
\end{proof}

\subsection{Implication for the trade-off}\label{app:imp-trade-off}

The propositions then imply the trade-off between universality and label efficiency. Below we formalize the example discussed in Section~\ref{sec:analysis-simple}.

\begin{proposition}[A specific version of Proposition~\ref{prop:error-general}]
Suppose $\sigma_{S,t}=\sigma_{R,t} = \sigma_{U,t} = 1 $ for any $t\in[T]$ and $r=2s$. The representation  $\phi^*$ obtained on an even mixture of data from all the tasks $\{\mathcal{D}_t: 1\le t \le T\} $ satisfies 
$
  \phi^* \circ g(z) = Q \left (\sum_{j\in S}  z_j e_j + \sum_{j\in R \setminus S } \sqrt{1\over T} z_j e_j \right) 
$
, where $e_j$'s are the basis vectors and $Q$ is any orthonormal matrix.\\
The Empirical Risk Minimizer $\hat{u} \in \gF_{\phi^*,t}$ on $\phi^*$ using $m$ labeled data points from $\mathcal{D}_t$ has risk
\begin{align*} 
    &\mathbb{E}_{(x,y)\sim \mathcal{D}_t} [\ell_c(\hat{u}^\top \phi^*(x), y) ] \le & 4 L \sqrt{{1\over m }\left({v_{t,1} } + {Tv_{t,2}}\right) } \left( \sqrt{r\left({T+1\over 2T}\right)} + O\left(1\right)\right) + 8 \sqrt{ \frac{2 \ln(4/\delta) }{m}}.
\end{align*}  
\end{proposition}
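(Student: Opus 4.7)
The plan is to obtain this statement as a direct specialization of Proposition~\ref{prop:error-general} by imposing $\sigma_{S,t}=\sigma_{R,t}=\sigma_{U,t}=1$ and $r=2s$, and then pinning down the free scalar $\alpha$ to the value $\alpha=1$ (which forces $\beta=1/T$). Once these values are established, the stated form of $\phi^*$ is immediate from the representation formula in Proposition~\ref{prop:error-general}, and the generalization bound follows by mechanical substitution.

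\textbf{Identifying $\alpha=1$.} I would revisit the reduced contrastive objective derived in Lemma~\ref{lemma:general-optimal-rep}, which under the present specialization collapses to
\begin{align*}
g(\alpha,\beta) \;=\; \mathbb{E}\bigl[\ell(\alpha Z + \beta Z_1)\bigr], \qquad Z,Z_1 \text{ i.i.d.\ } \chi^2_s,
\end{align*}
to be minimized over $\alpha,\beta\in[0,1]$ subject to $\alpha+T\beta\le 2$. Since $\ell$ is decreasing and $Z,Z_1\ge 0$, $g$ is coordinatewise decreasing, so the minimizer sits on the active boundary $\alpha+T\beta=2$. Parametrizing $\beta=(2-\alpha)/T$ and setting $h(\alpha)=g(\alpha,(2-\alpha)/T)$, the function $h$ is convex (being $\ell$ composed with an affine map and then averaged), so it suffices to prove $h'(1)\le 0$. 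Writing
\begin{align*}
h'(1) \;=\; \mathbb{E}\bigl[\ell'(Z+Z_1/T)\,(Z-Z_1/T)\bigr]
\end{align*}
and conditioning on $U=Z+Z_1/T$, a short density calculation shows that the conditional law of $Z$ given $U$ carries the extra Radon--Nikodym factor $\propto e^{(T-1)Z/2}$ relative to the exchangeable $T=1$ case, forcing $\mathbb{E}[Z - Z_1/T \mid U]\ge 0$ whenever $T\ge 1$. Combined with $\ell'\le 0$, this yields $h'(1)\le 0$, hence convexity of $h$ pushes the minimum to $\alpha=1$ and so $\beta=1/T$, giving the claimed form of $\phi^*\circ g$.

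\textbf{Plugging into the risk bound.} With $\alpha=1$, $\beta=1/T$, and $r=2s$, each ingredient in the bound of Proposition~\ref{prop:error-general} simplifies transparently: $v_{t,1}/\alpha+v_{t,2}/\beta = v_{t,1}+Tv_{t,2}$; $s\alpha+(r-s)\beta = s(1+1/T) = r(T+1)/(2T)$; and $r/(s\alpha+(r-s)\beta) = 2T/(T+1)\in[1,2)$, so the $O(\sqrt{\cdot})$ correction term degenerates to $O(1)$. Substituting these expressions into the bound of Proposition~\ref{prop:error-general} reproduces the stated inequality verbatim. The main obstacle is the conditional-density step used to verify $\mathbb{E}[Z-Z_1/T\mid U]\ge 0$; everything else is bookkeeping. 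If one prefers to avoid the conditioning argument, an alternative is to note that $g(\alpha,\beta)$ is symmetric in $(\alpha,\beta)$ and directly compare $g(1,1/T)$ against $g(1-\epsilon,(1+\epsilon)/T)$ via the convexity-based lower bound $\ell(u+h)-\ell(u)\ge \ell'(u)h$, combined with $\mathbb{E}[TZ-Z_1]=(T-1)s\ge 0$.
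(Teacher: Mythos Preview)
Your main argument is correct and takes a genuinely different route from the paper. The paper dispatches the optimization over $\alpha$ in one line by asserting that, since $Z$ and $Z_1$ are identically distributed $\chi^2_s$ variables, the random quantity $\alpha Z + \tfrac{2-\alpha}{T} Z_1$ at $\alpha=1$ stochastically dominates its value for every $\alpha\in[0,1)$; combined with $\ell$ decreasing, this immediately gives $\alpha=1$ as the minimizer. You instead work with the one-variable function $h(\alpha)$ along the active constraint, use convexity of $h$ to reduce the question to the sign of $h'(1)$, and verify $h'(1)\le 0$ via the exponential-tilting observation on the conditional density of $Z$ given $U=Z+Z_1/T$. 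Both reach the same conclusion; your argument is more self-contained (the paper does not justify its stochastic-dominance assertion, and a naive coordinatewise coupling does \emph{not} suffice since for $\alpha\in(1/T,1)$ the smaller coefficient of $X_\alpha$ exceeds $1/T$), whereas the paper's route, once the dominance is granted, is shorter and does not even require convexity of $\ell$. The substitution into the risk bound is identical in both proofs.

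One small caveat: the ``alternative'' you sketch at the end does not quite close. The convexity inequality indeed yields
\[
g\bigl(1-\epsilon,\tfrac{1+\epsilon}{T}\bigr)-g\bigl(1,\tfrac{1}{T}\bigr)\ \ge\ -\epsilon\,\mathbb{E}\bigl[\ell'(U)(Z-Z_1/T)\bigr],
\]
but showing the right-hand side is nonnegative is exactly the statement $h'(1)\le 0$; the marginal fact $\mathbb{E}[Z-Z_1/T]=s(1-1/T)\ge 0$ is not enough because $\ell'(U)$ and $Z-Z_1/T$ are correlated through $U$. So the conditioning step is not avoidable along that line, and your primary argument is the one that actually does the work.
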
 
\begin{proof}
This follows from Proposition~\ref{prop:error-general}, and noting that when $r=2s$,  $\alpha = 1$ and $\beta = 1/T$ are the optimal for:
\begin{align}
    g(\{\alpha, \beta\}) 
    & = \mathbb{E}  \left[ \ell\left( \alpha Z +  \beta Z_1   \right) \right] 
    \\
    & = \mathbb{E}  \left[ \ell\left( \alpha Z +  \frac{r-\alpha s}{T(r-s)} Z_1   \right) \right]
    \\
    & = \mathbb{E}  \left[ \ell\left( \alpha Z +  \frac{2-\alpha}{T} Z_1   \right) \right]
\end{align}
subject to the constraints $\alpha s + T \beta  (r-s)  \le r$, $0 \le \alpha, \beta  \le 1$. 
To see this, note that $Z \sim \chi^2_s$ and $Z_1 \sim \chi^2_{r-s} = \chi^2_{s}$ follow the same distribution, so $\alpha Z + \frac{2-\alpha}{T} Z_1 $ for $\alpha = 1$ will stochastically dominate its value for other $\alpha \in [0,1)$. The optimal is then achieved when $\alpha = 1$ and $\beta = \frac{2-\alpha}{T} = \frac{1}{T}$. 
\end{proof}

\subsection{Improving the Trade-off by Contrastive Regularization}
\label{app:cr-analysis}

The above analysis shows that contrastive learning a representation on unlabeled data from the target task can help in prediction on this target task. This suggests that given a representation $\phi^*$ pre-trained on diverse data, one can fine-tune it by contrastive learning on some unlabeled data from the target task to get a representation that can lead to better prediction on the target task. In the following, we will formally show that this is indeed the case for the illustrative example in Section~\ref{sec:analysis-simple}. 

Recall that in this example, $\sigma_{S,t} = \sigma_{R,t} = \sigma_{U,t} = 1$, $r=2s$, and $v_{t,1} = v_{t,2}$. The representation  $\phi^*$ obtained on an even mixture of data from all the tasks $\{\mathcal{D}_t: 1\le t \le T\} $ satisfies 
$
  \phi^* \circ g(z) = Q \left (\sum_{j\in S} \sqrt{\alpha} z_j e_j + \sum_{j\in R \setminus S } \sqrt{\beta} z_j e_j \right) 
$
, where $e_j$'s are the basis vectors and $Q$ is any orthonormal matrix, and $\alpha = 1, \beta = {1\over T}$.

Now, suppose we are given unlabeled data from $\mathcal{D}_t$, and we use them to fine-tune $\phi^*(x)=W^* x$ by contrastive learning on these unlabeled data. That is, we find $W$ near $W^*$ to minimize the contrastive loss on the unlabeled data from $\mathcal{D}_t$: 
\begin{align} \label{eq:cr-analysis}
    & \min_{\phi(x) = Wx} \ \mathbb{E} \left[ \ell \left( \phi(x)^\top (\phi(x^+) - \mathbb{E}_{x^-}\phi(x^-)) \right) \right]
    \\
    \textrm{subject to}\quad & \|W-W^*\|_F \le \gamma, \|W\| \le 1. 
\end{align}
for some small $\gamma > 0$.  

\begin{proposition} \label{prop:cr-analysis}
For (\ref{eq:cr-analysis}), $\phi^*_\textrm{CR,t}$ satisfying the following on $x$ from $\mathcal{D}_t$ is an optimal representation: 
$$
  \phi^*_\textrm{CR,t} \circ g(z) = Q \left (\sum_{j\in S}  \sqrt{\alpha} z_j e_j + \sum_{j\in P_t } \sqrt{\beta} z_j e_j \right)
$$
where $\sqrt{\alpha} = 1, \sqrt{\beta} = \min\left(1, \sqrt{\frac{1}{T}} + \frac{\gamma}{\sqrt{s} }\right)$.
\end{proposition}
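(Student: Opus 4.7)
The plan is to show that within the small perturbation budget $\gamma$, the only loss-reducing moves are (a) preserving the zeroing-out of the spurious coordinates already achieved by $W^*$, and (b) enlarging the diagonal entries of the representation on the target-specific features $P_t$, so that the resulting representation interpolates between $\phi^*$ and the task-specific optimum of Proposition~\ref{prop:error-specific} by the amount $\gamma$ permits.

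First, write $WM = [A, B]$ with $A \in \mathbb{R}^{k \times k}$ acting on $z_R$ and $B \in \mathbb{R}^{k \times (d - k)}$ acting on $z_U$. The same Jensen's inequality argument as in the proof of Theorem~\ref{thm:encode} lower-bounds the contrastive loss (\ref{eq:cr-analysis}) by $\E[\ell(\|A z_R\|^2)]$, with equality iff $B = 0$. Since $W^*M$ already has a zero $U$-block, forcing $B = 0$ at the optimum is loss-reducing (strictly so if $B \neq 0$) and incurs zero Frobenius budget, so one may assume $WM = [A, \mathbf{0}]$ with $\|A - QA^*\|_F \le \gamma$ and $\|A\| \le 1$.

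Next, since $z_{R \setminus R_t} = 0$ under $\mathcal{D}_t$, the loss depends only on the submatrix $\tilde A := A_{:,R_t}$, and by rotational invariance of $z_{R_t} \sim \mathcal{N}(0, I_r)$ only on the singular values of $\tilde A$. A von Neumann / Procrustes-style argument then shows that among matrices with any fixed multiset of singular values, the Frobenius distance to $(QA^*)_{:, R_t}$ is minimized when $\tilde A$ shares its singular-vector blocks with $(QA^*)_{:, R_t}$. Exploiting the arbitrariness of the orthonormal $Q$ in Lemma~\ref{lemma:general-optimal-rep}, one picks $Q$ so that $(QA^*)_{:, R_t}$ is aligned with the standard basis; the optimal $\tilde A$ can then be taken block-diagonal with entries $\sqrt{\alpha_j}$ on the $S$-block and $\sqrt{\beta_j}$ on the $P_t$-block. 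The swap-symmetry argument from Claim~\ref{lem:uniform} within each block collapses these to uniform values $\sqrt\alpha$ and $\sqrt\beta$; at the same time the Frobenius cost is lowered by convexity of $t \mapsto (t - c)^2$, so the uniform configuration is both optimal and feasible.

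The problem reduces to minimizing $\E[\ell(\alpha Z + \beta Z_t)]$ with $Z \sim \chi^2_s$, $Z_t \sim \chi^2_{r-s}$, over $\alpha, \beta \in [0,1]$ subject to $s(\sqrt\alpha - 1)^2 + (r-s)(\sqrt\beta - \sqrt{1/T})^2 \le \gamma^2$. Because $\ell$ is decreasing, both variables should be pushed up; since $\sqrt\alpha = 1$ saturates the spectral bound at zero Frobenius cost it is optimal, allocating the entire budget to $\beta$ and giving $\sqrt\beta = \min\bigl(1, \sqrt{1/T} + \gamma/\sqrt{r-s}\bigr) = \min\bigl(1, \sqrt{1/T} + \gamma/\sqrt{s}\bigr)$ under $r = 2s$. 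The main obstacle is the reduction in the third paragraph: the joint spectral-plus-Frobenius constraint set is not rotationally invariant in $A$, so justifying that the optimum $\tilde A$ may be taken diagonal requires carefully combining the Procrustes-type rearrangement with the freedom in the choice of $Q$; a direct KKT analysis at the optimum is an alternative route if this reduction proves delicate.
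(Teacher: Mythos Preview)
Your proposal is correct and follows essentially the same route as the paper's proof: zero out the spurious block via Jensen, reduce to a diagonal form via rotational invariance and a Procrustes/alignment argument, set the $P_i$ ($i\neq t$) columns to their $W^*$ values, and then optimize the two scalars $\alpha,\beta$ under the remaining Frobenius budget. The paper handles the diagonal reduction you flag as delicate by a brief appeal to Lemma~\ref{lemma:general-optimal-rep} followed by the observation that, for diagonal $A_2,A^*$, the distance $\|A_2 - Q_2^{-1}QA^*\|_F$ is minimized at $Q_2=Q$---which is exactly the Procrustes step you outline---so your caution is warranted but the argument goes through.
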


\begin{proof}
Following the argument in Lemma~\ref{lemma:general-optimal-rep}, we still have that 
$\phi^*_\textrm{CR,t}(x) = Wx$ where $W = Q_2[A_2; \mathbf{0}]M^{-1}$ for any orthonormal matrix $Q_2$ and some diagonal matrix $A_2=\textrm{diagonal}(a_{jj})$, with $a_{jj} = \sqrt{\alpha}$ for $j \in S$ and $a_{jj} = \sqrt{\beta}$ for $j \in P_t$ for some $\alpha, \beta \in [0,1]$.  
And the contrastive loss is:
\begin{align}
\mathbb{E} \left[ \ell \left( \phi(x)^\top (\phi(x^+) - \mathbb{E}_{x^-}\phi(x^-)) \right) \right]
& = \mathbb{E}\left[\ell\left(\sum_{j \in R_t} a_{jj}^2 z_j^2 \right) \right]
\\
& = \mathbb{E}\left[\ell\left( \alpha\sum_{j \in S}  z_j^2 + \beta\sum_{j \in P_t}  z_j^2 \right) \right].
\end{align}

Recall that $\phi^*(x) = W^* x$ with $W = Q[A; \mathbf{0}]M^{-1}$ for any orthonormal matrix $Q$ and some diagonal matrix $A$, with $A_{jj} = 1$ for $j \in S$ and $A_{jj} = \sqrt{1/T}$ for $j \in R_i \setminus S$ for any $i \in [T]$.
Then
\begin{align}
    \|W - W^*\|_F & = \|Q_2 [A_2; \mathbf{0}]M^{-1} - Q [A; \mathbf{0}]M^{-1} \|_F 
    \\
    & = \|Q_2 A_2 - Q A \|_F 
    \\
    & = \| A_2 - Q_2^{-1} Q A \|_F.
\end{align}
Since $Q_2^{-1} Q$ is a rotation and $A, A_2$ are diagonal, we can always set $Q_2 = Q$ without increasing $\|W - W^*\|_F$.
Then 
\begin{align}
    \|W - W^*\|^2_F  
    & = \| A_2 - A \|^2_F 
    \\
    & = s (\sqrt{\alpha} - 1)^2 +  s (\sqrt{\beta} - \sqrt{1/T})^2 + \sum_{j \in P_i, i \neq t}((A_2)_{jj} - \sqrt{1/T})^2.
\end{align}
To minimize the contrastive loss, we need $\alpha, \beta$ to be as large as possible, subject to $\|W - W^*\|^2_F \le \gamma^2$, and $\alpha, \beta, (A_2)^2_{jj} \in [0,1]$. 
The optimal is then achieved when $\alpha = 1, \sqrt{\beta} = \min\left(1, \sqrt{\frac{1}{T}} + \frac{\gamma}{\sqrt{s} }\right)$,
 and $(A_2)_{jj} = \sqrt{1/T}$ for $j \in P_i, i\neq t$. 
\end{proof}

Now, recall that by Proposition~\ref{prop:error-general}, the ERM has risk:
\begin{align*} 
    &\quad \mathbb{E}_{(x,y)\sim \mathcal{D}_t} [\ell_c(\hat{u}^\top \phi^*(x), y) ] 
    \\  
    & \le  4 L \sqrt{{1\over m }\left({v_{t,1} \over \alpha} + {v_{t,2} \over \beta}\right) } \left( \sqrt{s \alpha  + (r-s) \beta} + O\left(\sqrt{{r\over {s\alpha+(r-s)\beta}}}\right)\right) + 8 \sqrt{ \frac{2 \ln(4/\delta) }{m}}
    \\
    & =  4 L \sqrt{{1\over m }\left({\|u_t^2\|^2 \over 2\alpha} + {\|u_t^2\|^2  \over 2\beta}\right) } \left( \sqrt{s \alpha  + s \beta} + O\left(1\right) \right) + 8 \sqrt{ \frac{2 \ln(4/\delta) }{m}}
    \\
    & = O\left( L \sqrt{{r \|u^*_t\|^2 \over m }\left( 2 + \frac{\alpha}{\beta} + \frac{\beta}{\alpha}\right) } \right)
\end{align*}  
With the fine-tuning using contrastive learning, in the representation learned, $\alpha$ remains to be $1$, while $\beta$ increases from $1/T$ to $(\sqrt{1/T} + \gamma/ \sqrt{s})^2$. Then the error bound decreases. This shows that fine-tuning with contrastive learning on unlabeled data from the target task can emphasize the task-specific features $z_{P_t}$, which then leads to better prediction performance. 
\section{More Experimental Details and Results } \label{app:exp}

\subsection{Datasets}\label{app:exp_data}
\mypara{CIFAR-10.}
CIFAR-10~\citep{cifar10} dataset consists of 60,000 $32 \times 32$ color images in 10 classes: airplane, automobile, bird, cat, deer, dog, frog, horse, ship, truck. Each class has 6,000 images. There are 50,000 training images and 10,000 test images. 

\mypara{CINIC-10.} CINIC-10~\citep{cinic10} consists of $32 \times 32$ color images from both CIFAR and ImageNet and has 90,000 training images with ten classes identical to CIFAR-10.

\mypara{ImageNet.} ImageNet~\citep{imagenet} is a huge visual dataset which is composed of 1,281,167 training data and 50,000 test data with 1,000 classes. We crop each color image to $224 \times 224$ as the standard setting.

\mypara{ImageNet32.} ImageNet32~\citep{imagenet} is a huge dataset made up of small color images called the down-sampled version of ImageNet. ImageNet32 comprises 1,281,167 training data and 50,000 test data with 1,000 classes. Each color image is down-sampled to $32 \times 32$.

\mypara{ImageNet22k.} ImageNet22k~\citep{imagenet, ridnik1imagenet} is a superset of ImageNet which contains 14.2M training data and 522,500 test data with 21,841 classes. We crop each color image to $224 \times 224$ as the standard setting.

\mypara{ImageNet-Bird.} The ImageNet-Bird is a subset of ImageNet and contains all bird-related images from ImageNet, which have 59 classes and 76k training images. 

\mypara{ImageNet-Vehicle.} The ImageNet-Vehicle is a subset of ImageNet and contains all vehicle-related images from ImageNet, which have 43 classes and 55k training images. 

\mypara{ImageNet-Cat/Ball/Shop/Clothing/Fruit.} The ImageNet-Cat/Ball/Shop/Clothing/Fruit is a subset of ImageNet and contains all cat/ball/shop/clothing/fruit-related images from ImageNet, which have 76 classes and 100k training images. 

\mypara{GCC-15M.} 
GCC-15M denotes the
merged version of GCC-3M~\citep{sharma2018conceptual} and GCC-12M~\citep{changpinyo2021conceptual}. It is a diverse dataset of visual concepts with image-text pairs meant to be used for vision-and-language pre-training. GCC-15M contains 15M training data and more than 600k concepts.

\mypara{SVHN.} The Street View House Numbers~\citep{svhn} contains 10 digits color images of size $32 \times 32$ in the natural scene. It has 73,257 digits for training and 26,032 digits for testing.

\mypara{MNIST.} The Modified National Institute of Standards and Technology~\citep{mnist} is a database of handwritten gray-scale digits of size $28\times 28$. It contains 60,000 training images and 10,000 testing images.

\mypara{EMNIST.} Extended MNIST~\citep{emnist} includes gray images of handwritten letters and digits. The images in EMNIST were converted into the same size $28 \times 28$ by the same process as MNIST. EMNIST-Letters has 145,600 lowercase characters with 26 balanced classes, and EMNIST-Digits has 280,000 characters with ten balanced classes.

\mypara{Fashion-MNIST.} Fashion-MNIST~\citep{xiao2017fashion} is a dataset of $28\times 28$ gray-scale images with ten classes: T-shirt/top, trouser, pullover, dress, coat, sandal, shirt, sneaker, bag, ankle boot. The training set size is 60,000, and the test set size is 10,000.

\mypara{Fer2013.} Fer2013 is a dataset~\citep{goodfellow2013challenges} of $48\times 48$ gray-scale images with 7 face expression classes: angry, disgust, fear, happy, sad, surprise, neutral. The training set size is 28,709, and the test set size is 3,589.

\mypara{FaceScrub.} FaceScrub~\citep{ng2014data} is a dataset with 141,130  color face images of 695  public figures.

\mypara{GTSRB.} The German Traffic Sign Recognition Benchmark~\citep{StallkampSSI12} is a dataset of color images depicting 43 different traffic signs. The images are not of fixed dimensions and have a rich background and varying light conditions as expected of photographed images of traffic signs. The original training set contains 34,799 images, and the original test set contains 12,630 images. We resize each image to 32$\times$32. The dataset has a significant imbalance in the number of sample occurrences across classes. We use data augmentation techniques to enlarge the training data and balance the number of samples in each class. We construct a class-preserving data augmentation pipeline consisting of rotation, translation, and projection transforms and apply this pipeline to the training images until each class contains 2,500 examples. So we construct a new training set containing 107,500 images in total. We also construct a new test set by randomly selecting 10,000 images from the original test set for evaluation. 

\mypara{IMDB.} IMDB~\citep{maas-EtAl:2011:ACL-HLT2011} is a large movie review text dataset. The dataset is for binary sentiment classification, positive review or negative review. The dataset contains 25,000 movie reviews for training and 25,000 for testing. 

\mypara{AGNews.} AGNews~\citep{Zhang2015CharacterlevelCN} is a sub-dataset of AG's corpus of news articles for text topic classification. It covers the 4 largest classes: world, sports, business, sci/tech. The AG News contains 30,000 training and 1,900 test samples per class.

\begin{figure*}[ht!]
    \centering
    \newcommand{\imagewidth}{0.33\linewidth}
    \subfloat[MoCo v2; CIFAR-10]{\includegraphics[width=\imagewidth]{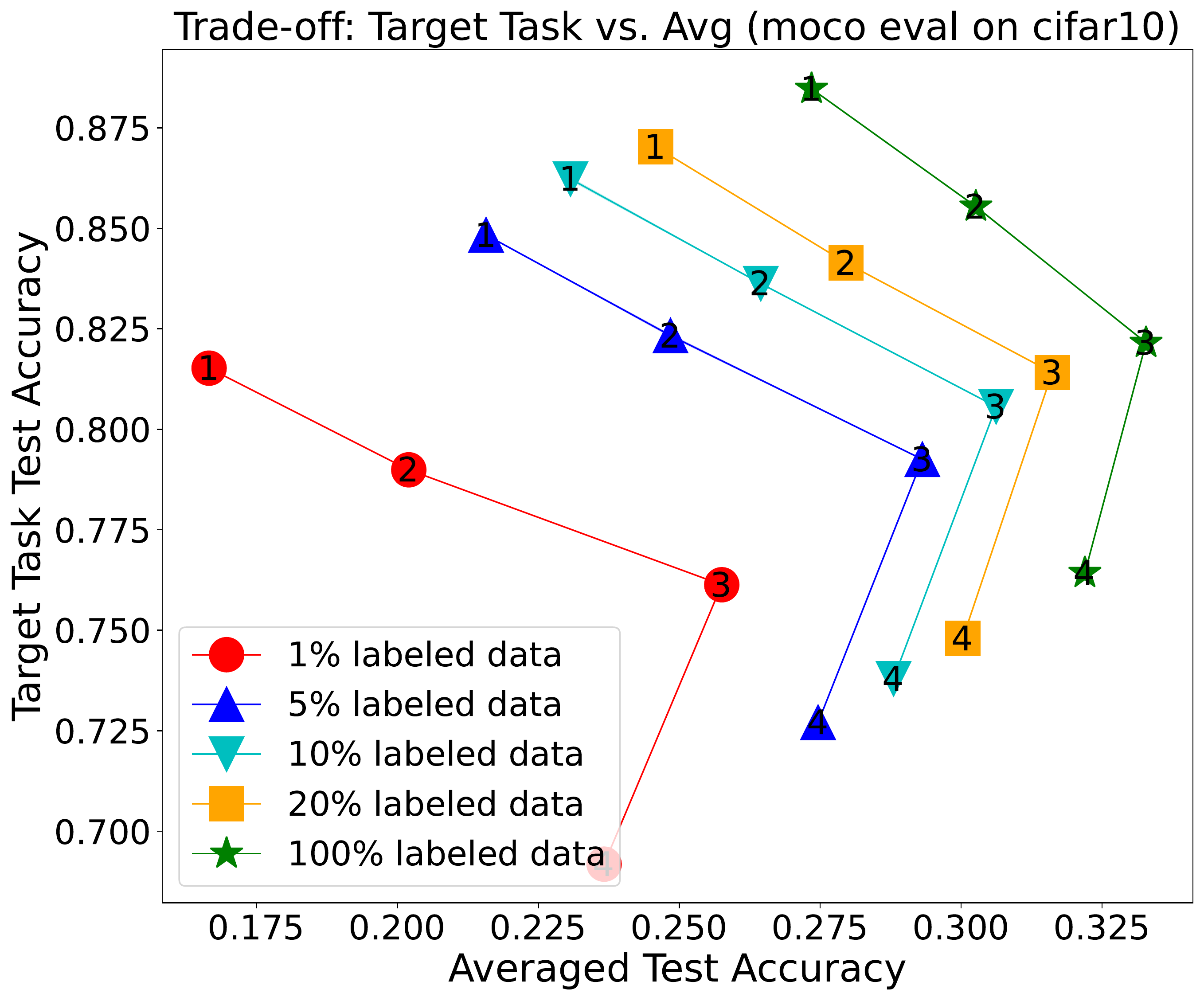}}
    \subfloat[NNCLR; CIFAR-10]{\includegraphics[width=\imagewidth]{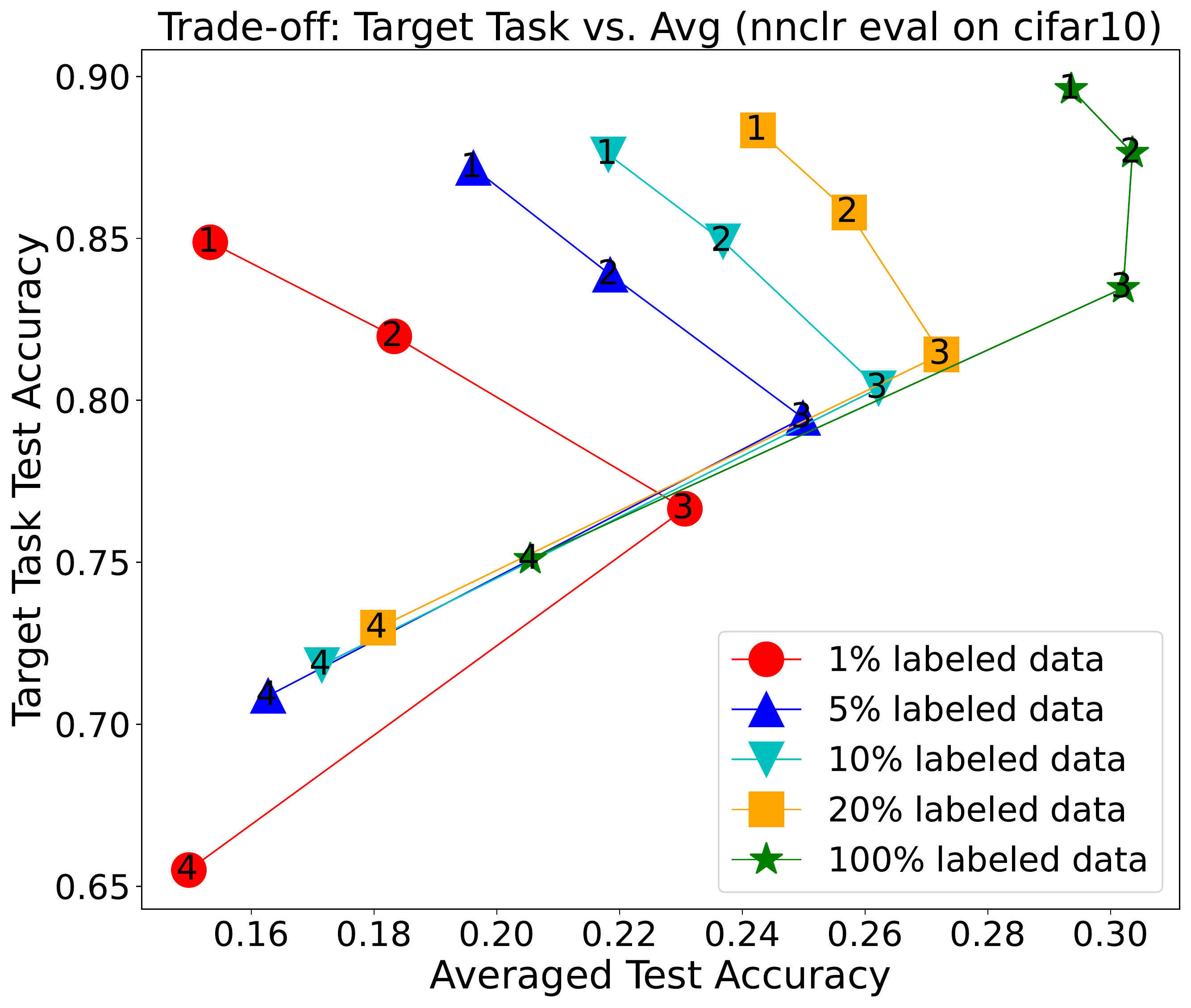}}
    \subfloat[SimSiam; CIFAR-10]{\includegraphics[width=\imagewidth]{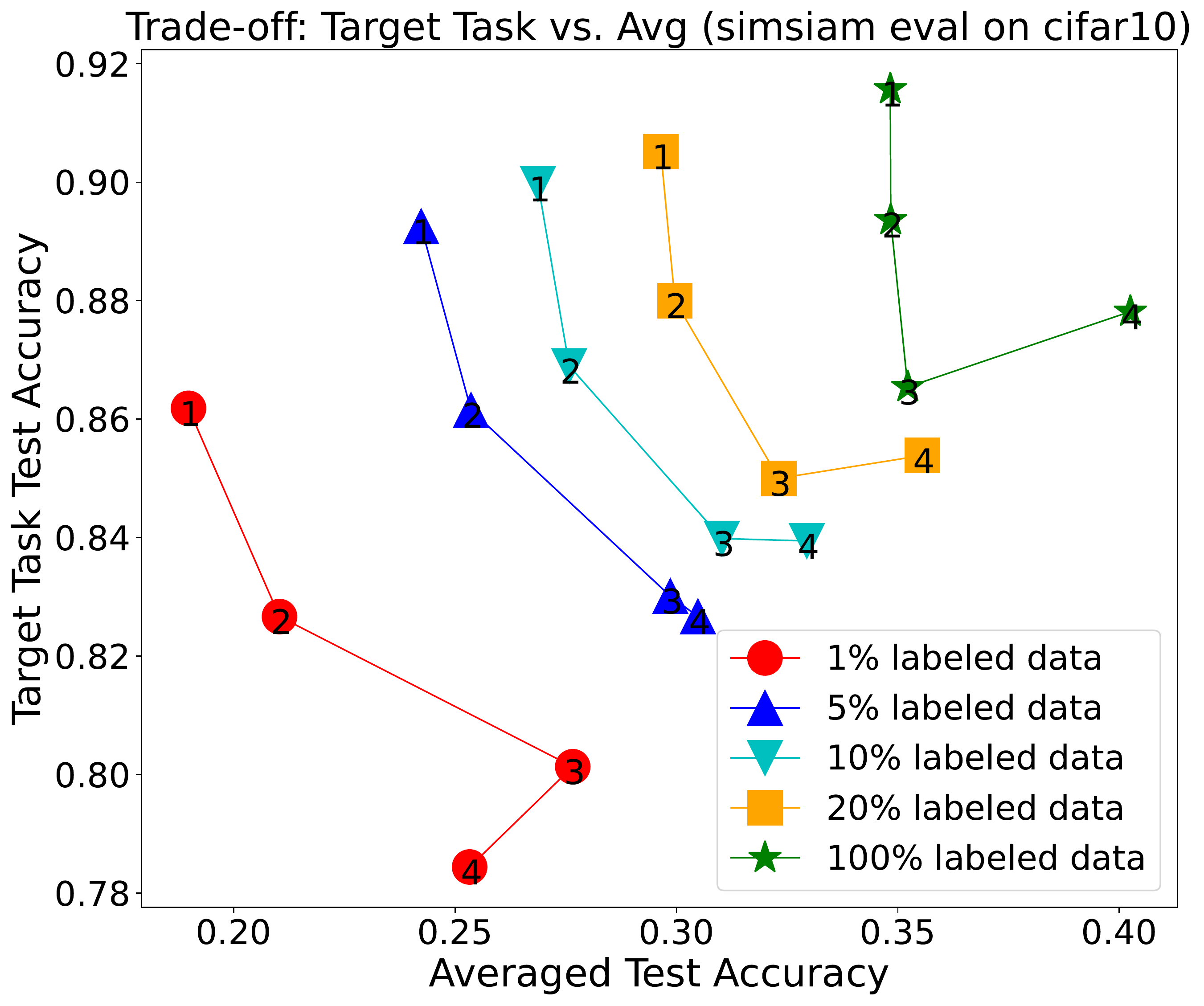}}\\
    \subfloat[MoCo v2; MNIST]{\includegraphics[width=\imagewidth]{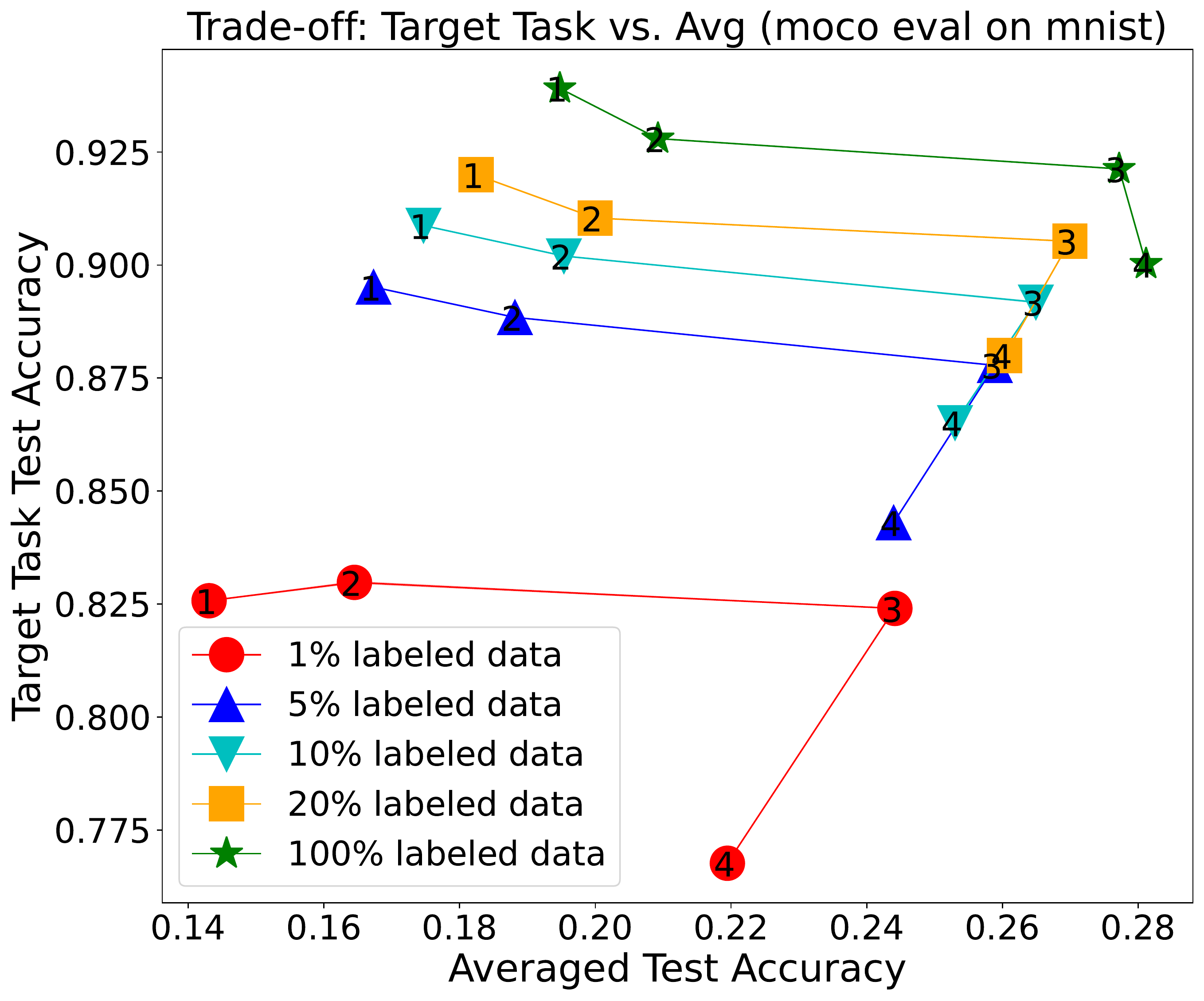}}
    \subfloat[NNCLR; MNIST]{\includegraphics[width=\imagewidth]{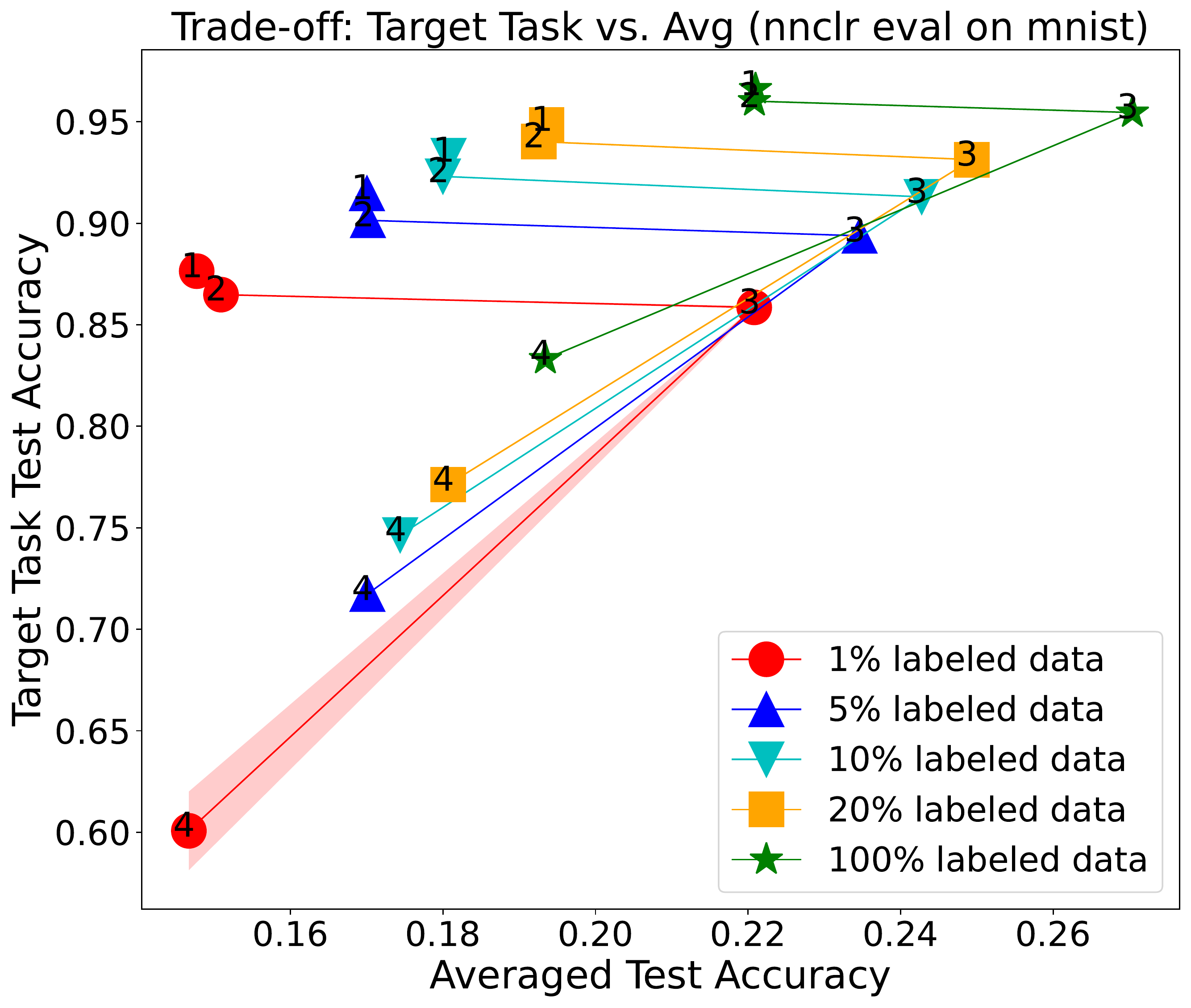}}
    \subfloat[SimSiam; MNIST]{\includegraphics[width=\imagewidth]{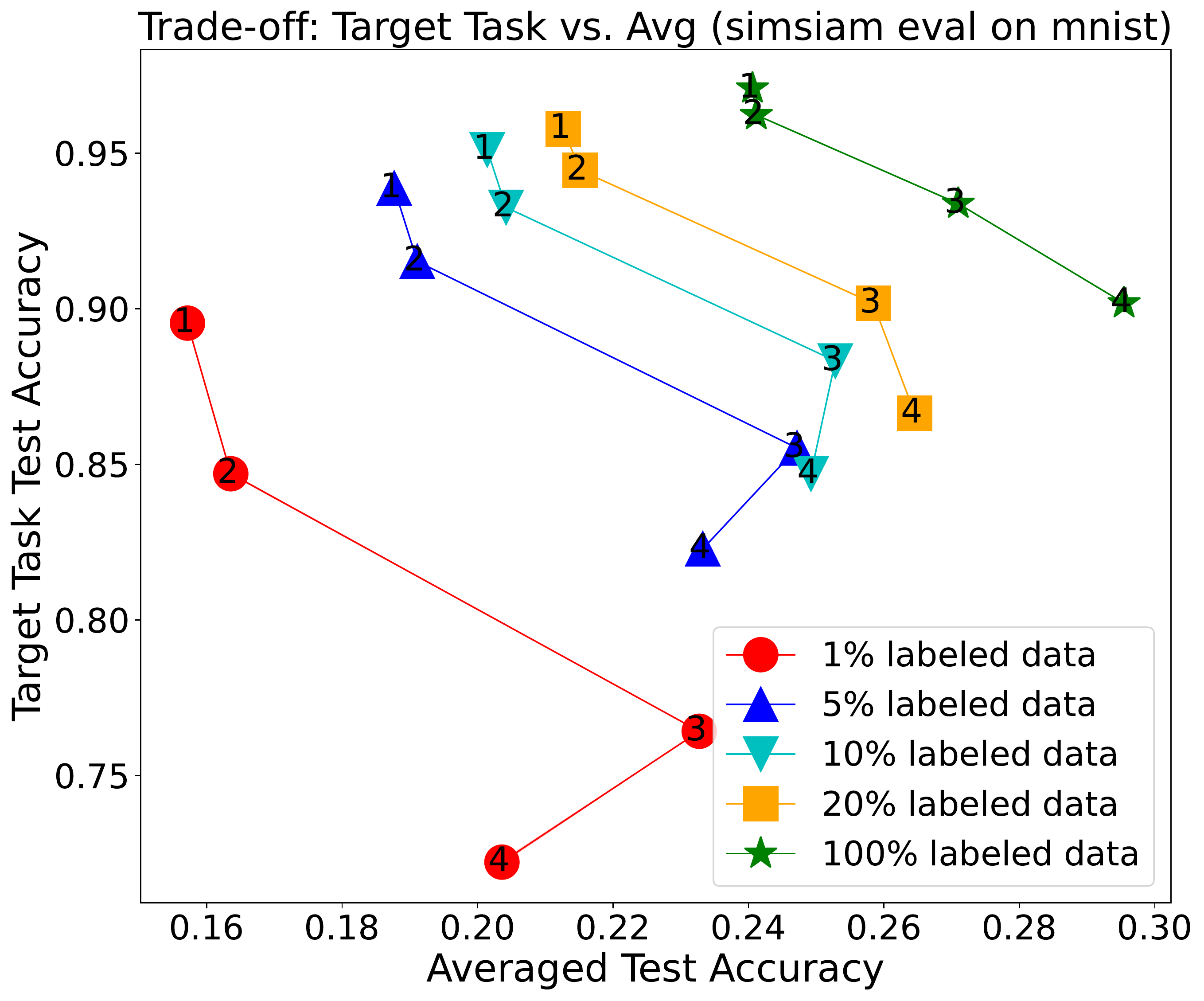}}\\
    \subfloat[MoCo v2; Fer2013]{\includegraphics[width=\imagewidth]{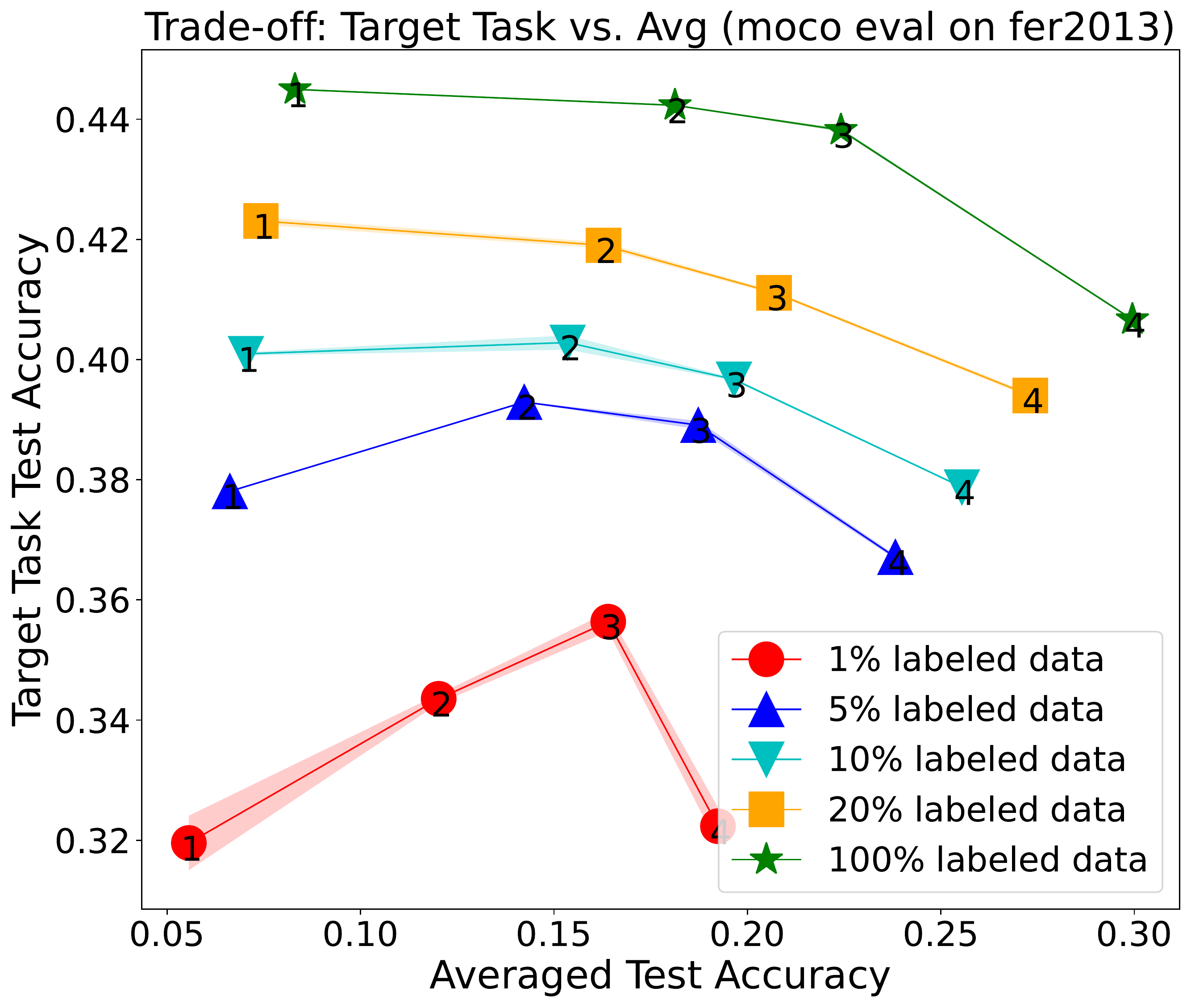}}
    \subfloat[NNCLR; Fer2013]{\includegraphics[width=\imagewidth]{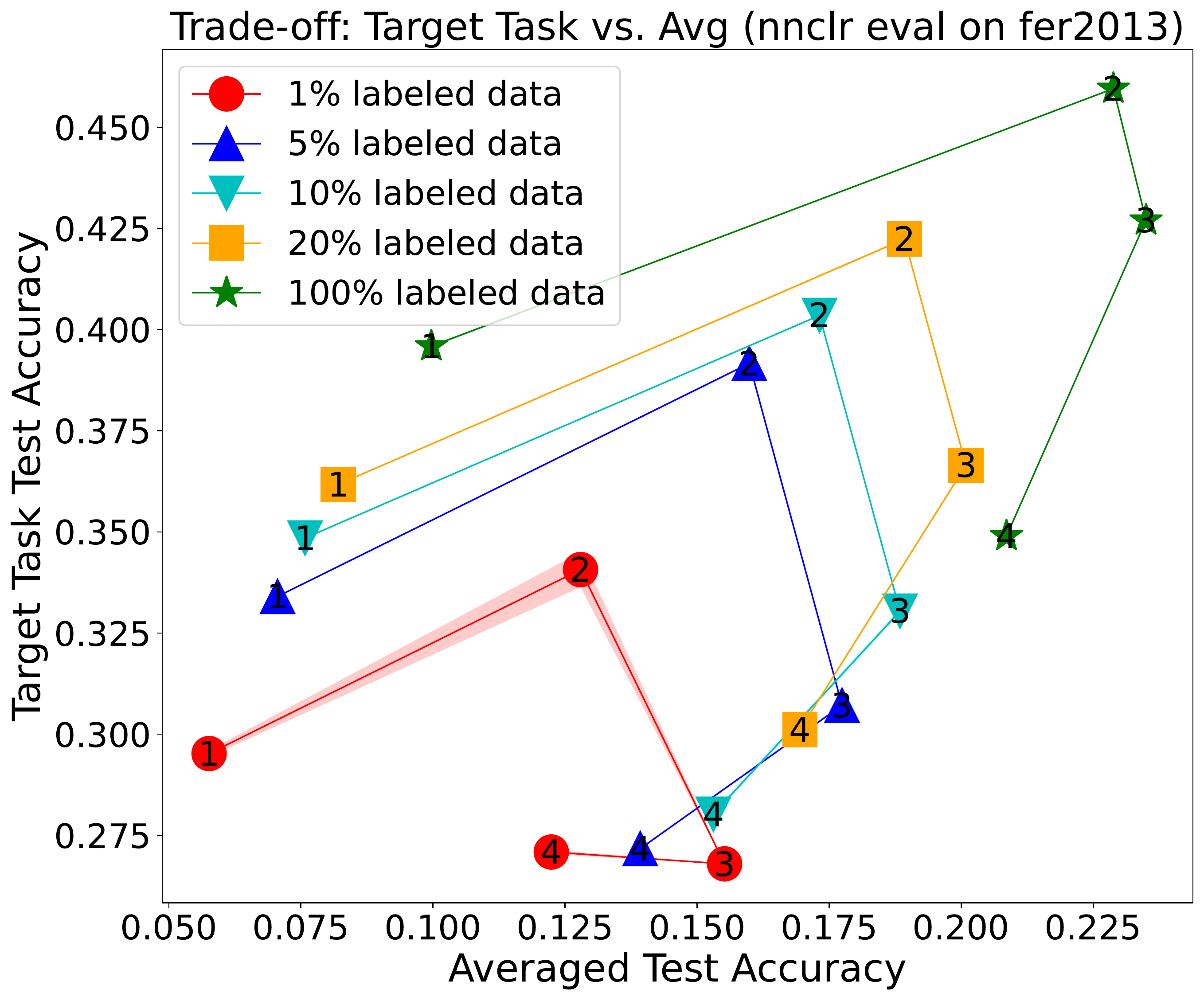}}
    \subfloat[SimSiam; Fer2013]{\includegraphics[width=\imagewidth]{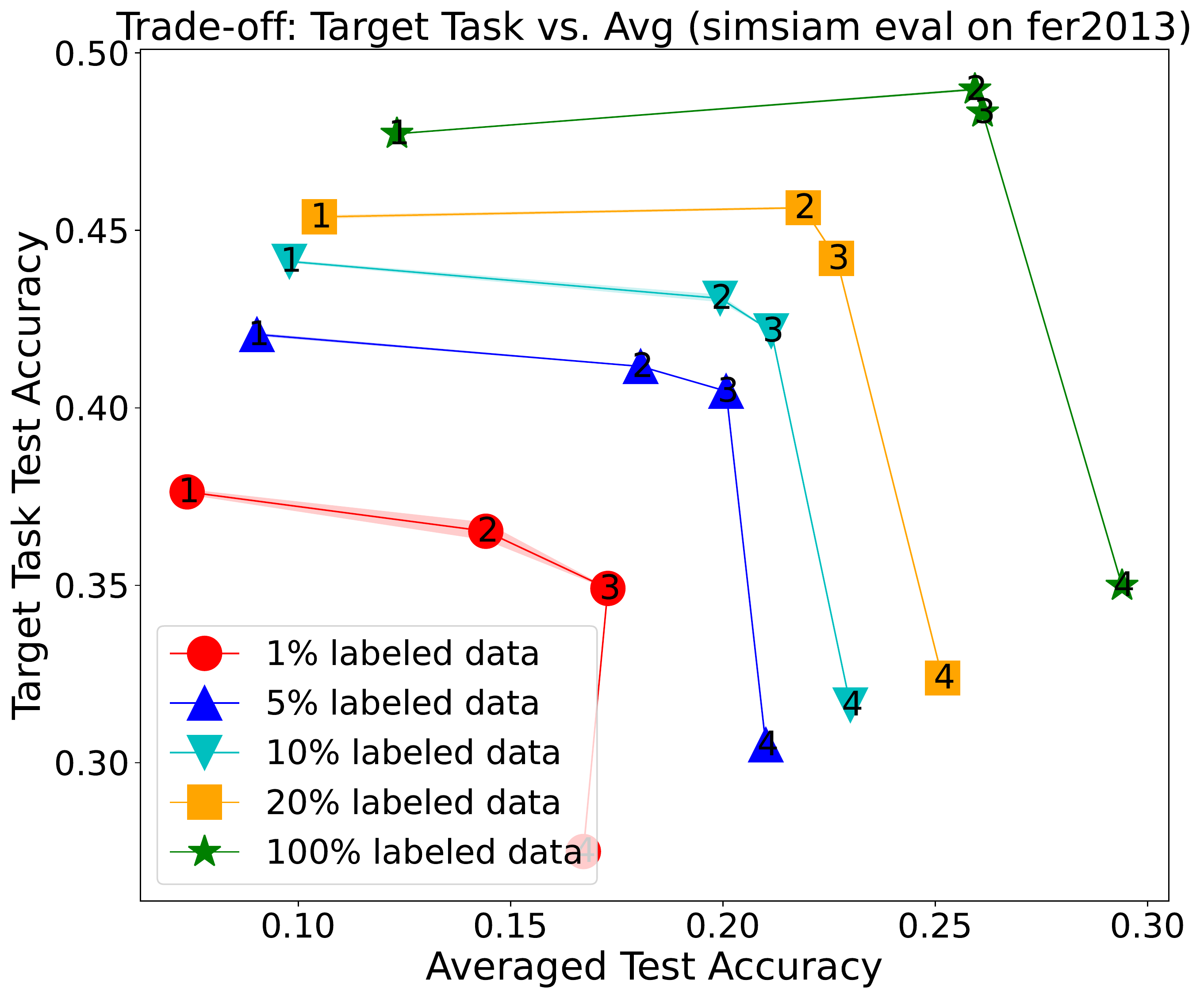}}
    \caption{Trade-off of universality and label efficiency for MoCo v2, NNCLR, SimSiam on downstream tasks CIFAR-10, MNIST, Fer2013. ``1, 2, 3, 4" means incrementally adding datasets for pre-training. The $x$-axis is the average test accuracy of Linear Probing on all 4 datasets. The $y$-axis is test accuracy on the target task.   Pre-training data: (a)(b)(c) CINIC-10, SVHN, GTSRB, and ImageNet32. Target task: CIFAR-10. (d)(e)(f) EMNIST-Digits\&Letters, Fashon-MNIST, GTSRB, ImageNet32. Target: MNIST. (g)(h)(i) FaceScrub, CIFAR-10, SVHN, ImageNet32. Target: Fer2013. 
    }
    \label{fig:trade_off}
\end{figure*}
\subsection{Verifying the Existence of the Trade-off }\label{app:trade}
\mypara{Model.} We evaluate three popular contrastive learning frameworks, MoCo v2~\citep{He0WXG20}, NNCLR~\citep{dwibedi2021little} and SimSiam~\citep{ChenH21} . MoCo v2 can be regarded as SimCLR~\citep{ChenK0H20} equipped with a memory bank, while NNCLR uses nearest-neighbor as the positive pairs. SimSiam can be regarded as a modification from BYOL~\citep{GrillSATRBDPGAP20} similar to Barlow Twins~\citep{ZbontarJMLD21}, which does not need negative pairs. We follow the same data augmentation methods as SimSiam~\citep{ChenH21} for all datasets. 

\mypara{Datasets.} We consider three sets of data. In the first set, our downstream task is CIFAR-10, and the pre-training datasets may include CINIC-10, SVHN, GTSRB, and ImageNet32. CINIC-10 has classes identical to CIFAR-10 and is the most target-relevant, while the others are less similar to CIFAR-10. This then provides more and more diverse pre-training data w.r.t.\ the target task. In the second set, our downstream task is MNIST, and the pre-training datasets may include EMNIST-Digits\&Letters, Fashion-MNIST, GTSRB, and ImageNet32. Here, the handwritten dataset EMNIST-Digits\&Letters is the most target-relevant. In the last set,  our downstream task is Fer2013, a face expression classification dataset. The pre-training datasets may include FaceScrub, CIFAR-10, SVHN, and ImageNet32, where the face dataset Facescrub is the most target-relevant.

\mypara{Evaluation \& Methods.}  
We pre-train a ResNet18 network~\citep{he2016deep} as a feature extractor under different contrastive learning methods using SGD for 800 epochs with a cosine learning-rate scheduler, the base learning rate of $0.06$, weight decay 5e-4, momentum $0.9$ and batch size $512$. Then we fix the pre-trained feature extractor and train a linear classifier (Linear Probing, LP) on $1\%, 5\%, 10\%, 20\%, 100\%$ of the labeled data from the downstream task. For LP we use SGD for 200 epochs with a cosine learning-rate scheduler, the base learning rate of $5.0$, no weight decay, momentum $0.9$, and batch size $256$. We finally report the test accuracy on a specific target task and the weighted average test accuracy on all pre-training datasets (i.e., using them as the downstream tasks). We use the class number of each pre-training dataset as the weight, which is consistent with random guessing as a baseline. We have three types of models pre-trained on three sets of datasets. Thus, we have nine tasks in total. For each task, we have two pre-trained models initialized by different random seeds. We evaluated each model three times and we report the average test accuracy with standard deviation based on multiple runs (six evaluations). 

In Figs.~\ref{fig:trade_off}(a)(b)(c), we report results for MoCo v2, NNCLR, SimSiam (respectively) on CIFAR-10 as the downstream task.
The size and diversity of unlabeled data for pre-training are increased on the $x$-axis by incrementally adding datasets in the following order: CINIC-10 (C), SVHN (S), GTSRB (G), and ImageNet32 (only use a 500k subset)(I).
Then, we do LP on CIFAR-10 using different proportions of labeled samples. 
Similarly, in Figs.~\ref{fig:trade_off}(d)(e)(f), we report results for three models on MNIST.
We incrementally add pretraining datasets in the following order: EMNIST-Digits\&Letters (E), Fashion-MNIST (F), GTSRB (G), ImageNet32 (I).
In Figs.~\ref{fig:trade_off}(g)(h)(i), the downstream task is Fer2013 and we incrementally add datasets in the following order: FaceScrub (I), CIFAR-10 (C), SVHN (S), ImageNet32 (I).

\mypara{Results.}
In Figs.~\ref{fig:trade_off}, when the pre-training data becomes more diverse, the average test accuracy on all pre-training datasets increases (i.e., universality improves), while the test accuracy on the specific target task decreases (i.e., label efficiency drops). This shows a clear trade-off between universality and label efficiency. Moreover, with fewer labeled data (from the green line to the red line), the trade-off phenomenon will be more significant. 
It supports our claim that diverse pre-training data allow learning diverse features for better universality, but can down-weight the features for a specific task resulting in worse prediction.
As more diverse unlabeled data are included, more labeled data from the target task is needed to achieve comparably-good prediction accuracy. This validates our analysis of the trade-off in Section~\ref{sec:analysis-simple}. 

In Figs.~\ref{fig:trade_off}(a)(b)(d)(e)(f)(h), the average test accuracy ($x$-axis) decreases in the end because when we add one pre-training dataset, it may hurt the test accuracy of all other pre-training datasets. 
In Figs.~\ref{fig:trade_off}(g)(h), the downstream task test accuracy ($y$-axis) increases in the beginning because when the pre-training unlabeled data from relevant tasks is not sufficiently large, introducing other pre-training datasets will help the model to learn features relevant for the downstream task. However,  the downstream task test accuracy will drop later as in other figures. 

Please refer to Appendix~\ref{app:trade_extend} for more figures.




\subsubsection{Larger Scale Experiments} \label{app:larger_dataset}

The datasets involved are ImageNet (1.2M data points, 1k classes), ImageNet22k (14M, 22k classes), and GCC-15M (15M). 
We compare two UniCL representations~\citep{yang2022unified}: the more specific representation pre-trained on ImageNet, and the one pre-trained on the more diverse dataset ImageNet+GCC-15M. We compare their performance in two tasks: classification on ImageNet (using 2k labeled data) and classification on ImageNet22k (using 44k labeled data).

The results are reported in Table~\ref{tab:unicl}. 
From the specific representation to the diverse one, we observe that the test accuracy on ImageNet decreases (i.e., efficiency drops), while the test accuracy over ImageNet22k increases (i.e., universality improves). This again confirms the trade-off.

\begin{table}[ht]
\renewcommand{\arraystretch}{1.2}
\fontsize{7.2pt}{8.2pt}\selectfont
\centering
\begin{tabular}{|c|c c|}
\hline
\rowcolor{lightgray}  LP Accuracy & \multicolumn{2}{c|}{\textbf{Pre-training} dataset } \\ 
\rowcolor{lightgray} \textbf{Target} dataset  & {ImageNet} & {ImageNet+GCC-15M} \\\hline 

ImageNet (2k label) & 79.05  & 77.66  \\
ImageNet22k (44k label) & 9.02  & 9.86  \\
\hline  
\end{tabular}
\caption{ LP test accuracy on ImageNet and ImageNet22k with UniCL (Swin-T) pre-trained 500 epochs on ImageNet and ImageNet+GCC-15M. }
\label{tab:unicl}
\end{table}

\subsection{Inspecting the Trade-off}\label{app:property}

\subsubsection{Feature Similarity}\label{app:sim}

\begin{figure}[ht!]
    \centering
    \newcommand{\imagewidth}{0.33\linewidth}
    \subfloat[CIFAR-10;  Independent]{\includegraphics[width=\imagewidth]{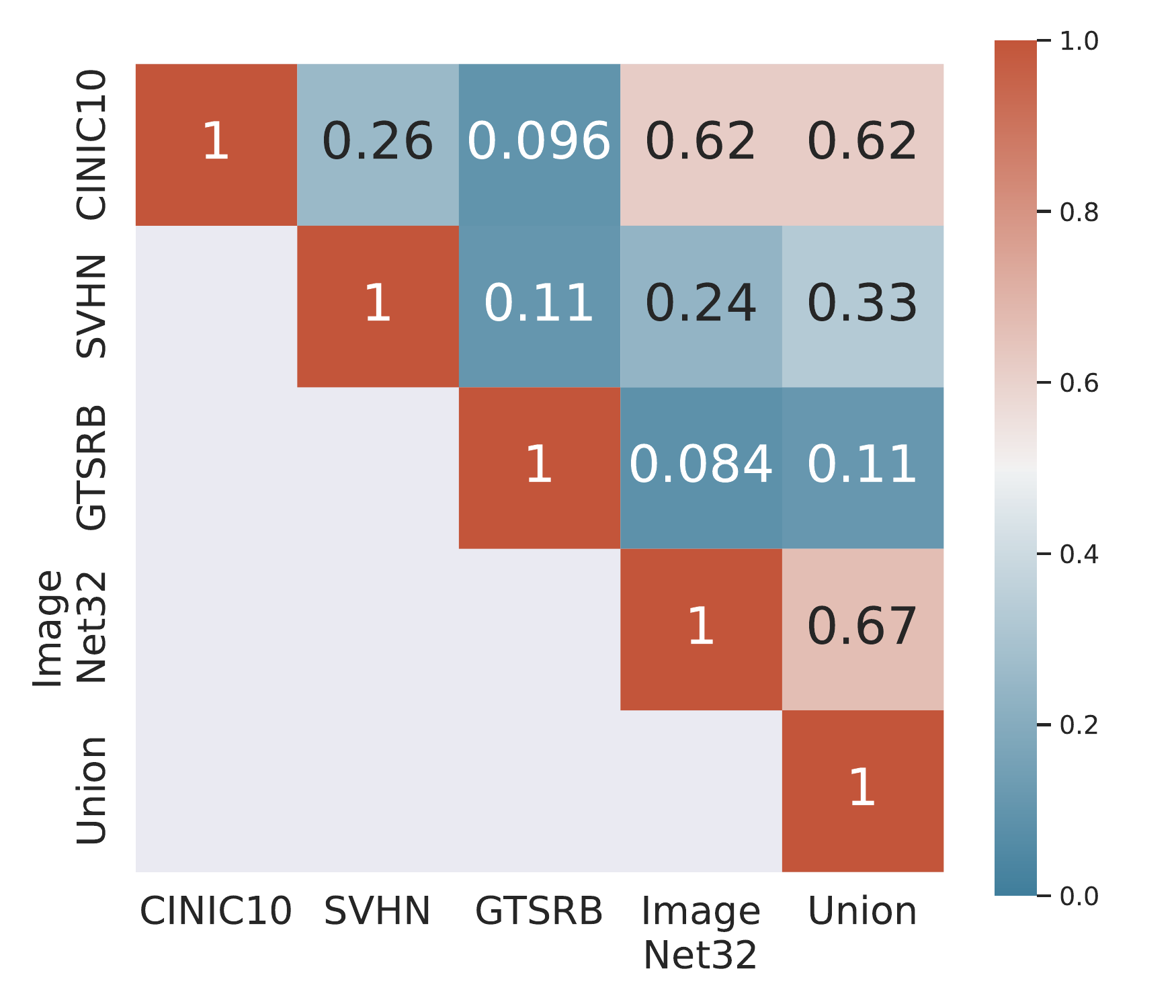}}
    \subfloat[MNIST; Independent]{\includegraphics[width=\imagewidth]{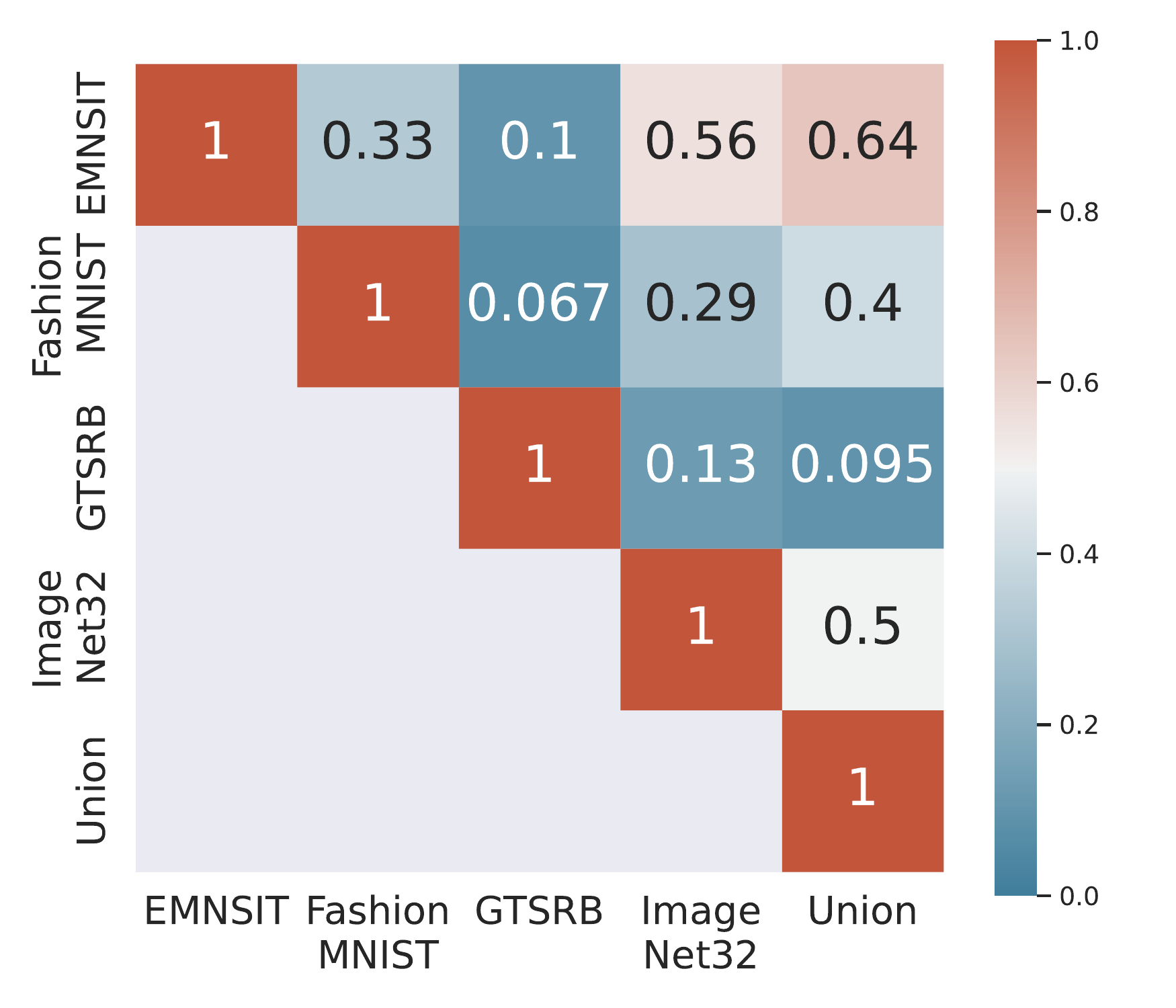}}
    \subfloat[Fer2013; Independent]{\includegraphics[width=\imagewidth]{figure/similarity/moco_fer2013_split.pdf}}\\
    \subfloat[CIFAR-10; Incremental ]{\includegraphics[width=\imagewidth]{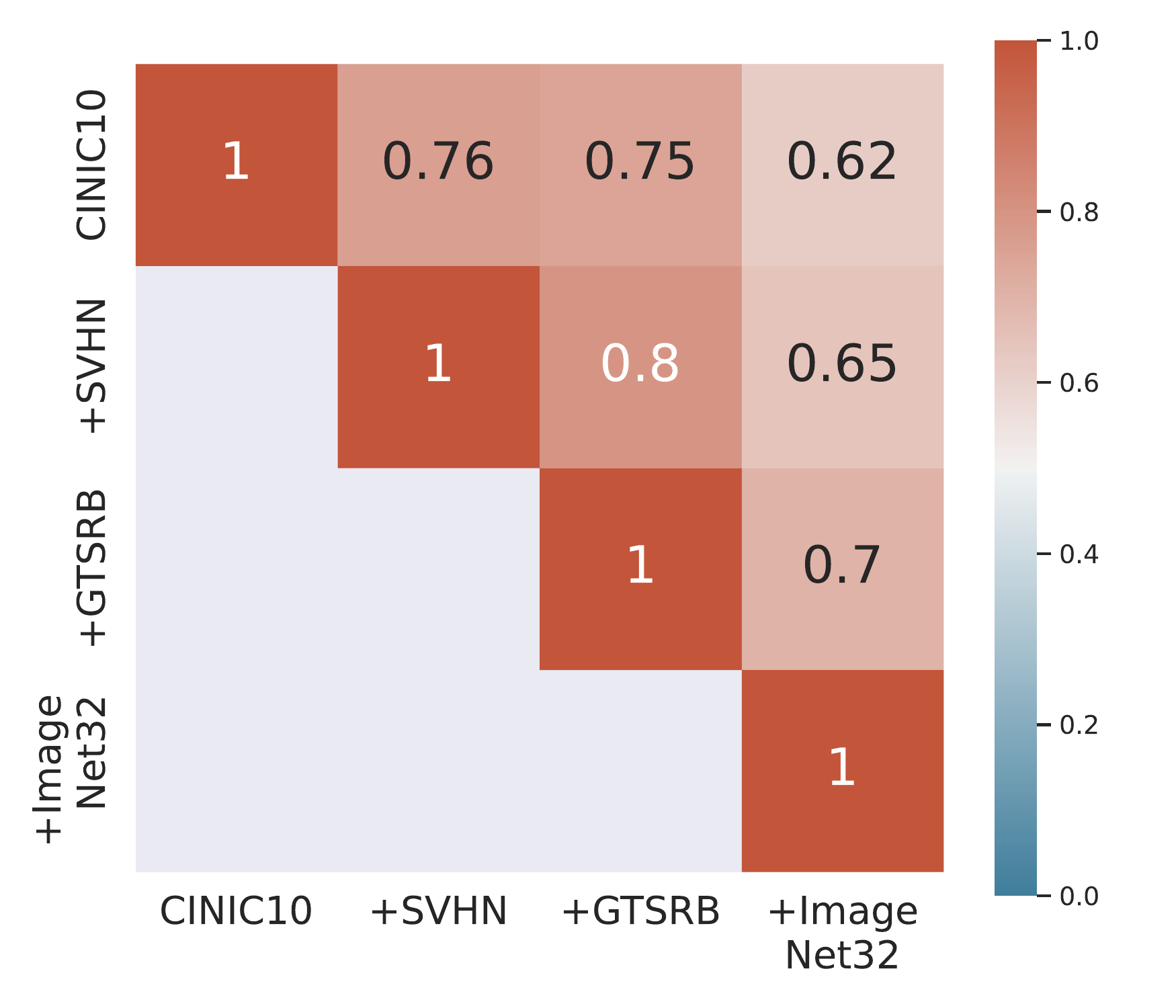}}
    \subfloat[MNIST; Incremental]{\includegraphics[width=\imagewidth]{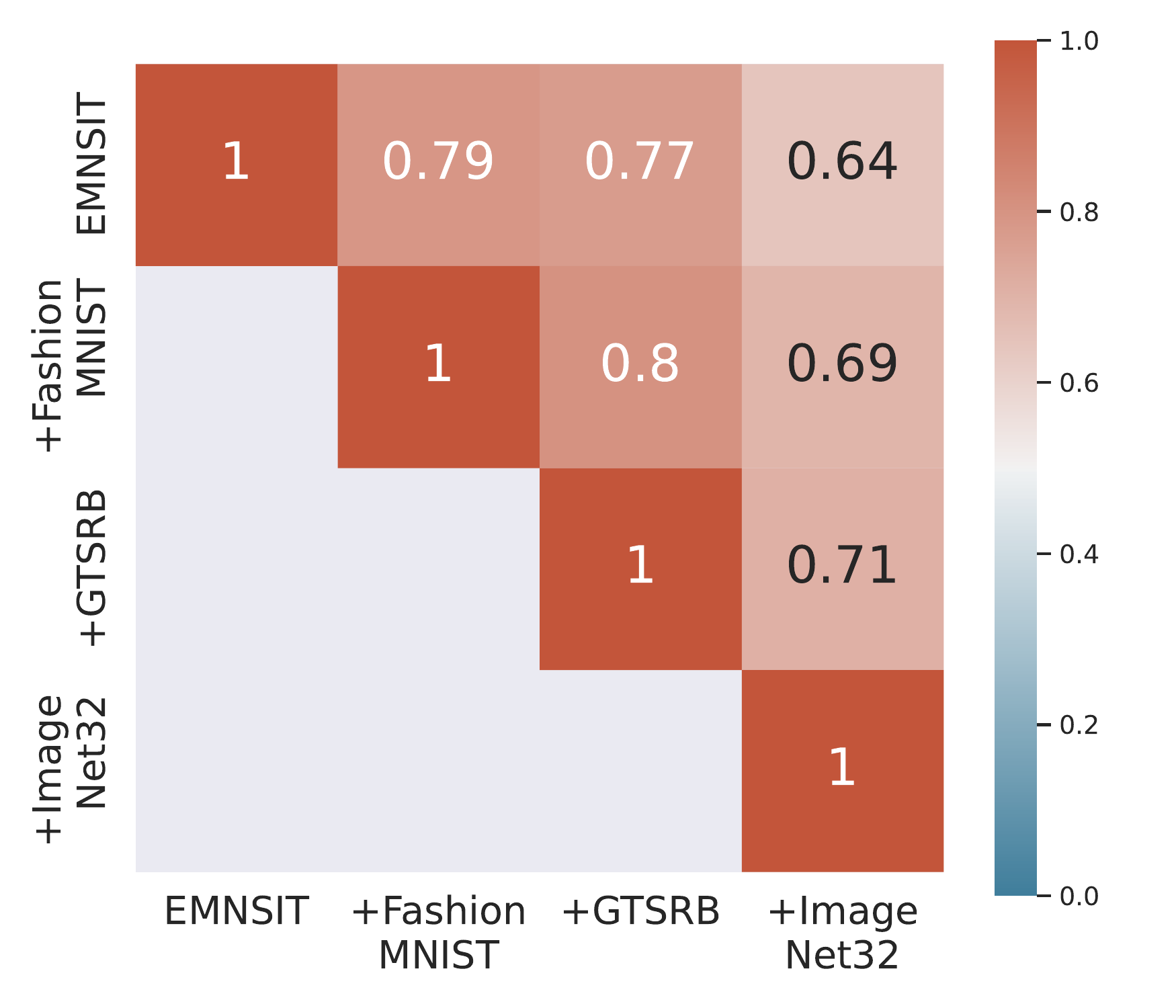}}
    \subfloat[Fer2013; Incremental]{\includegraphics[width=\imagewidth]{figure/similarity/moco_fer2013_add.pdf}}
    \caption{Linear CKA similarity score among downstream task features from MoCo v2 pretrained on three sets of datasets. For ``Independent", each representation model in the first four columns/rows is pre-trained on a single dataset. ``Union" indicates the model pre-trained on the union among four disjoint datasets. ``Incremental" means from left column to right, from top row to bottom, we incrementally add datasets for pre-training.}
    \label{fig:sim}
\end{figure}

For each set of pre-training data, we extract a set of features for the target task, CIFAR-10, MNIST, and Fer2013 respectively, using the pre-trained representation function. In Fig.~\ref{fig:sim} (rows/columns are pre-training data; numbers/colors show the similarity) first row, the features from different pre-training datasets have low similarities. This is consistent with our setup in Section~\ref{sec:analysis-simple} that different tasks only share some features and each owns many private ones. In the second row, we can see a decreasing trend of similarity in each row of each sub-figure. This indicates when gradually adding more diverse pre-training data, the obtained representation will encode more downstream task-irrelevant information and become less similar to that before adding more pre-training data. It will hurt the downstream task performance. This result is consistent with our Proposition~\ref{prop:error-general} and~\ref{prop:error-specific}. 

\begin{figure}[ht!]
\begin{center}
\includegraphics[width=0.34\textwidth]{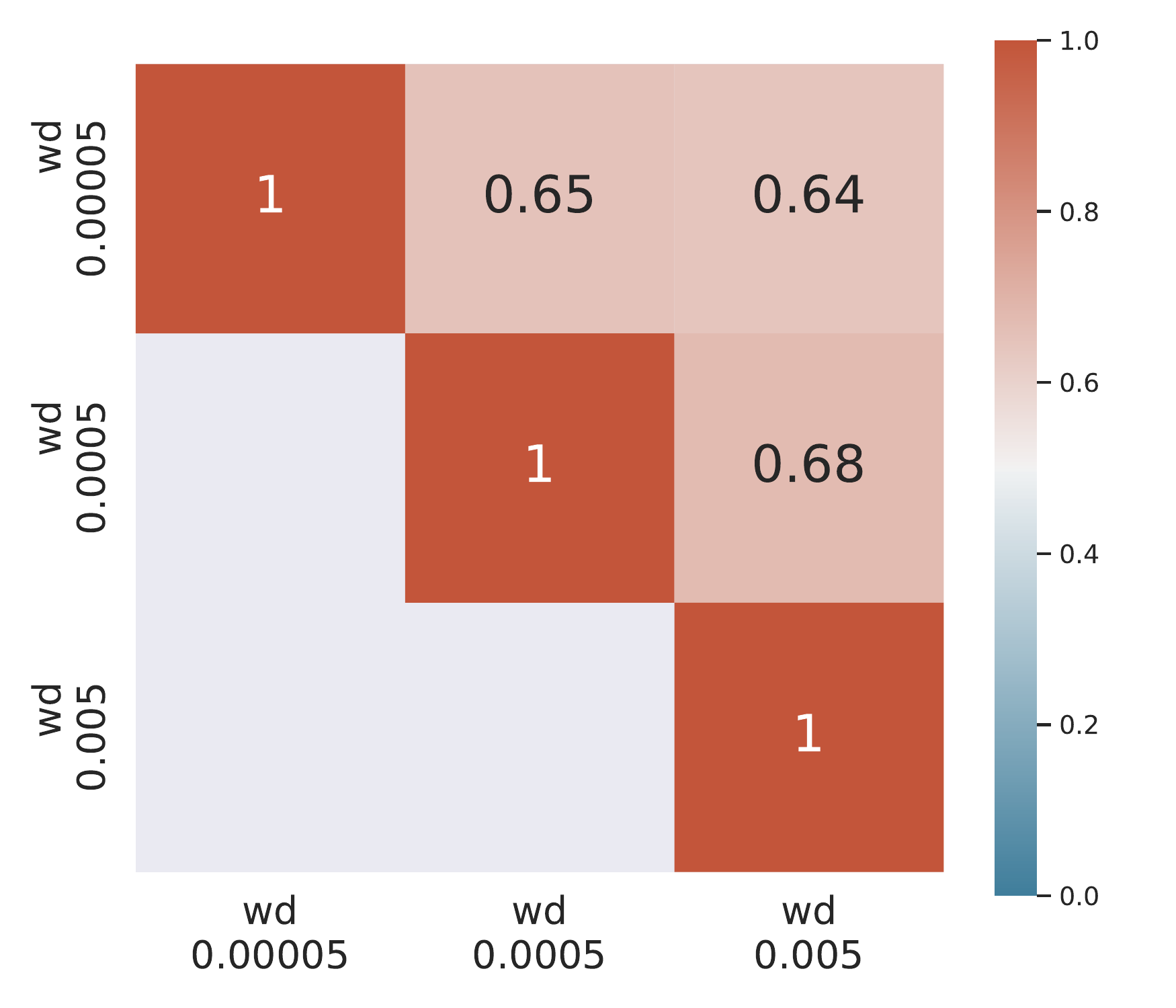}
\end{center}
\caption{ Linear CKA similarity among CIFAR10 features from MoCo v2 pre-trained on CINIC10. Each representation in the first three columns/rows is pre-trained with a different weight decay value. 
}
\label{fig:sim_wd}
\end{figure}

Finally, we would like to verify in Theorem~\ref{thm:encode} via CKA similarities. The theorem says that when we increase the norm bound, the representation can encode more and more features. To verify this, our experiments will vary the weight decay regularization coefficient (larger weight decay corresponds to a smaller norm bound and fewer features learned in the representation). Fig.~\ref{fig:sim_wd} shows that the linear CKA similarity decreases with the increase of the weight decay, then it provides some support for the theorem.

\subsubsection{The Effect of Pre-training and Labeled Data Sizes}

We have also conducted finer-grained investigations into the trade-off by varying the pre-training dataset size and the downstream labeled data on the specific task. 
Table~\ref{tab:moco_v3_bird} shows the results for different pre-training datasets (ImageNet-Bird, ImageNet, and 10\% of ImageNet data) and different labeled datasets (500 to 8k samples from ImageNet-Bird, and the whole ImageNet). Table~\ref{tab:moco_v3_vehicle} shows the results for a similar setting with ImageNet-Vehicle. Table~\ref{tab:unicl_full} shows the results of UniCL with Swin-T backbone using different pre-training datasets (ImageNet, and ImageNet+GCC-15M) and different labeled datasets (2k samples from ImageNet, 44k samples from ImageNet22k, the whole ImageNet, and the whole ImageNet22k). 

First, we find that the trade-off is hidden when a small amount of data from the specific task is used for pre-training. The results show that when the specific pre-training data is small, the representation learned is noisy and the downstream prediction performance is poor. This is not surprising: in the extreme case with only 1 pre-training image from the bird task or vehicle task, there is essentially no information for pre-training.  In this case, the results are well inside the Pareto front of the trade-off curve and thus cannot demonstrate the trade-off. 

Second, we find that the trade-off is hidden when a large amount of labeled data are available for learning predictors on the representation in the specific task. If a large amount of labeled data is available for training the predictor, the prediction performance is similar when using the specific or universal representations. This is consistent with the insights from our analysis: when pre-training on diverse data, the features specific to the target task are down-weighted but can still be in the representation, then with a large amount of labeled data from the specific task, the sample complexity issue is not significant, and thus the trade-off is hidden. 

The above experimental studies show that the trade-off is revealed when we have \emph{large-scale pre-training data} and \emph{a limited amount of labeled data from the target task}, which is indeed the typical interesting case for using pre-trained representations (especially the large foundation models). The trade-off is significant in this case and thus crucial for further development of pre-training representations.

\begin{table}[ht]
\renewcommand{\arraystretch}{1.2}
\fontsize{7.2pt}{8.2pt}\selectfont
\centering
\begin{tabular}{|c|c c c|}
\hline
\rowcolor{lightgray}  LP Accuracy & \multicolumn{3}{c|}{\textbf{Pre-training} dataset } \\ 
\rowcolor{lightgray} \textbf{Target} dataset  & {ImageNet-Bird} & {ImageNet} & {10\% ImageNet} \\\hline 

ImageNet-Bird (500 label) & 76.00  & 58.78 & 30.06 \\
ImageNet-Bird (1k label) & 81.42  & 69.39 & 35.96 \\
ImageNet-Bird (2k label) & 82.88  & 79.66 & 39.49 \\
ImageNet-Bird (4k label) & 83.83  & 83.59 & 44.13 \\
ImageNet-Bird (8k label) & 84.71 & 85.93 & 48.50 \\
\hline 
ImageNet (all label) & 41.38 & 73.20 & 54.08 \\
\hline  
\end{tabular}
\caption{ LP test accuracy on ImageNet-Bird and ImageNet with MoCo v3 (ViT-S) pre-trained on ImageNet-Bird and ImageNet. }
\label{tab:moco_v3_bird}
\end{table}

\begin{table}[ht!]
\renewcommand{\arraystretch}{1.2}
\fontsize{7.2pt}{8.2pt}\selectfont
\centering
\begin{tabular}{|c|c c c|}
\hline
\rowcolor{lightgray}  LP Accuracy & \multicolumn{3}{c|}{\textbf{Pre-training} dataset } \\ 
\rowcolor{lightgray} \textbf{Target} dataset  & {ImageNet-Vehicle} & {ImageNet} & {10\% ImageNet} \\\hline 

ImageNet-Vehicle (500 label) & 61.81  & 57.86 & 31.48 \\
ImageNet-Vehicle (1k label) & 70.93 & 67.90 & 37.76 \\
ImageNet-Vehicle (2k label) & 72.09  & 69.81 & 39.07 \\
ImageNet-Vehicle (4k label) & 74.14  & 72.93 & 39.62 \\
ImageNet-Vehicle (8k label) & 75.53 & 74.55 & 43.53 \\
\hline 
ImageNet (all label) & 39.84 & 73.20 &  54.08 \\
\hline  
\end{tabular}
\caption{ LP test accuracy on ImageNet-Vehicle and ImageNet with MoCo v3 (ViT-S) pre-trained on ImageNet-Vehicle and ImageNet. }
\label{tab:moco_v3_vehicle}
\end{table}

\begin{table}[ht]
\renewcommand{\arraystretch}{1.2}
\fontsize{7.2pt}{8.2pt}\selectfont
\centering
\begin{tabular}{|c|c c|}
\hline
\rowcolor{lightgray}  LP Accuracy & \multicolumn{2}{c|}{\textbf{Pre-training} dataset } \\ 
\rowcolor{lightgray} \textbf{Target} dataset  & {ImageNet} & {ImageNet+GCC-15M} \\\hline 

ImageNet (2k label) & 79.05  & 77.66  \\
ImageNet22k (44k label) & 9.02  & 9.86  \\
ImageNet (all label) & 79.61  & 81.37  \\
ImageNet22k (all label) & 30.90  & 36.69  \\
\hline  
\end{tabular}
\caption{ LP test accuracy on ImageNet and ImageNet22k with UniCL (Swin-T) pre-trained 500 epochs on ImageNet and ImageNet+GCC-15M. }
\label{tab:unicl_full}
\end{table}

\subsubsection{More Ablation Studies}\label{app:ablation}

We report results from three ablation studies: (1) varying the class number of ImageNet32, (2) varying the percentage of target-relevant pre-training data, and (3) replacing CINIC-10 with CIFAR-10 in the pre-training dataset. The results  consistently show the trade-off.

\begin{figure*}[ht!]
    \centering
    \newcommand{\imagewidth}{0.37\linewidth}
    \subfloat[MoCo v2]{\includegraphics[width=\imagewidth]{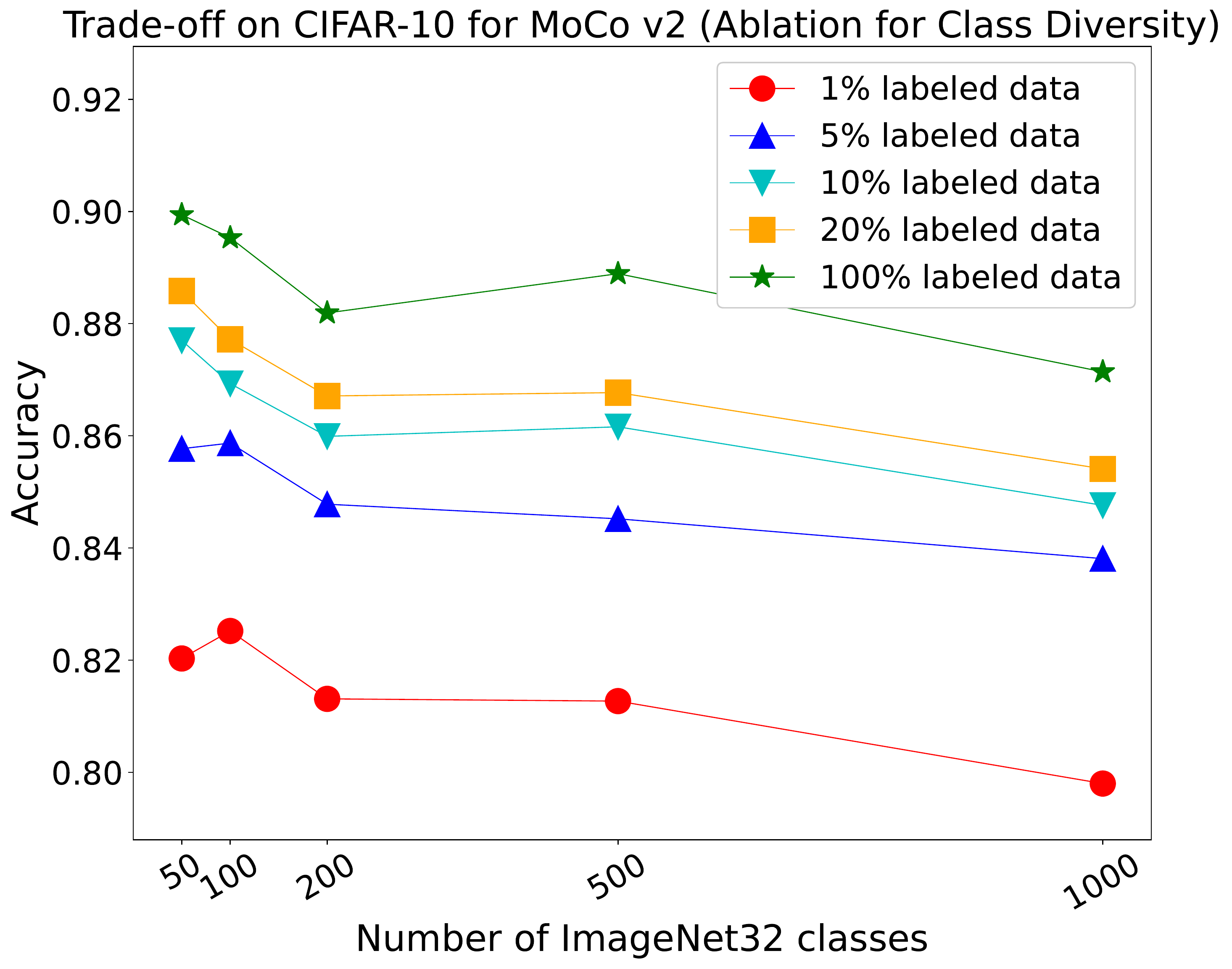}}
    \subfloat[SimSiam]{\includegraphics[width=\imagewidth]{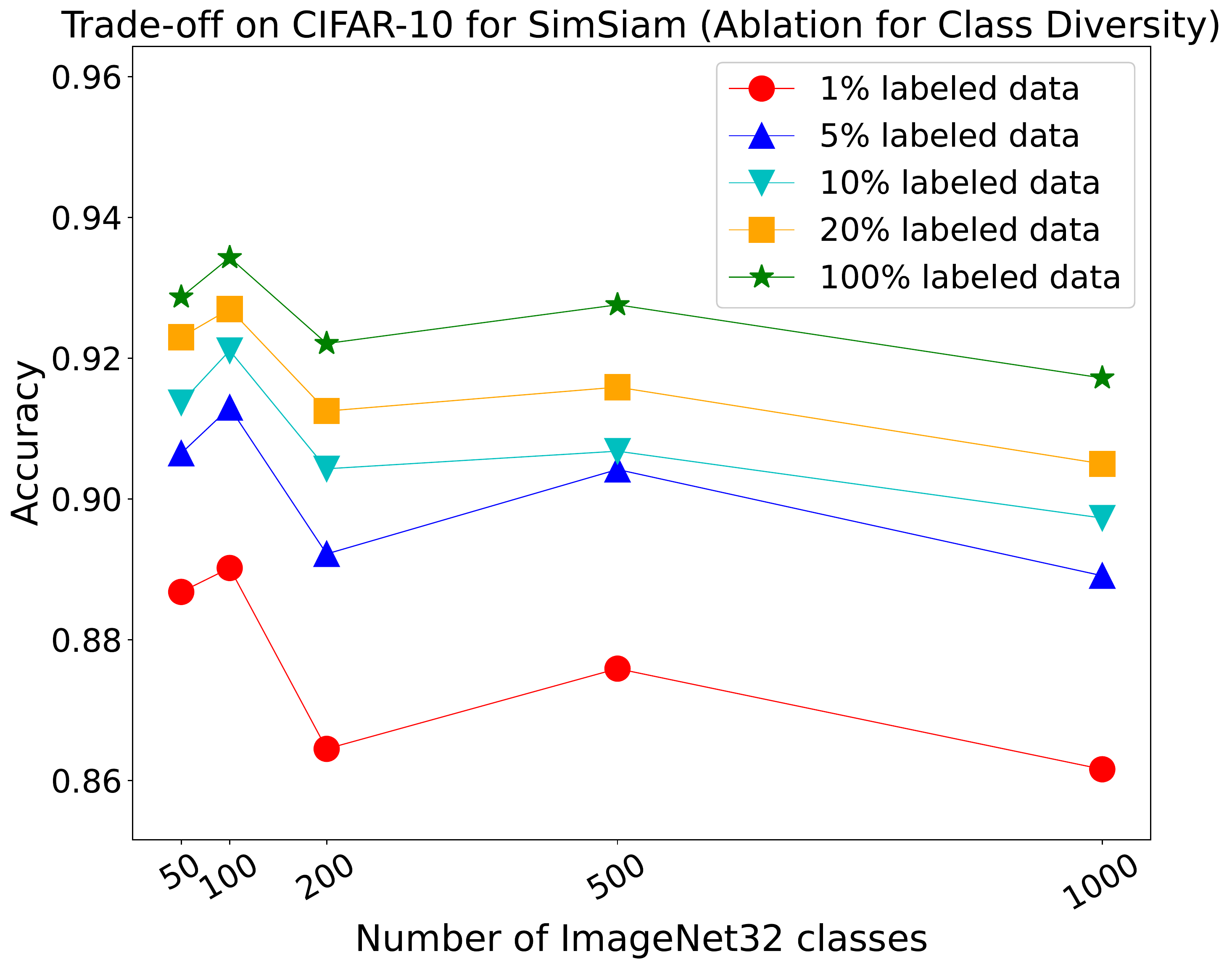}}\\
    \caption{Pre-train MoCo v2 and SimSiam on CIFAR-10 + ImageNet32(200k) with varying number of classes of ImageNet32 from 50 to 1000 ($x$-axis) under a fixed size of pre-training data. The $y$-axis is LP test accuracy on CIFAR-10.}
    \label{fig:trade_off_class}
\end{figure*}

\begin{figure}[ht!]
    \centering
    \newcommand{\imagewidth}{0.33\linewidth}
    \subfloat[MoCo v2; 100\% CINIC-10]{\includegraphics[width=\imagewidth]{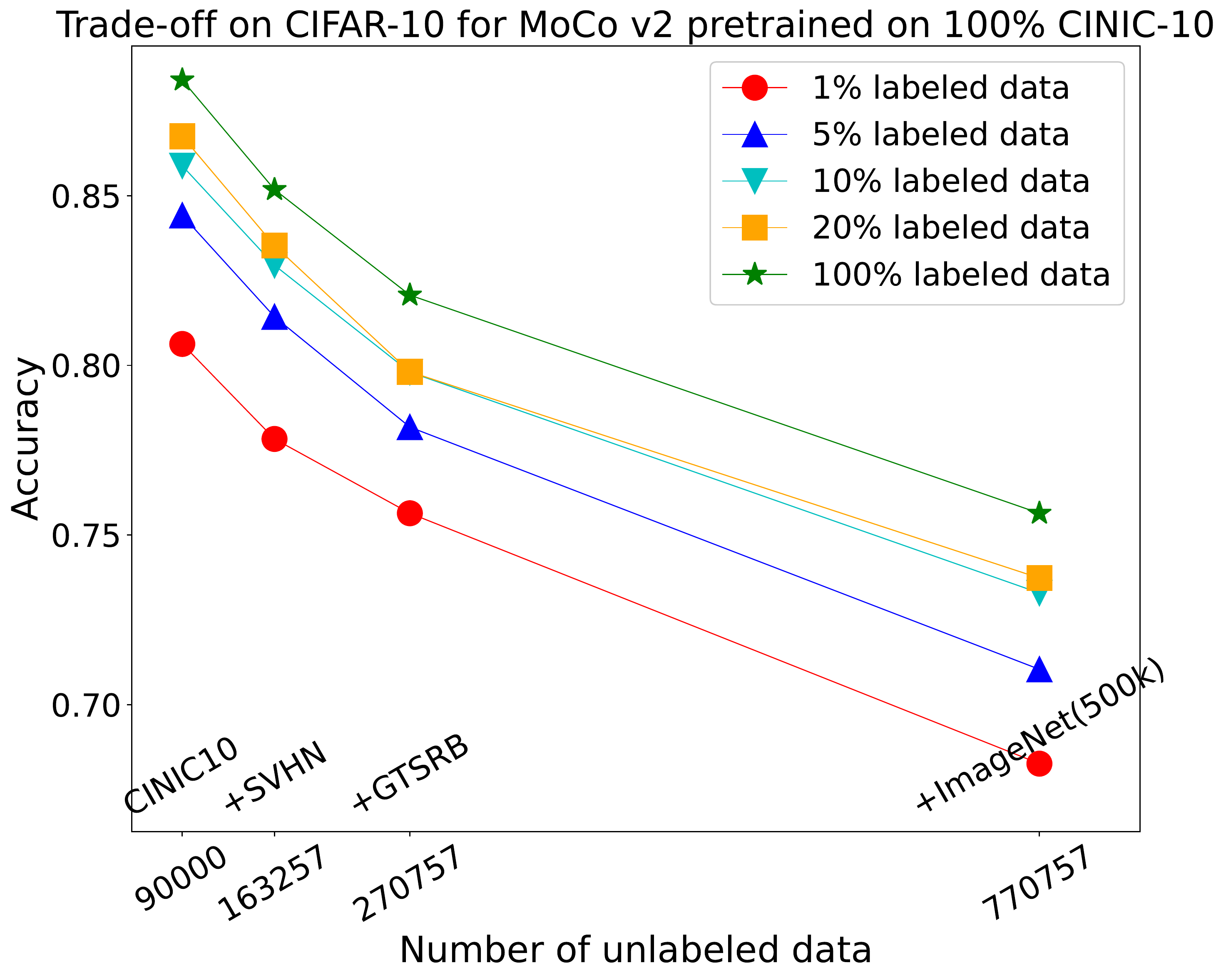}}
    \subfloat[MoCo v2; 50\% CINIC-10]{\includegraphics[width=\imagewidth]{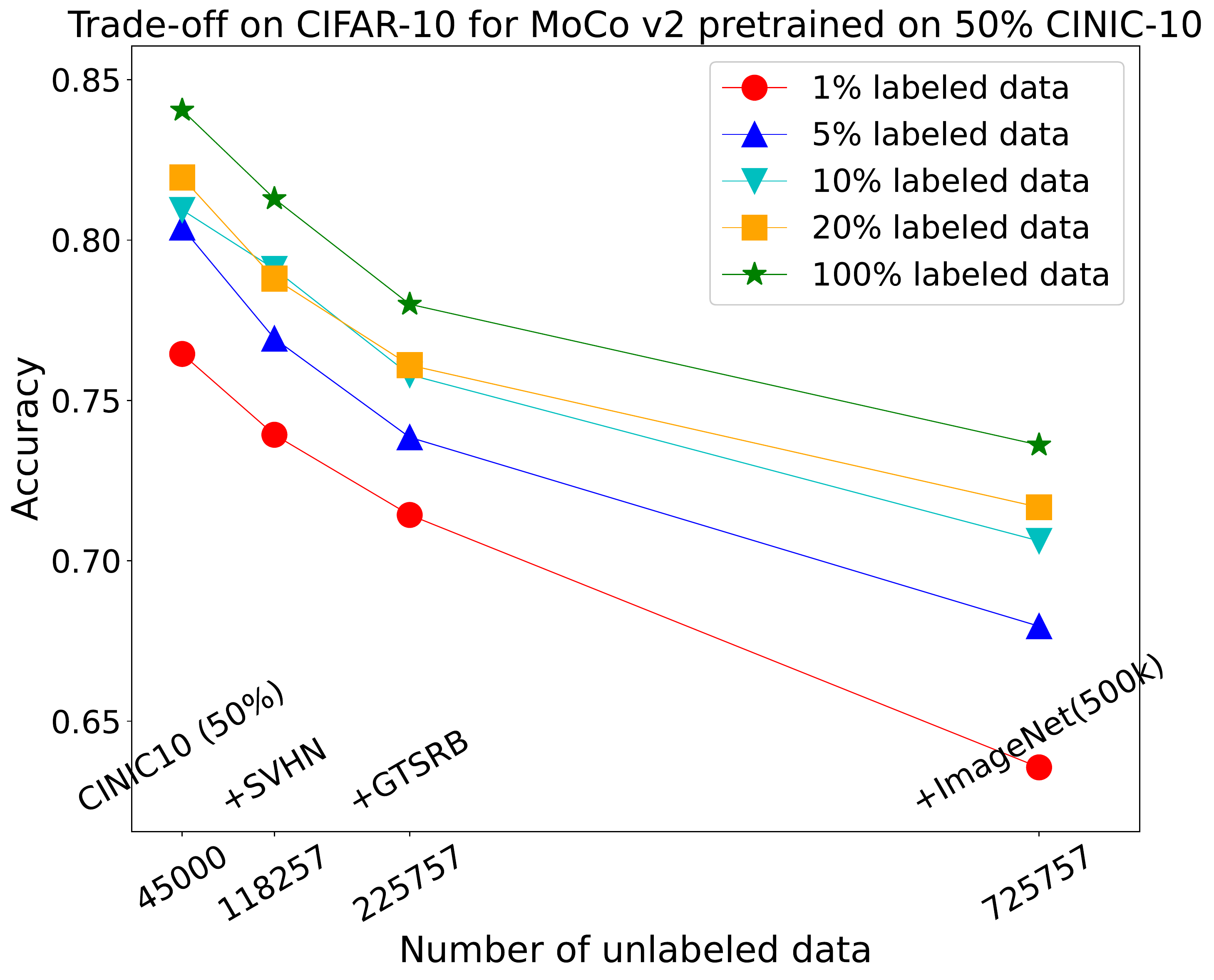}}
    \subfloat[MoCo v2; 20\% CINIC-10]{\includegraphics[width=\imagewidth]{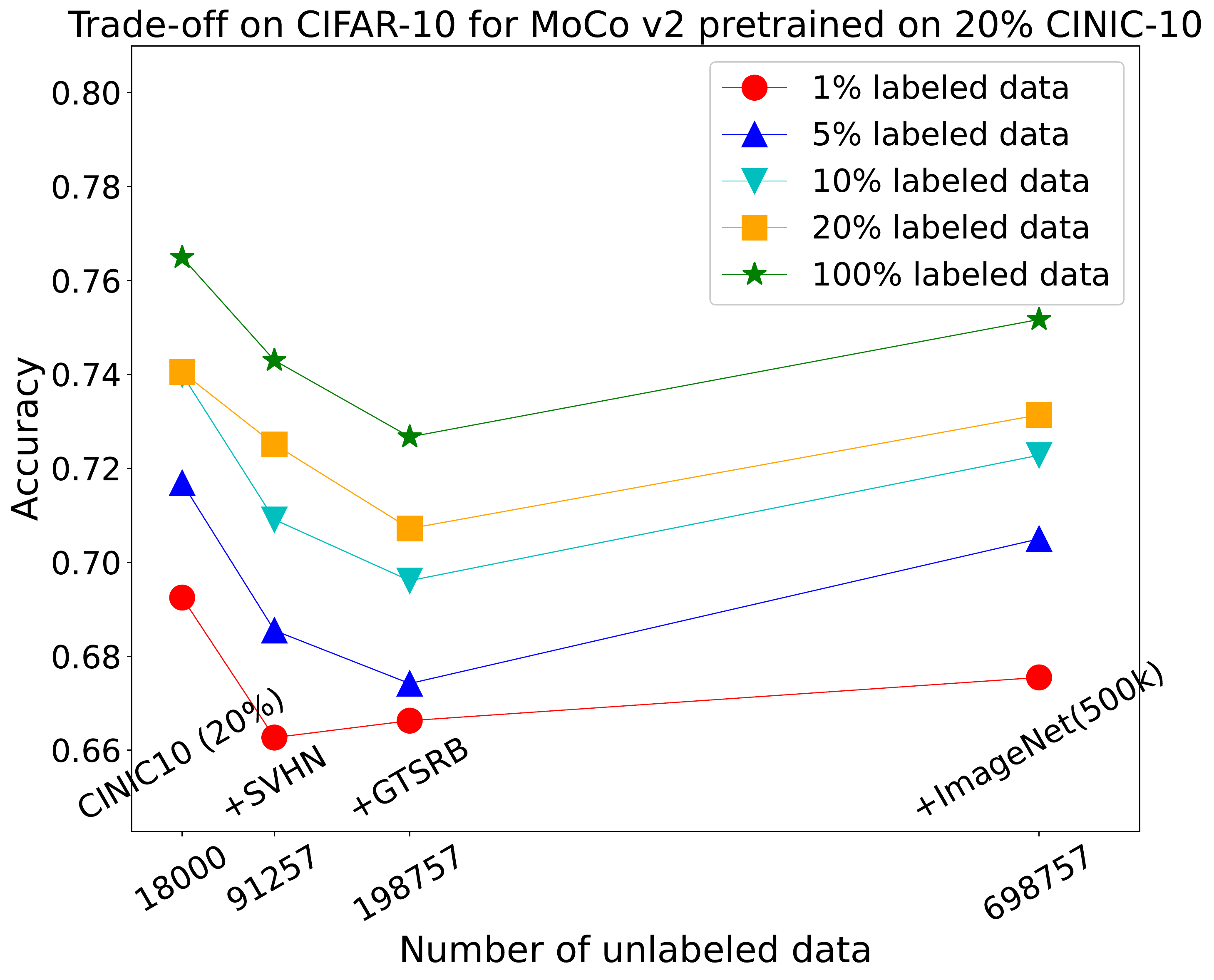}}\\
    \subfloat[SimSiam; 100\% CINIC-10]{\includegraphics[width=\imagewidth]{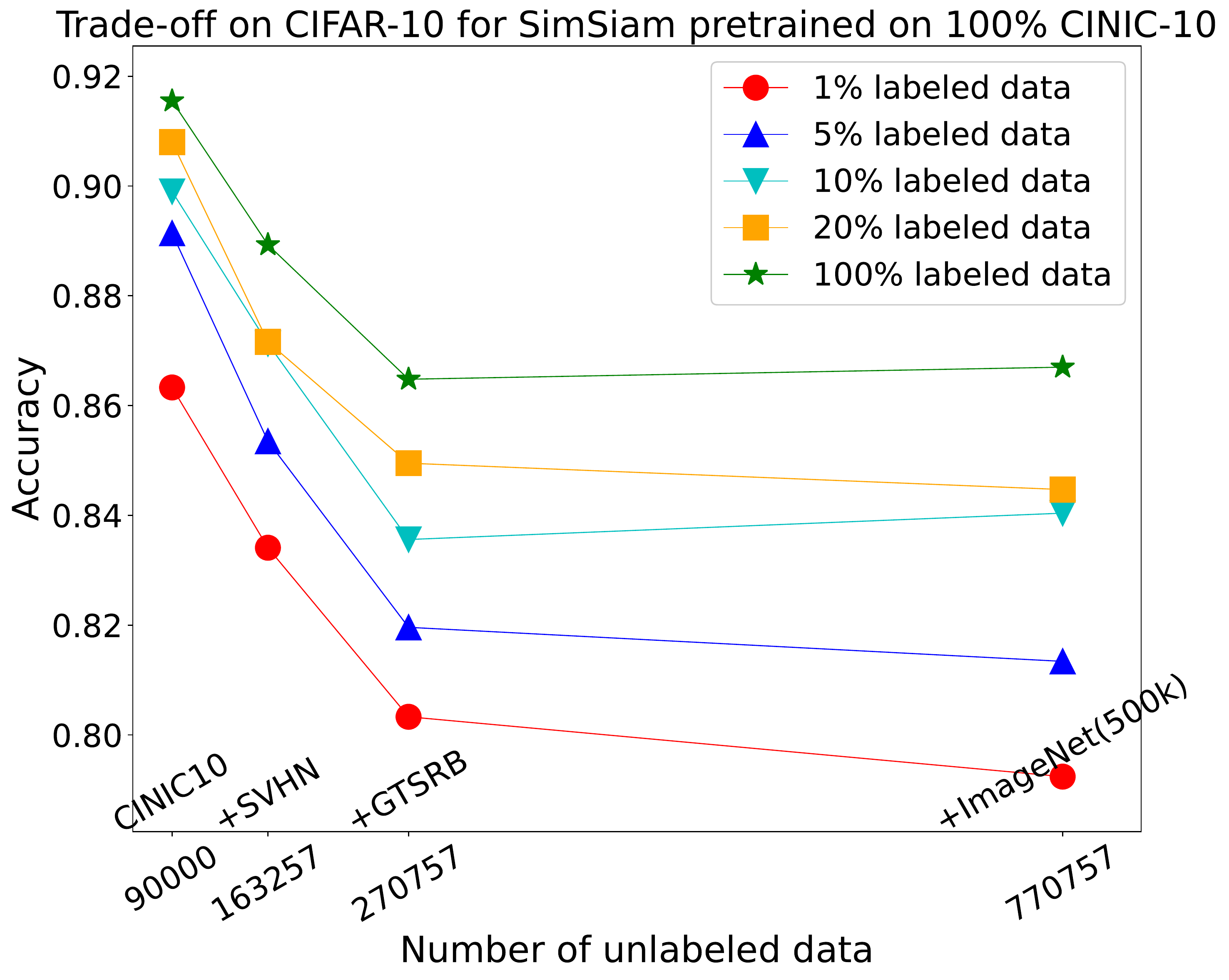}}
    \subfloat[SimSiam; 50\% CINIC-10]{\includegraphics[width=\imagewidth]{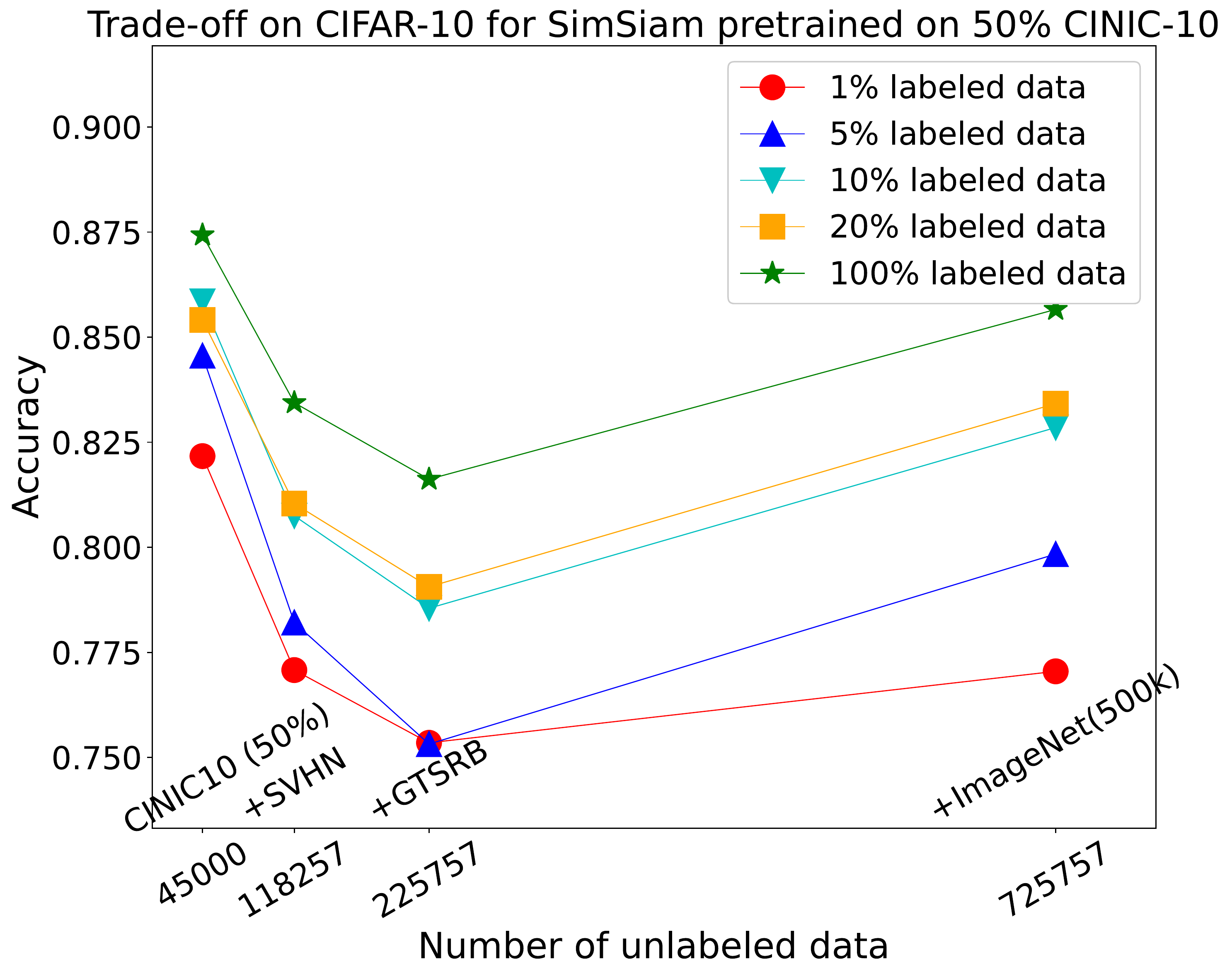}}
    \subfloat[SimSiam; 20\% CINIC-10]{\includegraphics[width=\imagewidth]{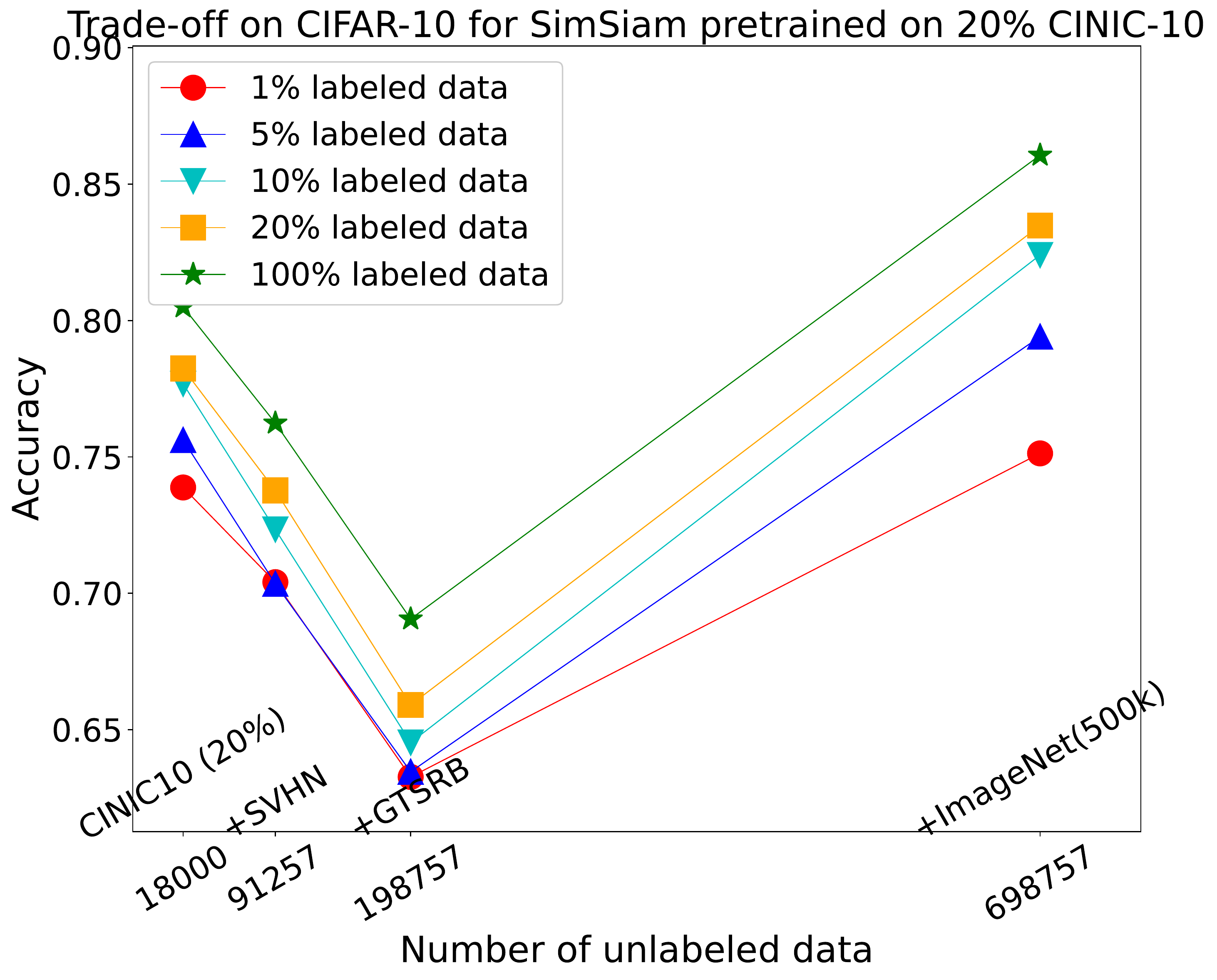}}
    \caption{Trade-off on CIFAR-10 LP test accuracy ($y$-axis) for MoCo v2 and SimSiam with varying target relevant (CINIC-10) pre-training data percentage (100\%, 50\%, 20\%). }
    \label{fig:percent}
\end{figure}
\begin{figure}[ht!]
    \centering
    \newcommand{\imagewidth}{0.37\linewidth}
    {\includegraphics[width=\imagewidth]{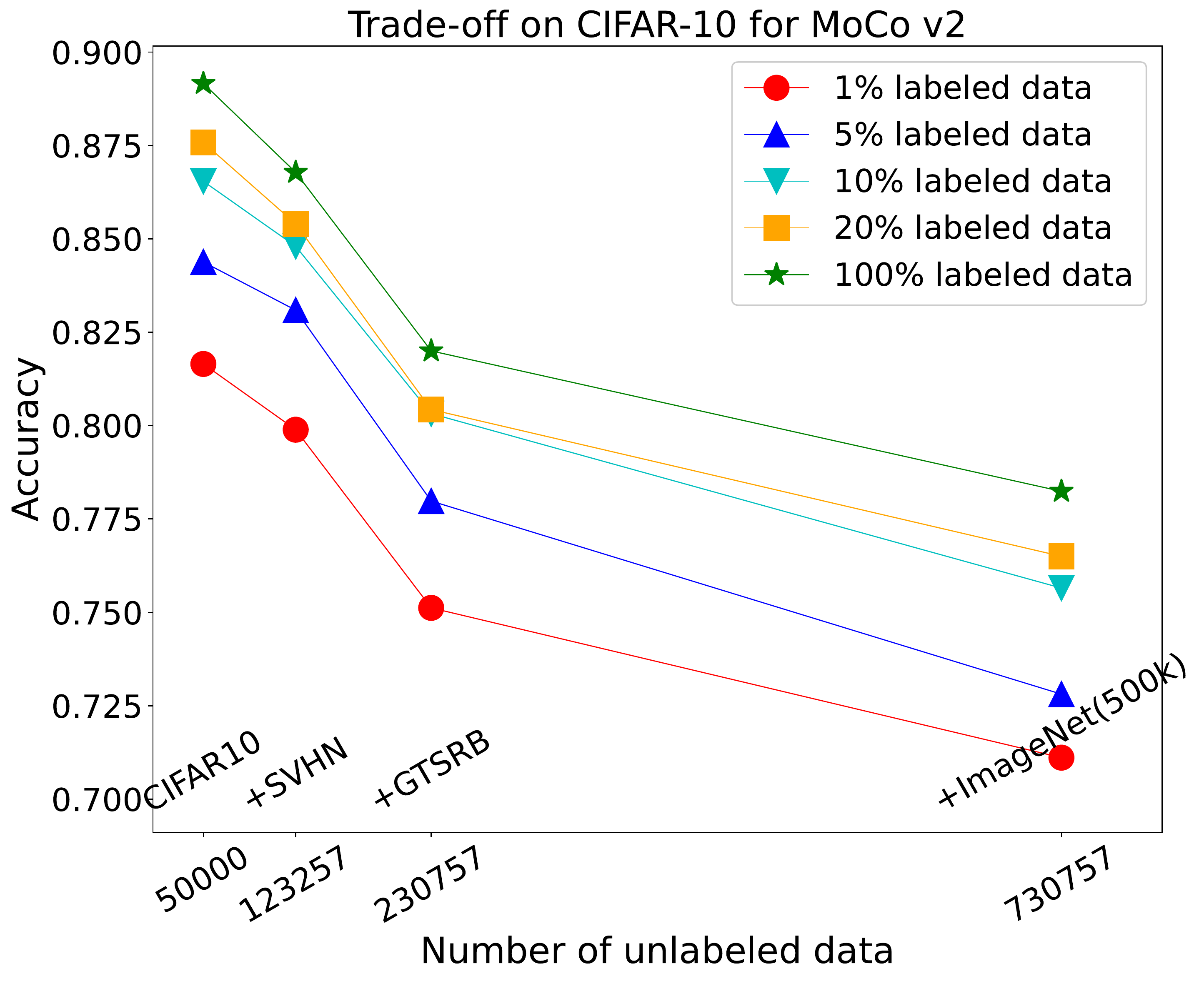}}
    {\includegraphics[width=\imagewidth]{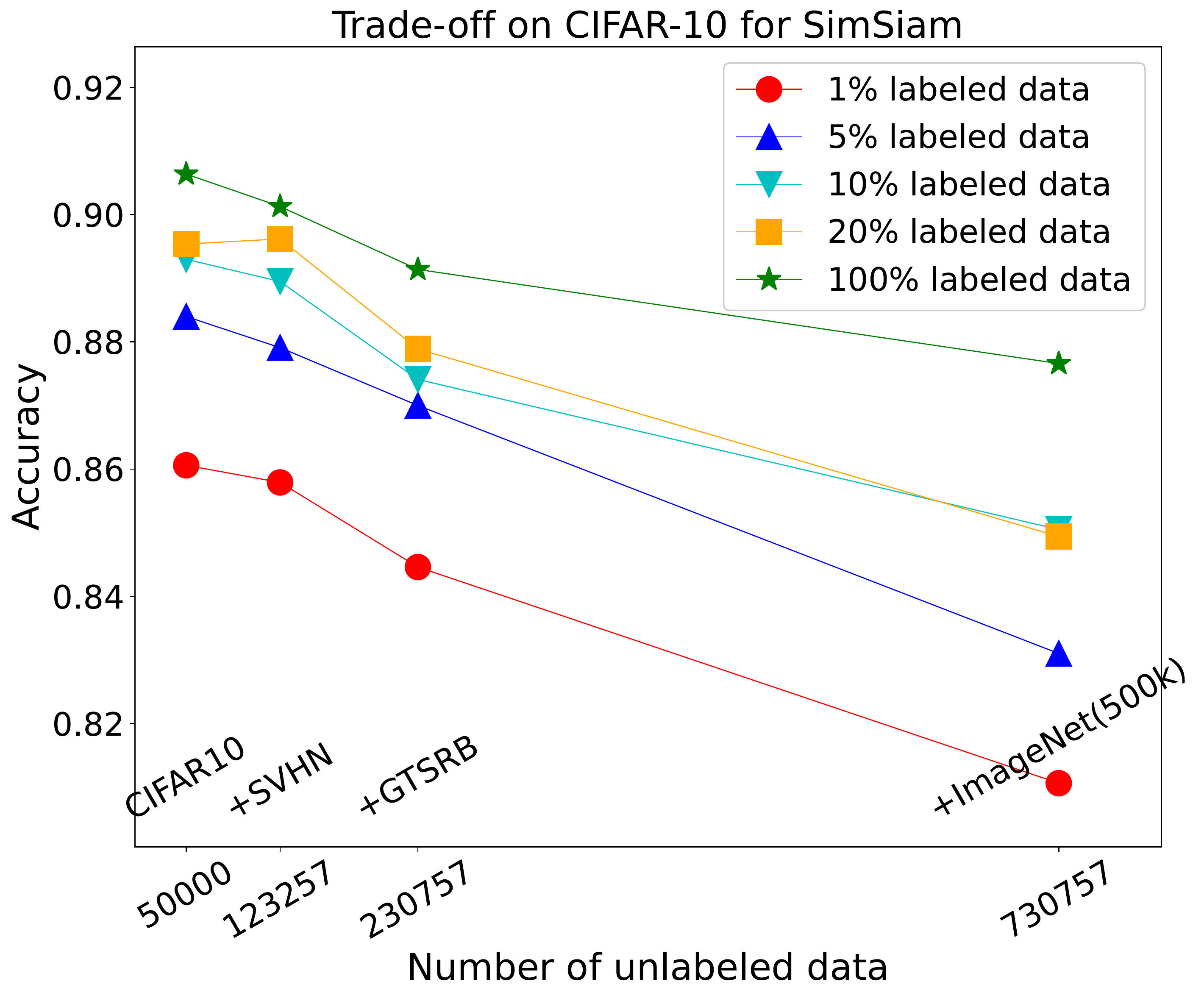}}\\
    \caption{Trade-off on CIFAR-10 LP test accuracy ($y$-axis) for MoCo v2 and SimSiam pre-trianed on datasets including CIFAR-10.}
    \label{fig:trade_off_cifar}
\end{figure}

\mypara{Varying the Class Number of ImageNet32.} 
To further support \textbf{A1}, we show that the trade-off between universality and label efficiency also exists under a fixed dataset size. In Fig.~\ref{fig:trade_off_class}, we pre-train MoCo v2 and SimSiam on CIFAR-10 + ImageNet32(200k) and keep the same setting as Fig.~\ref{fig:trade_off} except that we vary the class number of ImageNet32(200k). In previous experiments, we randomly pick 500,000 images from ImageNet32 without considering labels. Here, we fix the number of classes to 50, 100, 200, 500, 1000. Then we randomly sample 200,000 images from the subset of classes. The downstream task is CIFAR-10. In Fig.~\ref{fig:trade_off_class}, we observe that with a fixed pre-training datasets size, e.g., 250,000, when the data is more diverse, the pre-training will learn more irrelevant features, and the performance will drop on the downstream task. This supports our analysis as well.

\mypara{Varying Target-Relevant pre-training Data Percentage.} In Fig.~\ref{fig:percent}, we use (a)(d) $100\%$ (b)(e) $50\%$ (c)(f) $20\%$ CINIC-10 to train MoCo v2 and SimSiam, and keep the same setting as Fig.~\ref{fig:trade_off}. For Fig.~\ref{fig:percent} (b) with $50\%$ CINIC-10, test accuracy drops, e.g., the test accuracy of $1\%$ CIFAR-10 in Fig.~\ref{fig:percent} (a) $80.63\%$ vs. (b) $76.45\%$. We can also see the decreasing curve in Fig.~\ref{fig:percent} (b). On the other hand, we also have test accuracy drops in Fig.~\ref{fig:percent} (c)(e)(f). However, we can see a U-curve rather than a strictly decreasing curve in Fig.~\ref{fig:percent} (c)(e)(f). ImageNet32 is more relevant with CIFAR-10 than SVHN and GTSRB, consistent with human intuition. When we have a small partition of CINIC-10 that does not cover all target-relevant features, the feature extractor can learn these missing features from ImageNet32. Although there are many irrelevant features in ImageNet32, the positive effect is larger than the negative effect, and so it plots a U-curve. It is consistent with our statement that we need a large and target-relevant pre-training dataset rather than a diverse irrelevant one.

\mypara{Replacing CINIC-10 With CIFAR-10.} In Fig.~\ref{fig:trade_off_cifar}, we keep the same setting as Fig.~\ref{fig:trade_off} except we replace CINIC-10 with CIFAR-10. Note that our downstream task is still CIFAR-10. In Fig.~\ref{fig:trade_off_cifar}, we can see the same phenomena and similar performance  as Fig.~\ref{fig:trade_off}. Thus, if we have a good choice of a task-relevant pre-training dataset, we can get a similar performance as pre-training on the downstream task domain directly. 
 
The pre-training unlabeled data from diverse tasks may have a positive effect when the pre-training unlabeled data from similar (relevant) tasks is not sufficiently large. 
Moreover, if we choose a good task-relevant pre-training dataset, we can directly get a similar performance as pre-training on the downstream task. However, when the task-relevant dataset is sufficient, the performance will drop if we introduce task-irrelevant data in the pre-training dataset  (Fig.~\ref{fig:percent} (a)(d)).

\subsection{Improving the Trade-off: Finetune with Contrastive Regularization}\label{app:method}

\begin{table}[ht]
\renewcommand{\arraystretch}{1.2}
\fontsize{7.2pt}{8.2pt}\selectfont
\centering
\begin{tabular}{c|c|c c c c c}
\hline
\rowcolor{lightgray} \textbf{Pretrain} data & Method & \multicolumn{1}{c}{1\% label data} & \multicolumn{1}{c}{5\% label data} &\multicolumn{1}{c}{10\% label data} &\multicolumn{1}{c}{20\% label data} &\multicolumn{1}{c}{100\% label data} \\\hline 

\multirow{3}{*}{CINIC10} & LP & 80.63$\pm${\tiny 0.01}  & \textbf{84.42$\pm${\tiny 0.01}} & 85.88$\pm${\tiny 0.05}  & 86.75$\pm${\tiny 0.01}  & 88.41$\pm${\tiny 0.01} \\
&FT & 68.74$\pm${\tiny 0.46}  & 81.46$\pm${\tiny 0.34}  & 85.20$\pm${\tiny 0.14}  & 88.69$\pm${\tiny 0.29}  & 93.58$\pm${\tiny 0.14} \\
&Ours & \textbf{83.66$\pm${\tiny 0.43}} & 83.04$\pm${\tiny 0.08}  & \textbf{86.18$\pm${\tiny 0.24}} & \textbf{89.61$\pm${\tiny 0.12}} & \textbf{94.51$\pm${\tiny 0.02}} \\ 
\hline  

\multirow{3}{*}{+SVHN} 
& LP & 77.83$\pm${\tiny 0.01}  & 81.43$\pm${\tiny 0.04}  & 82.95$\pm${\tiny 0.01}  & 83.53$\pm${\tiny 0.01}  & 85.18$\pm${\tiny 0.01} \\
&FT & 66.01$\pm${\tiny 0.52}  & 80.20$\pm${\tiny 0.07}  & 83.96$\pm${\tiny 0.46}  & 88.01$\pm${\tiny 0.07}  & 93.35$\pm${\tiny 0.10} \\
&Ours & \textbf{79.95$\pm${\tiny 0.36}} & \textbf{81.77$\pm${\tiny 0.11}} & \textbf{84.98$\pm${\tiny 0.02}} & \textbf{88.97$\pm${\tiny 0.14}} & \textbf{94.26$\pm${\tiny 0.01}} \\
\hline  

\multirow{3}{*}{+GTSRB} 
& LP & 75.64$\pm${\tiny 0.01}  & 78.18$\pm${\tiny 0.01}  & 79.80$\pm${\tiny 0.02}  & 79.81$\pm${\tiny 0.01}  & 82.07$\pm${\tiny 0.01} \\
&FT & 62.79$\pm${\tiny 0.57}  & 78.90$\pm${\tiny 0.41}  & 83.89$\pm${\tiny 0.03}  & 87.85$\pm${\tiny 0.03}  & 93.42$\pm${\tiny 0.13} \\
&Ours & \textbf{76.20$\pm${\tiny 0.36}} & \textbf{80.26$\pm${\tiny 0.05}} & \textbf{84.11$\pm${\tiny 0.06}} & \textbf{88.74$\pm${\tiny 0.01}} & \textbf{94.32$\pm${\tiny 0.13}} \\
\hline  

\multirow{3}{*}{+ImageNet32} 
& LP & 68.26$\pm${\tiny 0.01}  & 71.04$\pm${\tiny 0.01}  & 73.29$\pm${\tiny 0.02}  & 73.73$\pm${\tiny 0.02}  & 75.64$\pm${\tiny 0.03} \\
&FT & 65.40$\pm${\tiny 0.16}  & \textbf{78.99$\pm${\tiny 0.21}} & 82.96$\pm${\tiny 0.17}  & 86.75$\pm${\tiny 0.10}  & 92.92$\pm${\tiny 0.06} \\
&Ours & \textbf{71.70$\pm${\tiny 1.20}} & 78.94$\pm${\tiny 0.06}  & \textbf{83.48$\pm${\tiny 0.02}} & \textbf{87.77$\pm${\tiny 0.13}} & \textbf{93.66$\pm${\tiny 0.12}} \\ 
\hline  

\end{tabular}
\caption{
Test accuracy on CIFAR-10 with different evaluation methods on MoCo v2 under different percentages of labeled data.  From top to bottom: incrementally add datasets for pre-training. }
\label{tab:moco_v2}
\end{table}

\begin{figure}[t]
    \centering
    \newcommand{\imagewidth}{0.27\linewidth}
    \subfloat[Linear Probing]{\includegraphics[width=\imagewidth]{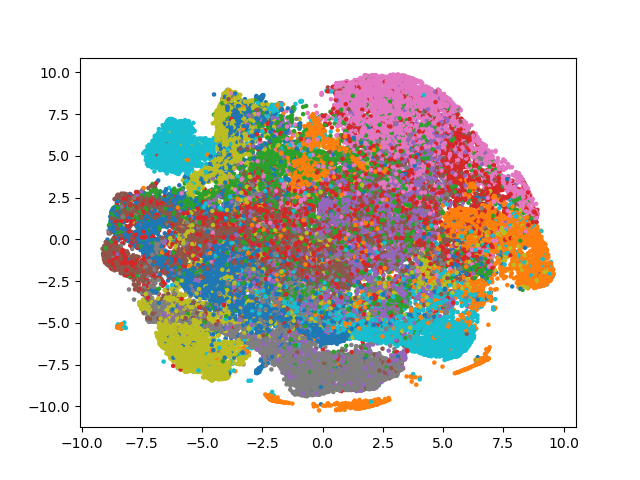}}
    \quad
    \subfloat[Finetune]{\includegraphics[width=\imagewidth]{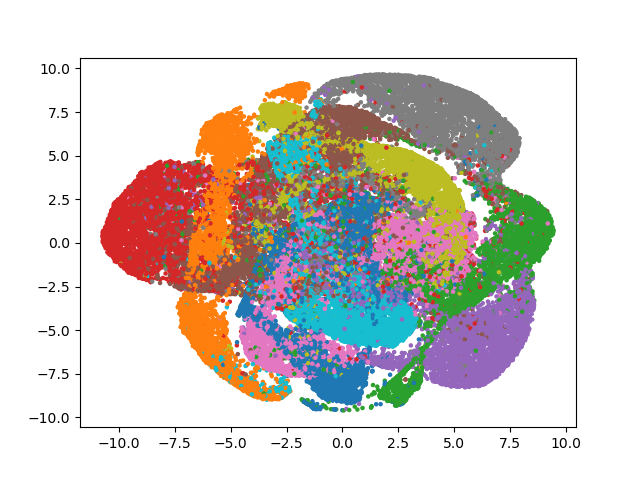}}
    \quad
    \subfloat[Ours]{\includegraphics[width=\imagewidth]{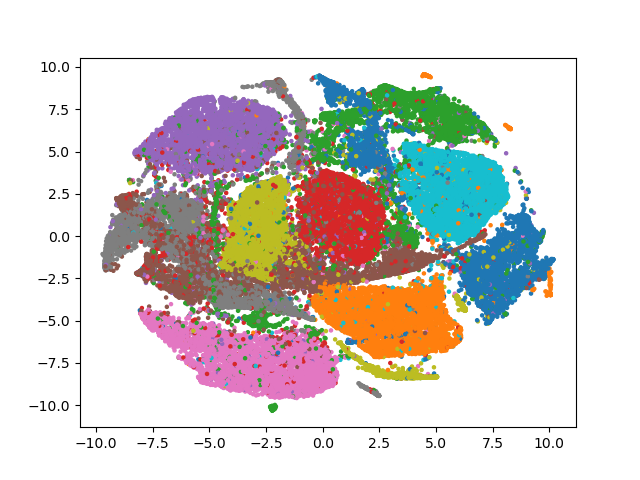}}
    \caption{ The t-SNE visualization~\citep{van2008visualizing} for CIFAR-10 training data normalized features from different evaluation methods, where the model is pre-trained on (CSGI) defined in Fig.~\ref{fig:trade_off_main}. FT and Ours are trained on the 20\% CIFAR-10 training dataset. Different colors correspond to different classes.}
    \label{fig:cluster}
\end{figure}

\mypara{Evaluation \& Methods.}
We use the feature extractor (ResNet18) pre-trained as in Section~\ref{sec:exist} by MoCo v2. Then, we report three evaluation methods, Linear Probing (LP), Finetune (FT), and Finetune with Contrastive Regularization (Ours) on CIFAR-10 under different percentages of labeled data as in Appendix~\ref{app:trade}. LP follows the training protocol in Section~\ref{sec:exist}. FT and Ours learn a linear predictor and update the representation, and use the same data augmentation for a fair comparison. FT follows the training in MAE~\citep{he2022masked} mostly, using SGD for 200 epochs with a cosine learning-rate scheduler, the base learning rate of $0.06$, weight decay 5e-4, momentum $0.9$ and batch size $256$. 
Moreover, Ours uses the contrastive loss from MoCo v2 and regularization coefficient $\lambda = 0.1$.

\mypara{Results.} In Table~\ref{tab:moco_v2}, the trade-off phenomenon also exists for FT evaluation, where the FT test accuracy drops when the pre-training dataset contains more diverse data points. Table~\ref{tab:moco_v2} shows that Ours can achieve better performance than the other baselines. In particular, it outperforms the typical fine-tuning method consistently, even when the latter also uses the same amount of data augmentation. This confirms the benefit of contrastive regularization. Fig.~\ref{fig:cluster} 
visualizes the features of different methods by t-SNE. 
It shows that contrastive regularization can down-weight the downstream task-irrelevant invariant feature, so it can improve the model generalization ability, which is consistent with the discussion in Section~\ref{sec:analysis-simple}. 

\begin{table}[ht]
\renewcommand{\arraystretch}{1.2}
\fontsize{8pt}{9pt}\selectfont
\centering
\begin{tabular}{c|c|c|c c c c c}
\hline
\rowcolor{lightgray} \textbf{Target} data & Method & Aug & \multicolumn{1}{c}{1\% label data} & \multicolumn{1}{c}{5\% label data} &\multicolumn{1}{c}{10\% label data} &\multicolumn{1}{c}{20\% label data} &\multicolumn{1}{c}{100\% label data} \\\hline 

\multirow{6}{*}{ImageNet} 
& LP & 1 & 66.04$\pm${\tiny 0.26}  & 76.28$\pm${\tiny 0.05}  & 78.06$\pm${\tiny 0.02}  & 78.73$\pm${\tiny 0.03}  & 77.84$\pm${\tiny 0.02} \\
&FT & 1 & 71.09$\pm${\tiny 0.05}  & 75.32$\pm${\tiny 0.11}  & 76.38$\pm${\tiny 0.01}  & 76.92$\pm${\tiny 0.10}  & 82.97$\pm${\tiny 0.02} \\
&FT & 10 & 73.28$\pm${\tiny 0.01}  & 76.43$\pm${\tiny 0.14}  & 78.69$\pm${\tiny 0.04}  & 81.56$\pm${\tiny 0.10}  & 83.65$\pm${\tiny 0.01} \\
&FT & 100 & 71.83$\pm${\tiny 0.05}  & 77.55$\pm${\tiny 0.07}  & 80.16$\pm${\tiny 0.09}  & 82.23$\pm${\tiny 0.01}  & 83.58$\pm${\tiny 0.05} \\
&Ours & 10 & \textbf{74.36$\pm${\tiny 0.09}} & \textbf{78.60$\pm${\tiny 0.03}} & 80.02$\pm${\tiny 0.08}  & 81.28$\pm${\tiny 0.07}  & 84.94$\pm${\tiny 0.09} \\
&Ours & 100 & 73.03$\pm${\tiny 0.04} & 78.41$\pm${\tiny 0.04} & \textbf{80.48$\pm${\tiny 0.05}} & \textbf{82.42$\pm${\tiny 0.03}} & \textbf{85.34$\pm${\tiny 0.07}} \\
\hline  

\multirow{6}{*}{SVHN} 
& LP & 1 & 59.24$\pm${\tiny 0.02}  & 57.25$\pm${\tiny 0.08}  & 56.32$\pm${\tiny 0.17}  & 58.35$\pm${\tiny 0.08}  & 63.44$\pm${\tiny 0.01} \\
&FT & 1 & 55.20$\pm${\tiny 0.16}  & 61.73$\pm${\tiny 0.32}  & 64.91$\pm${\tiny 0.12}  & 64.70$\pm${\tiny 0.64}  & 65.76$\pm${\tiny 0.10} \\
&FT & 10 & 61.24$\pm${\tiny 0.08}  & 65.89$\pm${\tiny 0.54}  & 68.61$\pm${\tiny 0.52}  & 70.89$\pm${\tiny 0.07}  & 78.22$\pm${\tiny 0.18} \\
&FT & 100 & 65.23$\pm${\tiny 0.03}  & 72.34$\pm${\tiny 0.07}  & 75.13$\pm${\tiny 0.11}  & 77.20$\pm${\tiny 0.03}  & 80.13$\pm${\tiny 0.14} \\
&Ours & 10 & 62.87$\pm${\tiny 0.42}  & 67.56$\pm${\tiny 0.23}  & 70.16$\pm${\tiny 0.02}  & 72.23$\pm${\tiny 0.09}  & 78.72$\pm${\tiny 0.37} \\
&Ours & 100 & \textbf{65.30$\pm${\tiny 0.43}} & \textbf{72.61$\pm${\tiny 0.07}} & \textbf{75.34$\pm${\tiny 0.12}} & \textbf{77.64$\pm${\tiny 0.05}} & \textbf{82.50$\pm${\tiny 0.01}} \\ 
\hline  

\multirow{6}{*}{GTSRB} 
& LP & 1 & 71.20$\pm${\tiny 0.61}  & 83.97$\pm${\tiny 0.08}  & 85.31$\pm${\tiny 0.02}  & 86.34$\pm${\tiny 0.06}  & 86.56$\pm${\tiny 0.01} \\
&FT & 1 & 77.18$\pm${\tiny 0.40}  & 87.65$\pm${\tiny 0.18}  & 88.56$\pm${\tiny 0.75}  & 88.76$\pm${\tiny 0.01}  & 89.83$\pm${\tiny 0.39} \\
&FT & 10 & 84.55$\pm${\tiny 0.30}  & 88.74$\pm${\tiny 0.32}  & 89.11$\pm${\tiny 0.11}  & 89.52$\pm${\tiny 0.12}  & 90.74$\pm${\tiny 0.06} \\
&FT & 100 & 86.28$\pm${\tiny 0.26}  & 90.50$\pm${\tiny 0.29}  & 90.83$\pm${\tiny 0.08}  & 91.15$\pm${\tiny 0.01}  & 91.89$\pm${\tiny 0.12} \\
&Ours & 10 & 84.84$\pm${\tiny 0.33}  & 89.53$\pm${\tiny 0.13}  & 89.44$\pm${\tiny 0.01}  & 91.35$\pm${\tiny 0.12}  & 92.01$\pm${\tiny 0.28} \\
&Ours & 100 & \textbf{87.24$\pm${\tiny 0.51}} & \textbf{90.86$\pm${\tiny 0.05}} & \textbf{91.29$\pm${\tiny 0.26}} & \textbf{92.11$\pm${\tiny 0.15}} & \textbf{92.65$\pm${\tiny 0.10}} \\
\hline  

\end{tabular}
\caption{Test accuracy for different evaluation methods on different datasets using foundation model CLIP (backbone ViT-L). We do not use data augmentation for LP. We evaluate FT without data augmentation, with 10 augmentation and with 100 augmentation to each training images. For Ours, we use 10, 100 augmentation.}
\label{tab:clip}
\end{table}
 \begin{table}[ht]
\renewcommand{\arraystretch}{1.2}
\fontsize{8pt}{9pt}\selectfont
\centering
\begin{tabular}{c|c|c|c c c c c}
\hline
\rowcolor{lightgray} \textbf{Target} data & Method & Aug & \multicolumn{1}{c}{1\% label data} & \multicolumn{1}{c}{5\% label data} &\multicolumn{1}{c}{10\% label data} &\multicolumn{1}{c}{20\% label data} &\multicolumn{1}{c}{100\% label data} \\\hline 

\multirow{6}{*}{CIFAR-10} 
& LP & 1 & \textbf{91.44$\pm${\tiny 0.01}} & 93.51$\pm${\tiny 0.01}  & 94.11$\pm${\tiny 0.01}  & 94.22$\pm${\tiny 0.01}  & 95.82$\pm${\tiny 0.01} \\
&FT & 1 & 89.93$\pm${\tiny 0.38}  & 94.21$\pm${\tiny 0.35}  & 95.15$\pm${\tiny 0.06}  & 95.80$\pm${\tiny 0.06}  & 96.12$\pm${\tiny 0.11} \\
&FT & 10 & 90.82$\pm${\tiny 0.19}  & 93.24$\pm${\tiny 0.08}  & 94.59$\pm${\tiny 0.08}  & 95.48$\pm${\tiny 0.01}  & 96.17$\pm${\tiny 0.12} \\
&FT & 100 & 90.84$\pm${\tiny 0.03}  & 93.05$\pm${\tiny 0.02}  & 94.23$\pm${\tiny 0.08}  & 95.37$\pm${\tiny 0.01}  & \textbf{97.01$\pm${\tiny 0.07}} \\
&Ours & 10 & 90.73$\pm${\tiny 0.18}  & \textbf{94.28$\pm${\tiny 0.03}} & \textbf{95.43$\pm${\tiny 0.15}} & \textbf{95.83$\pm${\tiny 0.14}} & 96.71$\pm${\tiny 0.10} \\
&Ours & 100 & 91.04$\pm${\tiny 0.06}  & 94.00$\pm${\tiny 0.06}  & 95.29$\pm${\tiny 0.01}  & 95.72$\pm${\tiny 0.05}  & 96.86$\pm${\tiny 0.06} \\
\hline  

\multirow{6}{*}{SVHN} 
& LP & 1 & 39.40$\pm${\tiny 0.02}  & 50.50$\pm${\tiny 0.02}  & 54.61$\pm${\tiny 0.02}  & 57.66$\pm${\tiny 0.01}  & 61.92$\pm${\tiny 0.01} \\
&FT & 1 & 38.82$\pm${\tiny 0.10}  & 48.15$\pm${\tiny 0.37}  & 51.03$\pm${\tiny 0.14}  & 52.07$\pm${\tiny 0.09}  & 56.19$\pm${\tiny 0.33} \\
&FT & 10 & 45.00$\pm${\tiny 0.34}  & 52.61$\pm${\tiny 0.24}  & 54.45$\pm${\tiny 0.12}  & 57.05$\pm${\tiny 0.11}  & 65.36$\pm${\tiny 0.33} \\
&FT & 100 & 45.70$\pm${\tiny 0.13}  & 54.51$\pm${\tiny 0.23}  & 57.76$\pm${\tiny 0.01}  & 61.13$\pm${\tiny 0.39}  & 69.10$\pm${\tiny 0.39} \\
&Ours & 10 & \textbf{46.53$\pm${\tiny 0.23}} & 55.59$\pm${\tiny 0.03}  & 57.32$\pm${\tiny 0.05}  & 59.20$\pm${\tiny 0.09}  & 66.29$\pm${\tiny 0.20} \\ 
&Ours & 100 & 46.27$\pm${\tiny 0.09} & \textbf{55.70$\pm${\tiny 0.07}} & \textbf{59.45$\pm${\tiny 0.04}} & \textbf{61.97$\pm${\tiny 0.13}} & \textbf{70.22$\pm${\tiny 0.01}} \\
\hline  

\multirow{6}{*}{GTSRB} 
& LP & 1 & 48.52$\pm${\tiny 0.03}  & 66.68$\pm${\tiny 0.02}  & 71.69$\pm${\tiny 0.01}  & 72.27$\pm${\tiny 0.02}  & 75.37$\pm${\tiny 0.01} \\
&FT & 1 & 52.54$\pm${\tiny 0.67}  & 72.19$\pm${\tiny 0.33}  & 73.75$\pm${\tiny 0.71}  & 73.30$\pm${\tiny 0.46}  & 75.16$\pm${\tiny 0.01} \\
&FT & 10 & 58.68$\pm${\tiny 0.01}  & 75.21$\pm${\tiny 0.23}  & 75.87$\pm${\tiny 0.06}  & 76.75$\pm${\tiny 0.28}  & 76.45$\pm${\tiny 0.29} \\
&FT & 100& 61.65$\pm${\tiny 0.16}  & 75.22$\pm${\tiny 0.77}  & 76.49$\pm${\tiny 0.31}  & 77.67$\pm${\tiny 0.63}  & 78.05$\pm${\tiny 0.30} \\
&Ours & 10 & 59.28$\pm${\tiny 0.04}  & 75.35$\pm${\tiny 0.31}  & 77.38$\pm${\tiny 0.09}  & \textbf{80.52$\pm${\tiny 0.12}} & \textbf{81.28$\pm${\tiny 0.10}} \\
&Ours & 100 & \textbf{63.61$\pm${\tiny 0.92}} & \textbf{76.70$\pm${\tiny 0.17}} & \textbf{77.85$\pm${\tiny 0.23}} & 80.13$\pm${\tiny 0.01} & 80.92$\pm${\tiny 0.19} \\
\hline  

\end{tabular}
\caption{Test accuracy for different evaluation methods on different datasets using foundation model MoCo v3 (backbone ViT-B). We do not use data augmentation for LP. We evaluate FT without data augmentation, with 10 augmentations and with 100 augmentations to each training image. For Ours, we use 10, 100 augmentation.}
\label{tab:moco_v3}
\end{table}

\begin{table}[ht]
\renewcommand{\arraystretch}{1.2}
\fontsize{8pt}{9pt}\selectfont
\centering
\begin{tabular}{c|c|c c c c c}
\hline
\rowcolor{lightgray} \textbf{Target} data & Method & \multicolumn{1}{c}{1\% label data} & \multicolumn{1}{c}{5\% label data} &\multicolumn{1}{c}{10\% label data} &\multicolumn{1}{c}{20\% label data} &\multicolumn{1}{c}{100\% label data} \\\hline 

\multirow{3}{*}{IMDB} 
& LP & 79.72$\pm${\tiny 0.06}  & 83.08$\pm${\tiny 0.20}  & 81.48$\pm${\tiny 0.89}  & 84.87$\pm${\tiny 0.03}  & 86.49$\pm${\tiny 0.16} \\
&FT & 82.54$\pm${\tiny 2.88}  & 87.78$\pm${\tiny 0.42}  & 87.96$\pm${\tiny 1.27}  & 89.49$\pm${\tiny 1.02}  & 92.31$\pm${\tiny 0.26} \\ 
&Ours & \textbf{84.48$\pm${\tiny 1.62}} & \textbf{90.12$\pm${\tiny 0.41}} & \textbf{90.41$\pm${\tiny 0.49}} & \textbf{90.79$\pm${\tiny 1.58}} & \textbf{92.85$\pm${\tiny 0.03}} \\ 
\hline  

\multirow{3}{*}{AGNews} 
& LP & 85.52$\pm${\tiny 0.31}  & 86.78$\pm${\tiny 0.62}  & 86.75$\pm${\tiny 0.66}  & 87.62$\pm${\tiny 0.24}  & 87.76$\pm${\tiny 0.66} \\
&FT & 88.74$\pm${\tiny 0.34}  & 90.76$\pm${\tiny 0.70}  & 91.22$\pm${\tiny 0.07}  & 92.36$\pm${\tiny 0.14}  & 93.57$\pm${\tiny 0.23} \\ 
&Ours & \textbf{89.20$\pm${\tiny 0.72}} & \textbf{91.22$\pm${\tiny 0.06}} & \textbf{91.33$\pm${\tiny 1.33}} & \textbf{92.45$\pm${\tiny 0.01}} & \textbf{93.94$\pm${\tiny 0.02}} \\ 
\hline  

\end{tabular}
\caption{Test accuracy for different evaluation methods on different datasets using foundation model SimCSE (backbone BERT). We do not use data augmentation for LP. We evaluate FT and Ours with the same data augmentation as SimCSE.}
\label{tab:simcse}
\end{table}

\subsubsection{Larger Foundation Models}\label{app:foundation}
We verify our method's effectiveness in real-world scenarios. When users use foundation models, they cannot access the model and can only call the official API, (e.g. GPT-3, DALL·E~\citep{ramesh2022hierarchical}). The pricing for GPT-3 to get feature embedding is \$0.20 / 1k tokens. If users would like to use a foundation model on their downstream task, the most efficient way is to get feature embedding of their data from the API and train a small model, called adapter~\citep{hu2021lora, sung2022vl}, on these embedding rather than on raw input data.

We evaluate CLIP (with ViT-L as the representation backbone), MoCo v3 (ViT-B backbone), and SimCSE~\citep{gao2021simcse} (BERT backbone). They are trained on (image, text), (image, image), and (text, text) contrastive pairs, respectively, so cover a good spectrum of methods. 

For CLIP and MoCo v3, we fix the backbone for all evaluation methods. For LP, we add a linear FC layer on top of the backbone. For FT and Ours, we insert a two-layer ReLU neural network (1024 dimensions for the hidden layer) as an adapter between the backbone and the linear classification layer. For Ours, we apply SimCLR contrastive loss on the adapted feature (output of adapter) and set $\lambda = 1.0$. We use the same training strategy as Section~\ref{sec:exist} for LP and as Table~\ref{tab:moco_v2} for FT and Ours. In line with the actual situation, we control the number of augmentation number used in FT and Ours (more data augmentation leads to higher prices in practice).

We conduct experiments on NLP tasks as well. SimCSE
proposes a contrastive learning method for sentence embeddings, which uses the dropout feature from BERT as data augmentation. The max sequence embedding length is 512. 
For SimCSE, all three evaluation methods use a linear classifier. For all evaluation methods, we use AdamW~\citep{loshchilov2018decoupled} with  weight decay $=0.01$ and train 3 epochs with batch size $16$. For LP, we fix the backbone and set the learning rate as 5e-3. For FT and Ours, we train the backbone and linear classification layer simultaneously using unique data augmentation in each training step and set the learning rate as 5e-5. For Ours, we apply SimCSE contrastive loss on the  feature (output of the backbone) and set $\lambda = 1.0$.  

As in Appendix~\ref{app:trade}, we report LP, FT, and Ours on $1\%, 5\%, 10\%, 20\%, 100\%$ of the labeled data from the downstream task in Table~\ref{tab:clip}, Table~\ref{tab:moco_v3} and Table~\ref{tab:simcse} for CLIP, MoCo v3, and SimCSE respectively. For CLIP and MoCo v3 in Table~\ref{tab:clip} and  Table~\ref{tab:moco_v3}, we consider different data augmentation numbers, e.g. 10, 100.  For SimCSE, we use the standard data augmentation, i.e. generating unique augmentation for each gradient step. 

These tables again show our method can consistently improve the downstream prediction performance in all three real-world scenarios, and quite significantly in some cases (e.g., MoCo v3 on GTSRB). This shows that the method is also useful for large foundation models, including the case when the foundation models cannot be fine-tuned and only the extracted embeddings can be adapted. 

\subsubsection{The Effect of Contrastive Regularization on Linear Probing}

We show that contrastive regularization can also improve over linear probing. Note that the contrastive regularization loss term is only relevant to the backbone; see the definition in Equation (9). While linear probing fixes the backbone. Thus, we cannot do contrastive regularization and linear probing at the same time. To show its effect, we first apply contrastive regularization to update the backbone, and after that use linear probing.

The results in Table~\ref{tab:moco_v2_cr} show that contrastive regularization leads to better prediction accuracy. Furthermore, the improvement is more significant on diverse pre-training data, consistent with our analysis.

\begin{table}[ht!]
\renewcommand{\arraystretch}{1.2}
\fontsize{7.2pt}{8.2pt}\selectfont
\centering
\begin{tabular}{|c|c c c c|}
\hline
\rowcolor{lightgray}  & \multicolumn{4}{c|}{\textbf{Pre-training} dataset } \\ 
\rowcolor{lightgray}  Method & {CINIC-10} & {+SVHN} &{+GTSRB} &{+ImageNet32} \\\hline 

LP & 88.41 & 85.18 & 82.07  & 75.64  \\
Contrastive regularization then Linear probing & 88.38  & 86.91  & 85.95 & 82.43 \\
\hline  
\end{tabular}
\caption{ Test accuracy on CIFAR-10 with different evaluation methods on MoCo v2 with ResNet18 backbone.  From left to right: incrementally add datasets for pre-training.  }
\label{tab:moco_v2_cr}
\end{table}

\subsubsection{The Effect of Contrastive Regularization on Closing the Gap}
We show that contrastive regularization can reduce the target task performance gap between pre-training on the specific dataset (the same or similar as the target task) and that on diverse datasets.

We pre-train with MoCo v3 (backbone ViT-S) on ImageNet-Bird or the whole ImageNet and then perform linear probing (LP) on the target task of ImageNet-Bird with 1k labeled samples. The results in Table~\ref{tab:moco_v3_bird_cr} show that pre-training on the diverse data leads to worse performance. Then we pre-train on ImageNet followed by contrastive regularization on ImageNet-Bird and then perform linear probing (LP) on the target task. This leads to improved performance than without contrastive regularization, closing the gap between pre-training on diverse and specific data. Similar results are observed in Table~\ref{tab:moco_v3_vehicle_cr} when ImageNet-Vehicles are used. This confirms the benefits of contrastive regularization for highlighting task-specific features.

\begin{table}[ht!]
\renewcommand{\arraystretch}{1.2}
\fontsize{7.2pt}{8.2pt}\selectfont
\centering
\begin{tabular}{|c|c c c|}
\hline
\rowcolor{lightgray}  LP Accuracy & \multicolumn{3}{c|}{\textbf{Pre-training} dataset } \\ 
\rowcolor{lightgray} \textbf{Target} dataset  & {ImageNet-Bird} & {ImageNet} & {ImageNet then Contrastive Reguarlization on ImageNet-Bird} \\\hline 

ImageNet-Bird (1k label) & 81.42  & 69.39 & 73.25 \\
\hline  
\end{tabular}
\caption{  LP test accuracy on ImageNet-Bird and ImageNet with MoCo v3 (ViT-S) pre-trained on ImageNet-Bird and ImageNet.  }
\label{tab:moco_v3_bird_cr}
\end{table}

\begin{table}[ht!]
\renewcommand{\arraystretch}{1.2}
\fontsize{7.2pt}{8.2pt}\selectfont
\centering
\begin{tabular}{|c|c c c|}
\hline
\rowcolor{lightgray}  LP Accuracy & \multicolumn{3}{c|}{\textbf{Pre-training} dataset } \\ 
\rowcolor{lightgray} \textbf{Target} dataset  & {ImageNet-Vehicle} & {ImageNet} & { ImageNet then Contrastive Reguarlization on ImageNet-Vehicle} \\\hline 

ImageNet-Vehicle (1k label) & 70.93 & 67.90 & 71.34 \\
\hline  
\end{tabular}
\caption{  LP test accuracy on ImageNet-Vehicle and ImageNet with MoCo v3 (ViT-S) pre-trained on ImageNet-Vehicle and ImageNet.  }
\label{tab:moco_v3_vehicle_cr}
\end{table}

\newpage
\subsection{Additional Results Verifying Existence of the Trade-off}\label{app:trade_extend}
\begin{figure*}[ht!]
    \centering
    \newcommand{\imagewidth}{0.33\linewidth}
    \subfloat[MoCo v2; CIFAR-10]{\includegraphics[width=\imagewidth]{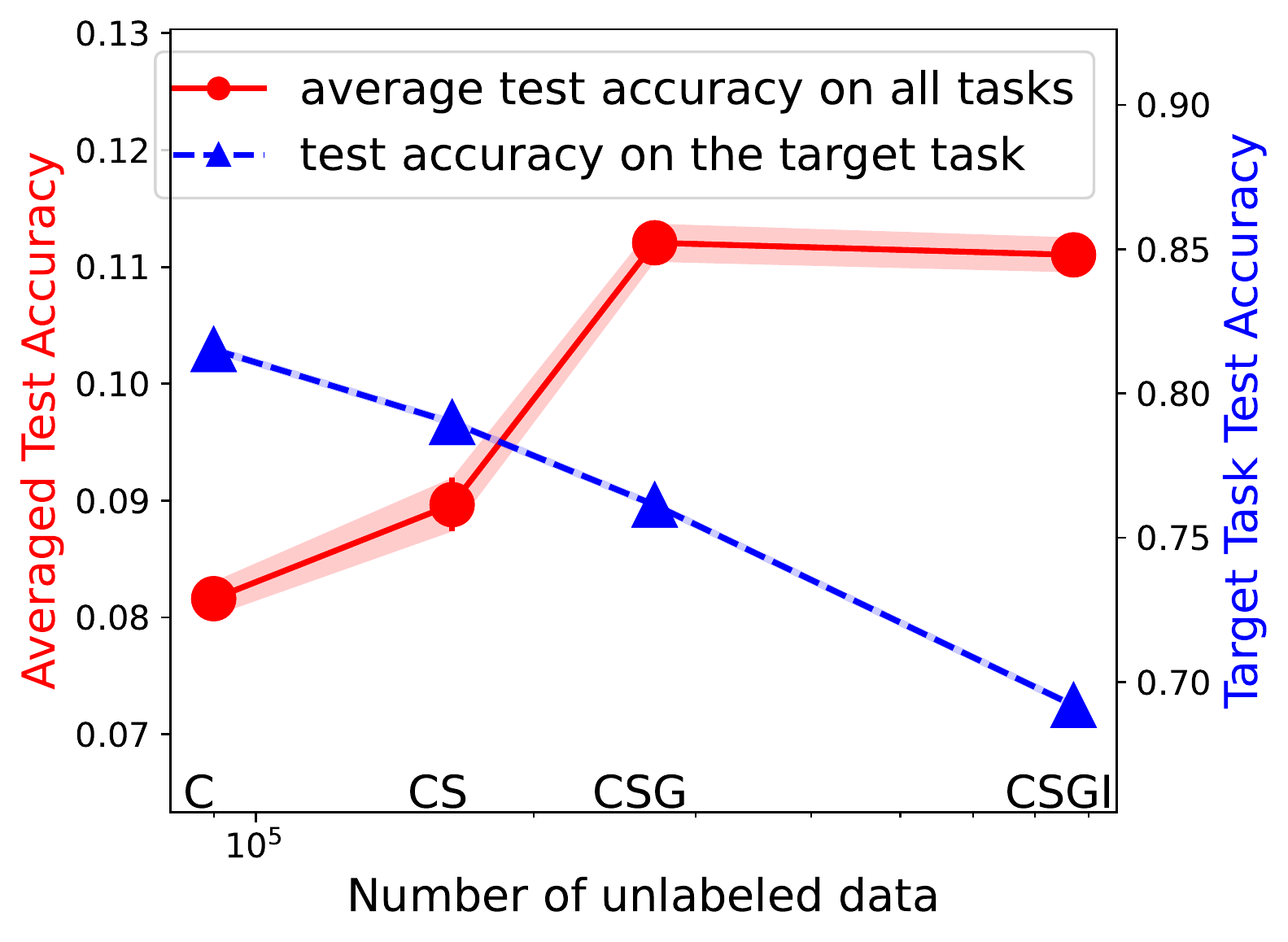}}
    \subfloat[NNCLR; CIFAR-10]{\includegraphics[width=\imagewidth]{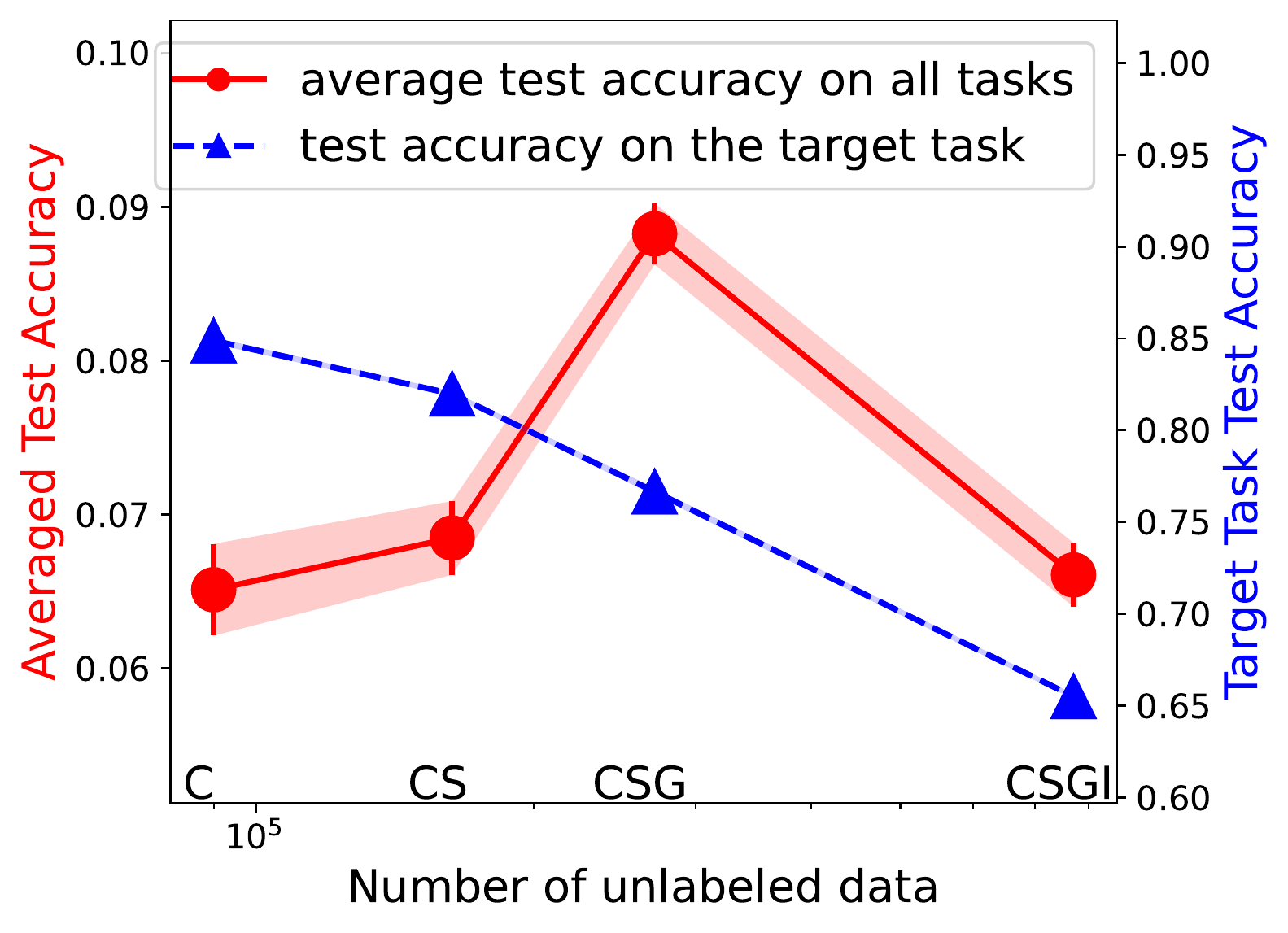}}
    \subfloat[SimSiam; CIFAR-10]{\includegraphics[width=\imagewidth]{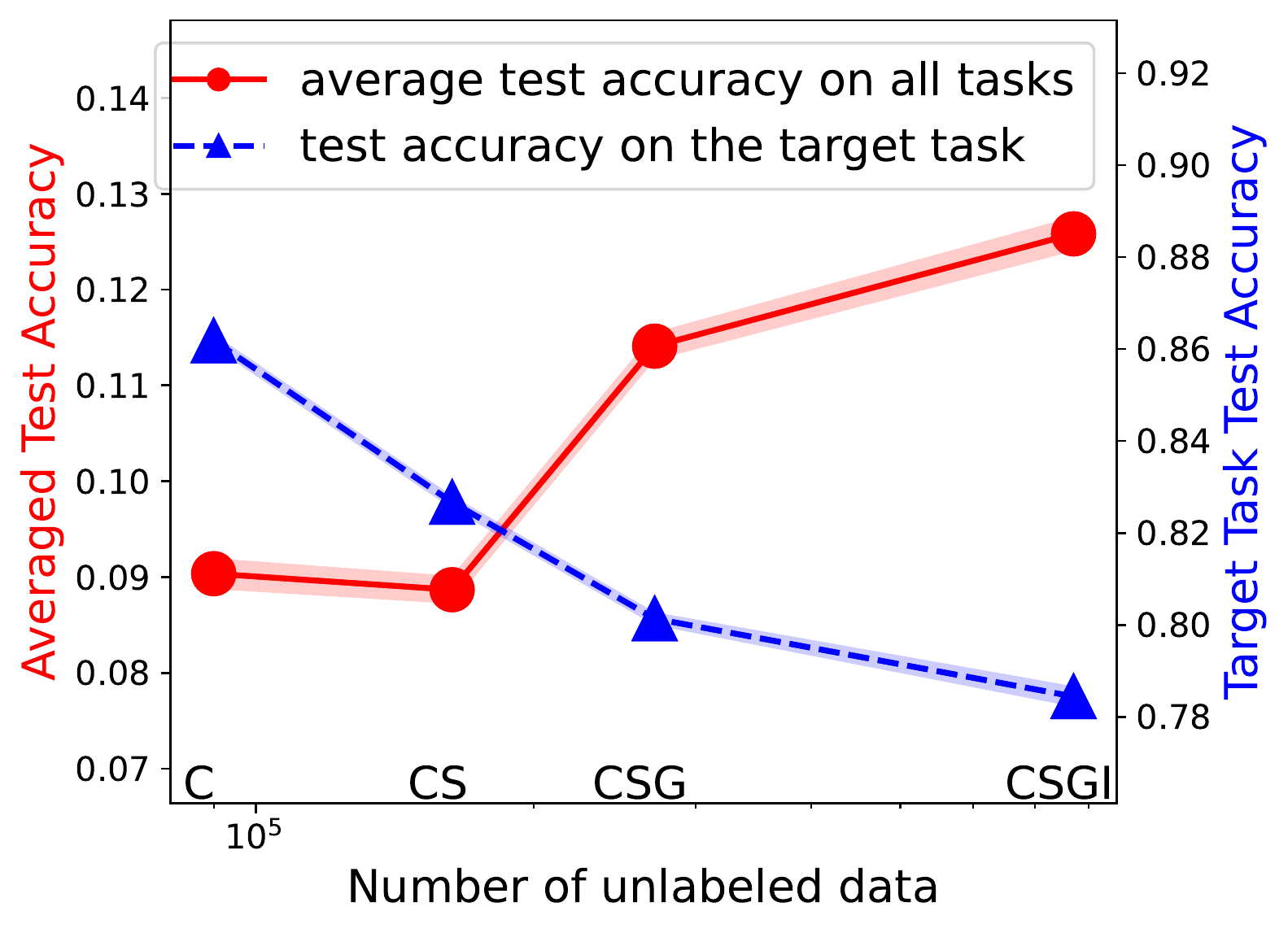}}\\
    \subfloat[MoCo v2; MNIST]{\includegraphics[width=\imagewidth]{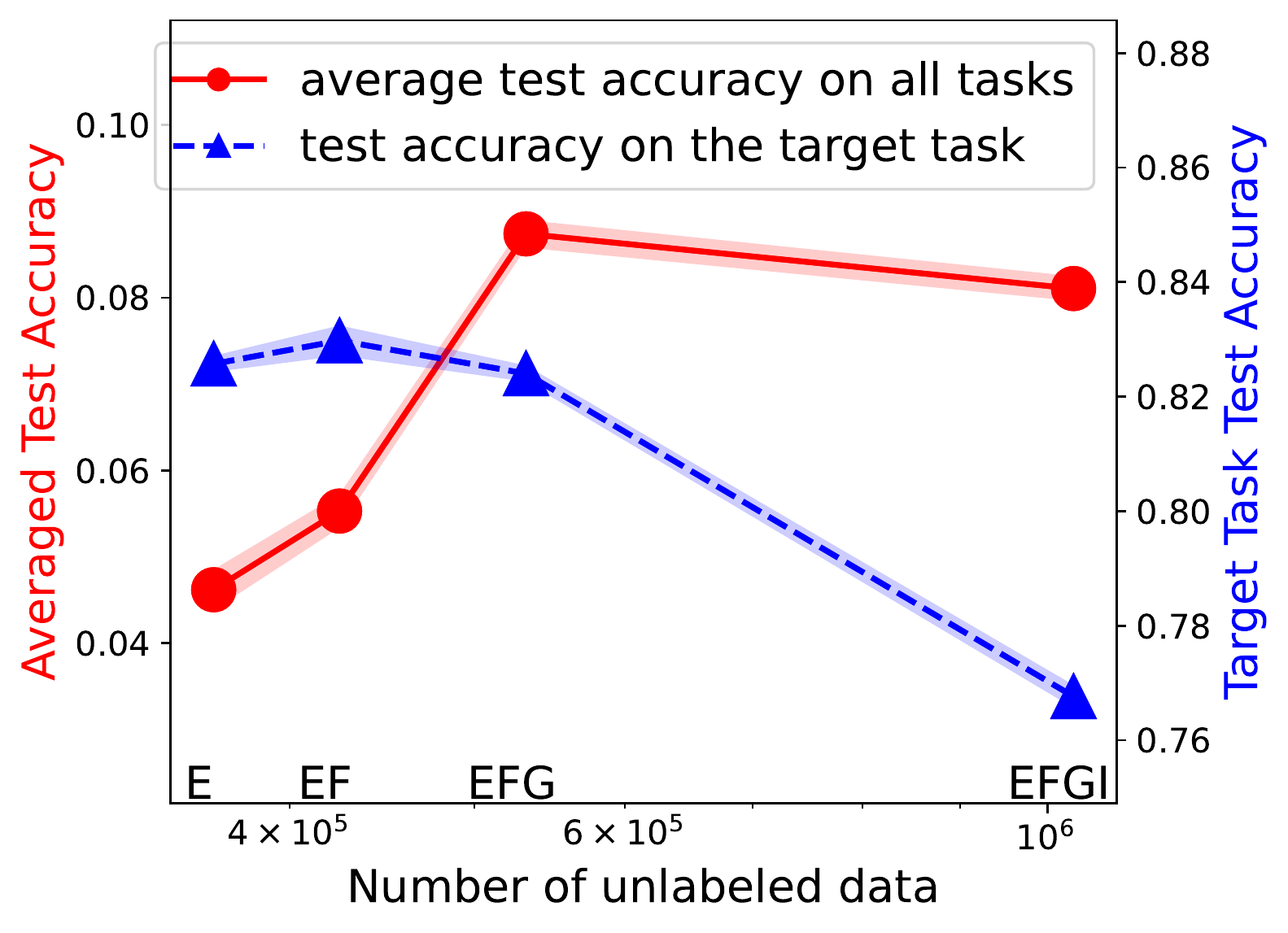}}
    \subfloat[NNCLR; MNIST]{\includegraphics[width=\imagewidth]{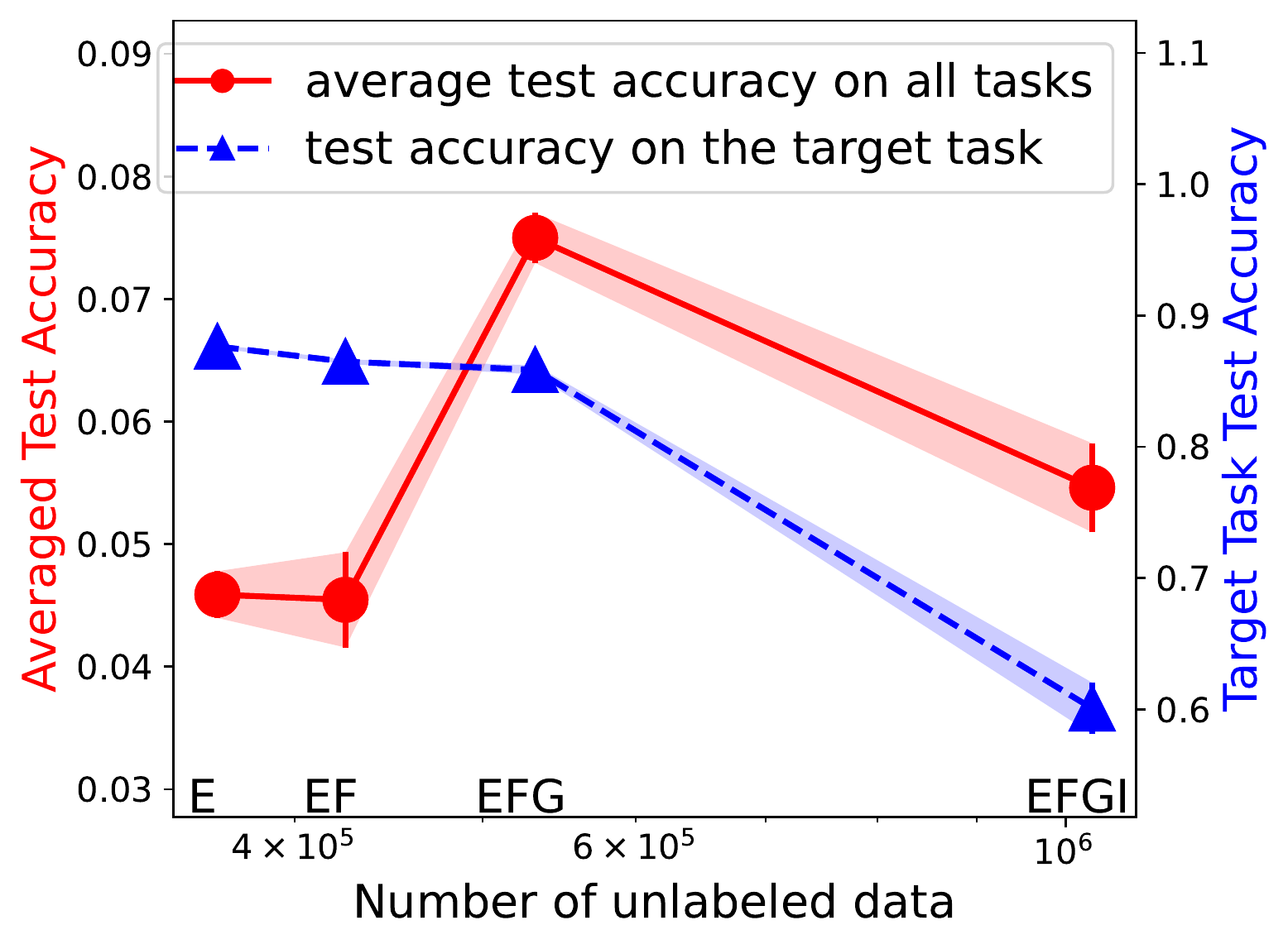}}
    \subfloat[SimSiam; MNIST]{\includegraphics[width=\imagewidth]{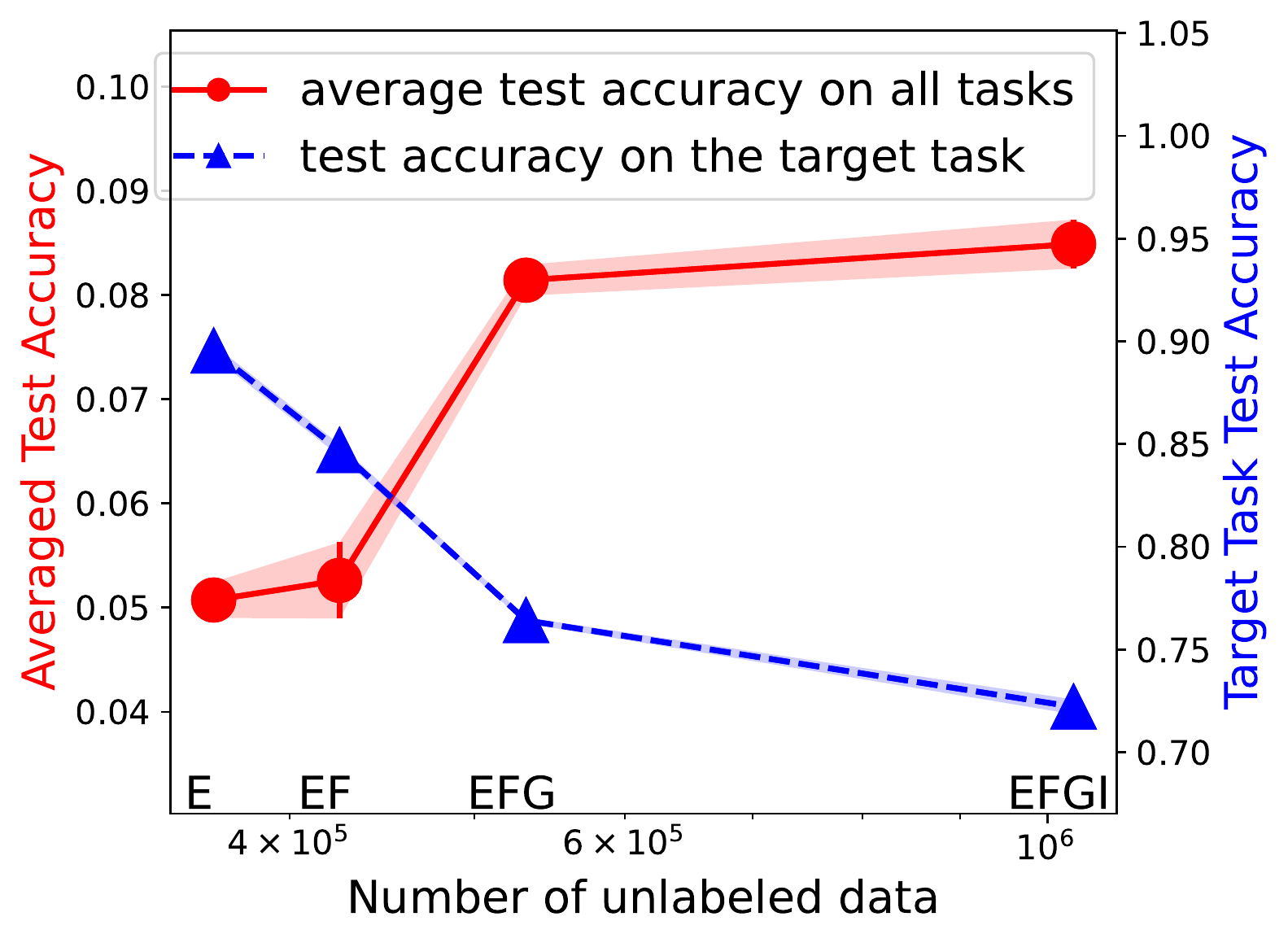}}\\
    \subfloat[MoCo v2; Fer2013]{\includegraphics[width=\imagewidth]{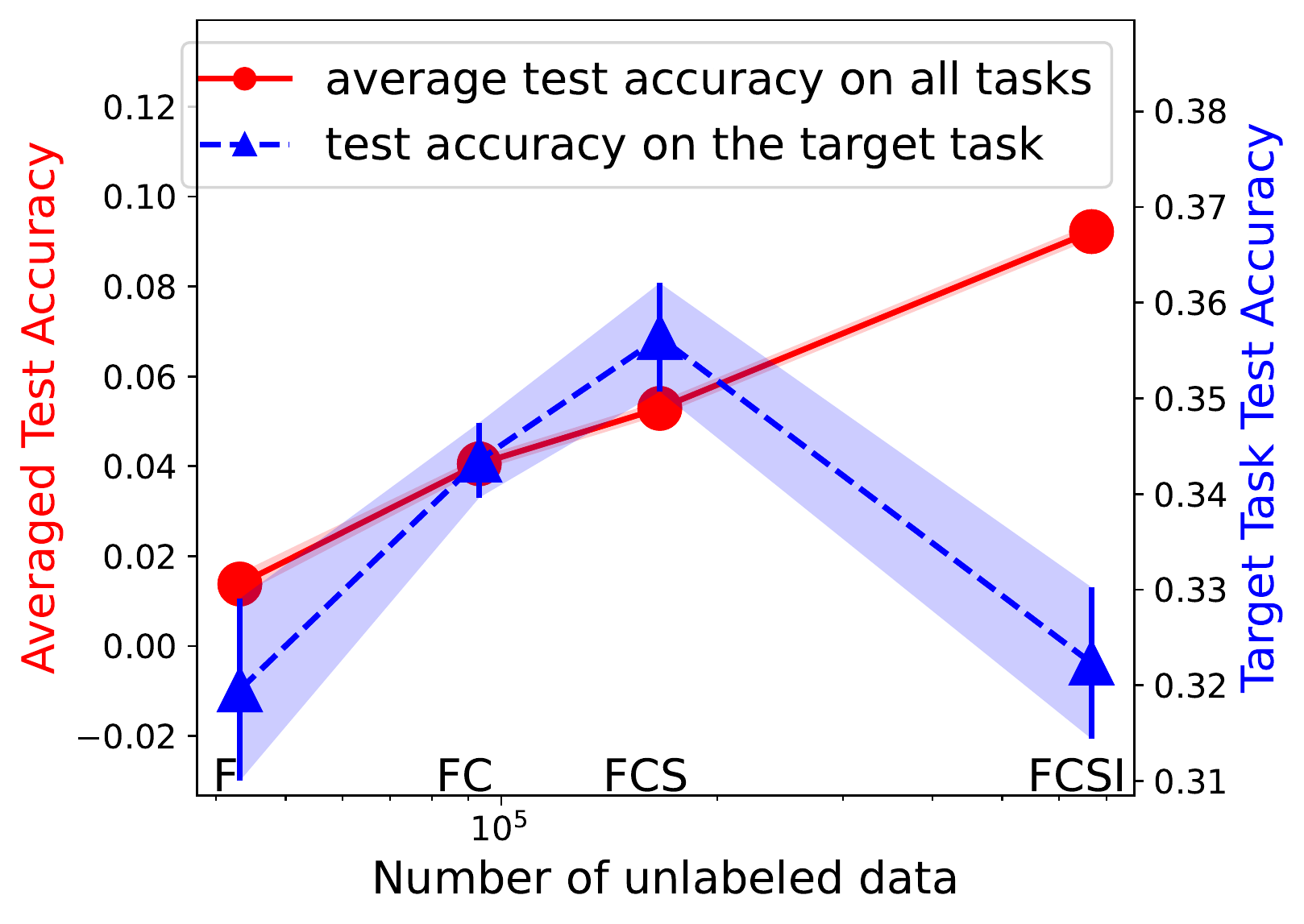}}
    \subfloat[NNCLR; Fer2013]{\includegraphics[width=\imagewidth]{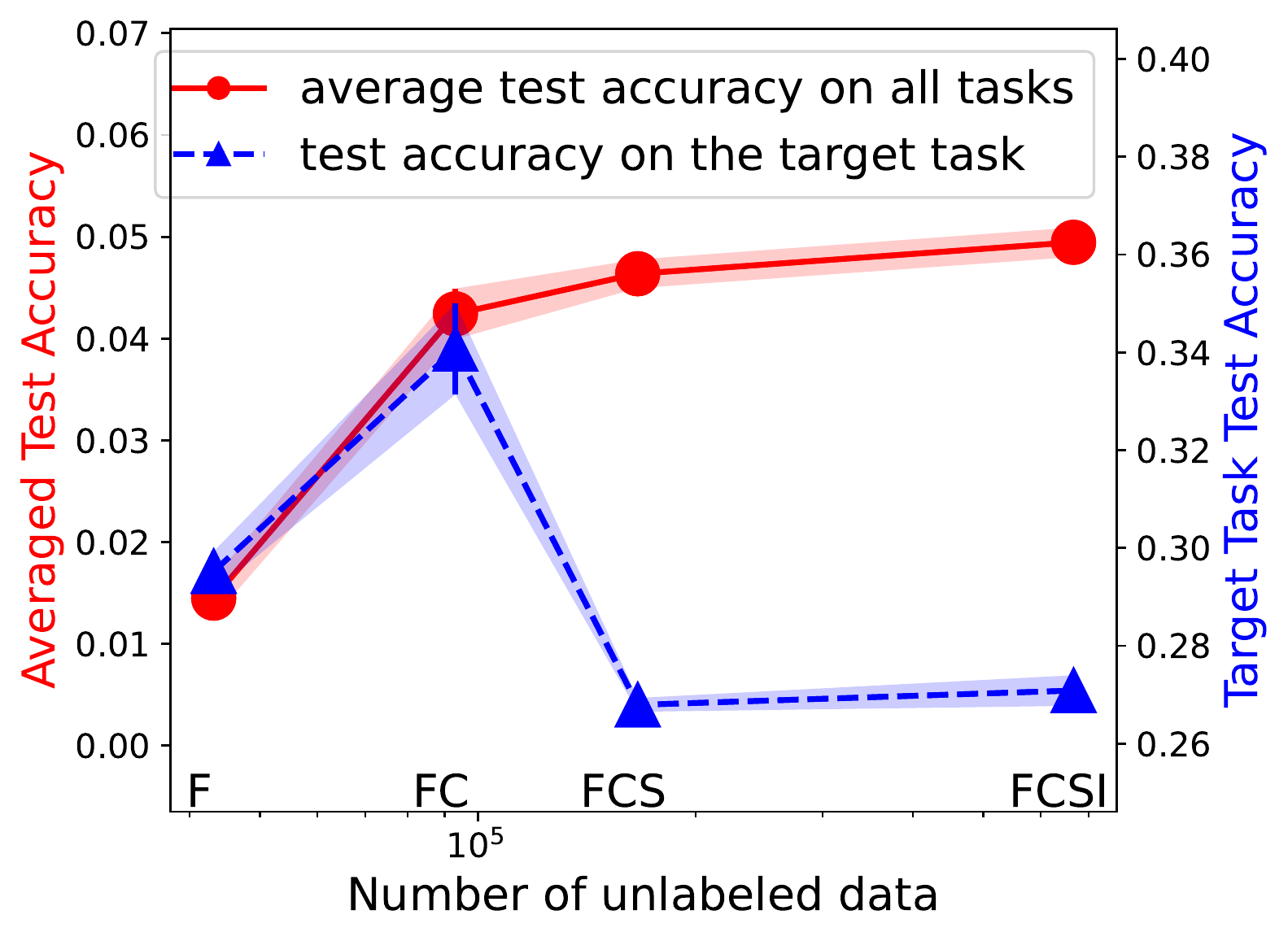}}
    \subfloat[SimSiam; Fer2013]{\includegraphics[width=\imagewidth]{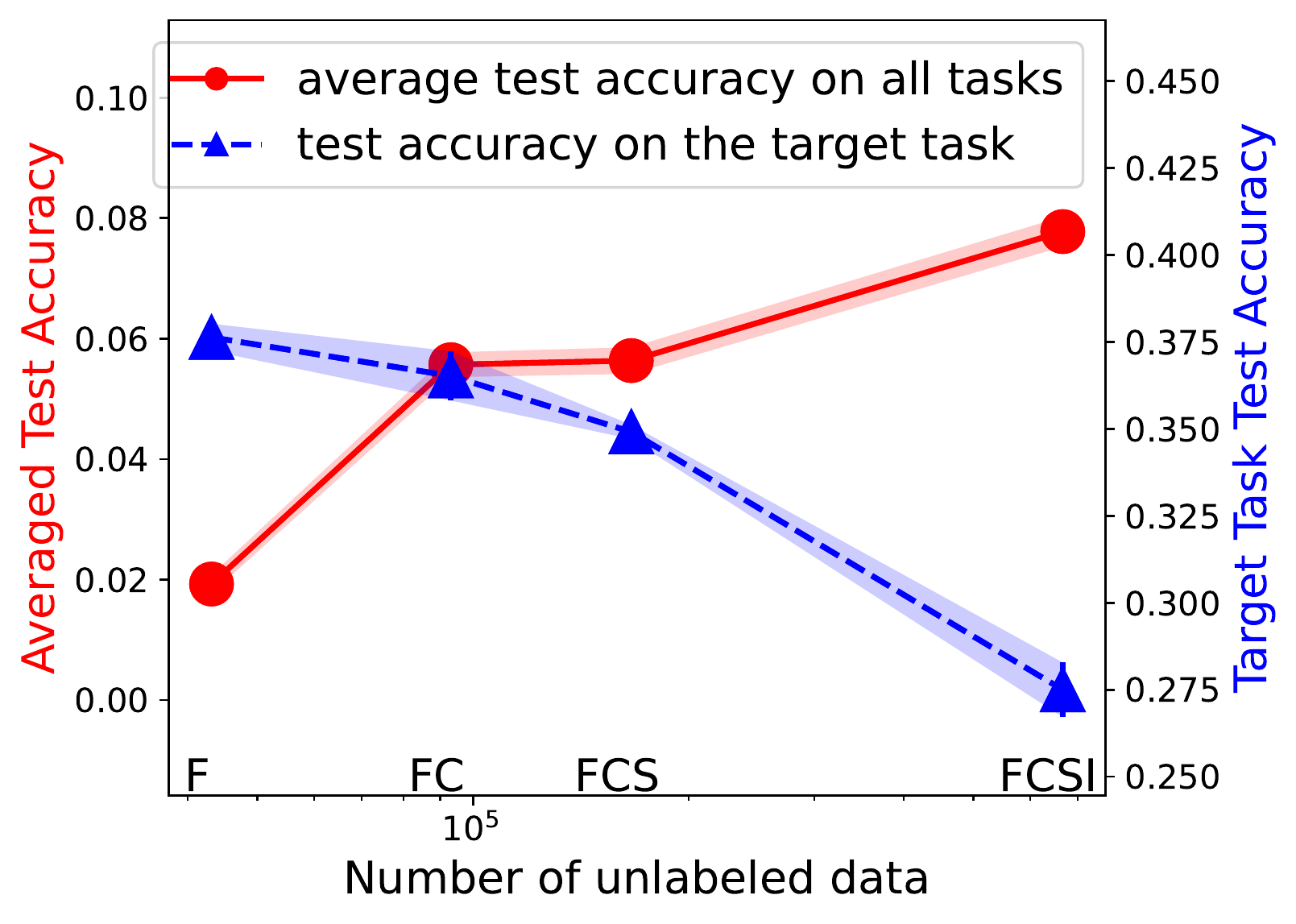}}
    \caption{Trade-off of universality and label efficiency for MoCo v2, NNCLR, SimSiam on downstream tasks CIFAR-10, MNIST, Fer2013. $x$-axis: incrementally add datasets for pre-training. Pre-training data: (a)(b)(c)  CINIC-10 (C), SVHN (S), GTSRB (G), and ImageNet32 (I). For example, ``CS'' on the $x$-axis means CINIC-10+SVHN. Target task: CIFAR-10. Red line: average test accuracy of Linear Probing on all 4 datasets. Blue line: test accuracy on the target task. (d)(e)(f) EMNIST-Digits\&Letters (E), Fashon-MNIST (F), GTSRB (G), ImageNet32 (I). Target: MNIST. (g)(h)(i) FaceScrub (F), CIFAR-10 (C), SVHN (S), ImageNet32 (I). Target: Fer2013. All evaluations are trained with 1\% labeled data. }
    \label{fig:trade_off_1}
\end{figure*}
\begin{figure*}[ht!]
    \centering
    \newcommand{\imagewidth}{0.33\linewidth}
    \subfloat[MoCo v2; CIFAR-10]{\includegraphics[width=\imagewidth]{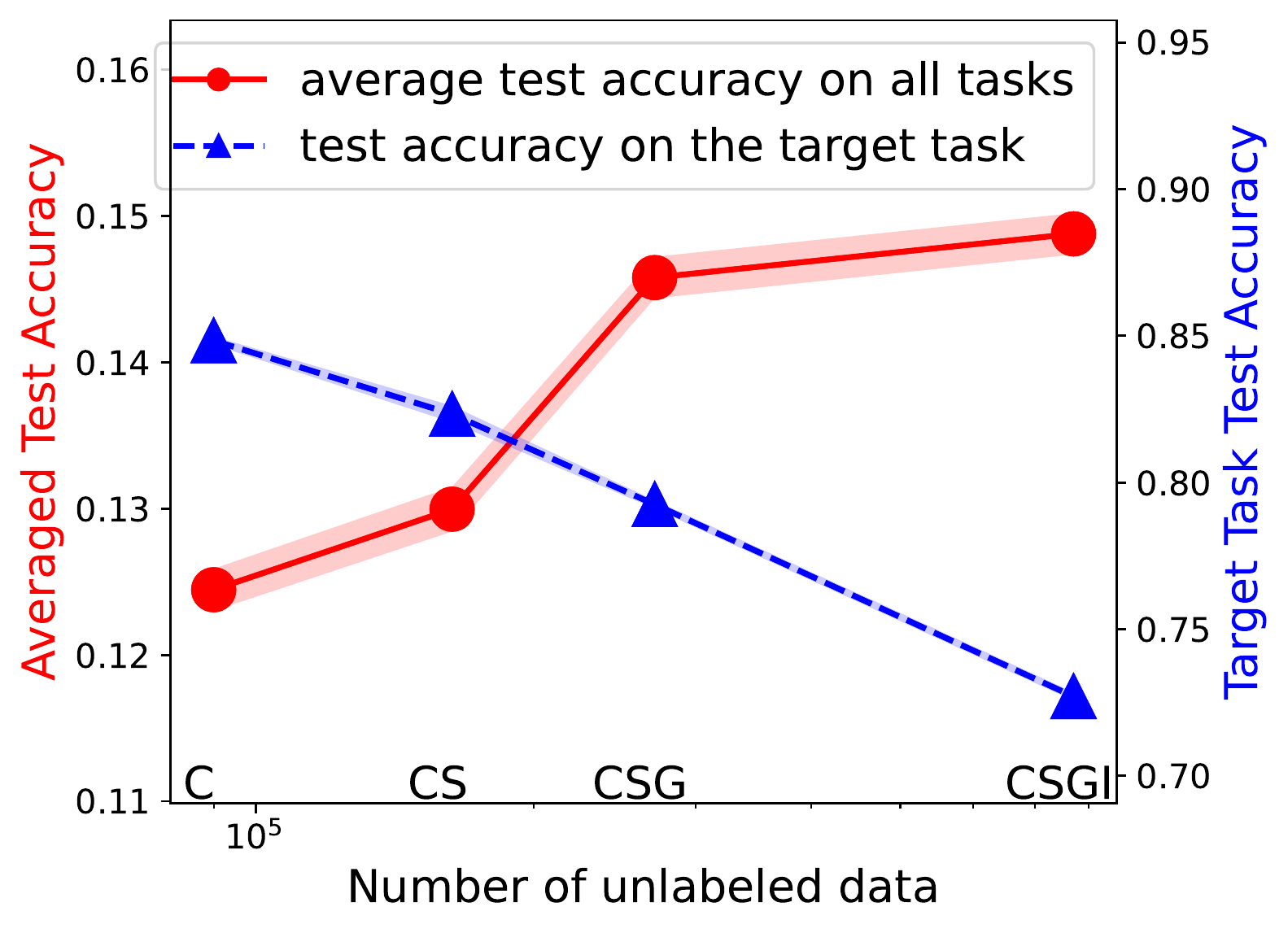}}
    \subfloat[NNCLR; CIFAR-10]{\includegraphics[width=\imagewidth]{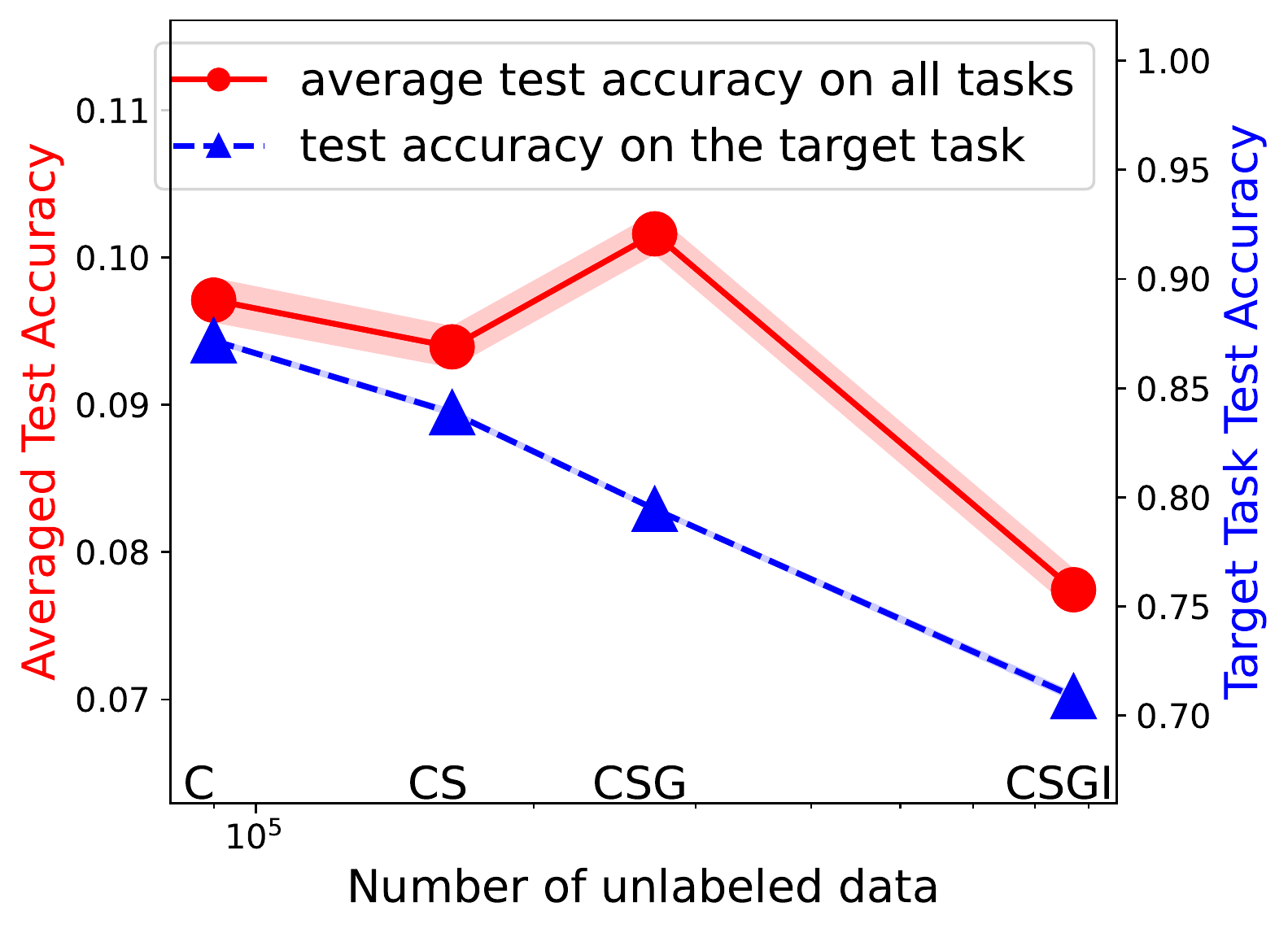}}
    \subfloat[SimSiam; CIFAR-10]{\includegraphics[width=\imagewidth]{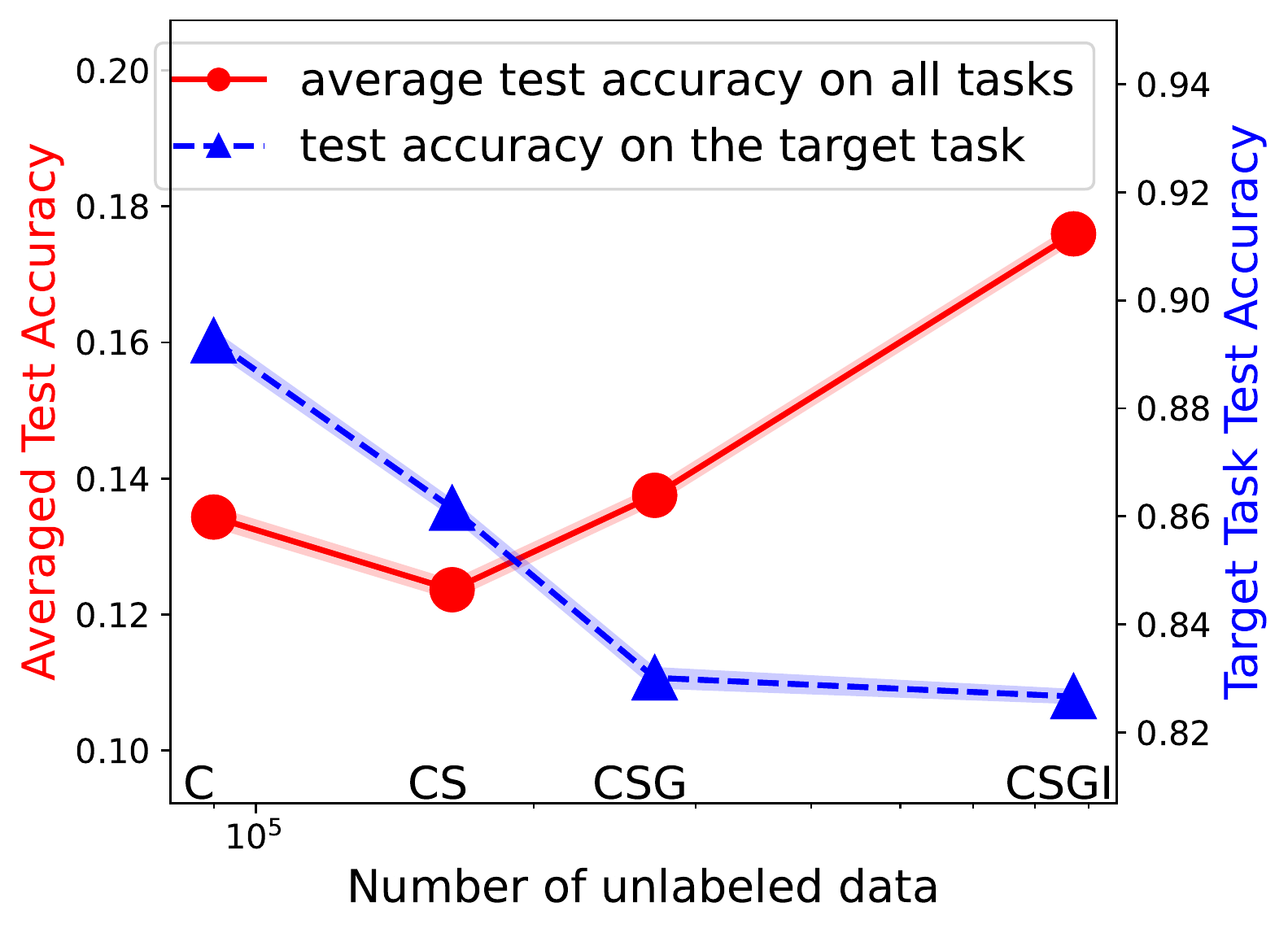}}\\
    \subfloat[MoCo v2; MNIST]{\includegraphics[width=\imagewidth]{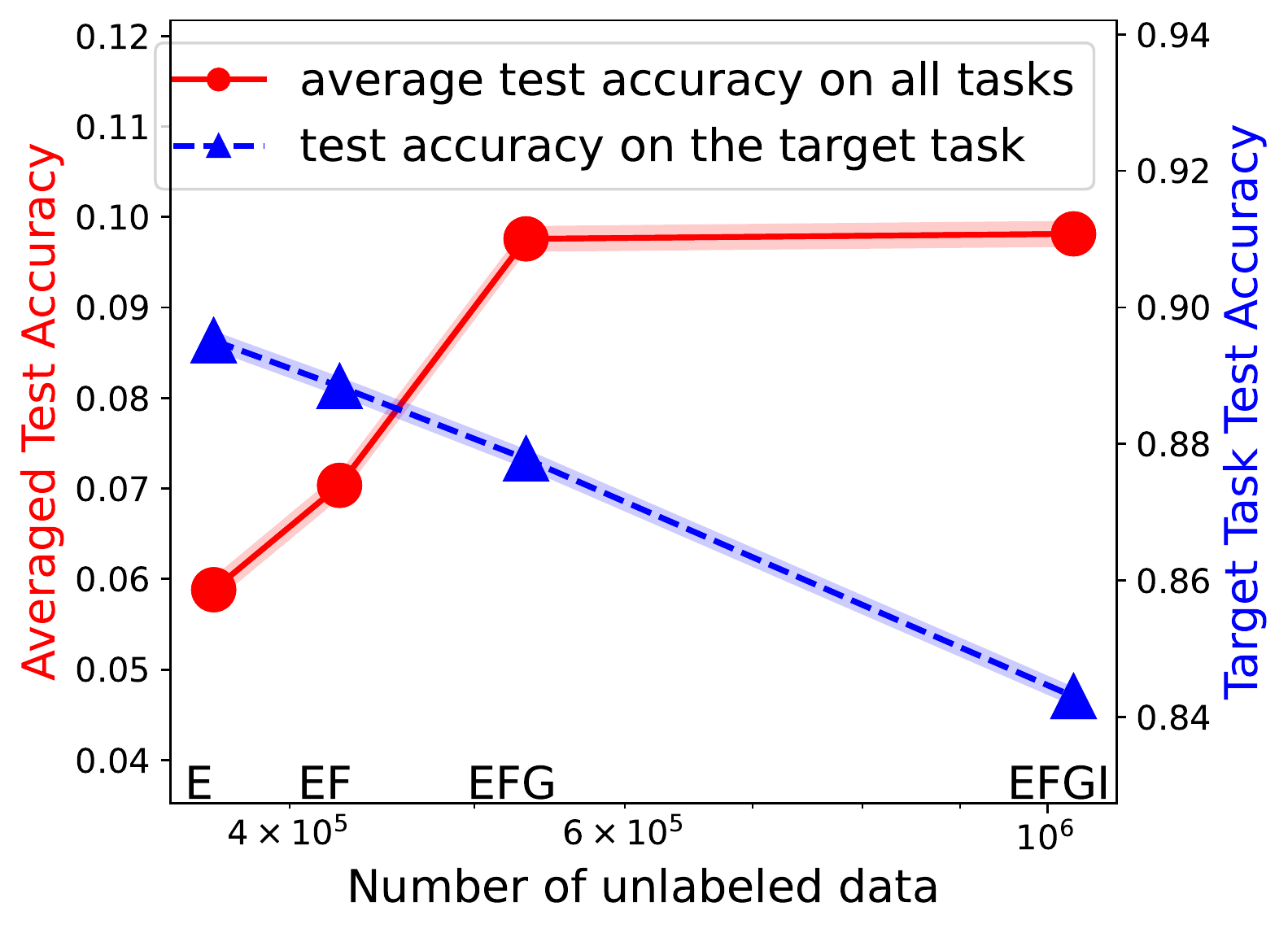}}
    \subfloat[NNCLR; MNIST]{\includegraphics[width=\imagewidth]{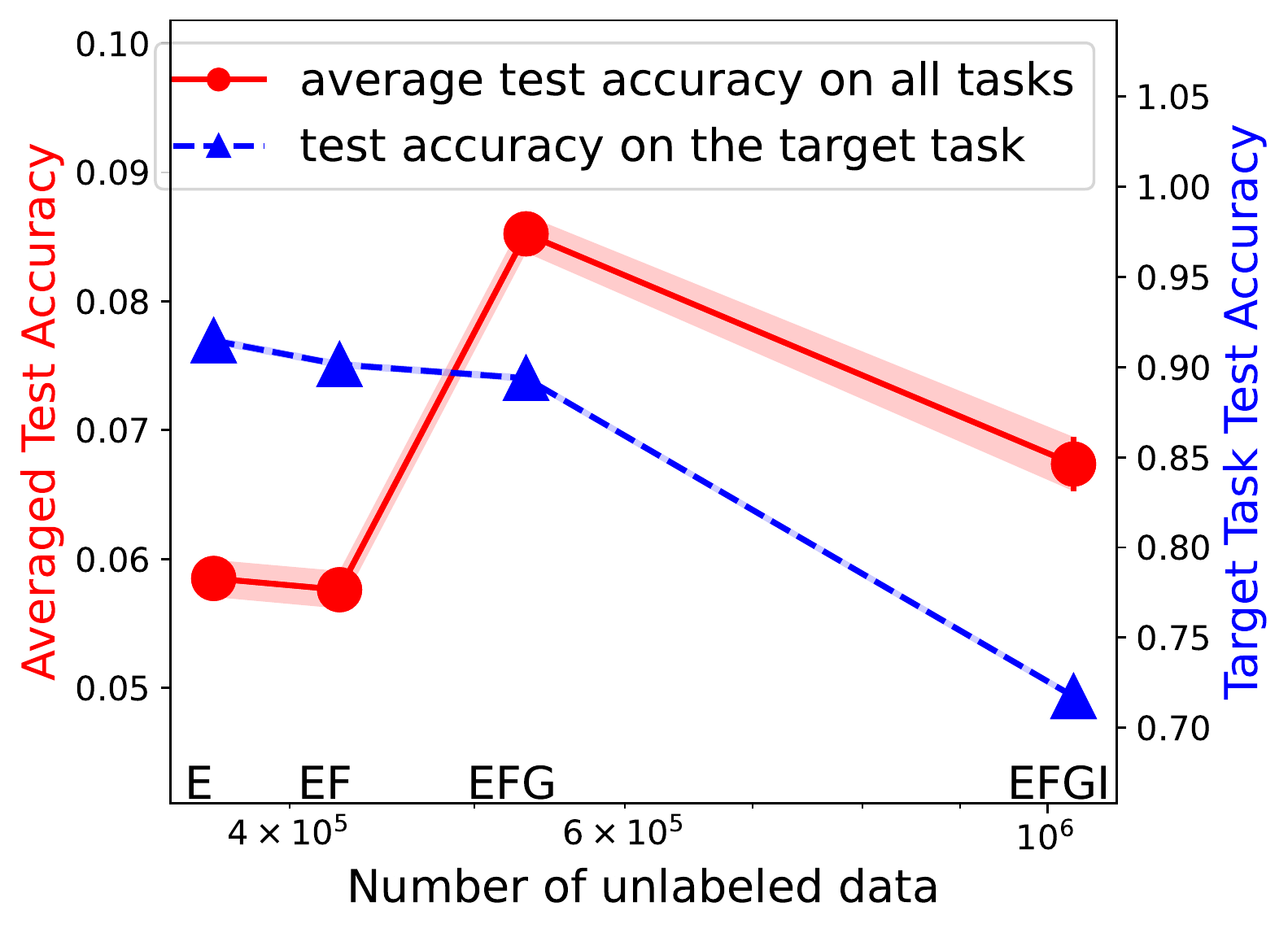}}
    \subfloat[SimSiam; MNIST]{\includegraphics[width=\imagewidth]{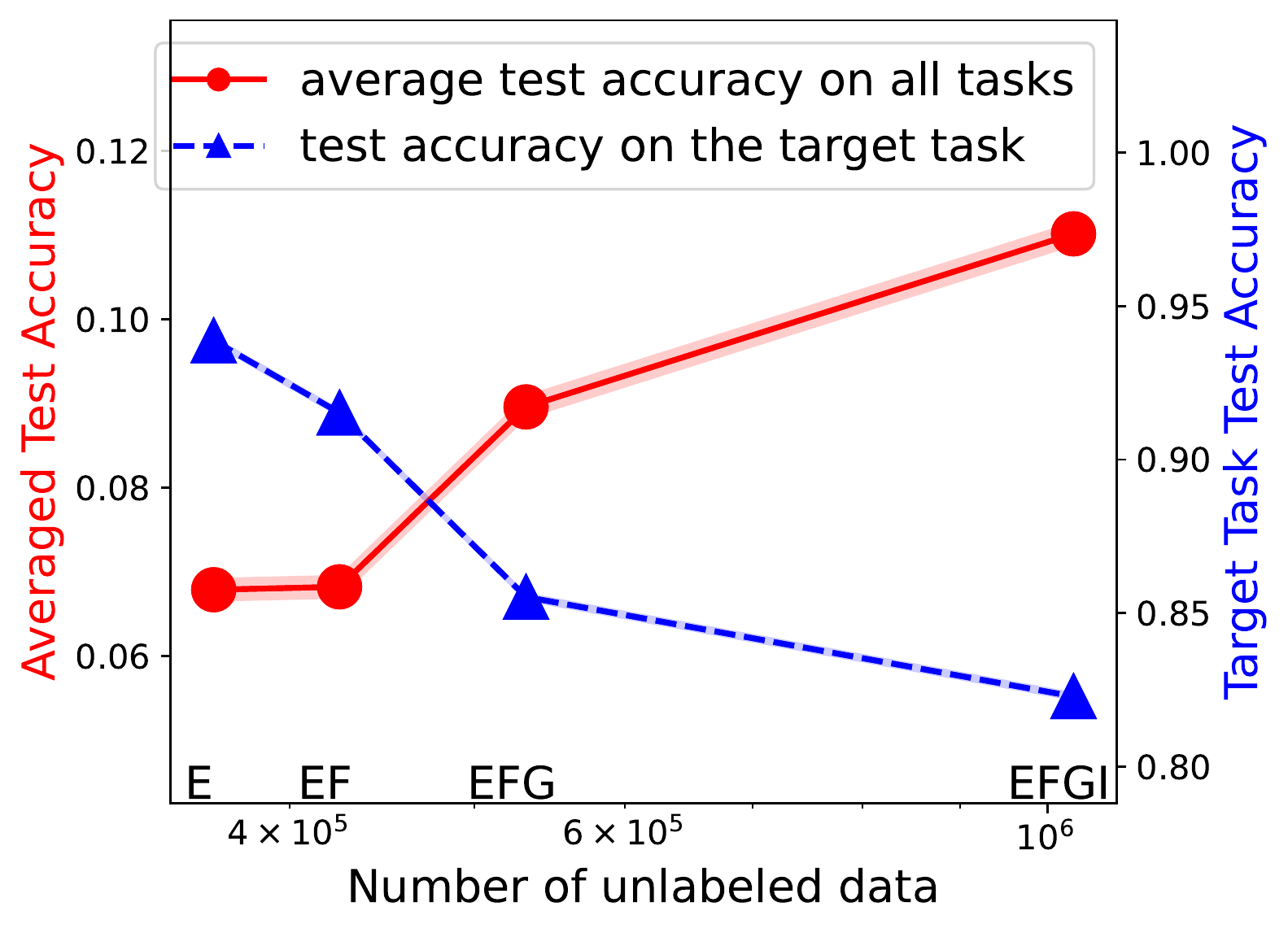}}\\
    \subfloat[MoCo v2; Fer2013]{\includegraphics[width=\imagewidth]{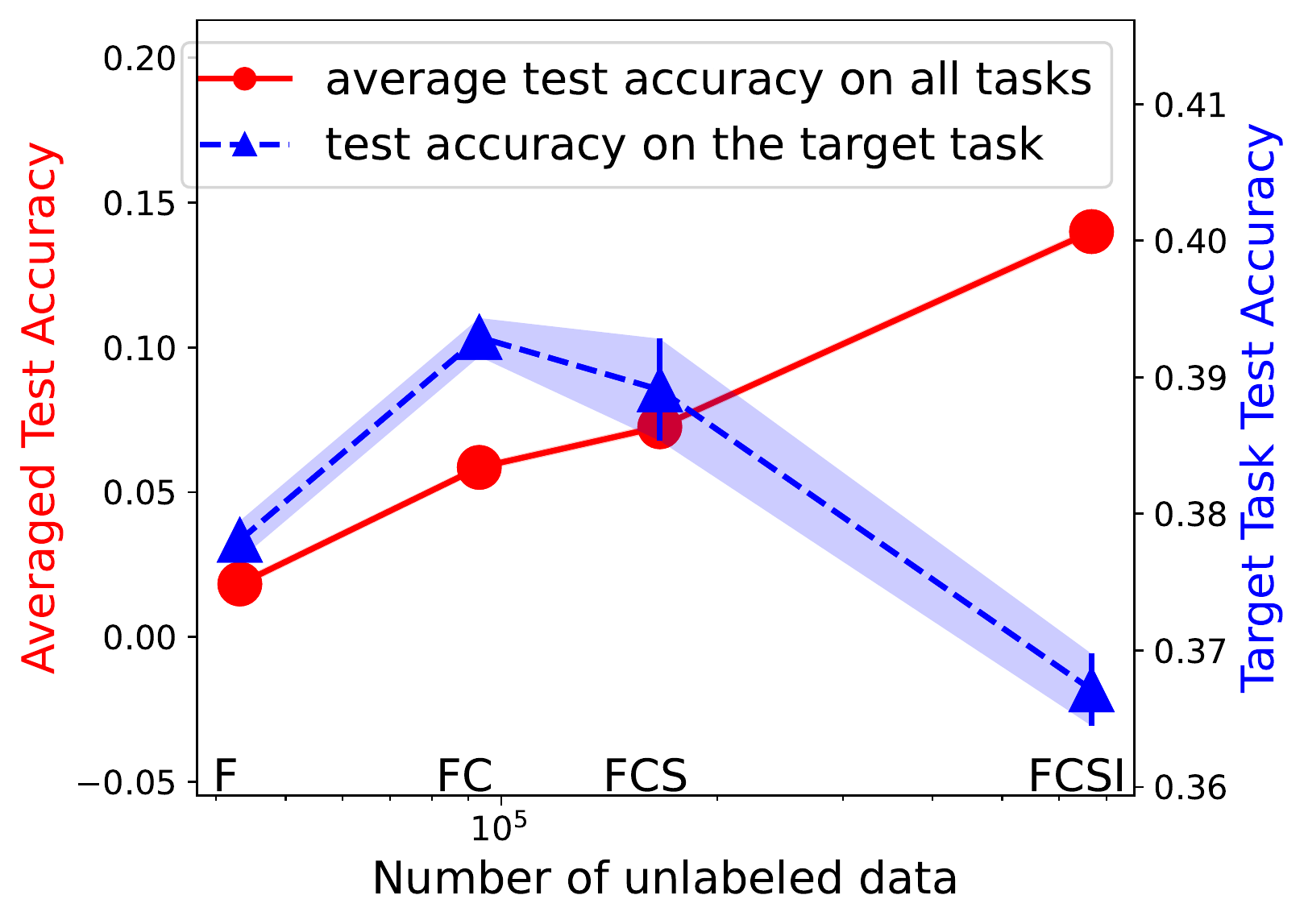}}
    \subfloat[NNCLR; Fer2013]{\includegraphics[width=\imagewidth]{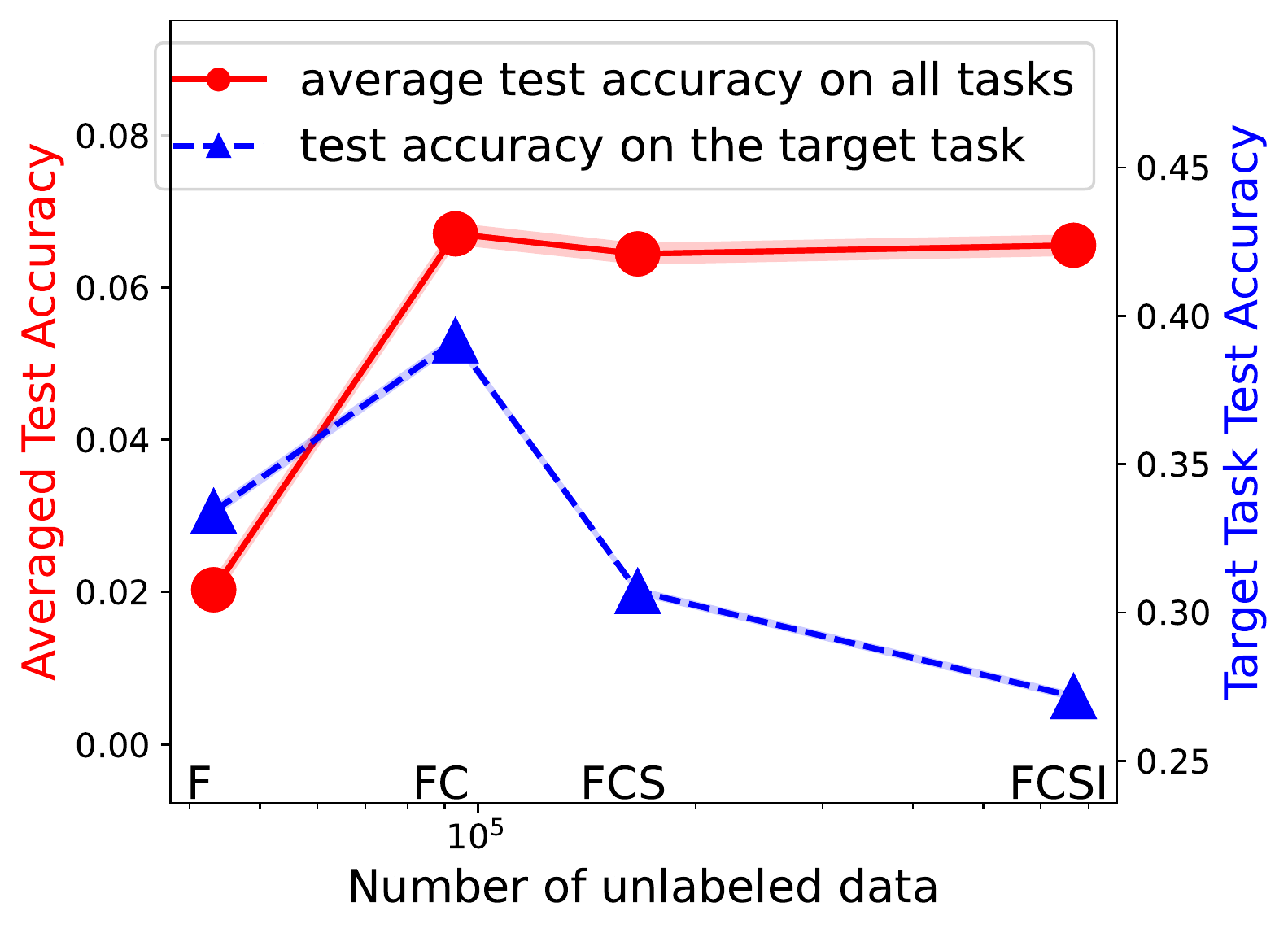}}
    \subfloat[SimSiam; Fer2013]{\includegraphics[width=\imagewidth]{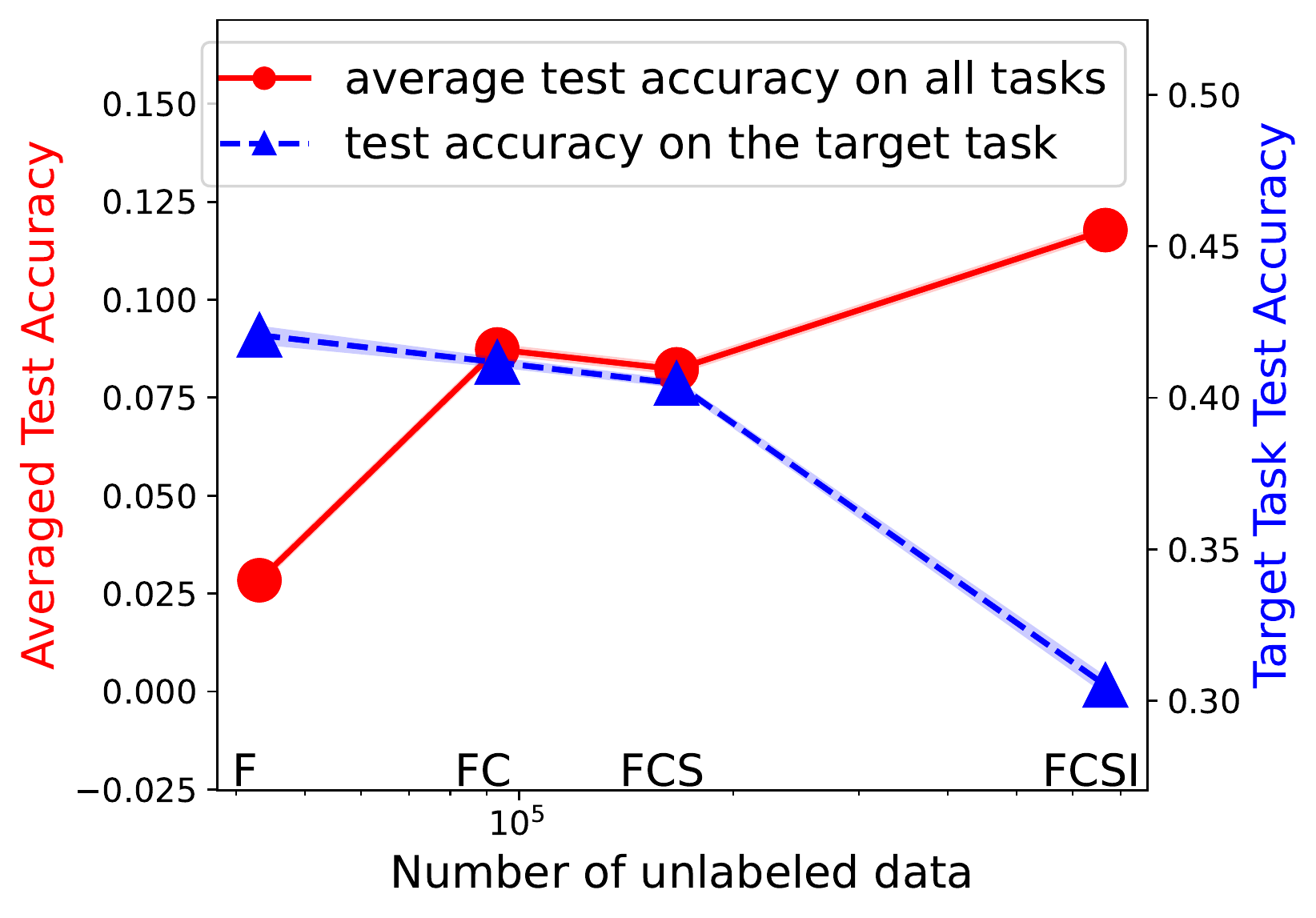}}
    \caption{Trade-off of universality and label efficiency for MoCo v2, NNCLR, SimSiam on downstream tasks CIFAR-10, MNIST, Fer2013. $x$-axis: incrementally add datasets for pre-training. Pre-training data: (a)(b)(c)  CINIC-10 (C), SVHN (S), GTSRB (G), and ImageNet32 (I). For example, ``CS'' on the $x$-axis means CINIC-10+SVHN. Target task: CIFAR-10. Red line: average test accuracy of Linear Probing on all 4 datasets. Blue line: test accuracy on the target task. (d)(e)(f) EMNIST-Digits\&Letters (E), Fashon-MNIST (F), GTSRB (G), ImageNet32 (I). Target: MNIST. (g)(h)(i) FaceScrub (F), CIFAR-10 (C), SVHN (S), ImageNet32 (I). Target: Fer2013. All evaluations are trained with 5\% labeled data. }
    \label{fig:trade_off_5}
\end{figure*}
\begin{figure*}[ht!]
    \centering
    \newcommand{\imagewidth}{0.33\linewidth}
    \subfloat[MoCo v2; CIFAR-10]{\includegraphics[width=\imagewidth]{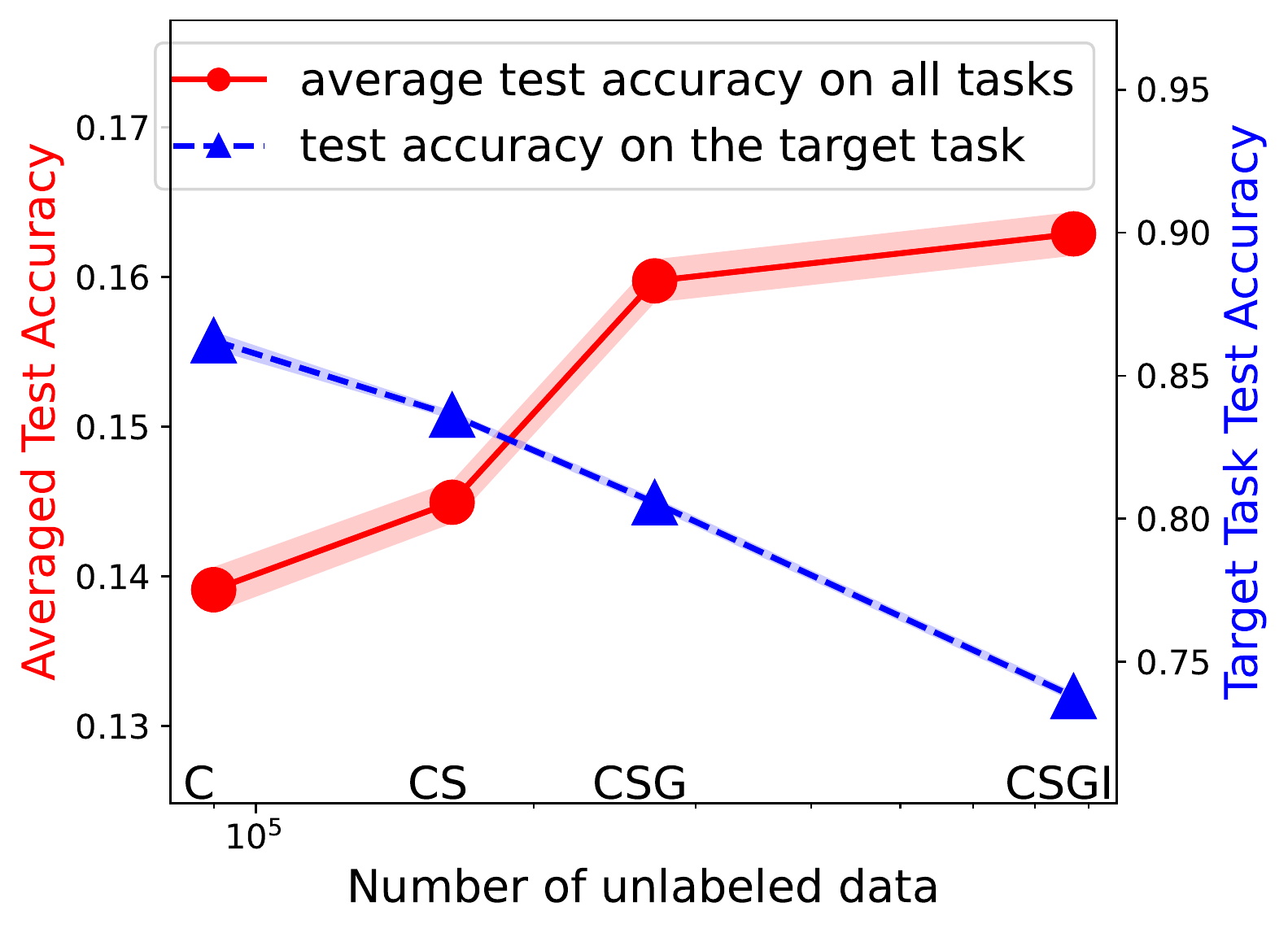}}
    \subfloat[NNCLR; CIFAR-10]{\includegraphics[width=\imagewidth]{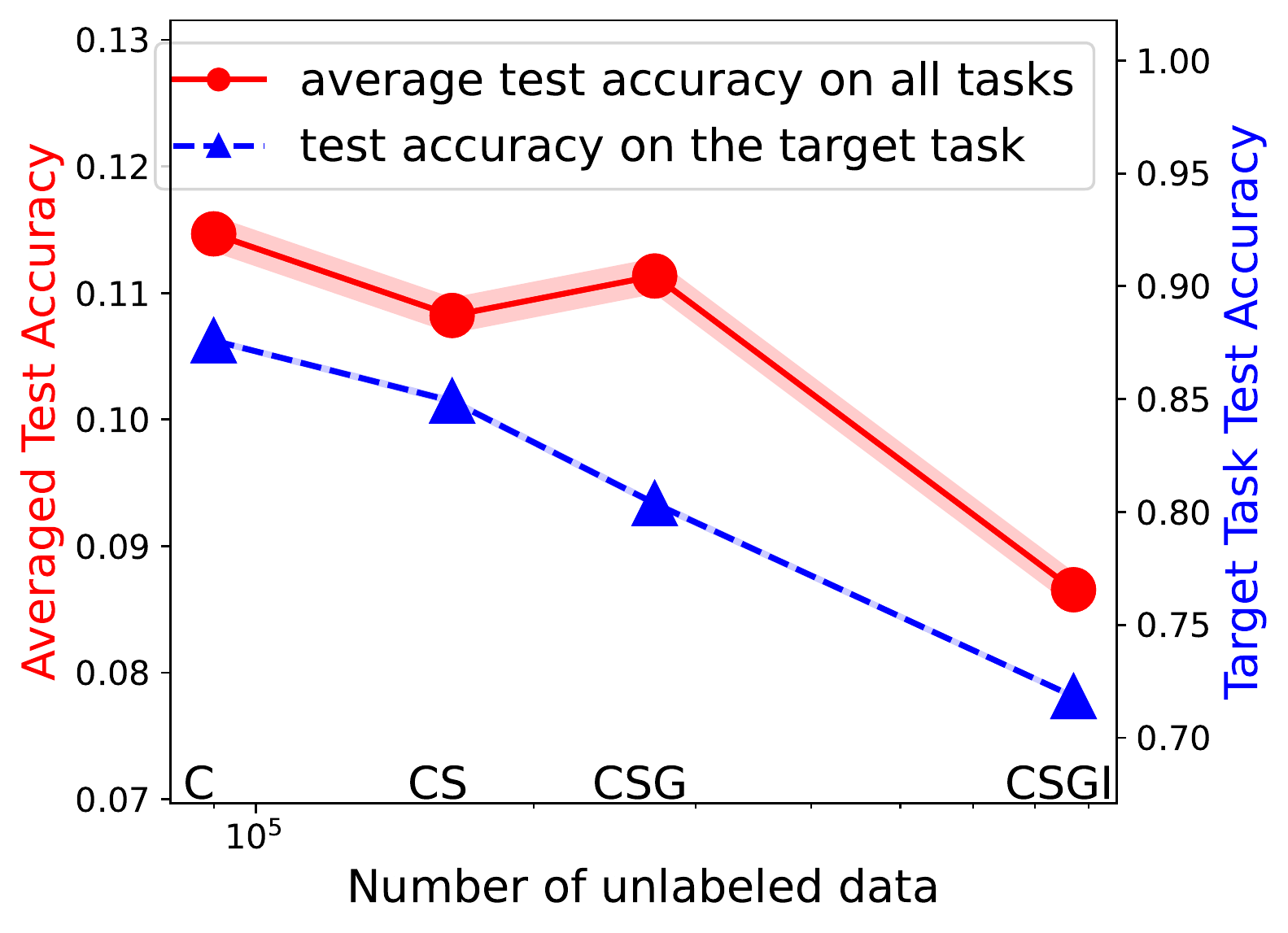}}
    \subfloat[SimSiam; CIFAR-10]{\includegraphics[width=\imagewidth]{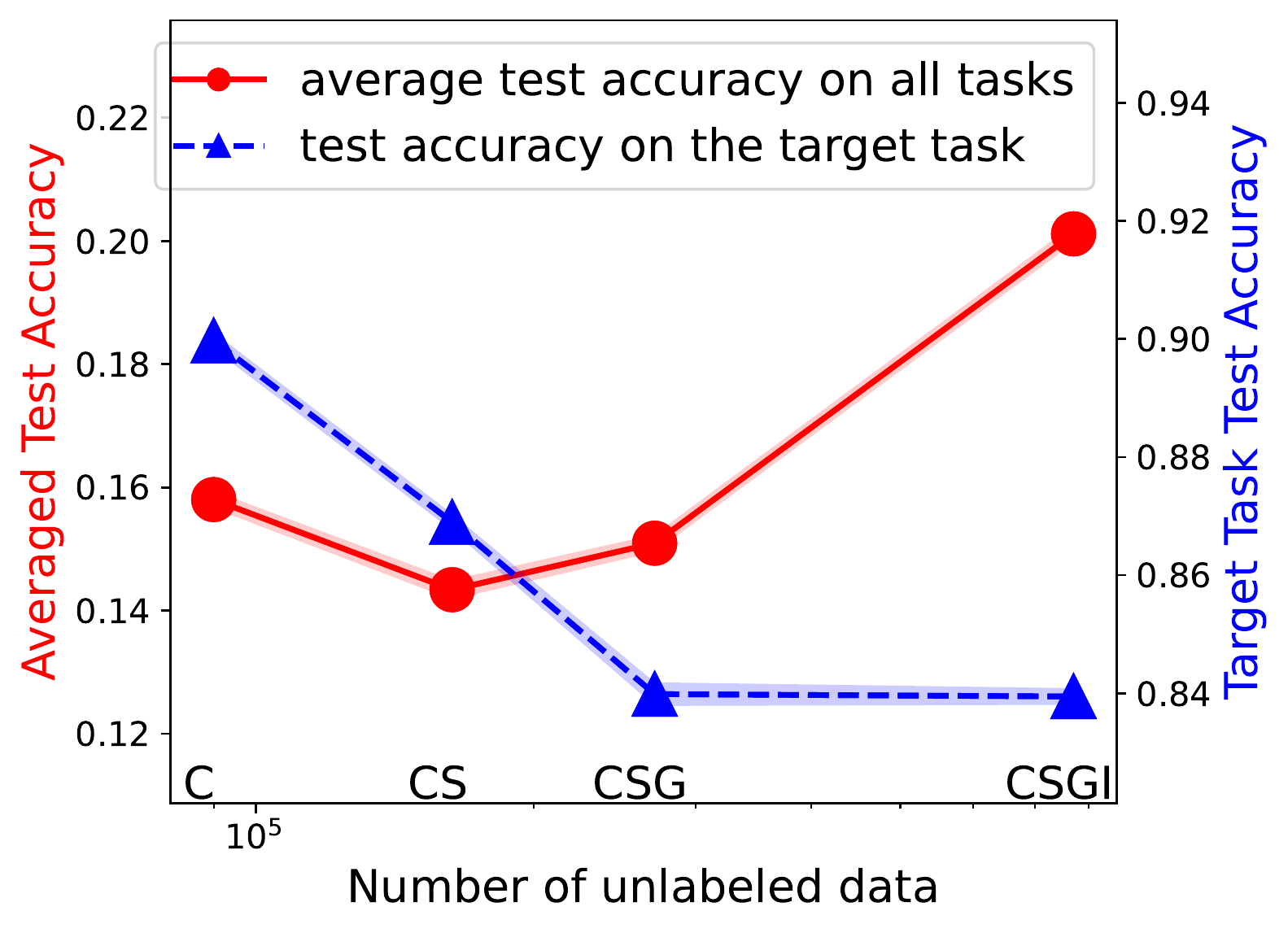}}\\
    \subfloat[MoCo v2; MNIST]{\includegraphics[width=\imagewidth]{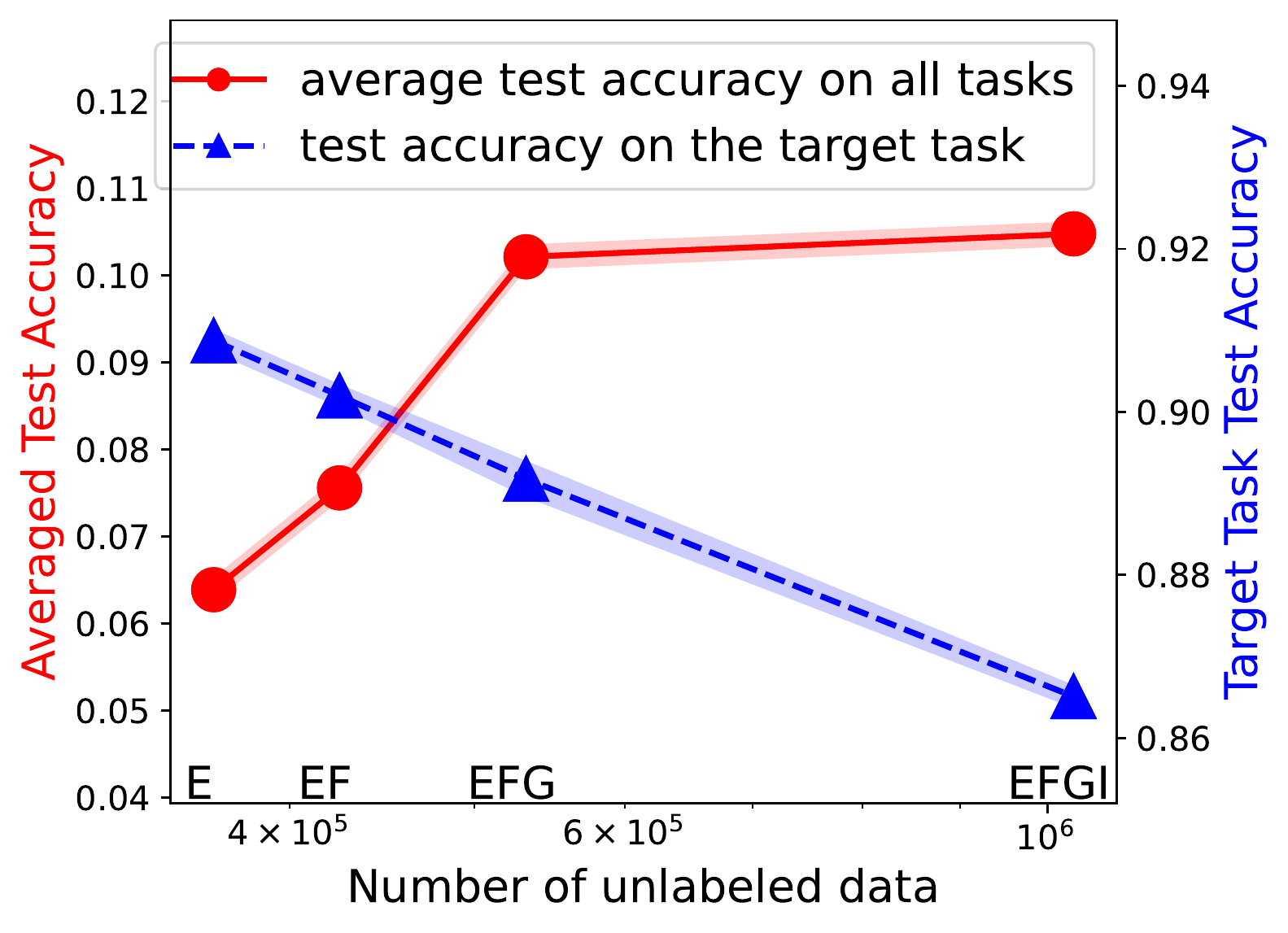}}
    \subfloat[NNCLR; MNIST]{\includegraphics[width=\imagewidth]{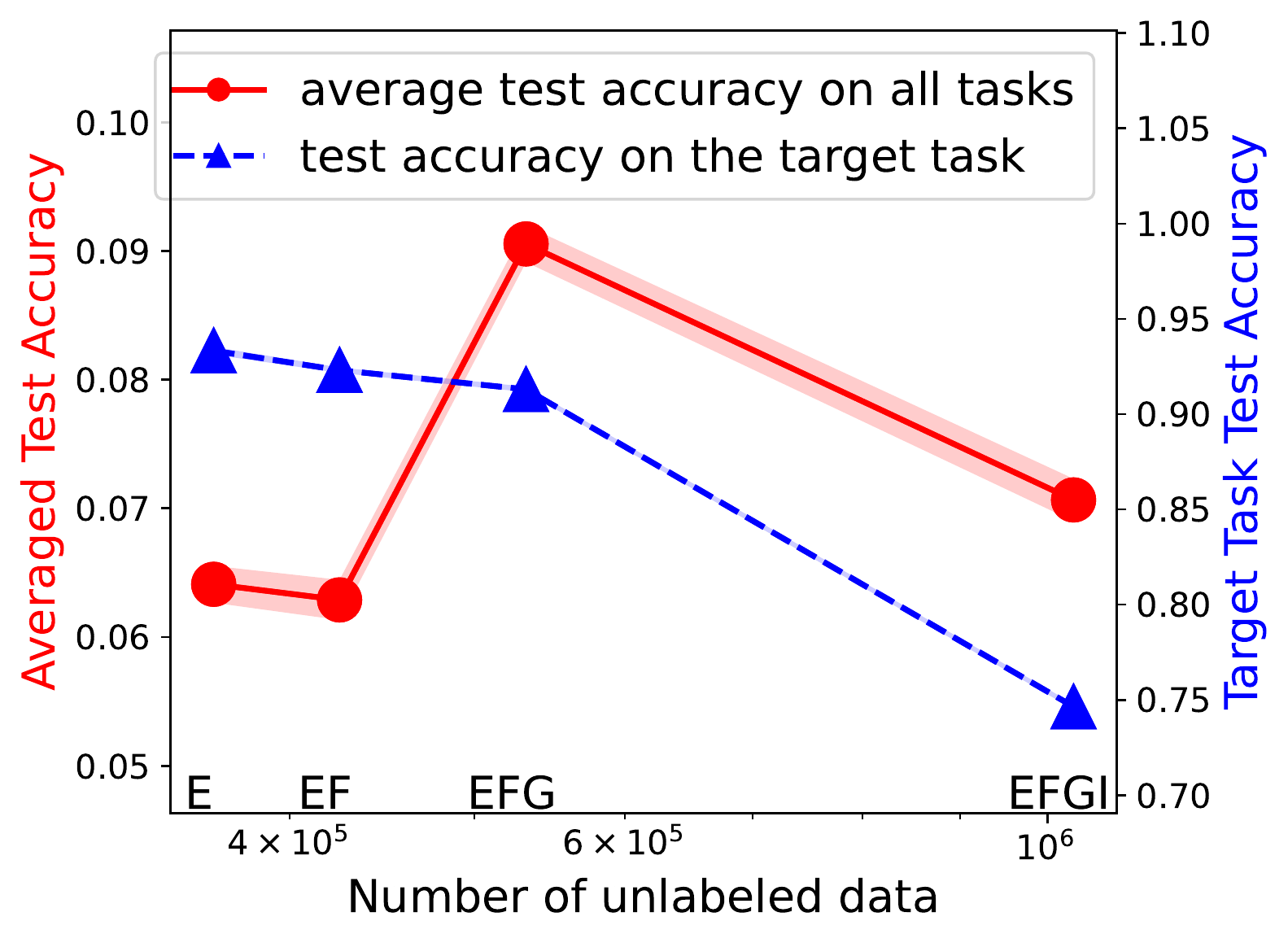}}
    \subfloat[SimSiam; MNIST]{\includegraphics[width=\imagewidth]{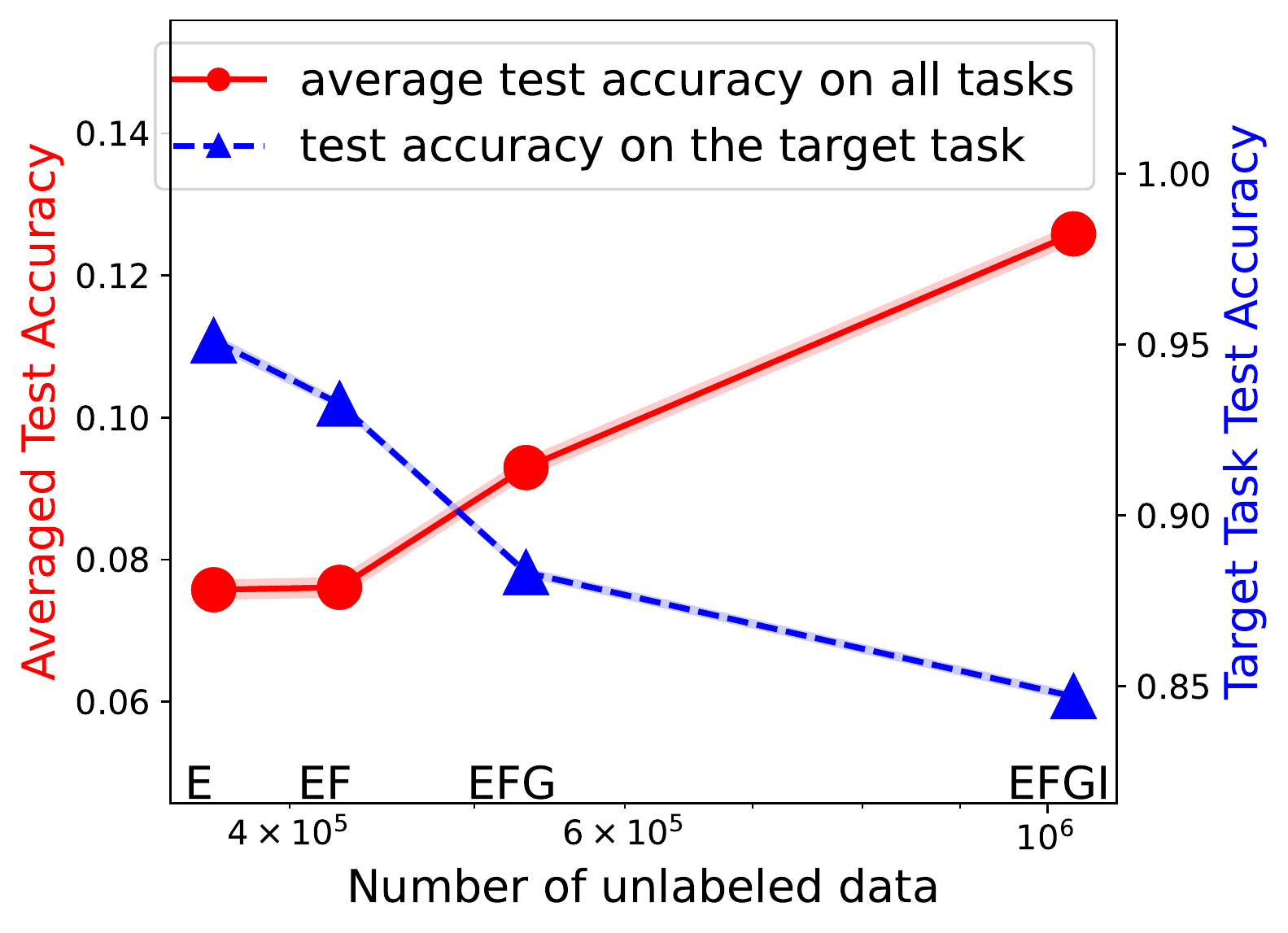}}\\
    \subfloat[MoCo v2; Fer2013]{\includegraphics[width=\imagewidth]{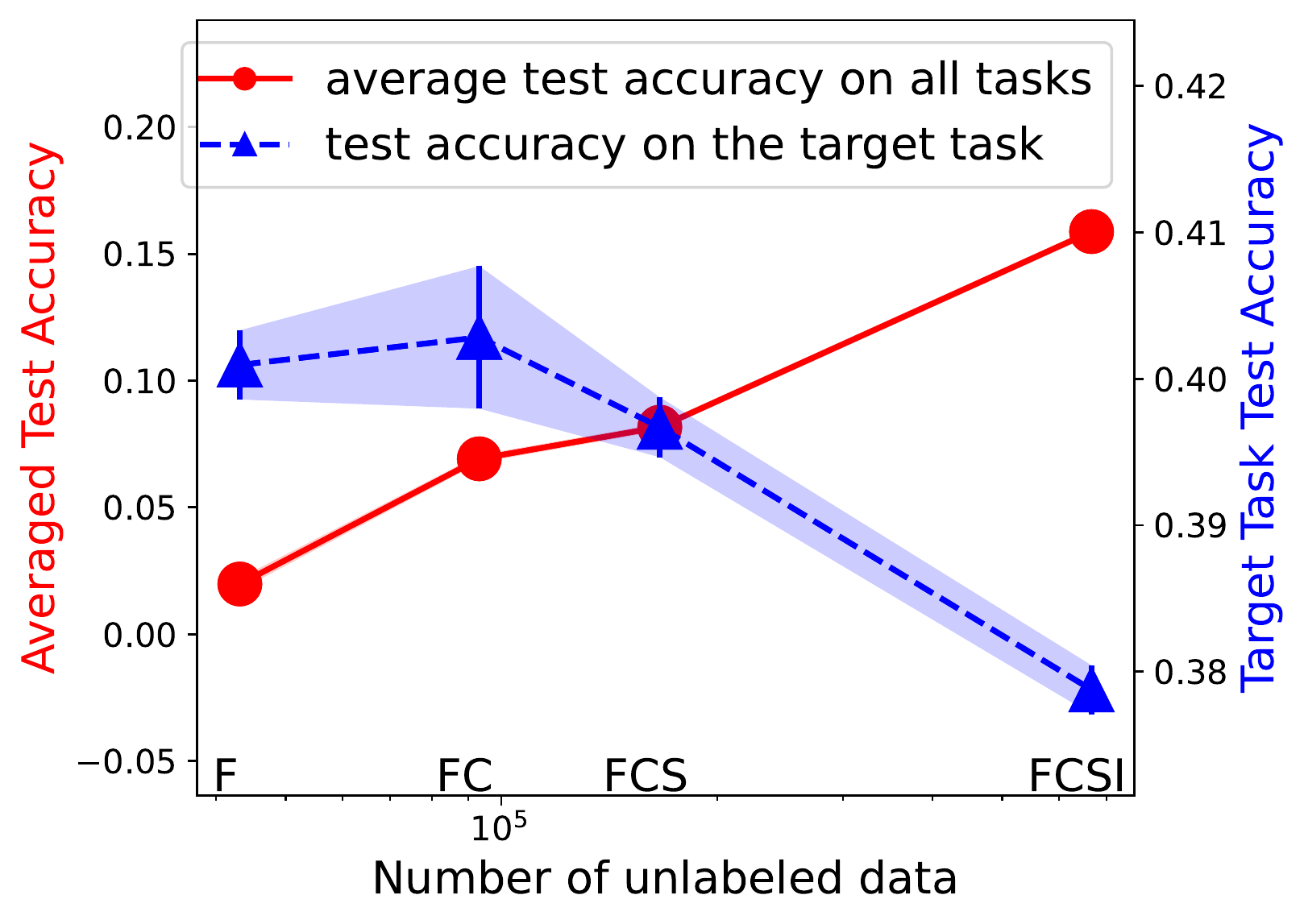}}
    \subfloat[NNCLR; Fer2013]{\includegraphics[width=\imagewidth]{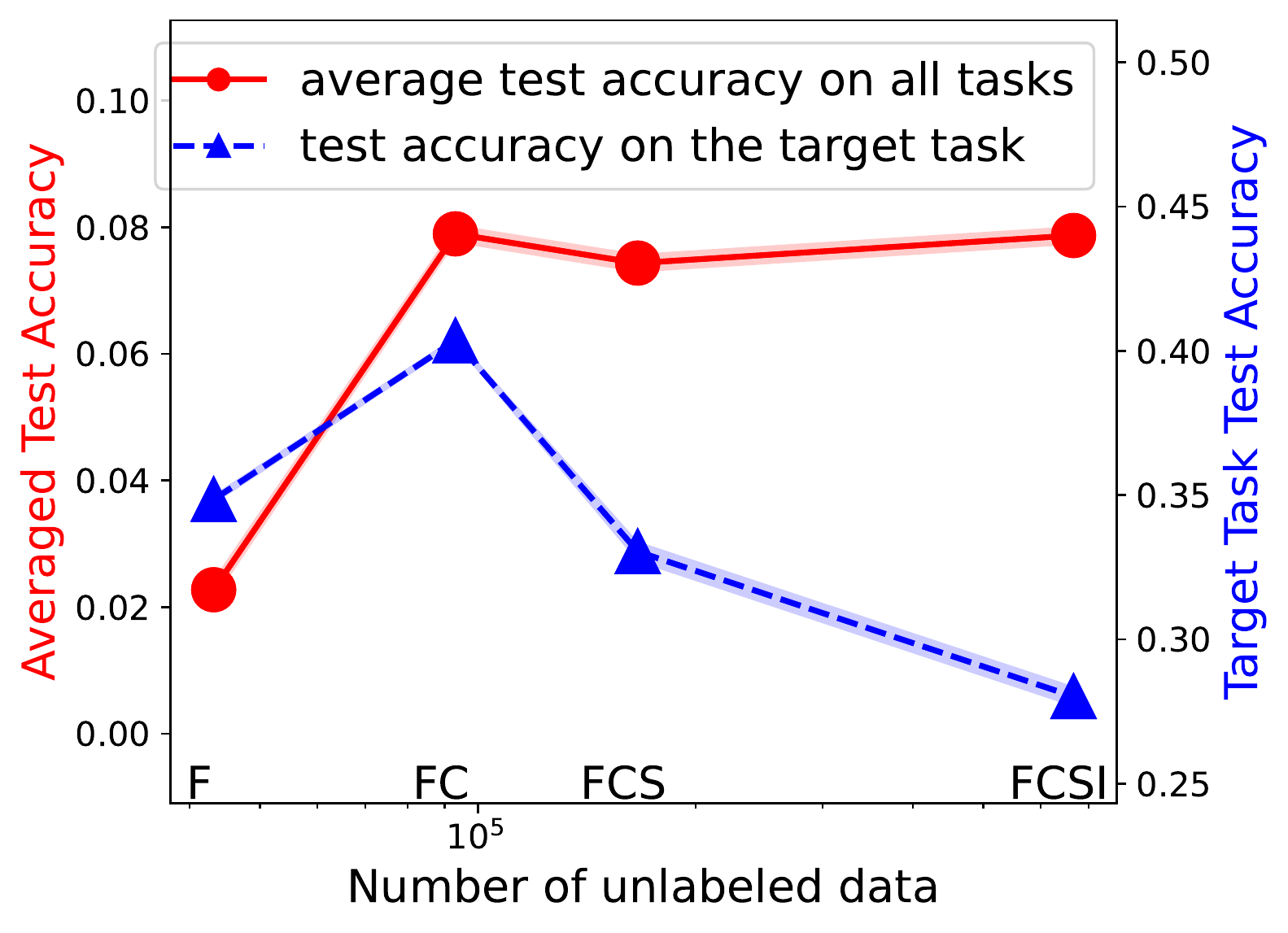}}
    \subfloat[SimSiam; Fer2013]{\includegraphics[width=\imagewidth]{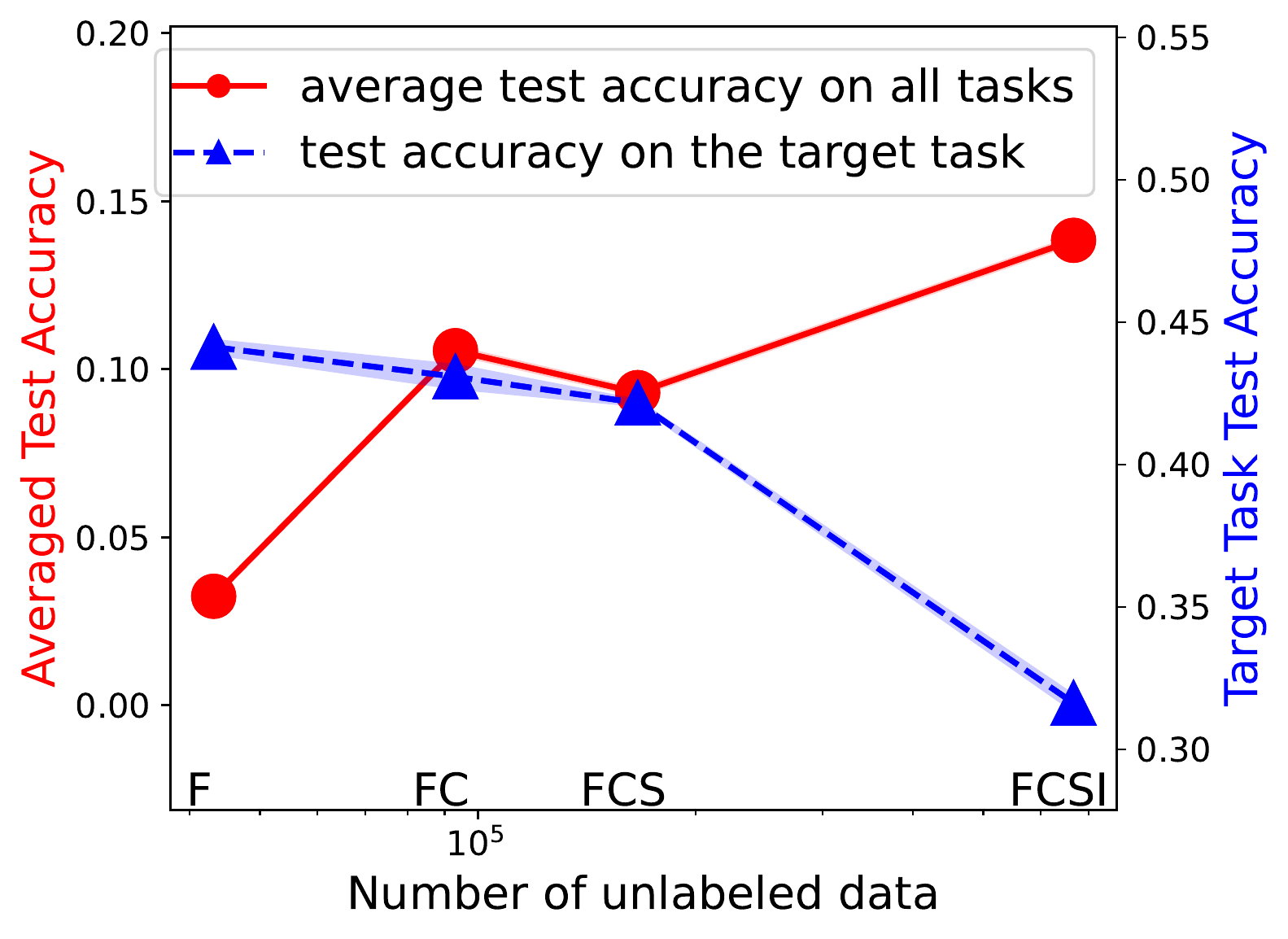}}
    \caption{Trade-off of universality and label efficiency for MoCo v2, NNCLR, SimSiam on downstream tasks CIFAR-10, MNIST, Fer2013. $x$-axis: incrementally add datasets for pre-training. Pre-training data: (a)(b)(c)  CINIC-10 (C), SVHN (S), GTSRB (G), and ImageNet32 (I). For example, ``CS'' on the $x$-axis means CINIC-10+SVHN. Target task: CIFAR-10. Red line: average test accuracy of Linear Probing on all 4 datasets. Blue line: test accuracy on the target task. (d)(e)(f) EMNIST-Digits\&Letters (E), Fashon-MNIST (F), GTSRB (G), ImageNet32 (I). Target: MNIST. (g)(h)(i) FaceScrub (F), CIFAR-10 (C), SVHN (S), ImageNet32 (I). Target: Fer2013. All evaluations are trained with 10\% labeled data. }
    \label{fig:trade_off_10}
\end{figure*}
\begin{figure*}[ht!]
    \centering
    \newcommand{\imagewidth}{0.33\linewidth}
    \subfloat[MoCo v2; CIFAR-10]{\includegraphics[width=\imagewidth]{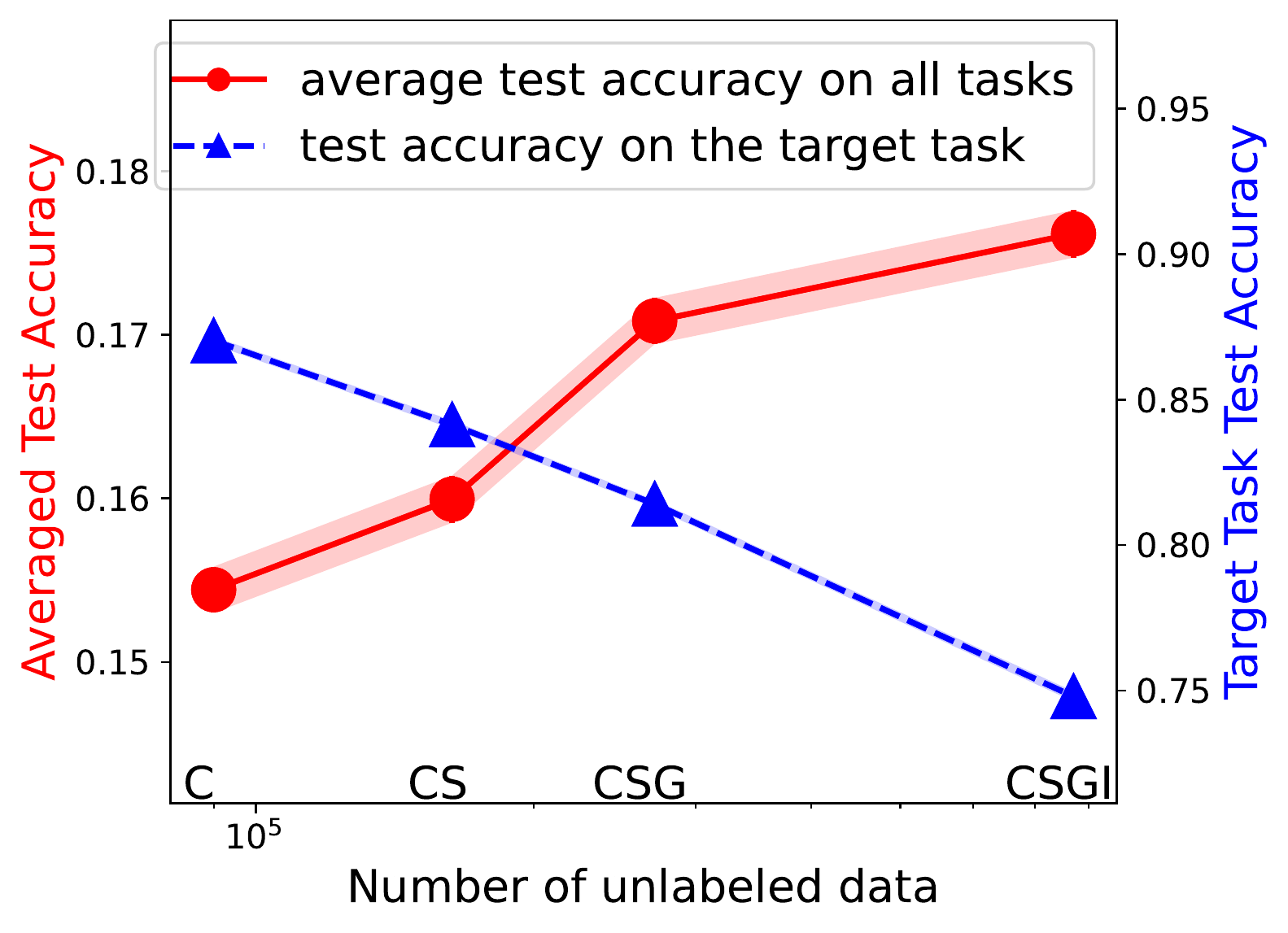}}
    \subfloat[NNCLR; CIFAR-10]{\includegraphics[width=\imagewidth]{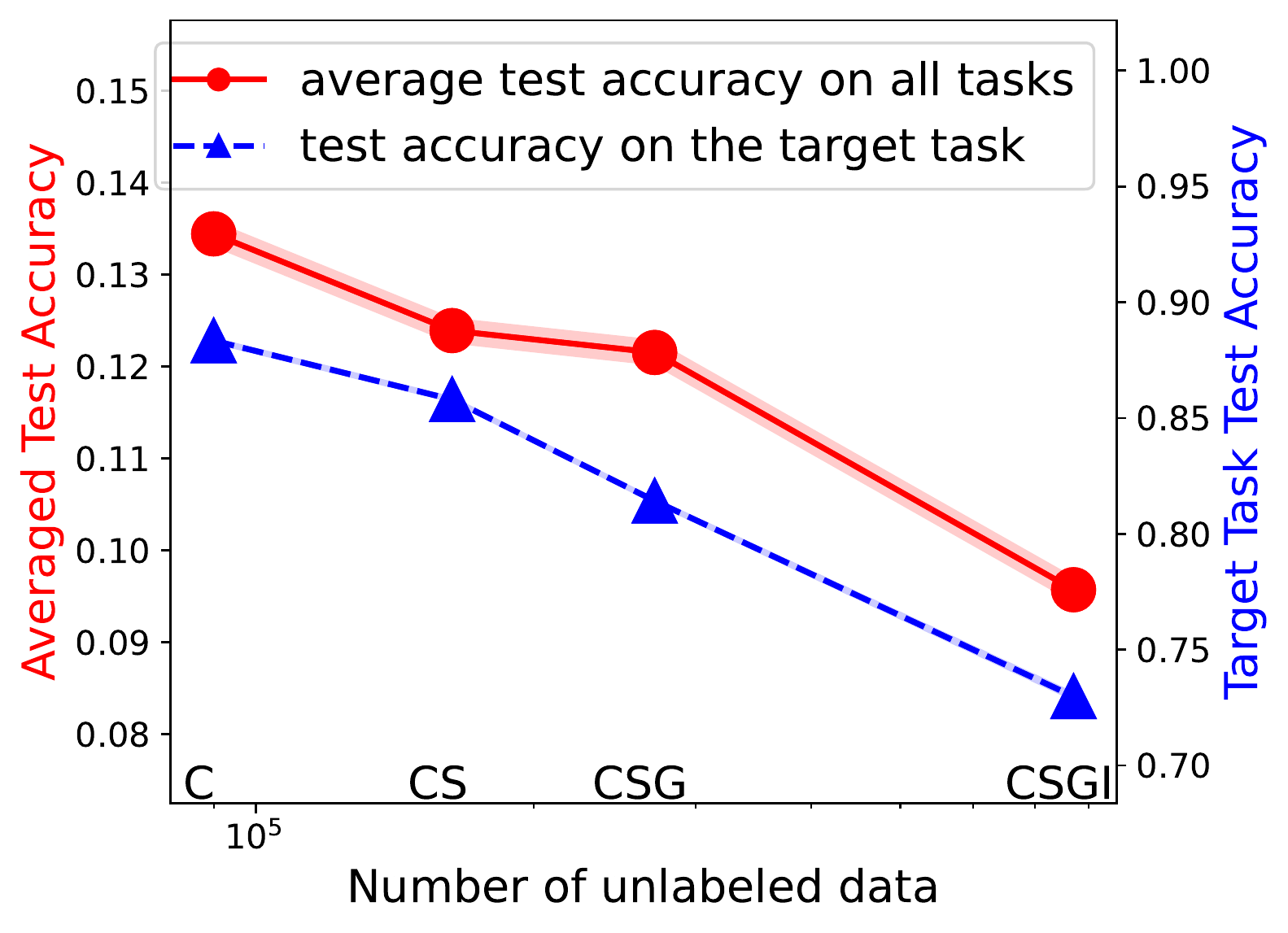}}
    \subfloat[SimSiam; CIFAR-10]{\includegraphics[width=\imagewidth]{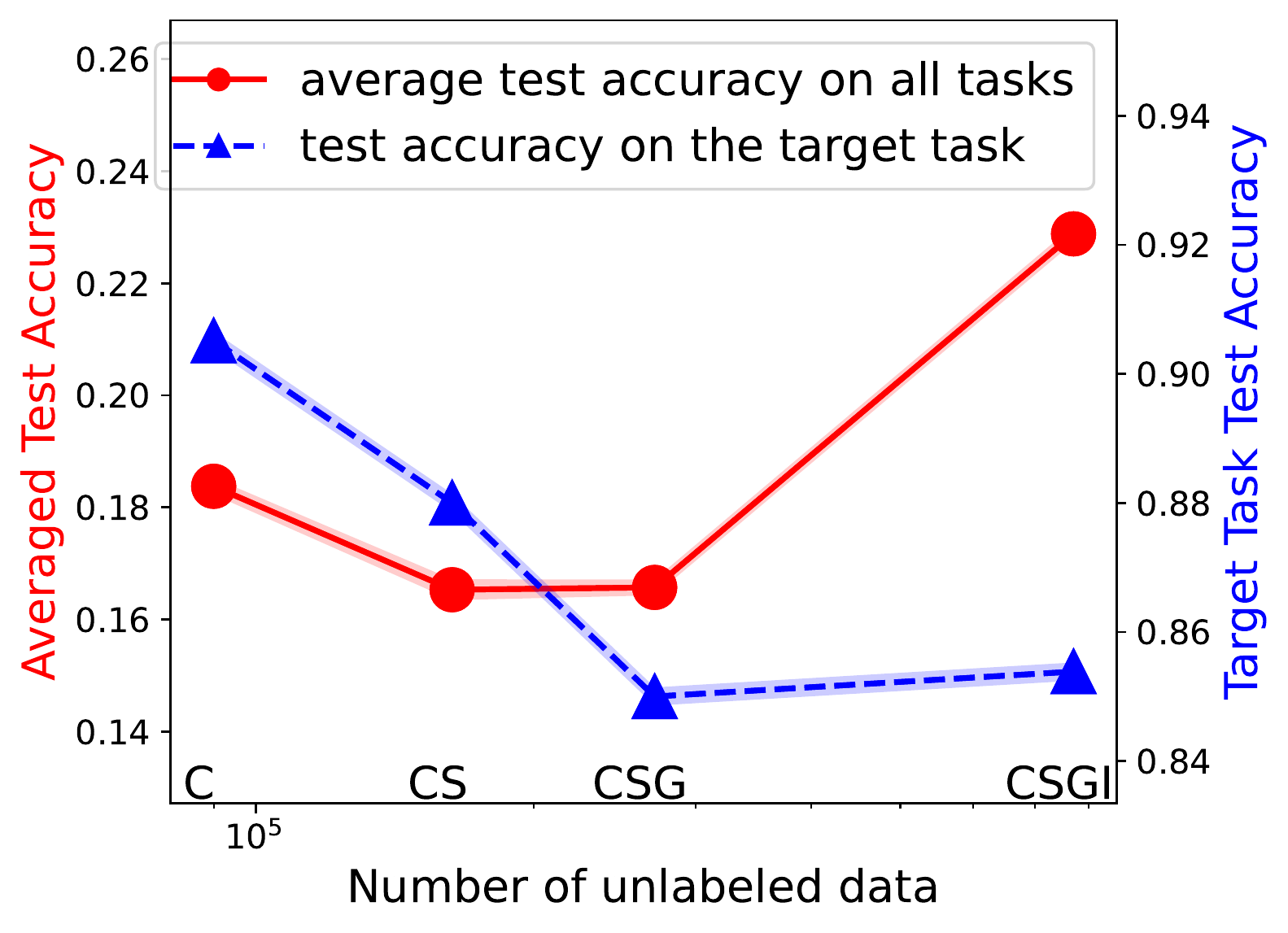}}\\
    \subfloat[MoCo v2; MNIST]{\includegraphics[width=\imagewidth]{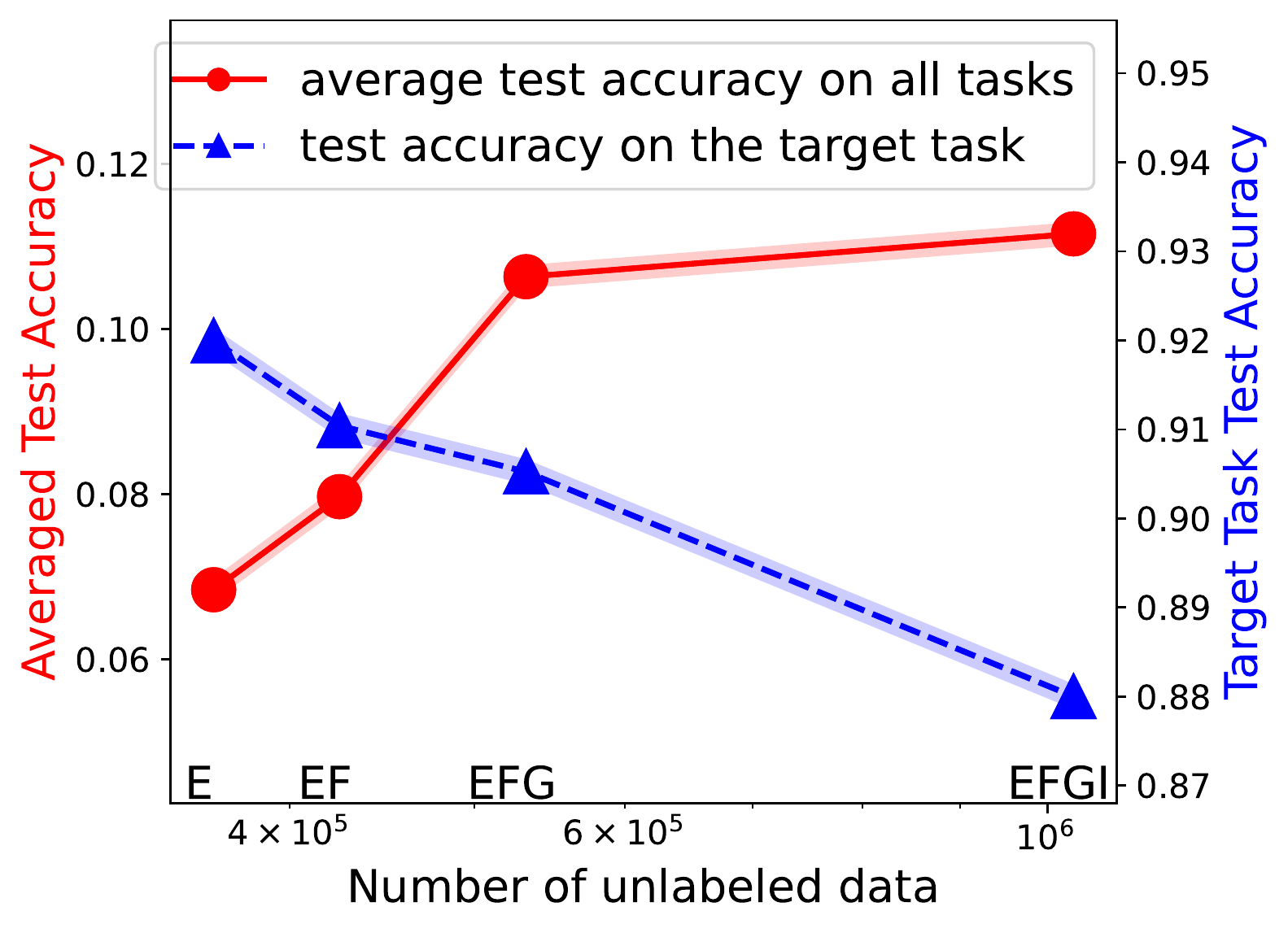}}
    \subfloat[NNCLR; MNIST]{\includegraphics[width=\imagewidth]{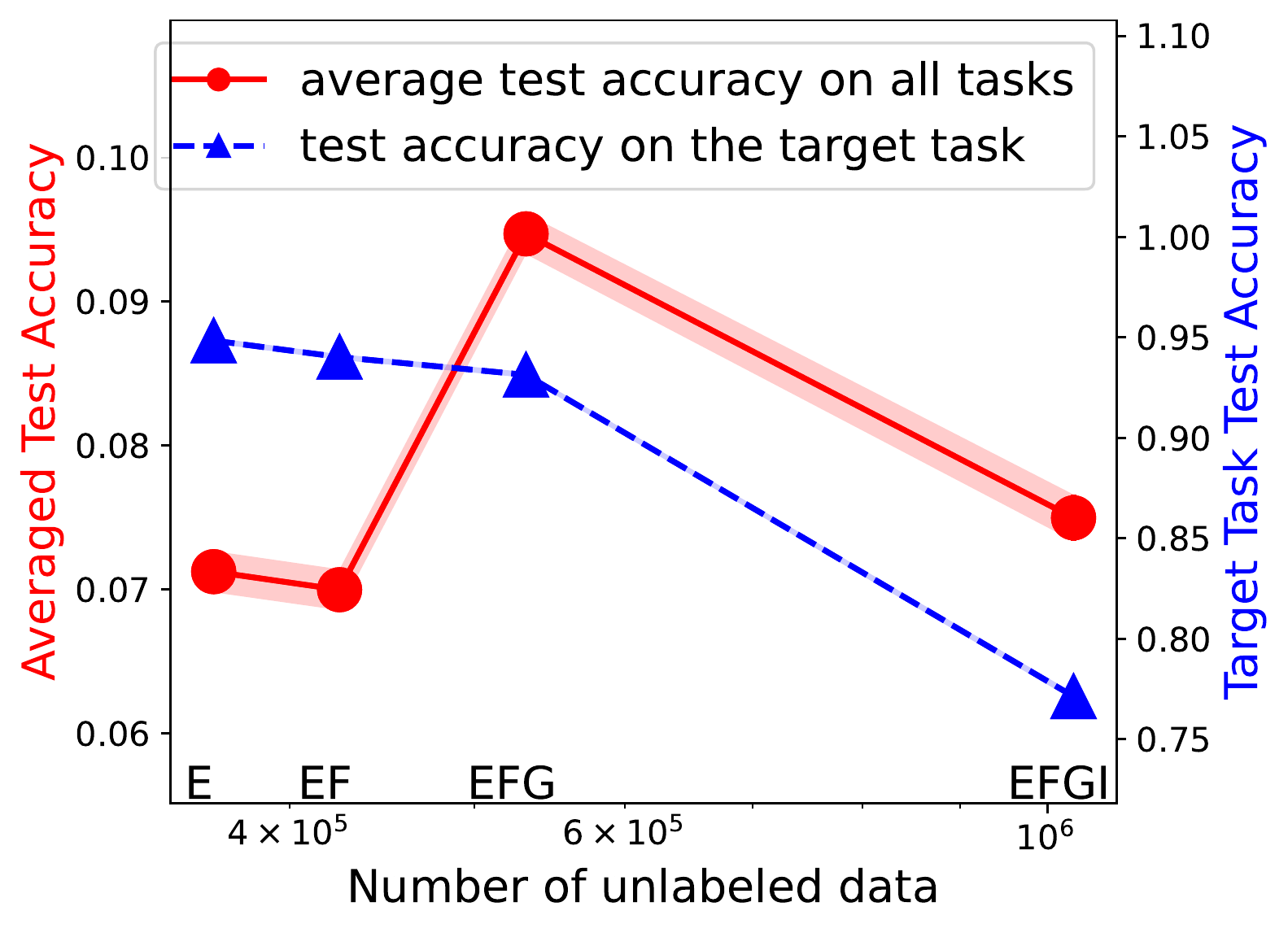}}
    \subfloat[SimSiam; MNIST]{\includegraphics[width=\imagewidth]{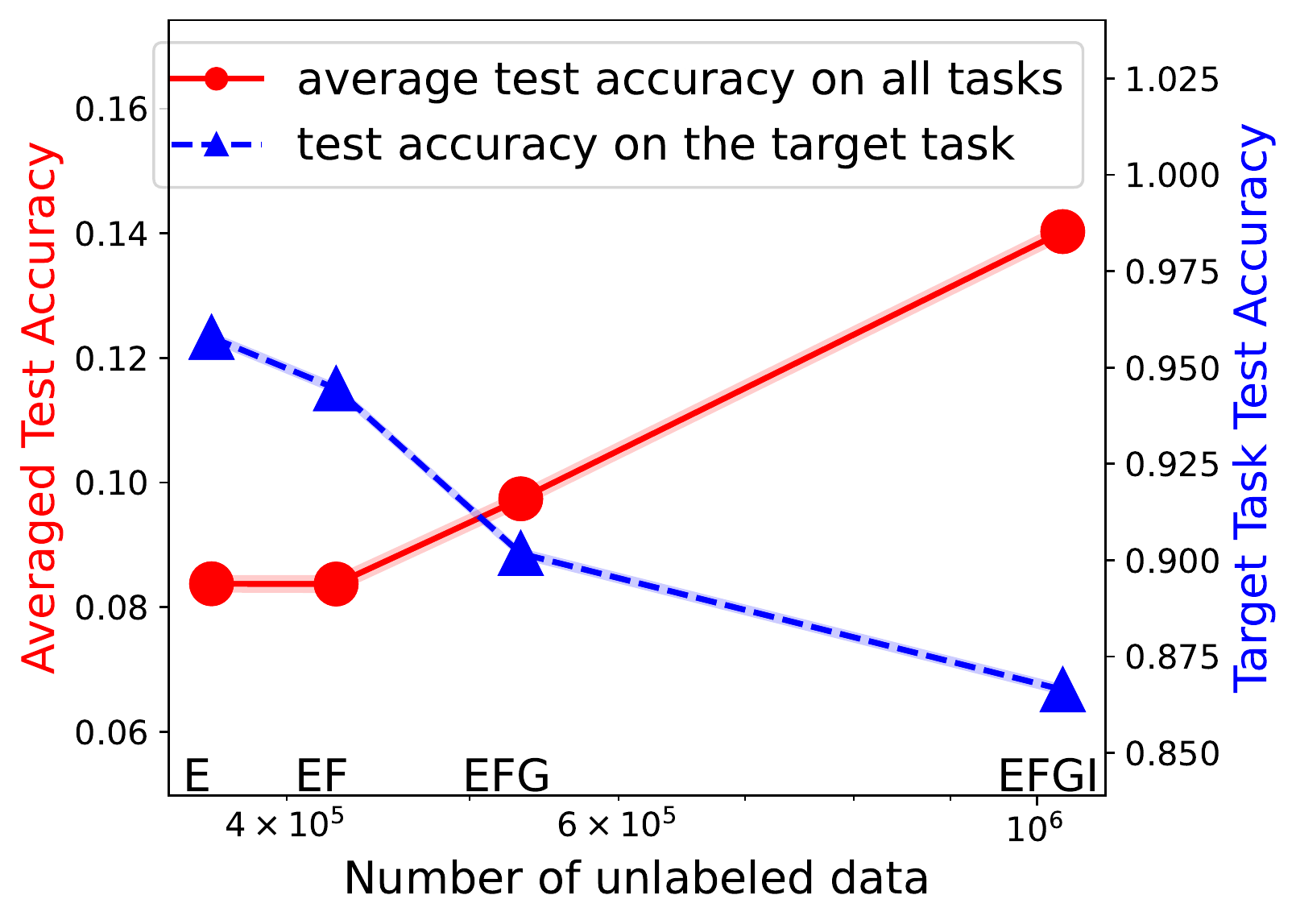}}\\
    \subfloat[MoCo v2; Fer2013]{\includegraphics[width=\imagewidth]{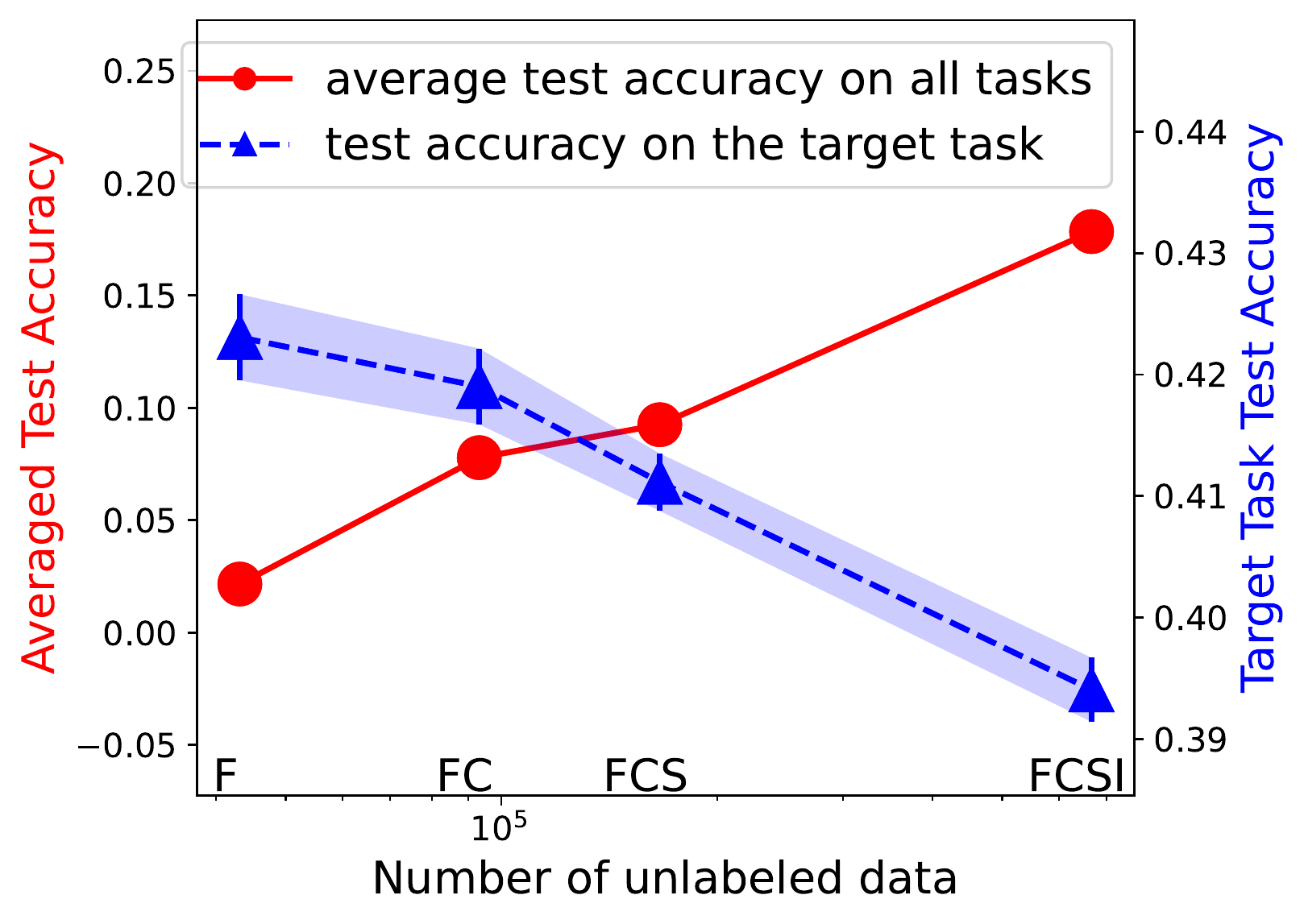}}
    \subfloat[NNCLR; Fer2013]{\includegraphics[width=\imagewidth]{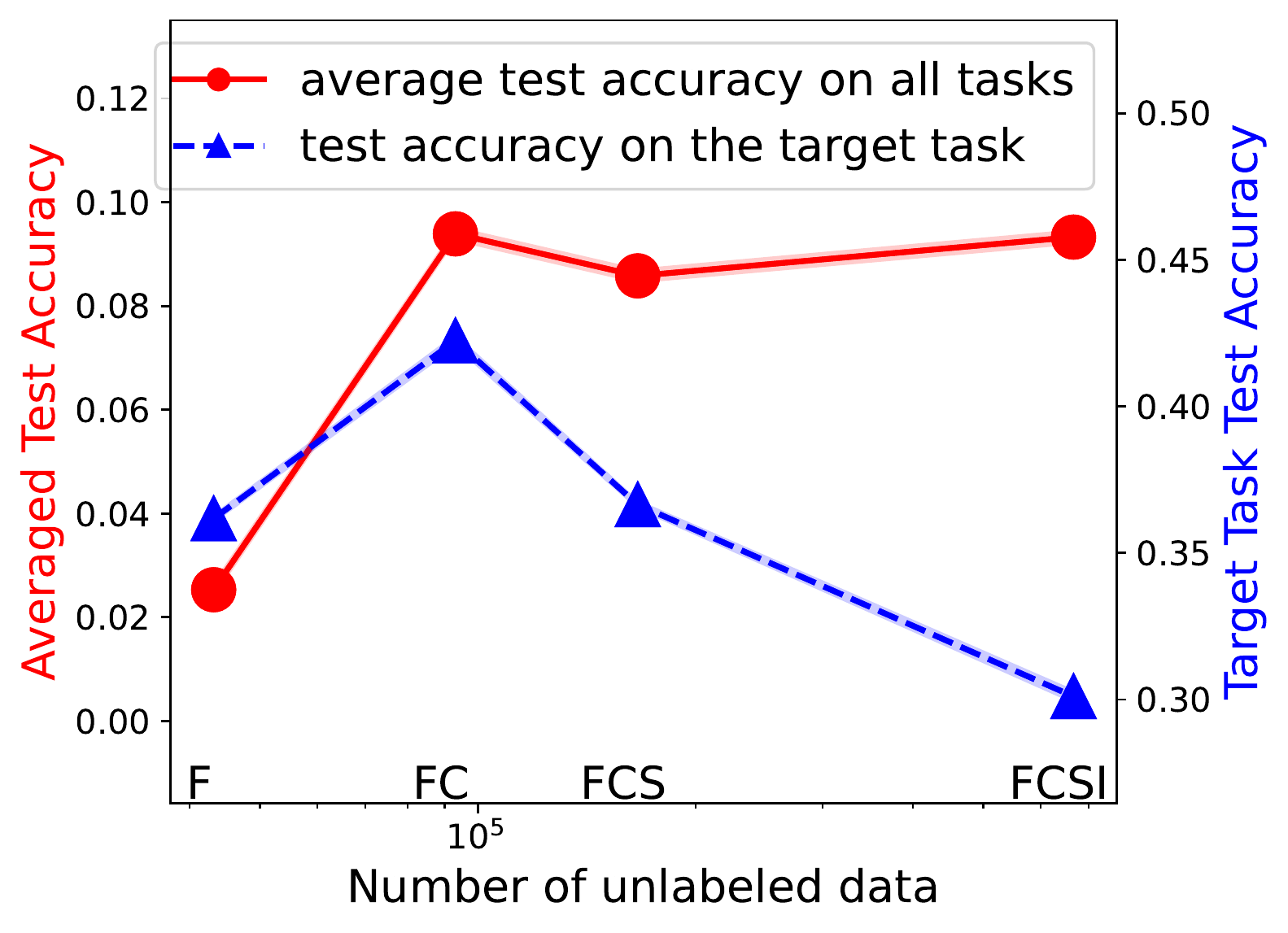}}
    \subfloat[SimSiam; Fer2013]{\includegraphics[width=\imagewidth]{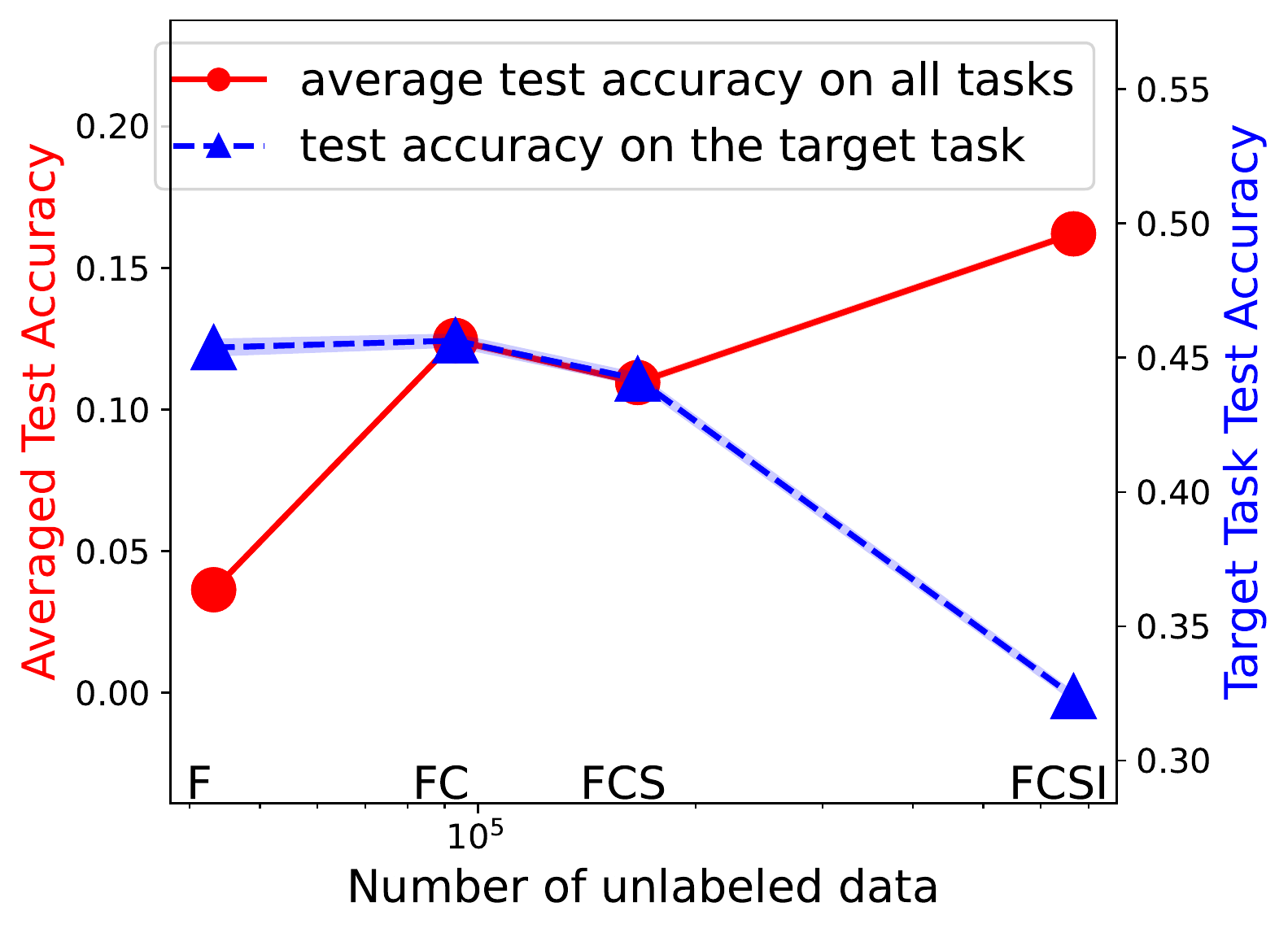}}
    \caption{Trade-off of universality and label efficiency for MoCo v2, NNCLR, SimSiam on downstream tasks CIFAR-10, MNIST, Fer2013. $x$-axis: incrementally add datasets for pre-training. Pre-training data: (a)(b)(c)  CINIC-10 (C), SVHN (S), GTSRB (G), and ImageNet32 (I). For example, ``CS'' on the $x$-axis means CINIC-10+SVHN. Target task: CIFAR-10. Red line: average test accuracy of Linear Probing on all 4 datasets. Blue line: test accuracy on the target task. (d)(e)(f) EMNIST-Digits\&Letters (E), Fashon-MNIST (F), GTSRB (G), ImageNet32 (I). Target: MNIST. (g)(h)(i) FaceScrub (F), CIFAR-10 (C), SVHN (S), ImageNet32 (I). Target: Fer2013. All evaluations are trained with 20\% labeled data. }
    \label{fig:trade_off_20}
\end{figure*}
\begin{figure*}[ht!]
    \centering
    \newcommand{\imagewidth}{0.33\linewidth}
    \subfloat[MoCo v2; CIFAR-10]{\includegraphics[width=\imagewidth]{figure/down_vs_avg/moco_cifar10_100.pdf}}
    \subfloat[NNCLR; CIFAR-10]{\includegraphics[width=\imagewidth]{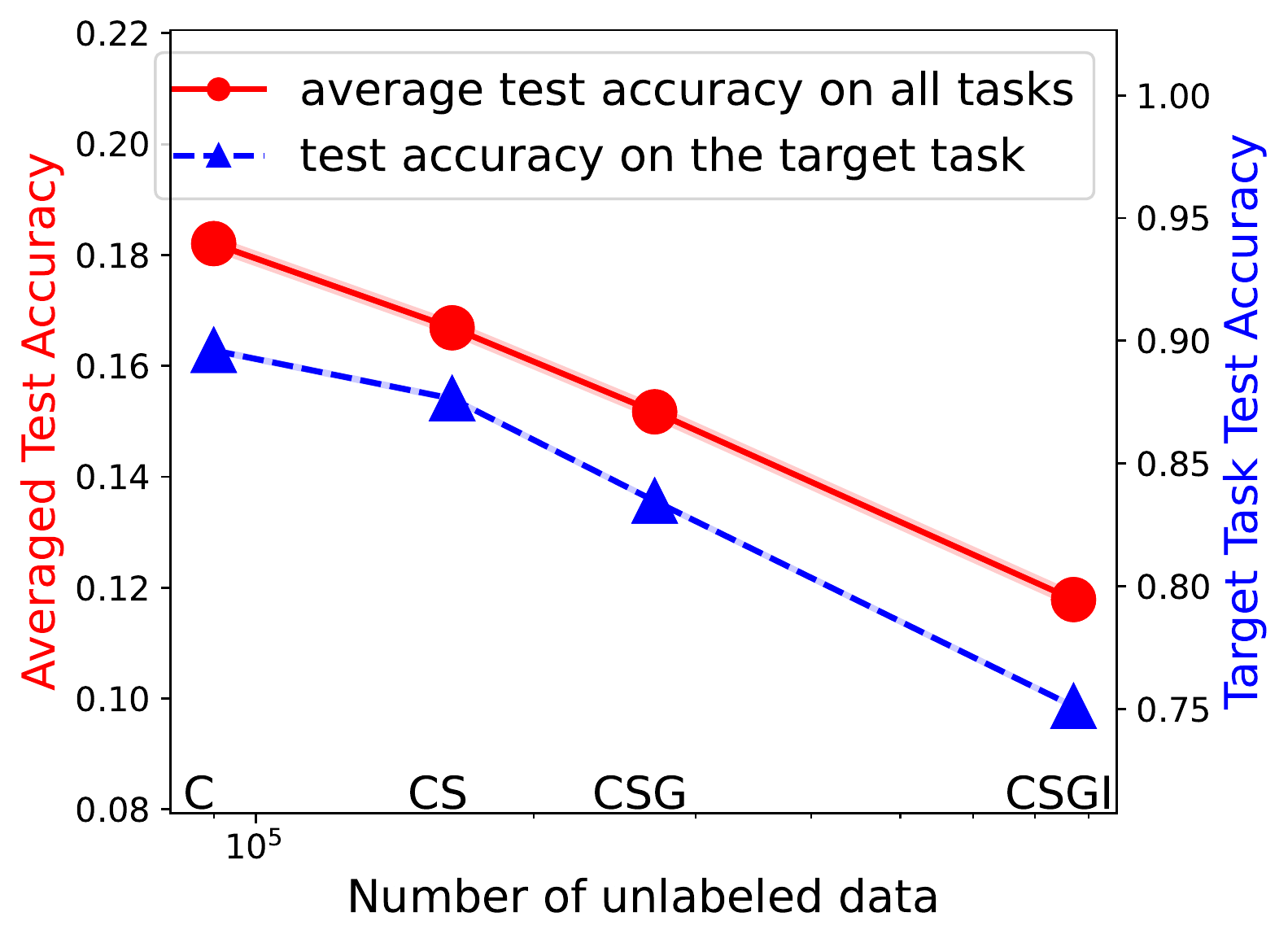}}
    \subfloat[SimSiam; CIFAR-10]{\includegraphics[width=\imagewidth]{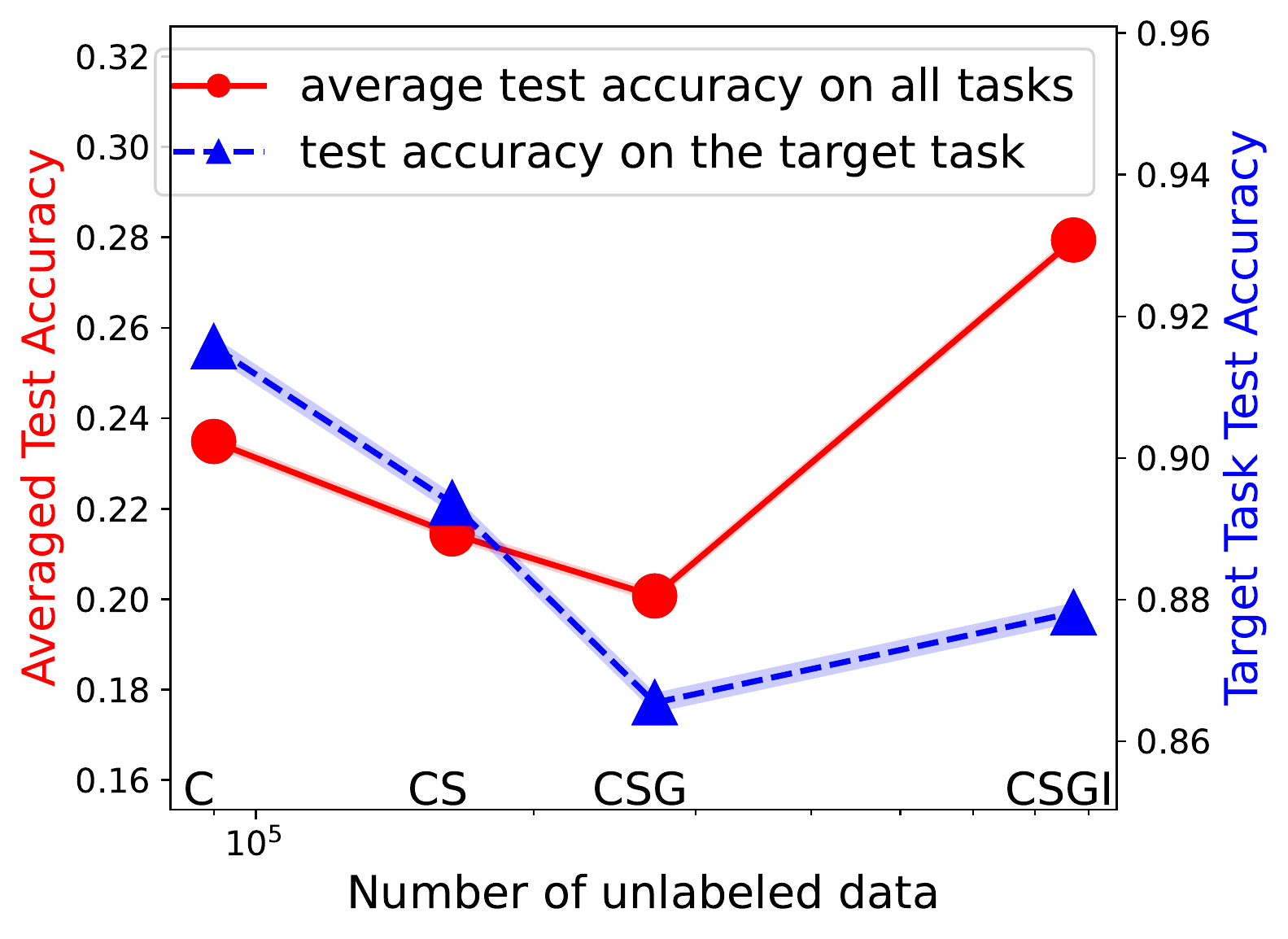}}\\
    \subfloat[MoCo v2; MNIST]{\includegraphics[width=\imagewidth]{figure/down_vs_avg/moco_mnist_100.pdf}}
    \subfloat[NNCLR; MNIST]{\includegraphics[width=\imagewidth]{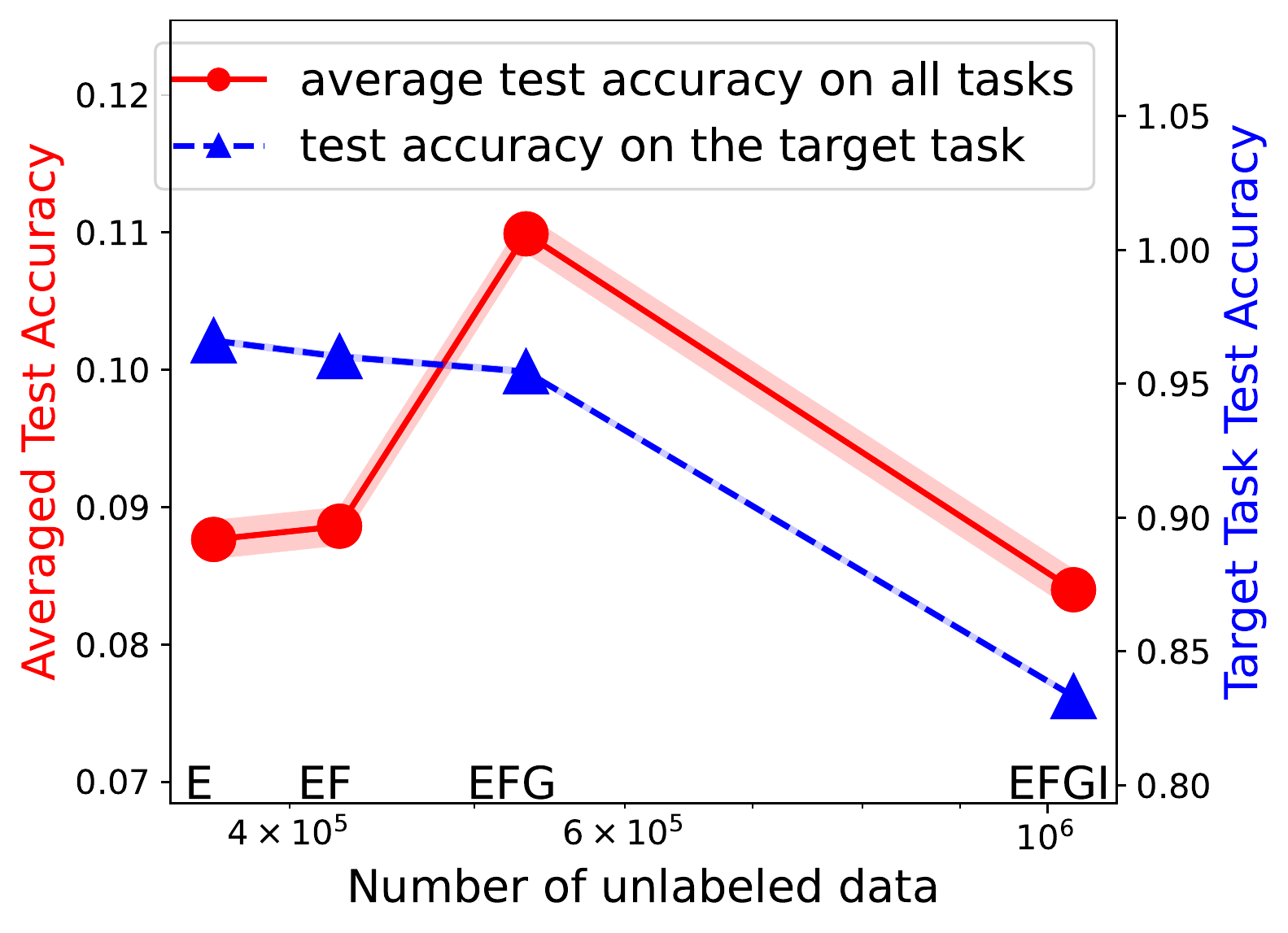}}
    \subfloat[SimSiam; MNIST]{\includegraphics[width=\imagewidth]{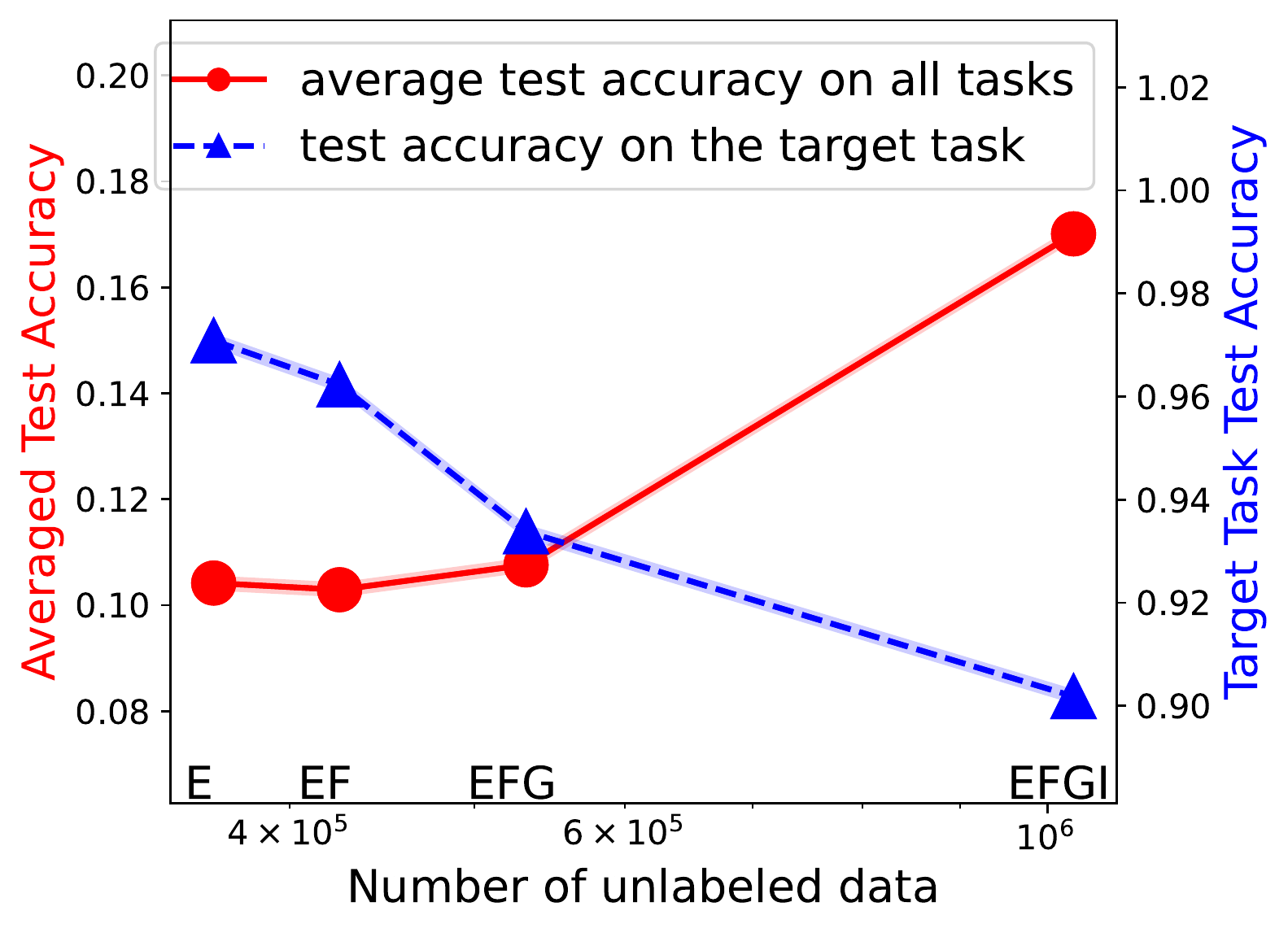}}\\
    \subfloat[MoCo v2; Fer2013]{\includegraphics[width=\imagewidth]{figure/down_vs_avg/moco_fer2013_100.pdf}}
    \subfloat[NNCLR; Fer2013]{\includegraphics[width=\imagewidth]{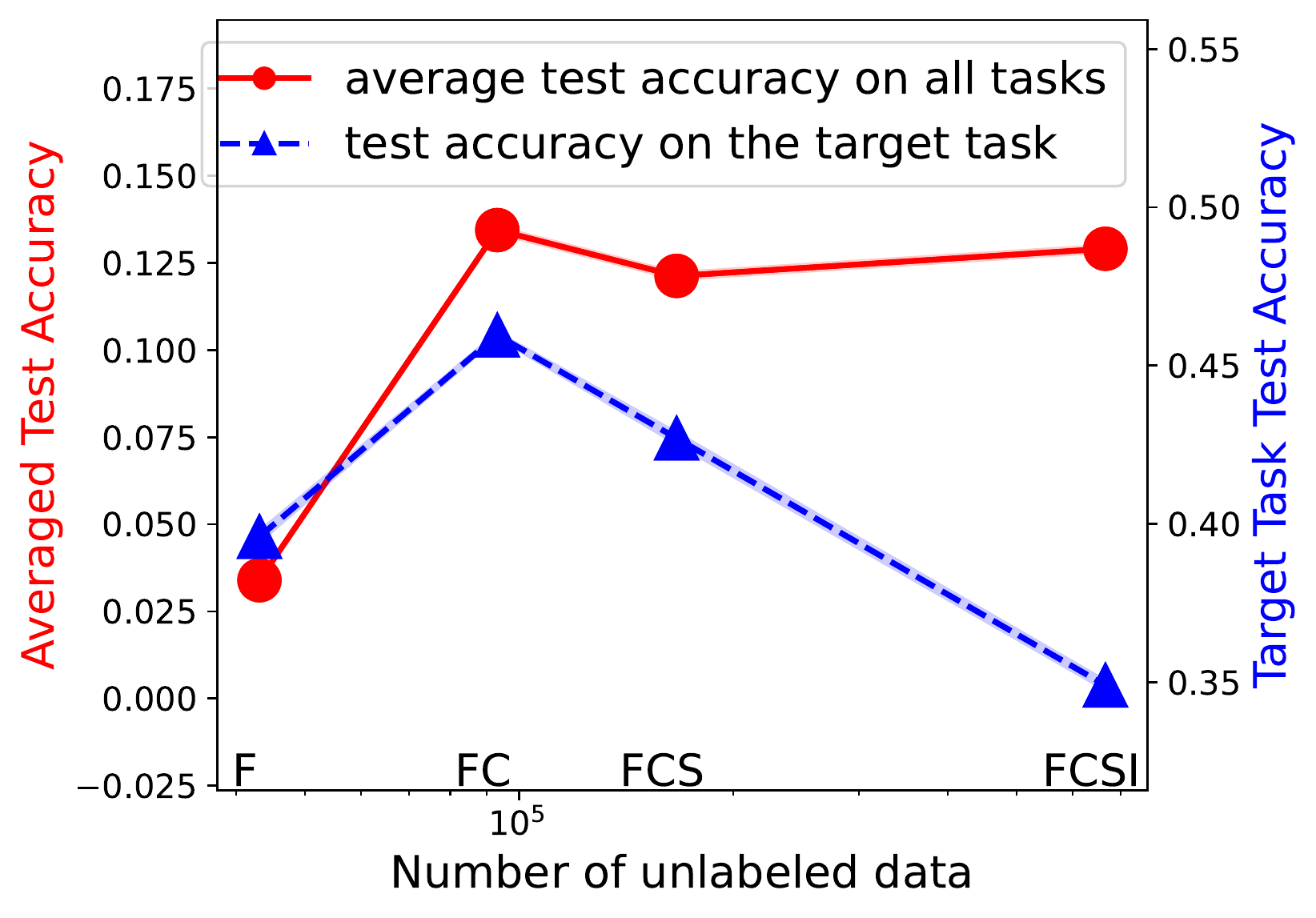}}
    \subfloat[SimSiam; Fer2013]{\includegraphics[width=\imagewidth]{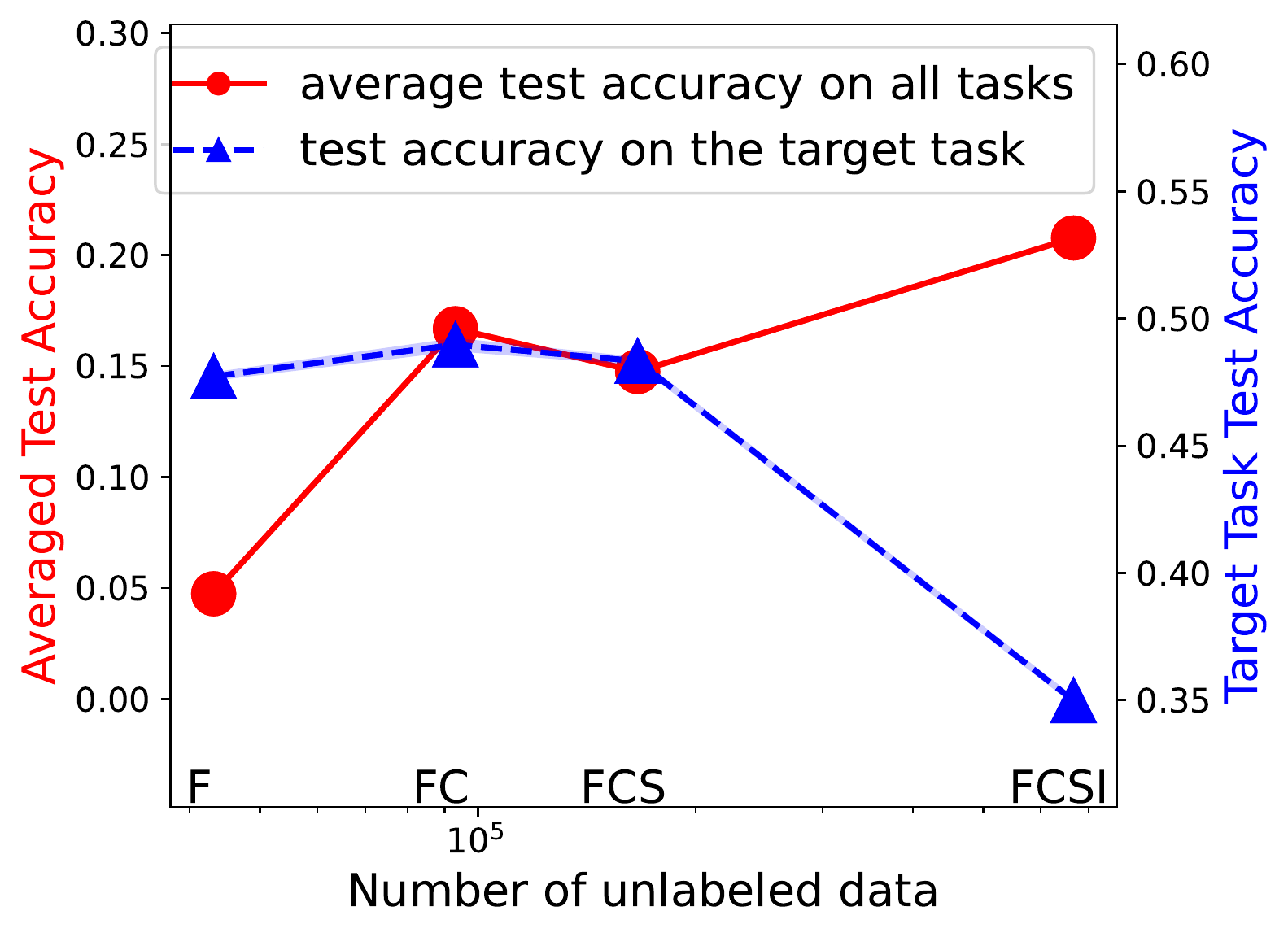}}
    \caption{Trade-off of universality and label efficiency for MoCo v2, NNCLR, SimSiam on downstream tasks CIFAR-10, MNIST, Fer2013. $x$-axis: incrementally add datasets for pre-training. Pre-training data: (a)(b)(c)  CINIC-10 (C), SVHN (S), GTSRB (G), and ImageNet32 (I). For example, ``CS'' on the $x$-axis means CINIC-10+SVHN. Target task: CIFAR-10. Red line: average test accuracy of Linear Probing on all 4 datasets. Blue line: test accuracy on the target task. (d)(e)(f) EMNIST-Digits\&Letters (E), Fashon-MNIST (F), GTSRB (G), ImageNet32 (I). Target: MNIST. (g)(h)(i) FaceScrub (F), CIFAR-10 (C), SVHN (S), ImageNet32 (I). Target: Fer2013. All evaluations are trained with 100\% labeled data. }
    \label{fig:trade_off_100}
\end{figure*}

\end{document}